\newtheorem{observation}{Observation}
\newcommand{\hide}[1]{}
\newcommand{\new}[1]{{#1}}
\newcommand{\KronFit}{{\sc KronFit}\xspace}
\newcommand{\KRG}{{Kronecker graphs}\xspace}
\newcommand{\SKRG}{{Stochastic Kronecker graphs}\xspace}
\newcommand{\dataset}[1]{{\sc #1}\xspace}
\DeclareMathOperator*{\argmax}{argmax}
\newcommand{\mat}[1]{{\bf{#1}}}       % matrix
\newcommand{\D}[2]{\frac{\partial #2}{\partial #1}} % gradient
\newcommand{\ggraph}{{G}}             % real graph
\newcommand{\kgraph}{{K}}             % kronecker graph
\newcommand{\nnodes}{N}               % number of nodes
\newcommand{\nedges}{E}               % number of edges
\newcommand{\nn}[1]{{\nnodes}_{#1}}
\newcommand{\en}[1]{{\nedges}_{#1}}
\newcommand{\kn}[1]{{\kgraph}_{#1}}
\newcommand{\kzero}{\kn{1}}
\newcommand{\nzero}{\nn{1}}
\newcommand{\ezero}{\en{1}}
\newcommand{\perm}{{\sigma}}           % permutation
\newcommand{\pn}[1]{{\cal P}_{#1}}     % probability matrix
\newcommand{\pzero}{{\Theta}}          % P1 matrix (Theta)
\newcommand{\thij}[1]{{\theta_{{#1}}}} % element of P1
\newcommand{\pmat}{{\cal P}}           % probability matrix
\newcommand{\pij}[1]{{p_{{#1}}}}       % element of \pzero^k
\newcommand{\diam}{D}
\newcommand{\ediam}{D^{*}}
\renewcommand{\cite}{\citep}
\begin{document}

\title{Kronecker graphs: An Approach to Modeling Networks}

\author{\name Jure Leskovec \email jure@cs.stanford.edu \\
\addr Computer Science Department, Stanford University
%Stanford, CA
\AND
\name Deepayan Chakrabarti \email deepay@yahoo-inc.com \\
\addr Yahoo! Research
%Santa Clara, CA, USA
\AND
\name Jon Kleinberg \email kleinber@cs.cornell.edu \\
\addr Computer Science Department, Cornell University
%Ithacha, NY, USA
\AND
\name Christos Faloutsos \email christos@cs.cmu.edu \\
\addr Computer Science Department, Carnegie Mellon University
%Pittsburgh, PA
\AND
\name Zoubin Ghahramani \email zoubin@eng.cam.ac.uk \\
\addr Department of Engineering, University of Cambridge \\
Machine Learning Department, Carnegie Mellon University
%Pittsburgh, PA 15213, USA
}

\editor{}

\maketitle

%--------------------------------------------------------------------
\begin{abstract}
How can we generate realistic networks? In addition, how can we do so with
a mathematically tractable model that allows for rigorous analysis of
network properties? Real networks exhibit a long list of surprising
properties: Heavy tails for the in- and out-degree distribution, heavy
tails for the eigenvalues and eigenvectors, small diameters, and
densification and shrinking diameters over time.
Current network models and generators either fail to match several of the above
properties, are complicated to analyze mathematically, or both. Here
we propose a generative model for networks that is both
mathematically tractable and can generate networks that have all the above
mentioned structural properties. Our main idea here is to use a
non-standard matrix operation, the {\em Kronecker product}, to generate
graphs which we refer to as ``Kronecker graphs''.

First, we show that Kronecker graphs naturally obey common network
properties. In fact, we rigorously {\em prove} that they do so. We also
provide empirical evidence showing that Kronecker graphs can effectively
model the structure of real networks.

We then present \KronFit, a fast and scalable algorithm for fitting the
Kronecker graph generation model to large real networks. A naive approach
to fitting would take super-exponential time. In contrast, \KronFit\ takes
{\em linear} time, by exploiting the structure of Kronecker matrix
multiplication and by using statistical simulation techniques.

Experiments on a wide range of large real and synthetic networks show that \KronFit\ finds accurate parameters that very well mimic the properties of target
networks. In fact, using just four parameters we can accurately
model several aspects of global network structure.
Once fitted, the model parameters can be used to gain insights
about the network structure, and the resulting synthetic graphs can be
used for null-models, anonymization, extrapolations, and graph
summarization.
\end{abstract}

\begin{keywords}
Kronecker graphs, Network analysis, Network models, Social networks,
Graph generators, Graph mining, Network evolution
\end{keywords}

\section{Introduction}
\label{sec:KronIntro}
What do real graphs look like? How do they evolve over time? How can we
generate synthetic, but realistic looking, time-evolving graphs? Recently,
network analysis has been attracting much interest, with an emphasis on
finding patterns and abnormalities in social networks, computer networks,
e-mail interactions, gene regulatory networks, and many more. Most of the
work focuses on static snapshots of graphs, where fascinating ``laws''
have been discovered, including small diameters and heavy-tailed degree
distributions.

In parallel with discoveries of such structural ``laws'' there has been effort to
find mechanisms and models of network formation that generate networks with such structures. So, a
good realistic network generation model is important for at least two
reasons. The first is that it can generate graphs for extrapolations, hypothesis testing,
``what-if'' scenarios, and simulations, when real graphs are difficult or
impossible to collect. For example, how well will a given protocol run on
the Internet five years from now? Accurate network models can produce more
realistic models for the future Internet, on which simulations can be run.
The second reason is more subtle. It forces us to think about network
properties that generative models should obey to be realistic.

\new{In this paper we introduce Kronecker graphs, a generative network model
which obeys all the main static network patterns that have appeared in the
literature~\cite{faloutsos99powerlaw,barabasi99diameter,chakrabarti04rmat,farkas01spectral,mihail02random,watts98smallworld}.} Our model also obeys recently discovered temporal
evolution patterns~\cite{jure05dpl,jure07evolution}. And, contrary to
other models that match this combination of network properties (as for example,~\cite{bu02internet,klemm02clustered,vazquez03growing,jure05dpl,zheleva09evol}),
Kronecker graphs also lead to tractable analysis and rigorous proofs. Furthermore,
the Kronecker graphs generative process also has a nice natural
interpretation and justification.

Our model is based on a matrix operation, the {\em Kronecker product}.
\new{There are several known theorems on Kronecker products. They correspond
exactly to a significant portion of what we want to prove: heavy-tailed
distributions for in-degree, out-degree, eigenvalues, and eigenvectors.} We
also demonstrate how a \KRG\ can match the behavior of several real
networks (social networks, citations, web, internet, and others). While
Kronecker products have been studied by the algebraic combinatorics
community (see, \emph{e.g.},~\cite{chow97tensor,imrich98factoring,imrich00product,hammack09proof}), the present work is the
first to employ this operation in the design of network models to match
real data.

Then we also make a step further and tackle the following problem: Given a
large real network, we want to generate a synthetic graph, so that the
resulting synthetic graph matches the properties of the real network as
well as possible.

Ideally we would like: (a) A graph generation model that {\em naturally}
produces networks where many properties that are also found in real
networks naturally emerge. (b) The model parameter estimation should be fast and scalable,
so that we can handle networks with millions of nodes. (c) The resulting
set of parameters should generate realistic-looking networks that match
the statistical properties of the target, real networks.

In general the problem of modeling network structure presents several
conceptual and engineering challenges: Which generative model should we
choose, among the many in the literature? How do we measure the goodness
of the fit? (Least squares don't work well for power laws, for subtle
reasons!) If we use likelihood, how do we estimate it faster
than in time quadratic on the number of nodes? How do we solve the node
correspondence problem, i.e., which node of the real network corresponds to what
node of the synthetic one?

To answer the above questions we present \KronFit, a fast and scalable
algorithm for fitting Kronecker graphs by using the maximum likelihood
principle. When calculating the likelihood there are two challenges:
First, one needs to solve the node correspondence problem by matching the
nodes of the real and the synthetic network. Essentially, one has to
consider all mappings of nodes of the network to the rows and columns of
the graph adjacency matrix. This becomes intractable for graphs with more
than tens of nodes. Even when given the ``true'' node correspondences, just
evaluating the likelihood is still prohibitively expensive for large graphs
that we consider, as one needs to evaluate the probability of each possible edge. We present solutions to both of these
problems: We develop a Metropolis sampling algorithm for sampling node
correspondences, and approximate the likelihood to obtain a {\em linear}
time algorithm for Kronecker graph model parameter estimation
that scales to large networks with millions of nodes and
edges. \KronFit gives orders of magnitude speed-ups against older methods
(20 minutes on a commodity PC, versus 2 days on a 50-machine cluster).

Our extensive experiments on synthetic and real networks show that \KRG\
can efficiently model statistical properties of networks, like degree
distribution and diameter, while using only four parameters.

Once the model is fitted to the real network, there are several benefits
and applications:

\begin{enumerate}
\item[(a)] {\em Network structure:}
  the parameters give us insight into the global structure of the
    network itself.
\item[(b)] {\em Null-model:} when working with network data we would
    often like to assess the significance or the extent to which a
    certain network property is expressed. We can use Kronecker graph as an accurate null-model.
\item[(c)] {\em Simulations:} given an algorithm working on a graph we
    would like to evaluate how its performance depends on various
    properties of the network. Using our model one can generate graphs
    that exhibit various combinations of such properties, and then
    evaluate the algorithm.
\item[(d)] {\em Extrapolations:} we can use the model to generate a
    larger graph, to help us understand how the network will look like
    in the future.
\item[(e)] {\em Sampling:} conversely, we can also generate a smaller
    graph, which may be useful for running simulation experiments
    (\emph{e.g.}, simulating routing algorithms in computer networks,
    or virus/worm propagation algorithms), when these algorithms may
    be too slow to run on large graphs.
\item[(f)] {\em Graph similarity:} to compare the similarity of the
    structure of different networks (even of different sizes) one can
    use the differences in estimated parameters as a similarity
    measure.
\item[(g)] {\em Graph visualization and compression:} we can compress
    the graph, by storing just the model parameters, and the
    deviations between the real and the synthetic graph. Similarly,
    for visualization purposes one can use the structure of the
    parameter matrix to visualize the backbone of the network, and
    then display the edges that deviate from the backbone structure.
\item[(h)] {\em Anonymization:} suppose that the real graph cannot be
    publicized, like, \emph{e.g.}, corporate e-mail network or
    customer-product sales in a recommendation system. Yet, we would
    like to share our network. Our work gives ways to such a
    realistic, 'similar' network.
\end{enumerate}

The current paper builds on our previous work on Kronecker
graphs~\cite{jure05kronecker,jure07kronfit} and is organized as follows:
Section~\ref{sec:KronRelated} briefly surveys the related literature. In
section \ref{sec:KronProposed} we introduce the Kronecker graph model,
and give formal statements about the properties of networks it generates.
We investigate the model using simulation in Section
\ref{sec:KronSimulation} and continue by introducing \KronFit, the
Kronecker graphs parameter estimation algorithm, in Section
~\ref{sec:KronKronFit}. We present experimental results on a wide range of real and
synthetic networks in Section~\ref{sec:KronExperiments}. We close with
discussion and conclusions in sections~\ref{sec:KronDiscussion}
and~\ref{sec:KronConclusion}.

\section{Relation to previous work on network modeling}
\label{sec:KronRelated}
Networks across a wide range of domains present surprising regularities, such as power laws, small diameters, communities, and so on. We use these patterns as sanity checks, that is, our synthetic graphs should match those properties of the real target graph.

Most of the related work in this field has concentrated on two aspects: properties and patterns found in real-world networks, and then ways to find models to build understanding about the emergence of these properties. First, we will discuss the commonly found patterns in (static and temporally evolving) graphs, and finally, the state of the art in graph generation methods.

\subsection{Graph Patterns}

Here we briefly introduce the network patterns (also referred to as properties or statistics) that we will later use to compare the similarity between the real networks and their synthetic counterparts produced by the Kronecker graphs model. While many patterns have been discovered, two of the principal ones are heavy-tailed degree distributions and small diameters.

{\em Degree distribution:} The degree-distribution of a graph is a power law if the number of nodes $\nnodes_d$ with degree $d$ is given by $\nnodes_d \propto d^{-\gamma}\quad (\gamma>0)$ where $\gamma$ is called the power law exponent. Power laws have been found in the Internet~\cite{faloutsos99powerlaw}, the Web~\cite{kleinberg99web,broder00bowtie}, citation graphs~\cite{redner98citation}, online social networks~\cite{chakrabarti04rmat} and many others.

{\em Small diameter:} Most real-world graphs exhibit relatively small diameter (the ``small- world'' phenomenon, or ``six degrees of separation''~\cite{milgram67smallworld}): A graph has diameter $\diam$ if every pair of nodes can be connected by a path of length at most $\diam$ edges. The diameter $\diam$ is susceptible to outliers. Thus, a more robust measure of the pair wise distances between nodes in a graph is the {\em \new{integer} effective diameter}~\cite{tauro01topology}, which is the minimum number of links (steps/hops) in which some fraction (or quantile $q$, say $q = 0.9$) of all connected pairs of nodes can reach each other.
\new{Here we make use of {\em effective diameter} which we define as follows~\cite{jure05dpl}. For each natural number $h$, let $g(h)$ denote the fraction of connected node pairs whose shortest connecting path has length at most $h$,
\emph{i.e.}, at most $h$ hops away. We then consider a function defined over all positive real numbers $x$ by linearly interpolating between the points $(h,g(h))$ and $(h+1,g(h+1))$ for each $x$, where $h=\lfloor x \rfloor$, and we define the {\em effective diameter} of the network to be the value $x$ at which the function $g(x)$ achieves the value 0.9.} The effective diameter has been found to be small for large real-world graphs, like Internet, Web, and online social networks~\cite{albert02statistical,milgram67smallworld,jure05dpl}.

{\em Hop-plot:} It extends the notion of diameter by plotting the number of reachable pairs $g(h)$ within $h$ hops, as a function of the number of hops $h$~\cite{palmer02anf}. It gives us a sense of how quickly nodes' neighborhoods expand with the number of hops.

{\em Scree plot:} This is a plot of the eigenvalues (or singular values) of the graph adjacency matrix, versus their rank, using the logarithmic scale. The scree plot is also often found to approximately obey a power law~\cite{chakrabarti04rmat,farkas01spectral}. Moreover, this pattern was also found analytically for random power law graphs~\cite{mihail02random,chung03eigenvalues}.

{\em Network values:} The distribution of eigenvector components (indicators of ``network value'') associated to the largest eigenvalue of the graph adjacency matrix has also been found to be skewed \cite{chakrabarti04rmat}.

{\em Node triangle participation:} \new{Edges in real-world networks and especially in social networks tend to cluster~\cite{watts98smallworld} and form triads of connected nodes. Node triangle participation is a measure of transitivity in networks. It counts the number of triangles a node participates in, {\em i.e.}, the number of connections between the neighbors of a node. The plot of the number of triangles $\Delta$ versus the number of nodes that participate in $\Delta$ triangles has also been found to be skewed~\cite{tsourakakis08triangles}.}

{\em Densification power law:} The relation between the number of edges $\nedges(t)$ and the number of nodes $\nnodes(t)$ in evolving network at time $t$ obeys the {\em densification power law} (DPL), which states that $\nedges(t) \propto \nnodes(t)^a$. The {\em densification exponent} $a$ is typically greater than $1$, implying that the average degree of a node in the network is {\em increasing} over time (as the network gains more nodes and edges). This means that real networks tend to sprout many more edges than nodes, and thus densify as they grow~\cite{jure05dpl,jure07evolution}.

{\em Shrinking diameter:} The effective diameter of graphs tends to shrink or stabilize as the number of nodes in a network grows over time~\cite{jure05dpl,jure07evolution}. This is somewhat counterintuitive since from common experience as one would expect that as the volume of the object (a graph) grows, the size (\emph{i.e.}, the diameter) would also grow. But for real networks this does not hold as the diameter shrinks and then seems to stabilize as the network grows.

\subsection{Generative models of network structure}

The earliest probabilistic generative model for graphs was the Erd\H{o}s-R\'{e}nyi~\cite{erdos60random} random graph model, where each pair of nodes has an identical, independent probability of being joined by an edge. The study of this model has led to a rich mathematical theory. However, as the model was not developed to model real-world networks it produces graphs that fail to match real networks in a number of respects (for example, it does not produce heavy-tailed degree distributions).

The vast majority of recent network models involve some form of \new{{\em preferential attachment} \cite{barabasi99emergence,albert02statistical,winick02inet,kleinberg99web,kumar99extracting,flaxman07geometric} } that employs a simple rule: new node joins the graph at each time step, and then creates a connection to an existing node $u$ with the probability proportional to the degree of the node $u$. This leads to the ``rich get richer'' phenomena and to power law tails in degree distribution. However, the diameter in this model grows slowly with the number of nodes $\nnodes$, which violates the ``shrinking diameter'' property mentioned above.

There are also many variations of preferential attachment model, all somehow employing the ``rich get richer'' type mechanism, e.g., the ``copying model''~\cite{kumar00stochastic}, the ``winner does not take all'' model~\cite{pennock02winners}, the ``forest fire'' model~\cite{jure05dpl}, the ``random surfer model'' \cite{blum06surfer}, etc.

A different family of network methods strives for small diameter and local clustering in networks. Examples of such models include the {\em small-world} model~\cite{watts98smallworld} and the Waxman generator~\cite{waxman88routing}. Another family of models shows that heavy tails emerge if nodes try to optimize their connectivity under resource constraints~\cite{carlson99hot,fabrikant02hot}.

In summary, most current models focus on modeling only one (static) network property, and neglect the others. In addition, it is usually hard to analytically analyze properties of the network model. On the other hand, the Kronecker graph model we describe in the next section addresses these issues as it matches multiple properties of real networks at the same time, while being analytically tractable and lending itself to rigorous analysis.

\subsection{Parameter estimation of network models}

\new{Until recently relatively little effort was made to fit the above network models to real data. One of the difficulties is that most of the above models usually
define a mechanism or a principle by which a network is constructed, and thus parameter estimation is either trivial or almost impossible.}

Most work in estimating network models comes from the area of social sciences, statistics and social network analysis where the {\em exponential random graphs}, also known as  $p*$ model, were introduced \cite{wasserman96pstar}. The model essentially defines a log linear model over all possible graphs $G$, $p(G|\theta) \propto \exp(\theta^{T}s(G))$, where $G$ is a graph, and $s$ is a set of functions, that can be viewed as summary statistics for the structural features of the network. The $p*$ model usually focuses on ``local'' structural features of networks (like, \emph{e.g.}, characteristics of nodes that determine a presence of an edge, link reciprocity, etc.). As exponential random graphs have been very useful for modeling small networks, and individual nodes and edges, our goal here is different in a sense that we aim to accurately model the structure of the network as a whole. Moreover, we aim to model and estimate parameters of networks with millions of nodes, while even for graphs of small size ($>100$ nodes) the number of model parameters in exponential random graphs usually becomes too large, and estimation prohibitively expensive, both in terms of computational time and memory.

Regardless of a particular choice of a network model, a common theme when estimating the likelihood $P(G)$ of a graph $G$ under some model is the challenge of finding the correspondence between the nodes of the true network and its synthetic counterpart. The node correspondence problem results in the factorially many possible matchings of nodes. One can think of the correspondence problem as a test of graph isomorphism. Two isomorphic graphs $G$ and $G'$ with differently assigned node IDs should have same likelihood $P(G)=P(G')$ so we aim to find an accurate mapping between the nodes of the two graphs.

An ordering or a permutation defines the mapping of nodes in one network to nodes in the other network. For example, Butts~\cite{butts05permutation} used permutation sampling to determine similarity between two graph adjacency matrices, while Bez{\'a}kov{\'a} {\em et al.}~\cite{bezakova06mle} used permutations for graph model selection. Recently, an approach for estimating parameters of the ``copying'' model was introduced~\cite{wiuf06likelihood}, however authors also note that the class of ``copying'' models may not be rich enough to accurately model real networks. As we show later, Kronecker graph model seems to have the necessary expressive power to mimic real networks well.
\section{Kronecker graph model}
\label{sec:KronProposed}
The Kronecker graph model we propose here is based on a recursive
construction. Defining the recursion properly is somewhat subtle, as a
number of standard, related graph construction methods fail to produce
graphs that densify according to the patterns observed in real networks,
and they also produce graphs whose diameters increase. To produce
densifying graphs with constant/shrinking diameter, and thereby match the
qualitative behavior of a real network, we develop a procedure that is
best described in terms of the {\em Kronecker product} of matrices.

\begin{table}[t]
  \begin{center}
  \begin{tabular}{l||l}
    {\sc Symbol} & {\sc Description}\\
    \hline\hline
    $\ggraph$ & Real network\\
    $\nnodes $ & Number of nodes in $\ggraph$ \\
    $\nedges $ & Number of edges in $\ggraph$ \\
    $\kgraph $ & Kronecker graph (synthetic estimate of $G$) \\
    $\kzero$  & Initiator of a \KRG \\
    $\nzero$ & Number of nodes in initiator $\kzero$ \\
    $\ezero$ & Number of edges in $\kzero$ \new{(the expected number of edges in $\pn{1}$, $\ezero=\sum \thij{ij}$)} \\
    $ G \otimes H $ & Kronecker product of adjacency matrices of graphs $G$ and $H$\\
    $\kn{1}^{[k]}=\kn{k} = K$& $k^{th}$ Kronecker power of $\kn{1}$ \\
    $\kn{1}[i,j]$ & Entry at row $i$ and column $j$ of $\kn{1}$ \\
    $\pzero = \pn{1}$ & Stochastic Kronecker initiator \\
    $\pn{1}^{[k]}=\pn{k} = \pmat$ & $k^{th}$ Kronecker power of $\pn{1}$\\
    $\thij{ij}=\pn{1}[i,j]$ & Entry at row $i$ and column $j$ of $\pn{1}$\\
    $\pij{ij} = \pn{k}[i,j]$ & Probability of an edge $(i,j)$ in $\pn{k}$, \emph{i.e.}, entry at
row $i$ and column $j$ of $\pn{k}$\\
    $K = R(\pmat)$ & Realization of a Stochastic Kronecker graph $\pmat$ \\
    $l(\pzero)$ & Log-likelihood. Log-prob. that $\pzero$ generated real graph $G$, $\log
P(\ggraph | \pzero)$\\
    $\hat\pzero$ & Parameters at maximum likelihood, $\hat{\pzero}=\argmax_\pzero P(G|\pzero)$
\\
    $\perm$  & Permutation that maps node IDs of $G$ to those of $\pmat$\\
    $a$      & Densification power law exponent, $\nedges(t) \propto \nnodes(t)^a$ \\
    $\diam$  & Diameter of a graph \\
    $\nnodes_c$ & Number of nodes in the largest weakly connected component of a graph\\
    $\omega$ & Proportion of times {\tt SwapNodes} permutation proposal distribution is used\\
    \end{tabular}
    \caption{Table of symbols.}
    \label{tab:KronSym}
  \end{center}
  %\vspace{1cm}
\end{table}

\subsection{Main idea}

The main intuition behind the model is to create self-similar graphs,
recursively. We begin with an {\em initiator} graph $\kn{1}$, with
$\nzero$ nodes and $\ezero$ edges, and by recursion we produce
successively larger graphs $\kn{2}, \kn{3}, \ldots$ such that the $k^{\rm
th}$ graph $\kn{k}$ is on $\nn{k} = \nzero ^ k$ nodes. If we want these
graphs to exhibit a version of the Densification power
law~\cite{jure05dpl}, then $\kn{k}$ should have \(\en{k} = \ezero^k\)
edges. This is a property that requires some care in order to get right,
as standard recursive constructions (for example, the traditional
Cartesian product or the construction of~\cite{barabasi01deterministic})
do not yield graphs satisfying the densification power law.

It turns out that the {\em Kronecker product} of two matrices is the right
tool for this goal. The Kronecker product is defined as follows:

%\medskip
\begin{definition}[Kronecker product of matrices]
Given two matrices \mbox{$\mat{A}=\left[a_{i,j}\right]$} and $\mat{B}$ of
sizes $n \times m$ and $n' \times m'$ respectively, the Kronecker product
matrix $\mat{C}$ of dimensions $(n \cdot n') \times (m \cdot m')$ is given
by
\begin{equation}
   \mat{C} = \mat{A} \otimes \mat{B} \doteq
   \left(\begin{array}{cccc}
     a_{1,1} \mat{B} & a_{1,2} \mat{B} & \ldots  & a_{1,m} \mat{B}\\
     a_{2,1} \mat{B} & a_{2,2} \mat{B} & \ldots  & a_{2,m} \mat{B}\\
     \vdots  & \vdots  & \ddots  & \vdots \\
     a_{n,1} \mat{B} & a_{n,2} \mat{B} & \ldots  & a_{n,m} \mat{B}\\
   \end{array}\right)
\end{equation}
\end{definition}

We then define the Kronecker product of two graphs simply as the Kronecker
product of their corresponding adjacency matrices.

%%\medskip
\begin{definition}[Kronecker product of graphs~\cite{weichsel62kronecker}]
If $G$ and $H$ are graphs with adjacency matrices $A(G)$ and $A(H)$
respectively, then the Kronecker product $G \otimes H$ is defined as the
graph with adjacency matrix $A(G) \otimes A(H)$.
\end{definition}

\begin{figure}[t]
\begin{center}
  \begin{tabular}{ccc}
  \raisebox{0.35in}{\includegraphics[height=0.8in]{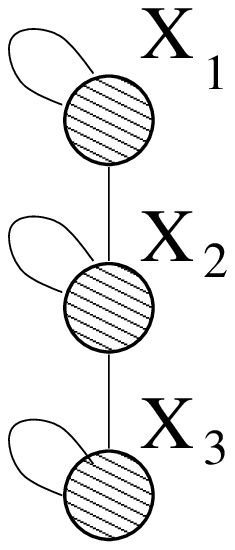}} &
  \raisebox{0.3in}{\includegraphics[height=0.8in]{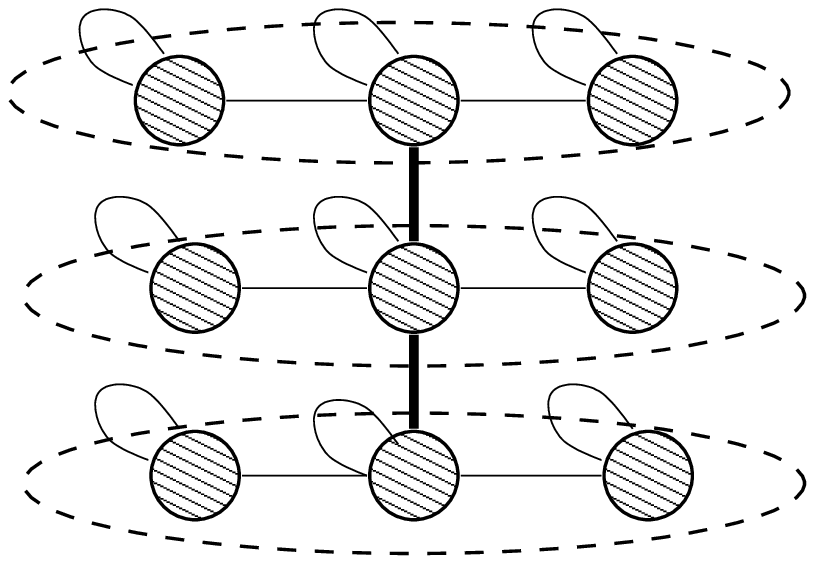}}
  & \includegraphics[height=1.2in]{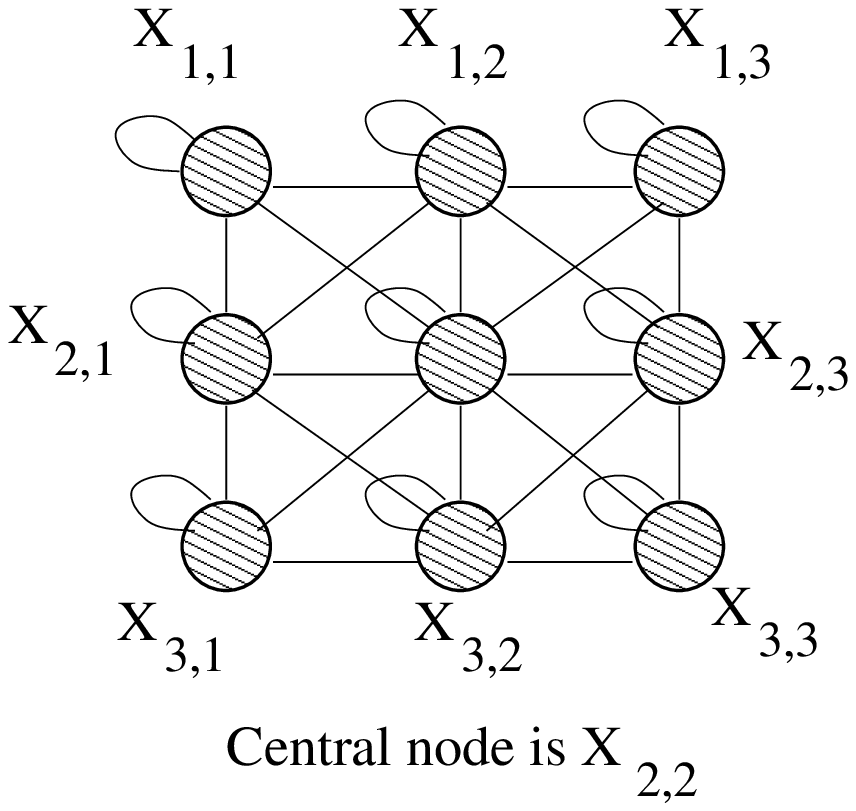} \\
  (a) Graph $\kn{1}$ & (b) Intermediate stage & (c) Graph
  $\kn{2}=\kn{1}\otimes \kn{1}$ \\
  \includegraphics[width=0.17\textwidth]{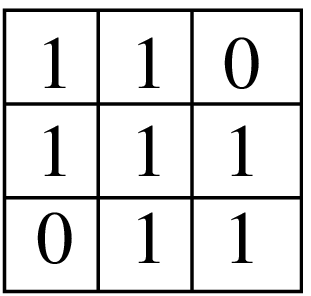} & &
  \includegraphics[width=0.17\textwidth]{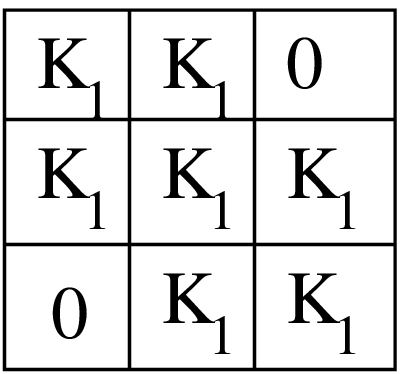} \\
  (d) Adjacency matrix & & (e) Adjacency matrix \\
  \hspace{1.5em} of $\kn{1}$ & & \hspace{2em}of $\kn{2}=\kn{1}\otimes \kn{1}$ \\
  \end{tabular}
  \caption{{\em Example of Kronecker multiplication:} Top: a ``3-chain''
  initiator graph and its Kronecker product with itself. Each of the $X_i$
  nodes gets expanded into $3$ nodes, which are then linked using
  Observation~\ref{obs:KronEdges}.  Bottom row: the corresponding
  adjacency matrices. See figure~\ref{fig:KronSpy3chain} for adjacency
  matrices of $\kn{3}$ and $\kn{4}$.}
  \label{fig:KronSpyplots}
 \end{center}
\end{figure}

\begin{figure}[t]
  \begin{center}
  \begin{tabular}{ccc}
    \includegraphics[width=0.4\textwidth]{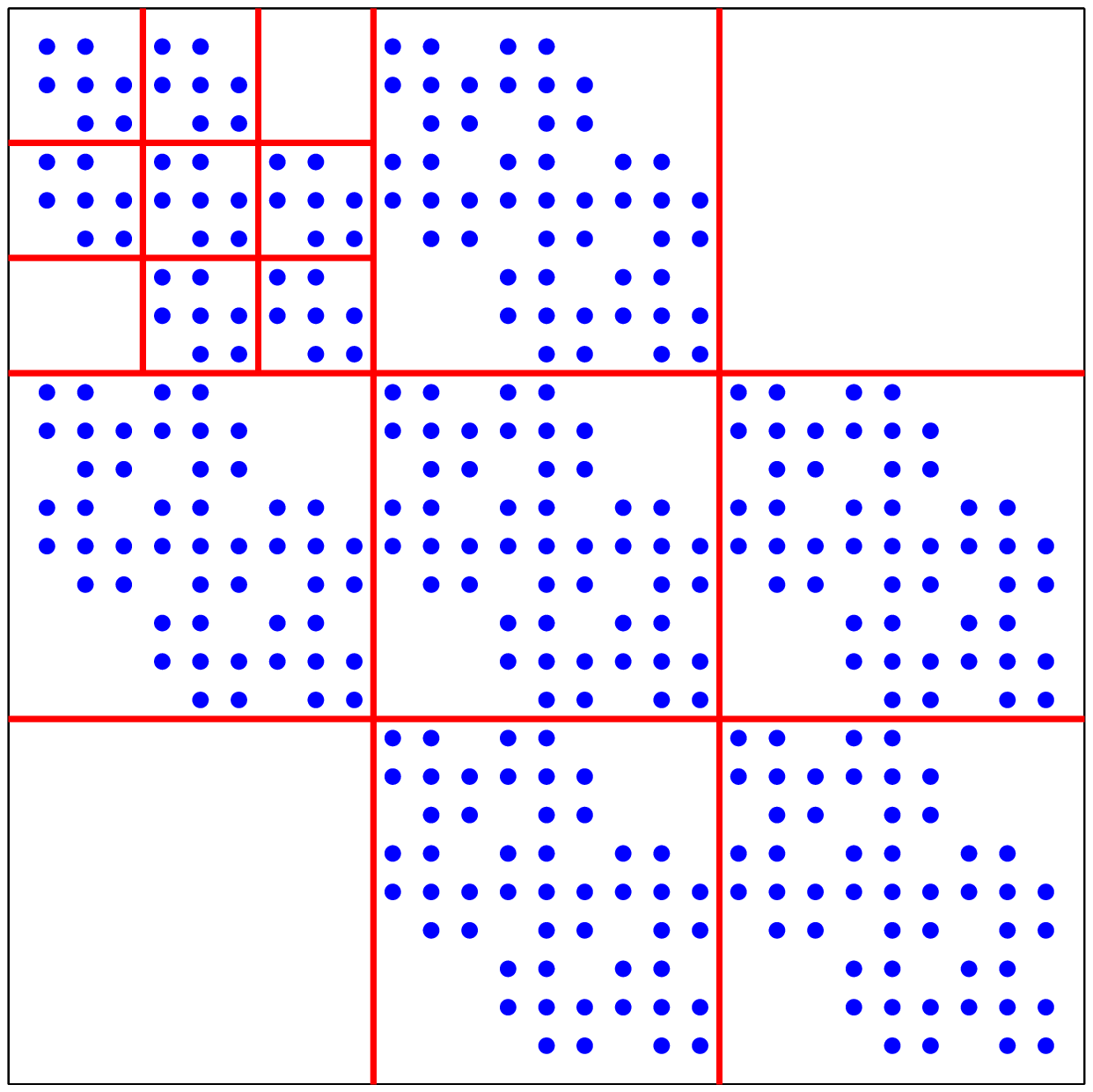} & \hspace{1cm} &
    \includegraphics[width=0.4\textwidth]{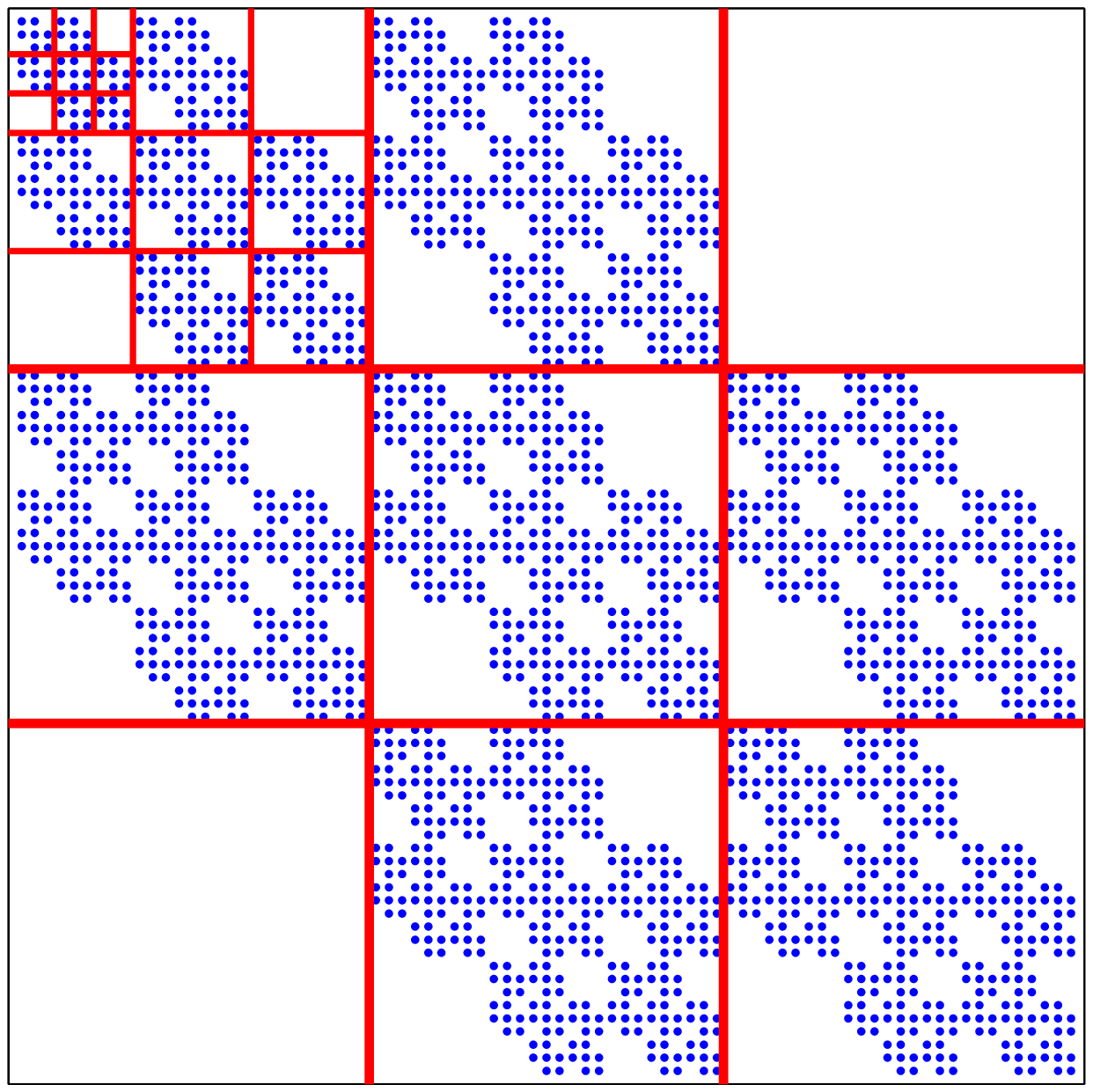} \\
    (a) $\kn{3}$ adjacency matrix ($27 \times 27$) & & (b) $\kn{4}$ adjacency matrix ($81\times
81$)\\
  \end{tabular}
  \end{center}
  \caption{Adjacency matrices of $\kn{3}$ and $\kn{4}$, the $3^{rd}$ and
  $4^{th}$ Kronecker power of $\kn{1}$ matrix as defined in
  Figure~\ref{fig:KronSpyplots}. Dots represent non-zero matrix entries,
  and white space represents zeros. Notice the recursive self-similar
  structure of the adjacency matrix.}
  \label{fig:KronSpy3chain}
\end{figure}

%\medskip
\begin{observation}[Edges in Kronecker-multiplied graphs]
  \begin{eqnarray}
  \mbox{Edge~}(X_{ij}, X_{kl}) \in G \otimes H \mbox{~iff~} (X_i,X_k)\in G
  \mbox{~and~} (X_j,X_l)\in H\nonumber
  \end{eqnarray}
  where $X_{ij}$ and $X_{kl}$ are nodes in $G\otimes H$, and $X_i$,
  $X_j$, $X_k$ and $X_l$ are the corresponding nodes in $G$ and $H$,
  as in Figure~\ref{fig:KronSpyplots}. \label{obs:KronEdges}
\end{observation}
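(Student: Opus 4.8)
The plan is to unwind the definition of the Kronecker product of matrices and read the claim off directly from the block structure, using the fact that adjacency matrices of simple, unweighted graphs have entries in $\{0,1\}$.

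First I would fix an explicit indexing convention that identifies a node $X_{ij}$ of $G \otimes H$ with a single row/column index of the product adjacency matrix. Writing $\mat{A} = A(G)$ of size $n \times n$ and $\mat{B} = A(H)$ of size $n' \times n'$, I associate the pair $(X_i, X_j)$ — with $X_i$ a node of $G$ and $X_j$ a node of $H$ — with the index $(i-1)n' + j$; this is exactly the ordering induced by the block layout in the definition of the Kronecker product, in which the $(i,k)$ block equals $a_{i,k}\mat{B}$. I would note in passing that $(i,j)\mapsto(i-1)n'+j$ is a bijection from $\{1,\dots,n\}\times\{1,\dots,n'\}$ onto $\{1,\dots,nn'\}$, so that ``nodes of $G \otimes H$'' and ``pairs consisting of a node of $G$ and a node of $H$'' are genuinely interchangeable, consistent with the labeling in Figure~\ref{fig:KronSpyplots}.

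Next I would compute the relevant entry. By the block form, the entry of $\mat{C} = \mat{A} \otimes \mat{B}$ in block position $(i,k)$ and within-block position $(j,l)$ is $a_{i,k}\, b_{j,l}$; equivalently $\mat{C}[(i-1)n'+j,\;(k-1)n'+l] = a_{i,k}\, b_{j,l}$. Since $G$ and $H$ are graphs, $a_{i,k}, b_{j,l} \in \{0,1\}$, so this product equals $1$ precisely when $a_{i,k} = 1$ and $b_{j,l} = 1$, i.e., when $(X_i,X_k)$ is an edge of $G$ and $(X_j,X_l)$ is an edge of $H$. Finally, by the definition of the Kronecker product of graphs, $(X_{ij}, X_{kl})$ is an edge of $G \otimes H$ iff the $\big(X_{ij},X_{kl}\big)$ entry of $A(G\otimes H)=\mat{A}\otimes\mat{B}$ is $1$, which under our identification is exactly the entry just computed; combining these gives the stated equivalence. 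There is no genuinely hard step here — the only thing that needs care is pinning down the index map so that the correspondence between nodes and node-pairs is unambiguous; once that convention is set, the observation is immediate from the definition.
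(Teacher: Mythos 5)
Your proposal is correct and follows essentially the same route as the paper, which treats this observation as an immediate consequence of the block structure of the Kronecker product (the paper offers only the informal remark that it is ``a direct consequence of the hierarchical nature of the Kronecker product''). Your version simply makes explicit the index bijection $(i,j)\mapsto (i-1)n'+j$ and the computation $\mat{C}[(i-1)n'+j,(k-1)n'+l]=a_{i,k}b_{j,l}$ with $a_{i,k},b_{j,l}\in\{0,1\}$, which is exactly the detail the paper leaves implicit.
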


The last observation is crucial, and deserves elaboration.
Basically, each node in $G \otimes H$ can be represented as an ordered
pair $X_{ij}$, with $i$ a node of $G$ and $j$ a node of $H$, and with an
edge joining $X_{ij}$ and $X_{kl}$ precisely when $(X_i, X_k)$ is an edge
of $G$ and $(X_j,X_l)$ is an edge of $H$. This is a direct consequence of
the hierarchical nature of the Kronecker product.
Figure~\ref{fig:KronSpyplots}(a--c) further illustrates this by showing
the recursive construction of $G \otimes H$, when $G = H$ is a 3-node
chain. Consider node $X_{1,2}$ in Figure~\ref{fig:KronSpyplots}(c):
%Node $X_{1,2}$
It belongs to the $H$ graph that replaced node $X_1$ (see
Figure~\ref{fig:KronSpyplots}(b)), and in fact is the $X_2$ node
(\emph{i.e.}, the center) within this small $H$-graph.
% In general, node $X_{ij}$ of the $G \otimes H$ graph
% denotes the node that comes from node $X_i$ of $G$,
% and, from the $H$ sub-graph that replaced node $X_i$,
% it is the node $X_j$ of $H$.

We propose to produce a growing sequence of matrices by iterating the
Kronecker product:

%\medskip
\begin{definition}[Kronecker power]
  The $k^{th}$ power of $\kn{1}$ is defined as the matrix $\kn{1}^{[k]}$
  (abbreviated to $\kn{k}$), such that:
  \begin{eqnarray}
  \kn{1}^{[k]} = \kn{k} = \mbox{\raisebox{-0.75em}{$\begin{array}{c}
    \underbrace{\kn{1} \otimes \kn{1} \otimes \ldots \kn{1}} \\
    k \mbox{~times}
        \end{array}$}}
    ~ = ~ \kn{k-1} \otimes \kn{1}
        \nonumber
  \end{eqnarray}
\end{definition}

%\medskip
\begin{definition}[Kronecker graph]
  Kronecker graph of order $k$ is defined by the adjacency matrix
  $\kn{1}^{[k]}$, where $\kn{1}$ is the Kronecker initiator adjacency matrix.
\end{definition}

The self-similar nature of the Kronecker graph product is clear: To
produce $\kn{k}$ from $\kn{k-1}$, we ``expand'' (replace) each node of
$\kn{k-1}$ by converting it into a copy of $\kn{1}$, and we join these
copies together according to the adjacencies in $\kn{k-1}$ (see
Figure~\ref{fig:KronSpyplots}). This process is very natural: one can
imagine it as positing that communities within the graph grow recursively,
with nodes in the community recursively getting expanded into miniature
copies of the community. Nodes in the sub-community then link among
themselves and also to nodes from other communities.

\new{Note that there are many different names to refer to Kronecker product of graphs.
Other names for the Kronecker product are tensor product,
categorical product, direct product, cardinal product,
relational product, conjunction, weak direct product
or just product, and even Cartesian product~\cite{imrich00product}.}

\begin{figure}[t]
  \begin{center}
  \begin{tabular}{clc}
    \raisebox{1cm}{\includegraphics[width=0.2\textwidth]{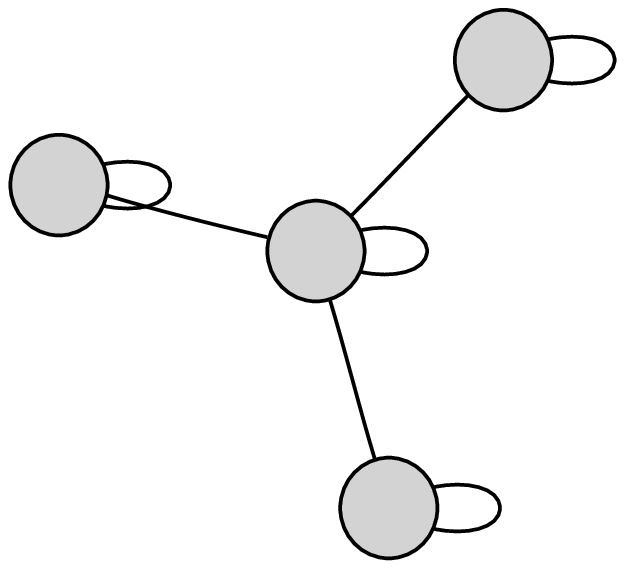}\hspace{1cm}}&
    \raisebox{1cm}{\includegraphics[width=3cm,height=2.8cm]{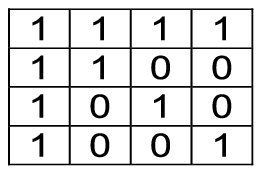}} \hspace{1cm} &
    \vspace{-1cm}\hspace{-1.5cm}{\includegraphics[width=0.4\textwidth]{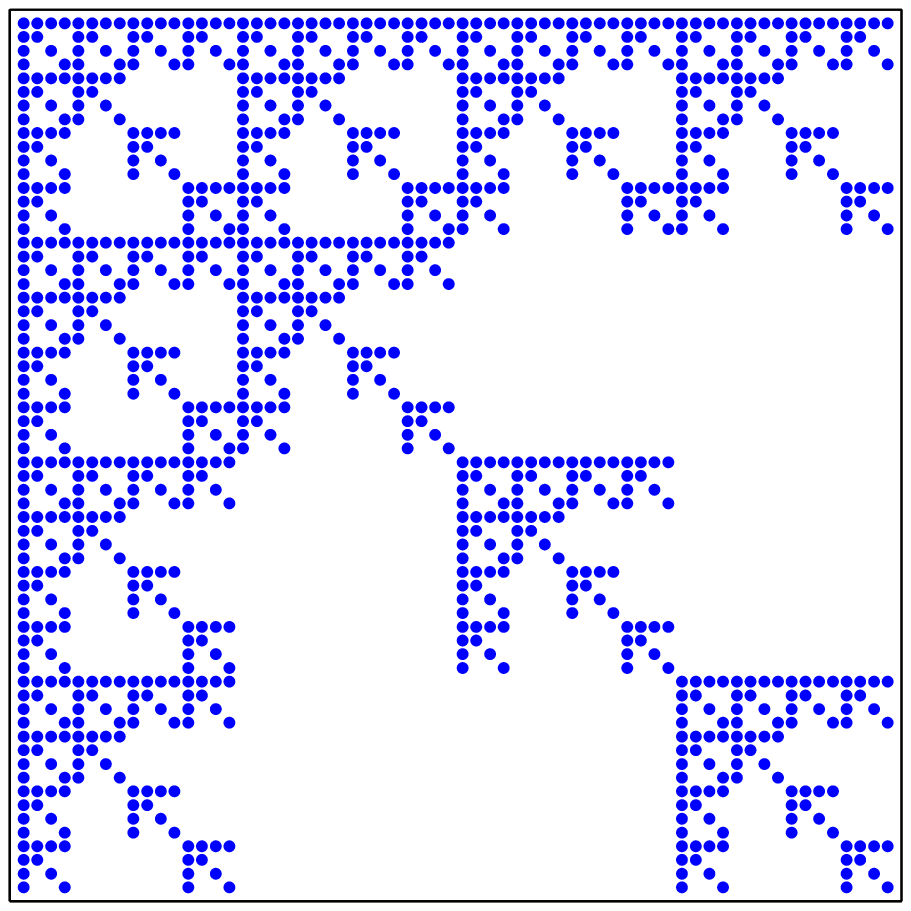}} \\
    & & \\
    \raisebox{1.5cm}{\includegraphics[width=0.2\textwidth]{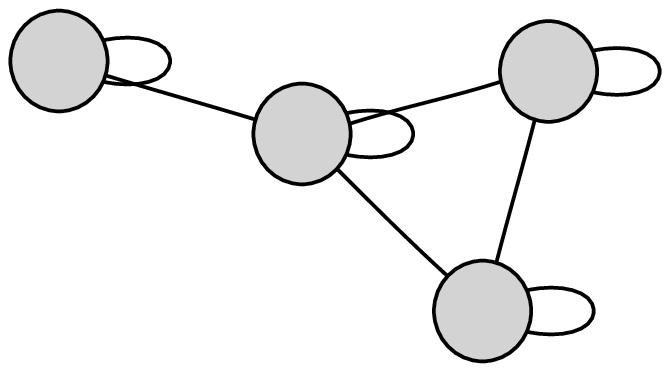}\hspace{1cm}}&
    \raisebox{1cm}{\includegraphics[width=3cm,height=2.8cm]{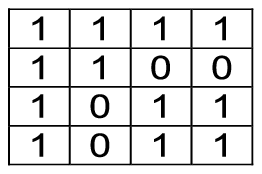}} \hspace{1cm} &
    \hspace{-1.5cm}{\includegraphics[width=0.4\textwidth]{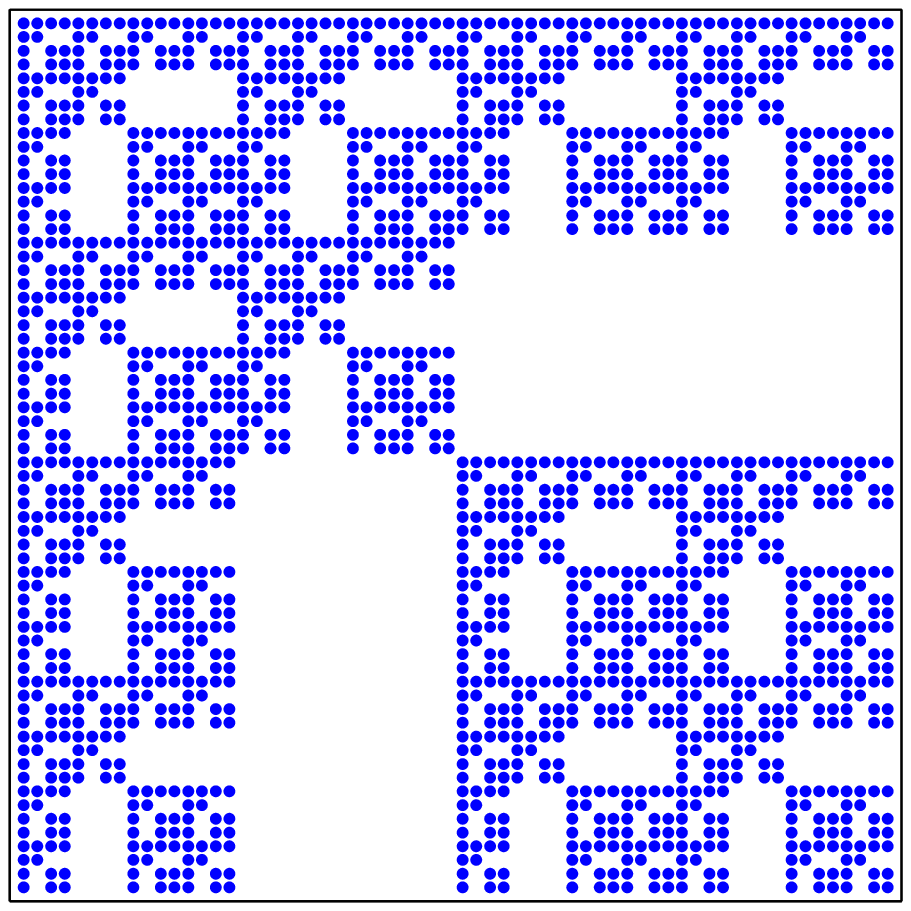}}\vspace{-0.5cm} \\
    Initiator $\kn{1}$ & $\kn{1}$ adjacency matrix & \hspace{-1.3cm}$\kn{3}$ adjacency matrix \\
  \end{tabular}
  \end{center}
  \caption{Two examples of Kronecker initiators on 4 nodes and
  the self-similar adjacency matrices they produce.}
  \label{fig:KronSpy4chain}
\end{figure}

\subsection{Analysis of Kronecker graphs}

We shall now discuss the properties of Kronecker graphs, specifically,
their degree distributions, diameters, eigenvalues, eigenvectors, and
time-evolution. Our ability to prove analytical results about all of these
properties is a major advantage of Kronecker graphs over other network
models.

\subsubsection{Degree distribution}

The next few theorems prove that several distributions of interest are
{\em multinomial} for our Kronecker graph model. This is important,
because a careful choice of the initial graph $\kn{1}$ makes the resulting
multinomial distribution to behave like a power law or Discrete Gaussian Exponential (DGX)
distribution~\cite{bi01dgx,clauset07powerlaw}.

%\medskip
\begin{theorem}[Multinomial degree distribution]
  Kronecker graphs have multinomial degree distributions, for both in-
  and out-degrees. \label{lem:KronDegdist}
\end{theorem}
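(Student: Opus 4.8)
The plan is to show that the out-degree of every node of $\kn{k}$ is a product of out-degrees of nodes of the initiator $\kzero$, and then to count how many nodes of $\kn{k}$ realize each such product. First I would record the elementary fact that the Kronecker product is multiplicative on row sums: if $\mat{A}$ has row sums $r_1,\dots,r_n$ and $\mat{B}$ has row sums $s_1,\dots,s_{n'}$, then the $(i,j)$-th row of $\mat{A}\otimes\mat{B}$ sums to $r_i s_j$, since $(\mat{A}\otimes\mat{B})_{(i,j),(k,l)}=a_{ik}b_{jl}$ and $\sum_{k,l}a_{ik}b_{jl}=\big(\sum_k a_{ik}\big)\big(\sum_l b_{jl}\big)$; this is immediate from the block structure $\mat{A}\otimes\mat{B}=[a_{ij}\mat{B}]$. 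Iterating this (a straightforward induction on $k$ using $\kn{k}=\kn{k-1}\otimes\kzero$) shows that if we index the nodes of $\kn{k}$ by $k$-tuples $(u_1,\dots,u_k)$ with each $u_m\in\{1,\dots,\nzero\}$ --- which is exactly the labeling supplied by Observation~\ref{obs:KronEdges} --- then the out-degree of node $(u_1,\dots,u_k)$ in $\kn{k}$ equals $\prod_{m=1}^{k} d_{u_m}$, where $d_1,\dots,d_{\nzero}$ are the out-degrees of the nodes of $\kzero$.

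Next I would pass from tuples to \emph{signatures}. A node's degree depends on its tuple only through the counts $c_v=\#\{m:u_m=v\}$ for $v=1,\dots,\nzero$, with $\sum_v c_v=k$, the corresponding degree being $\prod_v d_v^{\,c_v}$. The number of $k$-tuples with a prescribed count vector $(c_1,\dots,c_{\nzero})$ is the multinomial coefficient $\binom{k}{c_1,\dots,c_{\nzero}}$. Hence, across all of $\kn{k}$, the number of nodes of out-degree exactly $\delta$ is $\sum\binom{k}{c_1,\dots,c_{\nzero}}$ over count vectors with $\prod_v d_v^{\,c_v}=\delta$; equivalently, drawing a node of $\kn{k}$ uniformly at random, its degree signature $(c_1,\dots,c_{\nzero})$ is distributed as $\mathrm{Mult}(k;1/\nzero,\dots,1/\nzero)$, and the out-degree is the deterministic function $\prod_v d_v^{\,c_v}$ of that signature. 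This is precisely the sense in which the out-degree distribution is multinomial. The in-degree statement is identical with row sums replaced by column sums --- literally the out-degree statement applied to $\kn{k}^{\top}=(\kzero^{\top})^{[k]}$.

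The only real subtlety --- the point I would be most careful about --- is that the theorem as phrased concerns the \emph{shape} of a distribution rather than an equality of counts: distinct signatures can yield the same numerical degree whenever the $d_v$ admit a multiplicative coincidence such as $d_1 d_3=d_2^2$, so the clean statement must be made at the level of signatures as above, after which one observes that $\log(\text{degree})=\sum_{m}\log d_{u_m}$ is a sum of $k$ i.i.d.\ terms; this is what makes the multinomial behave like a DGX / lognormal and, for a suitable $\kzero$, approximate a power law over a range, and collapsing signatures into degrees only merges bins without disturbing it. A secondary item worth stating explicitly is the bijection between nodes of $\kn{k}$ and $k$-tuples over $\{1,\dots,\nzero\}$ on which the whole argument rests; it follows from the recursive block structure of the Kronecker product but should be invoked cleanly when the induction is set up.
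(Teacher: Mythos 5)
Your proposal is correct and follows essentially the same route as the paper's proof: express the degree of each node of $\kn{k}$ as a product $d_{i_1}\cdots d_{i_k}$ of initiator degrees (one node per ordered $k$-tuple) and count tuples with a given signature via multinomial coefficients. You supply two details the paper elides --- the explicit row-sum multiplicativity of the Kronecker product and the caveat that distinct signatures may collide onto the same numerical degree --- but these are refinements of the same argument, not a different one.
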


\begin{proof}
  Let the initiator $\kn{1}$ have the degree sequence $d_1, d_2, \ldots,
  d_{N_1}$. Kronecker multiplication of a node with degree $d$ expands
  it into $N_1$ nodes, with the corresponding degrees being $d\times
  d_1, d\times d_2, \ldots, d\times d_{N_1}$. After Kronecker powering,
  the degree of each node in graph $\kn{k}$ is of the form $ d_{i_1}\times
  d_{i_2} \times \ldots d_{i_k}$, with $i_1, i_2,\ldots, i_k \in
  (1\ldots N_1)$, and there is one node for each ordered combination.
  This gives us the multinomial distribution on the degrees of $\kn{k}$.
  So, graph $\kn{k}$ will have multinomial degree distribution where
  the ``events'' (degrees) of the distribution will
  be combinations of degree products: $d_1^{i_1}d_2^{i_2}\dots
  d_{N_1}^{i_{N_1}}$ (where $\sum_{j=1}^{N_1} i_j = k$) and event (degree)
  probabilities will be proportional to ${k \choose i_1 i_2 \dots i_{N_1}}$.
  Note also that this is equivalent to noticing that
  the degrees of nodes in $\kn{k}$ can be expressed as the
  $k^{th}$ Kronecker power of the vector $(d_1, d_2, \dots, d_{N_1})$.
\end{proof}

\subsubsection{Spectral properties}

Next we analyze the spectral properties of adjacency matrix of a Kronecker
graph. We show that both the distribution of eigenvalues and the
distribution of component values of eigenvectors of the graph adjacency
matrix follow multinomial distributions.

%\medskip
\begin{theorem}[Multinomial eigenvalue distribution]
  The Kronecker graph $\kn{k}$ has a multinomial distribution for its
  eigenvalues.
\end{theorem}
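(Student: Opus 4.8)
The plan is to exploit the classical linear-algebra fact that the spectrum of a Kronecker product of matrices is exactly the set of pairwise products of the spectra of the factors, and then iterate, in close analogy with the degree-distribution argument of Theorem~\ref{lem:KronDegdist}. First I would recall the mixed-product property: if $\mat{A}$ is $n\times n$ with eigenvalue $\lambda$ and right eigenvector $u$, and $\mat{B}$ is $m\times m$ with eigenvalue $\mu$ and eigenvector $v$, then $(\mat{A}\otimes\mat{B})(u\otimes v) = (\mat{A}u)\otimes(\mat{B}v) = \lambda\mu\,(u\otimes v)$, so $\lambda\mu$ is an eigenvalue of $\mat{A}\otimes\mat{B}$. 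To see that these products exhaust all $nm$ eigenvalues with the correct multiplicities, I would diagonalize: since the adjacency matrix $\kn{1}$ of an undirected graph is symmetric it is orthogonally diagonalizable, $\kn{1}=U\Lambda U^{-1}$, and then $(U\otimes U)(\Lambda\otimes\Lambda)(U\otimes U)^{-1} = (U\Lambda U^{-1})\otimes(U\Lambda U^{-1})$, using $(U\otimes U)^{-1}=U^{-1}\otimes U^{-1}$ and the mixed-product identity, shows that $\kn{1}\otimes\kn{1}$ is similar to the diagonal matrix $\Lambda\otimes\Lambda$, whose diagonal lists precisely the products $\lambda_i\lambda_j$.

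Next I would apply this inductively to $\kn{k} = \kn{k-1}\otimes\kn{1}$. If $\kn{1}$ has eigenvalues $\lambda_1,\dots,\lambda_{N_1}$, then by induction the eigenvalues of $\kn{k}$ are exactly the $N_1^k$ products $\lambda_{i_1}\lambda_{i_2}\cdots\lambda_{i_k}$ over all ordered tuples $(i_1,\dots,i_k)\in(1\ldots N_1)^k$, one eigenvalue for each ordered combination. Grouping these tuples by how many times each index value occurs, the distinct eigenvalues take the form $\lambda_1^{i_1}\lambda_2^{i_2}\cdots\lambda_{N_1}^{i_{N_1}}$ with $\sum_{j=1}^{N_1} i_j = k$, and the number of tuples producing a given such value is the multinomial coefficient ${k \choose i_1\, i_2\, \cdots\, i_{N_1}}$. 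Hence the eigenvalue multiplicities follow a multinomial distribution; equivalently, the vector of eigenvalues of $\kn{k}$ is the $k^{th}$ Kronecker power of the eigenvalue vector $(\lambda_1,\dots,\lambda_{N_1})$ of $\kn{1}$, exactly paralleling the degree-sequence statement.

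I do not expect a serious obstacle: the argument is the eigenvalue analogue of the degree-sequence computation already performed, and the only point needing a sentence of care is the completeness/multiplicity claim — that the products $\lambda_{i_1}\cdots\lambda_{i_k}$ give the \emph{entire} spectrum of $\kn{k}$ with correct multiplicities, not merely a subset. For undirected graphs this is immediate from the real spectral theorem as above; for the general (directed) case one can instead transport a Schur triangularization (or Jordan form) $\kn{1}=Q T Q^{-1}$ through the Kronecker product via the same mixed-product identity $(\mat{S}_1\otimes\mat{S}_2)(\mat{T}_1\otimes\mat{T}_2)=(\mat{S}_1\mat{T}_1)\otimes(\mat{S}_2\mat{T}_2)$, since $T\otimes T$ remains triangular with the products on its diagonal. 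Everything else is bookkeeping identical to the multinomial count in the degree-distribution proof.
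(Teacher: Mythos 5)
Your proof is correct and follows essentially the same route as the paper: the paper simply cites the standard fact that the eigenvalues of $\kn{k}$ are the $k^{th}$ Kronecker power of the initiator's eigenvalue vector and then invokes the same multinomial counting as in the degree-distribution theorem. The only difference is that you supply the proof of that cited linear-algebra fact (via the mixed-product property and diagonalization/triangularization, with appropriate care for the directed case), which the paper leaves to the references.
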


\begin{proof}
  Let $\kn{1}$ have the eigenvalues $\lambda_{1}, \lambda_{2}, \ldots,
  \lambda_{N_1}$. By properties of the Kronecker
  multiplication~\cite{vanloan00ubiquitous,langville04kronecker}, the
  eigenvalues of $\kn{k}$ are the $k^{th}$ Kronecker power of the vector of
  eigenvalues of the initiator matrix,
  $(\lambda_{1}, \lambda_{2}, \ldots, \lambda_{N_1})^{[k]}$. As in
  Theorem~\ref{lem:KronDegdist}, the eigenvalue distribution is a
  multinomial.
\end{proof}

A similar argument using properties of Kronecker matrix multiplication
shows the following.

%\medskip
\begin{theorem}[Multinomial eigenvector distribution]
  The components of each eigenvector of the Kronecker graph $\kn{k}$
  follow a multinomial distribution.
\end{theorem}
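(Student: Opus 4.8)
The plan is to mirror the argument used for Theorem~\ref{lem:KronDegdist}, replacing the fact ``degrees multiply under Kronecker products'' with the corresponding fact for eigenvectors. The key algebraic ingredient is that if $\mat{A}\mat{u}=\lambda\mat{u}$ and $\mat{B}\mat{v}=\mu\mat{v}$ then $(\mat{A}\otimes\mat{B})(\mat{u}\otimes\mat{v})=(\lambda\mu)(\mat{u}\otimes\mat{v})$, so each such $\mat{u}\otimes\mat{v}$ is an eigenvector of $\mat{A}\otimes\mat{B}$; when $\mat{A}$ and $\mat{B}$ are diagonalizable (in particular when they are symmetric adjacency matrices of undirected graphs), these products already span the whole space and hence exhaust the eigenvectors. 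First I would fix a basis of eigenvectors $\mat{u}_1,\dots,\mat{u}_{N_1}$ of the initiator $\kn{1}$ and apply the identity inductively along $\kn{k}=\kn{k-1}\otimes\kn{1}$, concluding that the eigenvectors of $\kn{k}$ are exactly the vectors $\mat{u}_{i_1}\otimes\mat{u}_{i_2}\otimes\cdots\otimes\mat{u}_{i_k}$ with $i_1,\dots,i_k\in\{1,\dots,N_1\}$.

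Next I would read off a single such eigenvector's components. A component of $\mat{u}_{i_1}\otimes\cdots\otimes\mat{u}_{i_k}$ is indexed by a $k$-tuple $(j_1,\dots,j_k)$ and equals the product $(\mat{u}_{i_1})_{j_1}(\mat{u}_{i_2})_{j_2}\cdots(\mat{u}_{i_k})_{j_k}$. Holding the eigenvector fixed (i.e.\ fixing $i_1,\dots,i_k$) and letting the index tuple range over all $N_1^k$ possibilities, each component value is a product of $k$ numbers drawn from the coordinate multisets of the relevant initiator eigenvectors; the distinct values are monomials $\prod_m c_m^{a_m}$ with $\sum_m a_m = k$, and the number of coordinates attaining a given value is the multinomial coefficient ${k \choose a_1\,a_2\,\cdots}$ --- precisely the multinomial structure exploited in the proof of Theorem~\ref{lem:KronDegdist}. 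Equivalently, the vector of component values of any eigenvector of $\kn{k}$ is the $k$-th Kronecker power of a vector assembled from the initiator eigenvector coordinates, which is exactly what makes the resulting histogram multinomial.

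The main obstacle I anticipate is one of hypotheses rather than of calculation: if $\kn{1}$ is not diagonalizable (possible for a directed initiator), the Kronecker products of eigenvectors need not span the eigenspaces, so one must either restrict to the symmetric/undirected case, or pass to generalized eigenvectors via the Jordan form of $\kn{1}$ (whose Kronecker powers still have an explicit block structure), or simply state the claim for those eigenvectors that do factor. A secondary point worth making explicit, exactly as in the degree-distribution theorem, is that ``multinomial distribution'' here means that the distribution of component values of a fixed eigenvector is governed by multinomial coefficients on the products of initiator-eigenvector coordinates; I would spell this out so the statement is unambiguous.
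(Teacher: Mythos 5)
Your proposal is correct and follows essentially the same route as the paper: the paper's proof simply cites the fact that the eigenvectors of $\kn{k}$ are the $k^{th}$ Kronecker power of the vector of initiator eigenvectors and reads off the multinomial structure of the component values, which is exactly the argument you spell out in more detail. Your added caveat about diagonalizability (needed for the factored eigenvectors to exhaust the spectrum in the directed case) is a legitimate refinement that the paper glosses over, but it does not change the substance of the argument.
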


\begin{proof}
  Let $\kn{1}$ have the eigenvectors $\vec{v}_{1}, \vec{v}_{2}, \ldots,
  \vec{v}_{N_1}$. By properties of the Kronecker
  multiplication~\cite{vanloan00ubiquitous,langville04kronecker}, the
  eigenvectors of $\kn{k}$ are given by the $k^{th}$ Kronecker power of
  the vector: $(\vec{v}_{1}, \vec{v}_{2}, \ldots, \vec{v}_{\nzero})$,
  which gives a multinomial distribution for the components of each
  eigenvector in $\kn{k}$.
\end{proof}

We have just covered several of the static graph patterns. Notice that the
proofs were a direct consequences of the Kronecker multiplication
properties.

\subsubsection{Connectivity of Kronecker graphs}

We now present a series of results on the connectivity of Kronecker
graphs. We show, maybe a bit surprisingly, that even if a Kronecker
initiator graph is connected its Kronecker power can in fact be
disconnected.

%\medskip
\begin{lemma}
  If at least one of $G$ and $H$ is a disconnected graph, then
  $G \otimes H$ is also disconnected.
\end{lemma}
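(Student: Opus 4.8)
The plan is to exploit the explicit description of the vertices and edges of a Kronecker product recorded in Observation~\ref{obs:KronEdges}. First I would observe that $G \otimes H$ and $H \otimes G$ are isomorphic (relabel each ordered pair $X_{ij} \mapsto X_{ji}$), so without loss of generality I may assume that it is $G$ that is disconnected. Since $G$ is disconnected, its vertex set admits a partition into two nonempty sets $A$ and $B$ with no edge of $G$ joining a vertex of $A$ to a vertex of $B$; in particular $|V(G)| \ge 2$, and since $V(H) \ne \emptyset$ the product $G \otimes H$ has at least two vertices.

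Next I would lift this partition to the product. Define $S = \{ X_{ij} : X_i \in A \}$ and $T = \{ X_{ij} : X_i \in B \}$. These sets are disjoint, together cover $V(G \otimes H)$, and each is nonempty (being a union of $|A|$, resp.\ $|B|$, copies of $V(H)$). I then claim there is no edge of $G \otimes H$ between $S$ and $T$: by Observation~\ref{obs:KronEdges}, an edge $(X_{ij}, X_{kl})$ can exist only if $(X_i, X_k)$ is an edge of $G$; but if $X_{ij} \in S$ and $X_{kl} \in T$ then $X_i \in A$ and $X_k \in B$, and no such edge exists in $G$. Consequently every connected component of $G \otimes H$ is contained entirely in $S$ or entirely in $T$, so $G \otimes H$ has at least two components and is disconnected.

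I do not expect any genuine obstacle here; the argument is essentially a one-line consequence of the edge characterization. The only points needing a little care are the degenerate bookkeeping: making sure $V(H)$ is nonempty so that both $S$ and $T$ are actually nonempty, and recalling that a single-vertex graph is conventionally connected, so that the hypothesis really does force $|V(G)| \ge 2$ (and symmetrically for $H$). The presence or absence of self-loops in the initiators is irrelevant, since the partition argument never inspects loops.
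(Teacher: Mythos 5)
Your proposal is correct and follows essentially the same route as the paper's proof: partition the vertex set of the disconnected factor into two parts with no crossing edges, lift that partition to $G \otimes H$, and invoke Observation~\ref{obs:KronEdges} to conclude no edges cross in the product. Your version is slightly more careful about the degenerate bookkeeping (nonemptiness of $V(H)$, the symmetry reduction via $G \otimes H \cong H \otimes G$), but the core argument is the same.
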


\begin{proof}
Without loss of generality we can assume that $G$ has two connected
components, while $H$ is connected. Figure~\ref{fig:KronConnect}(a)
illustrates the corresponding adjacency matrix for $G$. Using the notation
from Observation~\ref{obs:KronEdges} let graph let $G$ have nodes $X_1,
\ldots, X_n$, where nodes $\{X_1, \ldots X_r\}$ and $\{X_{r+1}, \ldots,
X_n\}$ form the two connected components. Now, note that $(X_{ij}, X_{kl})
\notin G \otimes H$ for $i \in \{1, \ldots, r\}$, $k \in \{r+1, \ldots,
n\}$, and all $j$, $l$. This follows directly from
Observation~\ref{obs:KronEdges} as $(X_i, X_k)$ are not edges in $G$.
Thus, $G \otimes H$ must at least two connected components.
\end{proof}

Actually it turns out that both $G$ and $H$ can be connected while $G
\otimes H$ is disconnected. The following theorem analyzes this
case.

\begin{figure}[t]
  \begin{center}
  \begin{tabular}{ccccc}
    \includegraphics[width=0.24\textwidth]{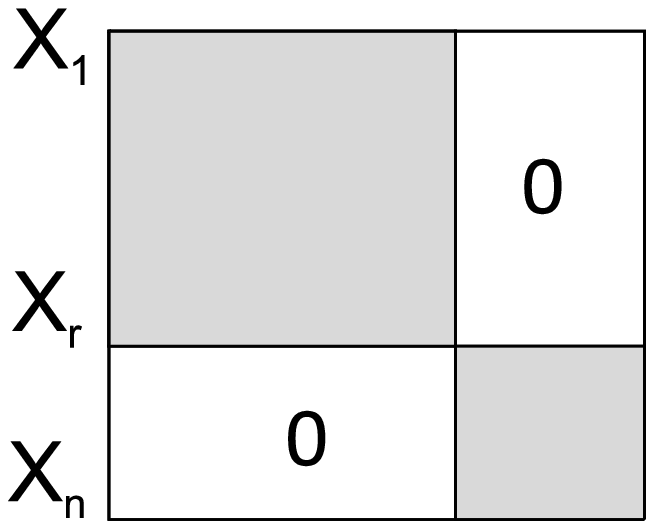} & \hspace{5mm} &
    \includegraphics[width=0.24\textwidth]{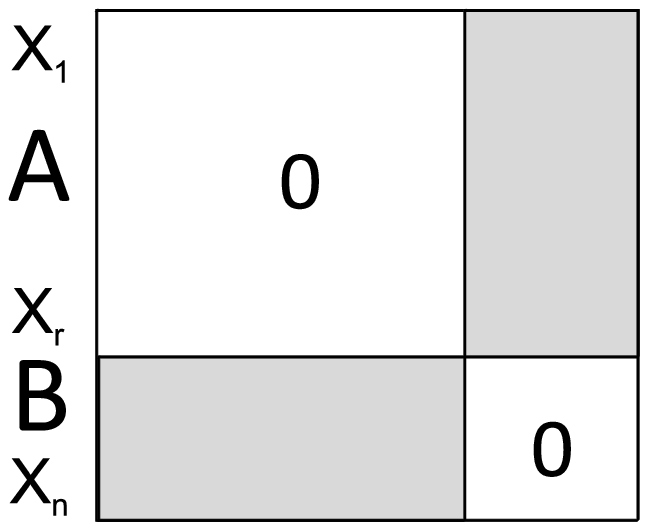} & \hspace{5mm} &
    \includegraphics[width=0.24\textwidth]{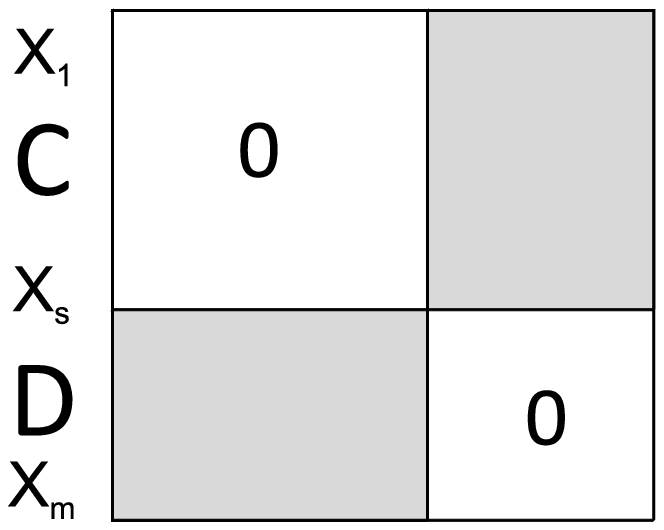}\\
    (a) Adjacency matrix  & & (b) Adjacency matrix  & & (c) Adjacency matrix \\
    when $G$ is disconnected & & when $G$ is bipartite & & when $H$ is bipartite\\
  \end{tabular}
  \begin{tabular}{ccc}
    & & \\
    \includegraphics[width=0.35\textwidth]{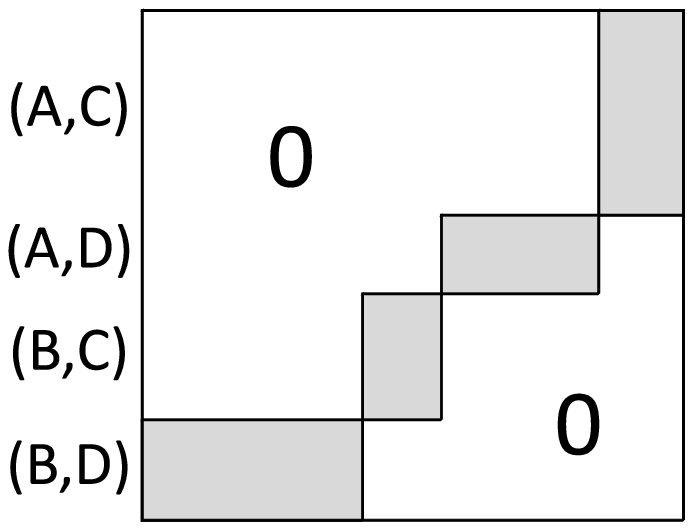} & \hspace{5mm} &
    \includegraphics[width=0.35\textwidth]{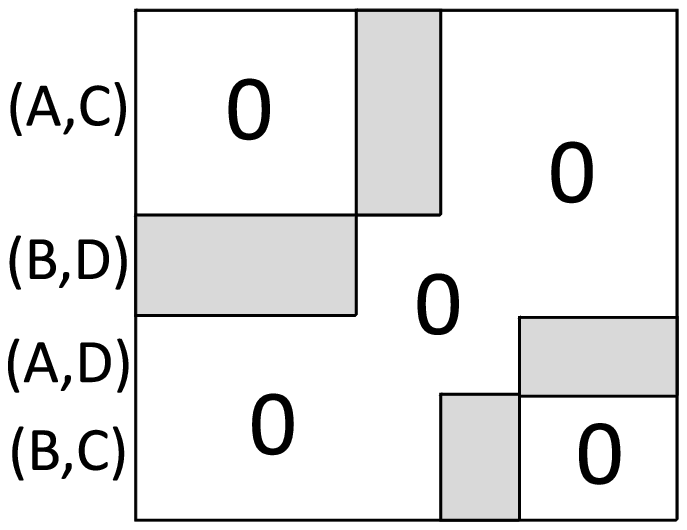} \\
    (d) Kronecker product of & & (e) Rearranged adjacency \\
    two bipartite graphs $G$ and $H$& &  matrix from panel (d)\\
  \end{tabular}
  \end{center}
  \caption{\new{Graph adjacency matrices. Dark parts represent connected (filled
  with ones) and white parts represent empty (filled with zeros) parts of the
  adjacency matrix. (a) When $G$ is disconnected, Kronecker multiplication
  with any matrix $H$ will result in $G \otimes H$ being disconnected. (b)
  Adjacency matrix of a connected bipartite graph $G$ with node partitions $A$
  and $B$. (c) Adjacency matrix of a connected bipartite graph $G$ with node partitions $C$
  and $D$. (e)  Kronecker product of two bipartite graphs $G$ and $H$.
  (d) After rearranging the adjacency matrix $G \otimes H$ we clearly see the
  resulting graph is disconnected.}}
  \label{fig:KronConnect}
\end{figure}

%\medskip
\begin{theorem}
  If both $G$ and $H$ are connected but bipartite, then $G \otimes H$ is
  disconnected, and each of the two connected components is again
  bipartite.
\end{theorem}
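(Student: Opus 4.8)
The plan is to exhibit an explicit splitting of the node set of $G\otimes H$ into two nonempty blocks across which there are no edges, to observe that within each block all edges run between two sub-blocks (so each block induces a bipartite subgraph), and finally to prove that each block is connected; together these show that $G\otimes H$ has exactly two connected components, each bipartite.

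First I would fix notation. Let $A,B$ be the two sides of the bipartition of $G$ and $C,D$ the two sides of $H$; since $G$ and $H$ are connected bipartite graphs (each with at least one edge), all of $A,B,C,D$ are nonempty. By Observation~\ref{obs:KronEdges} a node of $G\otimes H$ is a pair $(X_i,X_j)$, and $(X_i,X_j)$ is adjacent to $(X_k,X_l)$ iff $(X_i,X_k)\in G$ and $(X_j,X_l)\in H$; since $G$ is bipartite every edge of $G$ joins $A$ to $B$, and every edge of $H$ joins $C$ to $D$. Define
$$S_1=(A\times C)\cup(B\times D),\qquad S_2=(A\times D)\cup(B\times C).$$
These partition $V(G\otimes H)$ into nonempty blocks, and a one-line check from Observation~\ref{obs:KronEdges} shows that any neighbour of a vertex in $A\times C$ lies in $B\times D$ and conversely; hence every edge incident to $S_1$ stays inside $S_1$ and runs between $A\times C$ and $B\times D$, and symmetrically for $S_2$. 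In particular $G\otimes H$ is disconnected, the subgraph induced on $S_1$ is bipartite with sides $A\times C,\ B\times D$, and the subgraph induced on $S_2$ is bipartite with sides $A\times D,\ B\times C$. It then only remains to check that $S_1$ and $S_2$ are each connected.

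For connectivity of $S_1$ (the argument for $S_2$ is identical) I would use the elementary fact, immediate by iterating Observation~\ref{obs:KronEdges}, that a walk of length $n$ in $G\otimes H$ from $(X_i,X_j)$ to $(X_k,X_l)$ is exactly a pair of length-$n$ walks, one from $X_i$ to $X_k$ in $G$ and one from $X_j$ to $X_l$ in $H$. So, given two vertices of $S_1$, it suffices to produce walks of a common length joining their $G$-coordinates and joining their $H$-coordinates. Connectivity of $G$ and $H$ supplies walks of some lengths; bipartiteness of a connected graph makes the parity of a walk between two fixed vertices depend only on whether they lie on the same side; and — the step where both factors being bipartite is essential — for any two vertices of $S_1$ the two $G$-coordinates lie on the same side of $G$ if and only if the two $H$-coordinates lie on the same side of $H$ (either both vertices sit in $A\times C$, or both in $B\times D$, or one in each, in which case both coordinate pairs straddle their bipartitions). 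Hence the walks obtained in $G$ and in $H$ can be chosen of the same parity, and two walks of equal parity are brought to a common length by appending back-and-forth traversals of a single incident edge — possible since every vertex of a connected graph with an edge has a neighbour. Splicing the padded walks via the fact above yields a walk inside $S_1$ between the two given vertices, so $S_1$ (and likewise $S_2$) is connected.

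The main obstacle is the parity bookkeeping in the last paragraph: one must verify that whenever two nodes of $G\otimes H$ lie in the same block, the walks needed in $G$ and in $H$ can be forced to exactly equal length, and this is precisely the point at which bipartiteness of \emph{both} $G$ and $H$ (not just one of them) is used. Everything else is routine verification from Observation~\ref{obs:KronEdges}.
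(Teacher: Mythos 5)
Your decomposition is exactly the paper's: the two blocks $(A\times C)\cup(B\times D)$ and $(A\times D)\cup(B\times C)$, the check via Observation~\ref{obs:KronEdges} that no edge crosses between them, and the observation that within each block all edges run between its two sub-blocks, giving bipartiteness. The one place you go beyond the paper is the connectivity of each block: the paper simply asserts that these two sets are the connected components and points to the rearranged adjacency matrix in the figure as a ``visual proof,'' whereas you actually prove it, using the fact that a length-$n$ walk in $G\otimes H$ is precisely a pair of length-$n$ walks in the factors, together with the parity bookkeeping (same-side versus opposite-side in each bipartition) and even-length padding along an incident edge. That walk-parity argument is correct --- and it is indeed the step where bipartiteness of \emph{both} factors is used --- so your write-up is a complete version of the argument the paper only sketches.
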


\begin{proof}
  Again without loss of generality let $G$ be bipartite with two partitions
  $A=\{X_1, \ldots X_r\}$ and $B=\{X_{r+1}, \ldots, X_n\}$, where edges
  exists only between the partitions, and no edges exist inside the
  partition: $(X_i, X_k) \notin G$ for $i,k \in A$ or $i,k \in B$.
  Similarly, let $H$ also be bipartite with two partitions $C=\{X_1, \ldots
  X_s\}$ and $D=\{X_{s+1}, \ldots, X_m\}$.
  Figures~\ref{fig:KronConnect}(b) and (c)
  illustrate the structure of the corresponding adjacency matrices.

  Now, there will be two connected components in $G \otimes H$: $1^{st}$
  component will be composed
  of nodes $\{X_{ij}\} \in G \otimes H$, where $(i \in A, j \in D)$ or $(i
  \in B, j \in C)$. And similarly, $2^{nd}$ component will be composed of
  nodes $\{X_{ij}\}$, where $(i \in A, j \in C)$ or $(i \in B, j \in D)$.
  Basically, there exist edges between node sets $(A,D)$ and $(B,C)$, and
  similarly between $(A,C)$ and $(B,D)$ but not across the sets. To see this
  we have to analyze the cases using Observation~\ref{obs:KronEdges}. For
  example, in $G \otimes H$ there exist edges between nodes $(A,C)$ and
  $(B,D)$ as there exist edges $(i, k) \in G$ for $i \in A, k \in B$, and
  $(j, l) \in H$ for $j \in C$ and $l \in D$. Similar is true for nodes
  $(A,C)$ and $(B,D)$. However, there are no edges cross the two sets, {\em}
  \emph{e.g.}, nodes from $(A,D)$ do not link to $(A,C)$, as there are no edges
  between nodes in $A$ (since $G$ is bipartite). See
  Figures~\ref{fig:KronConnect}(d) and~\ref{fig:KronConnect}(e) for a visual
  proof.
\end{proof}

\new{Note that bipartite graphs are triangle free and have no self-loops.
Stars, chains, trees and cycles of even length are all examples
of bipartite graphs. In order to ensure that $\kn{k}$ is connected,
for the remained of the paper we focus on initiator graphs $\kn{1}$
with self loops on all of the vertices.}

\subsubsection{Temporal properties of Kronecker graphs}

We continue with the analysis of temporal patterns of evolution of
Kronecker graphs: the densification power law, and shrinking/stabilizing
diameter~\cite{jure05dpl,jure07evolution}.

%\medskip
\begin{theorem}[Densification power law]
  Kronecker graphs follow the densification power law (DPL) with
  densification exponent  $a = \log(\ezero) / \log (\nzero)$.
\end{theorem}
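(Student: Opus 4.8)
The plan is to follow the vertex count and the edge count of the $k$-th Kronecker power $\kn{k}$ separately, as functions of $k$ (which plays the role of time), and then to eliminate $k$ so as to express $\nedges$ as a power of $\nnodes$. First, from the definition of the Kronecker power, each multiplication by $\kn{1}$ scales both dimensions of the adjacency matrix by $\nzero$, so $\nn{k} = \nzero^{\,k}$; this needs only $\nzero \ge 2$, which guarantees $\log\nzero \ne 0$ so that the exponent below is well defined.

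Next I would count the edges using Observation~\ref{obs:KronEdges}: an edge $(X_{ij},X_{kl})$ lies in $G\otimes H$ exactly when $(X_i,X_k)\in G$ and $(X_j,X_l)\in H$, so the correspondence sending a pair (edge of $G$, edge of $H$) to the corresponding edge of $G\otimes H$ is a bijection, giving $|E(G\otimes H)| = |E(G)|\cdot|E(H)|$. Equivalently, the number of nonzero entries of $A(G)\otimes A(H)$ is the product of the numbers of nonzero entries of $A(G)$ and $A(H)$, which is also transparent from the block form in the definition of the Kronecker product. Applying this identity inductively along $\kn{k} = \kn{k-1}\otimes\kn{1}$ yields $\en{k} = \ezero^{\,k}$, provided we count adjacency-matrix entries under one fixed convention throughout (in particular including the self-loops we have assumed on all vertices of $\kn{1}$, with the choice of counting undirected edges once or twice affecting only an overall constant).

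Finally, writing $t$ for the stage $k$, I would combine $\nnodes(t) = \nzero^{\,t}$ with $\nedges(t) = \ezero^{\,t}$ to obtain
\[
\nedges(t) \;=\; \ezero^{\,t} \;=\; \bigl(\nzero^{\,t}\bigr)^{\log\ezero/\log\nzero} \;=\; \nnodes(t)^{a}, \qquad a = \frac{\log\ezero}{\log\nzero},
\]
which is precisely the densification power law, with proportionality constant $1$ and the claimed exponent; as a side remark, $a>1$ exactly when $\ezero>\nzero$, matching the empirically observed regime. I do not expect a genuine obstacle here — the whole content sits in the multiplicativity of the edge count, which Observation~\ref{obs:KronEdges} already supplies. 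The only points deserving a word of care are fixing a single edge-counting convention so that $\en{k}=\ezero^{\,k}$ holds on the nose rather than up to constants, and noting that the DPL is being asserted along the discrete family of Kronecker powers, which is the model's intrinsic notion of time.
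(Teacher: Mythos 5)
Your proposal is correct and follows essentially the same route as the paper: both rest on $\nn{k}=\nzero^k$ and $\en{k}=\ezero^k$ and then eliminate $k$ to get $\en{k}=\nn{k}^a$ with $a=\log\ezero/\log\nzero$. The only difference is that you explicitly justify the multiplicativity of the edge count via Observation~\ref{obs:KronEdges}, which the paper takes as given.
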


\begin{proof}
  Since the $k^{\rm th}$ Kronecker power $\kn{k}$ has $\nn{k} = \nzero ^
  k$ nodes and $\en{k} = \ezero ^ k$ edges, it satisfies $\en{k} =
  \nn{k}^a$, where $a = \log(\ezero) / \log (\nzero)$. The crucial
  point is that this exponent $a$ is independent of $k$, and hence the
  sequence of Kronecker powers follows an exact version of the
  densification power law.
\end{proof}

We now show how the Kronecker product also preserves the property of
constant diameter, a crucial ingredient for matching the diameter
properties of many real-world network datasets. In order to establish
this, we will assume that the initiator graph $\kn{1}$ has a self-loop on
every node. Otherwise, its Kronecker powers may be disconnected.

%\medskip
\begin{lemma}
  If $G$ and $H$ each have diameter at most $\diam$
  and each has a self-loop on every node,
  then the Kronecker graph $G \otimes H$ also has diameter at most $\diam$.
  \label{thm:KronDia}
\end{lemma}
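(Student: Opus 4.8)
The plan is to apply Observation~\ref{obs:KronEdges} directly: a pair $X_{ij},X_{kl}$ is adjacent in $G\otimes H$ precisely when $X_i$ is adjacent to $X_k$ in $G$ and $X_j$ is adjacent to $X_l$ in $H$. So to connect two arbitrary nodes $X_{ij}$ and $X_{kl}$ of $G\otimes H$ by a short path, I would take a shortest path between the corresponding nodes in $G$ and a shortest path between the corresponding nodes in $H$, and run them ``in parallel'', coordinate by coordinate, to obtain a walk in $G\otimes H$.

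Concretely, I would fix $X_{ij},X_{kl}$, let $p$ be the distance from $X_i$ to $X_k$ in $G$ and $q$ the distance from $X_j$ to $X_l$ in $H$; by hypothesis $p,q\le\diam$. Assume without loss of generality $p\le q$. Write a shortest $G$-path $X_i=u_0,u_1,\dots,u_p=X_k$ and extend it to a walk of length exactly $q$ by appending $q-p$ copies of $X_k$ (this uses the self-loop at $X_k$, which makes $(X_k,X_k)$ an edge of $G$); write a shortest $H$-path $X_j=w_0,w_1,\dots,w_q=X_l$ of length $q$, needing no padding. Then the sequence $X_{u_0 w_0},X_{u_1 w_1},\dots,X_{u_q w_q}$ runs from $X_{ij}$ to $X_{kl}$ in $G\otimes H$, and by Observation~\ref{obs:KronEdges} each consecutive pair $X_{u_t w_t},X_{u_{t+1}w_{t+1}}$ is an edge of $G\otimes H$ because $(u_t,u_{t+1})\in G$ and $(w_t,w_{t+1})\in H$. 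Hence $X_{ij}$ and $X_{kl}$ are joined by a walk, and therefore by a path, of length $q\le\diam$. Since the pair was arbitrary, $G\otimes H$ is connected with diameter at most $\diam$; the degenerate cases $p=0$ and/or $q=0$ are handled identically, again padding with self-loops.

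The only step requiring genuine care — and the one I would flag as the crux — is the padding/synchronization of the two parallel paths to a common length. This is exactly where the self-loop hypothesis is indispensable: without self-loops the $G$-path and the $H$-path may have lengths of different parity, and no length-$\le\diam$ synchronization exists, which is precisely the bipartite obstruction exhibited in the preceding theorem. Everything else is a routine bookkeeping argument on top of Observation~\ref{obs:KronEdges}. (By induction this immediately yields that $\kn{k}$ has diameter at most that of $\kn{1}$ when $\kn{1}$ has all self-loops, giving the constant-diameter property.)
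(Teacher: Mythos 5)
Your proof is correct and follows essentially the same route as the paper's: run the two shortest paths in parallel and use the self-loops to pad the shorter one to a common length. The only (immaterial) difference is that you pad to length $\max(p,q)$ while the paper pads both paths to length exactly $\diam$; both yield the required bound.
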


\begin{proof}
  Each node in $G \otimes H$ can be represented as an ordered pair
  $(v,w)$, with $v$ a node of $G$ and $w$ a node of $H$, and with an
  edge joining $(v,w)$ and $(x,y)$ precisely when $(v,x)$ is an edge
  of $G$ and $(w,y)$ is an edge of $H$. (Note this exactly the
  Observation \ref{obs:KronEdges}.) Now, for an arbitrary pair of
  nodes $(v,w)$ and $(v',w')$, we must show that there is a path of
  length at most $\diam$ connecting them. Since $G$ has diameter at most
  $\diam$, there is a path $v = v_1, v_2, \ldots, v_r = v'$, where $r \leq
  \diam$. If $r < \diam$, we can convert this into a path $v = v_1, v_2,
  \ldots, v_\diam = v'$ of length exactly $\diam$, by simply repeating $v'$ at
  the end for $\diam - r$ times. By an analogous argument, we have a path
  $w = w_1, w_2, \ldots, w_\diam = w'$. Now by the definition of the
  Kronecker product, there is an edge joining $(v_i,w_i)$ and
  $(v_{i+1},w_{i+1})$ for all $1 \leq i \leq \diam-1$, and so $(v,w) =
  (v_1,w_1), (v_2,w_2), \ldots, (v_\diam,w_\diam) = (v',w')$ is a path of
  length $\diam$ connecting $(v,w)$ to $(v',w')$, as required.
\end{proof}

%\medskip
\begin{theorem}
  If $\kn{1}$ has diameter $\diam$ and a self-loop on every node, then for
  every $k$, the graph $\kn{k}$ also has diameter $\diam$. \label{cor:KronDia}
\end{theorem}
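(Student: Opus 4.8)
The plan is to prove the statement by induction on $k$, using Lemma~\ref{thm:KronDia} as the inductive engine and carrying the self-loop hypothesis along as part of the induction.

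The base case $k=1$ is immediate. For the inductive step I would write $\kn{k} = \kn{k-1} \otimes \kn{1}$ and apply Lemma~\ref{thm:KronDia} with $G = \kn{k-1}$ and $H = \kn{1}$. To do so I need two facts about $\kn{k-1}$: that it has diameter at most $\diam$, and that it has a self-loop on every node. The first is exactly the inductive hypothesis. The second is not given directly, so I would establish it by a parallel induction: by Observation~\ref{obs:KronEdges}, $(X_{ij},X_{ij})$ is an edge of $G \otimes \kn{1}$ precisely when $(X_i,X_i)$ is an edge of $G$ and $(X_j,X_j)$ is an edge of $\kn{1}$; since $\kn{1}$ has a self-loop on every node by hypothesis, $G \otimes \kn{1}$ inherits a self-loop on every node whenever $G$ does, and hence so does every $\kn{k}$. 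With both hypotheses of Lemma~\ref{thm:KronDia} verified, the lemma yields that $\kn{k}$ has diameter at most $\diam$, completing the induction. Under the paper's stated convention (``diameter $\diam$'' meaning every pair is joined by a path of length at most $\diam$) this already finishes the proof.

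If one reads ``diameter $\diam$'' in the strict sense that some pair of nodes is at distance exactly $\diam$, I would add a matching lower bound. Pick nodes $v,v'$ of $\kn{1}$ with $d_{\kn{1}}(v,v') = \diam$, and consider the nodes $a = (v,\ldots,v)$ and $b = (v',\ldots,v')$ of $\kn{k}$, where nodes of $\kn{k}$ are indexed by $k$-tuples of nodes of $\kn{1}$ via Observation~\ref{obs:KronEdges}. Any path in $\kn{k}$ from $a$ to $b$ projects, coordinate by coordinate, onto a walk in $\kn{1}$ from $v$ to $v'$ of the same length, since each Kronecker edge forces a legal move (possibly a self-loop step) in every coordinate; hence the path has length at least $d_{\kn{1}}(v,v') = \diam$. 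Combined with the upper bound, the distance between $a$ and $b$ in $\kn{k}$ is exactly $\diam$, so the diameter of $\kn{k}$ equals $\diam$.

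I do not expect a serious obstacle here; the one thing easy to overlook is that Lemma~\ref{thm:KronDia} requires \emph{both} factors to have a self-loop on every node, so the induction must propagate the self-loop property (via Observation~\ref{obs:KronEdges}) alongside the diameter bound rather than treating it as given.
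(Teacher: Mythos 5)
Your proof is correct and follows essentially the same route as the paper, whose entire proof is that the theorem ``follows directly from the previous lemma, combined with induction on $k$.'' You additionally supply two details the paper leaves implicit --- that the self-loop property must itself be propagated through the induction (via Observation~\ref{obs:KronEdges}) so that Lemma~\ref{thm:KronDia} applies at each step, and the projection argument giving the lower bound so that the diameter is exactly $\diam$ rather than merely at most $\diam$ --- and both are handled correctly.
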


\begin{proof}
  This follows directly from the previous lemma, combined with
  induction on $k$.
\end{proof}

As defined in section~\ref{sec:KronRelated} we also consider the {\em
effective diameter} $\ediam$. We defined the $q$-effective diameter as the
minimum $\ediam$ such that, for a $q$ fraction of the reachable
node pairs, the path length is at most $\ediam$. The $q$-effective
diameter is a more robust quantity than the diameter, the latter being
prone to the effects of degenerate structures in the graph (\emph{e.g.},
very long chains). However, the $q$-effective diameter and diameter tend
to exhibit qualitatively similar behavior. For reporting results in
subsequent sections, we will generally consider the $q$-effective diameter
with $q = 0.9$, and refer to this simply as the {\em effective diameter}.

%\medskip
\begin{theorem}[Effective Diameter]
  If $\kn{1}$ has diameter $\diam$ and a self-loop on every node, then for
  every $q$, the $q$-effective diameter of $\kn{k}$ converges to $\diam$
  (from \new{below}) as $k$ increases.
  \label{thm:KronEffDiam}
\end{theorem}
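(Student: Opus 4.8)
The plan is to reduce the statement to one structural fact — that shortest-path distances in $\kn{k}$ are coordinatewise maxima of distances in $\kn{1}$ — and then to a short limit computation. First I would make the node-labelling of Observation~\ref{obs:KronEdges} explicit: a node of $\kn{k}$ is a $k$-tuple $(u_1,\dots,u_k)$ of nodes of $\kn{1}$, and $(u_1,\dots,u_k)$ is adjacent to $(v_1,\dots,v_k)$ precisely when $(u_i,v_i)$ is an edge of $\kn{1}$ for \emph{every} coordinate $i$. Hence a walk of length $t$ in $\kn{k}$ projects coordinatewise onto $k$ walks of length $t$ in $\kn{1}$, and conversely $k$ walks of a common length $t$ in $\kn{1}$ splice together into one walk of length $t$ in $\kn{k}$. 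This is exactly where the self-loop hypothesis enters: because every node of $\kn{1}$ carries a self-loop, a walk from $a$ to $b$ in $\kn{1}$ of length $t$ exists for \emph{every} $t\ge d_{\kn{1}}(a,b)$ (take a shortest path, then loop in place; the self-loops also kill any parity obstruction). Therefore
\[
  d_{\kn{k}}\bigl((u_1,\dots,u_k),(v_1,\dots,v_k)\bigr)=\max_{1\le i\le k} d_{\kn{1}}(u_i,v_i).
\]
In particular this re-derives that $\kn{k}$ is connected with diameter exactly $\diam$ (Theorem~\ref{cor:KronDia}), so the $q$-effective diameter $\ediam_k$ of $\kn{k}$ is well defined and satisfies $\ediam_k\le\diam$ for all $k$ — the ``from below'' part.

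Next I would compute the distance distribution. Let $\rho_h=\nzero^{-2}\bigl|\{(a,b)\in V(\kn{1})^2:d_{\kn{1}}(a,b)\le h\}\bigr|$ be the fraction of ordered node pairs of $\kn{1}$ within distance $h$. By the max-formula, and because a sum of a product over coordinates factors, the number of ordered node pairs of $\kn{k}$ within distance $h$ equals $\bigl(\sum_{a}|\{b:d_{\kn{1}}(a,b)\le h\}|\bigr)^{k}=(\rho_h\nzero^2)^{k}$; dividing by the number of connected (distinct) pairs, the function $g_k$ appearing in the definition of effective diameter satisfies $g_k(h)=\rho_h^{\,k}+O(1/\nn{k})$. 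Now invoke the hypothesis that $\kn{1}$ has diameter $\diam$: for every $h<\diam$ some pair of $\kn{1}$ lies at distance $\diam>h$, so $\rho_h<1$ and $g_k(h)\to 0$ as $k\to\infty$, whereas $\rho_{\diam}=1$ forces $g_k(\diam)=1$ for every $k$. Thus the distance distribution of $\kn{k}$ concentrates on the single value $\diam$.

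Finally I would read off $\ediam_k$. Fix $q\in(0,1)$. Once $k$ is large enough that $g_k(\diam-1)<q$, the piecewise-linear interpolant $g_k$ first attains the value $q$ on the interval $(\diam-1,\diam]$, so
\[
  \ediam_k=(\diam-1)+\frac{q-g_k(\diam-1)}{1-g_k(\diam-1)}\;\xrightarrow[k\to\infty]{}\;\diam-(1-q).
\]
Hence, for every fixed $q$, the $q$-effective diameter converges from below: with the interpolation convention above its exact limit is $\diam-(1-q)$, so it lies within $1-q$ of $\diam$ (for the usual $q=0.9$ the limit is $\diam-0.1$, and it tends to $\diam$ as $q\to 1$), which is the sense in which $\ediam_k\to\diam$. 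I expect the only real obstacle to be the distance formula of the first step: it is precisely where ``a self-loop on every node'' is indispensable — that hypothesis is what lets each coordinate stall so the coordinate-walks can be synchronized to one common length, and what removes parity constraints (recall from the bipartite theorem above that without self-loops the Kronecker product need not even be connected). Everything after that step is bookkeeping and a one-line limit.
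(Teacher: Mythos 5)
Your proof is correct, and it arrives at the same concentration phenomenon the paper uses --- the distance between two randomly chosen nodes of $\kn{k}$ equals $\diam$ with probability tending to $1$ --- but it gets there through a sharper intermediate lemma. The paper argues with a single witness coordinate: with probability at least $1-(1-\nzero^{-2})^k$ some coordinate $j$ realizes a diametral pair of $\kn{1}$, which forces the distance in $\kn{k}$ to be at least $\diam$, while Lemma~\ref{thm:KronDia} caps it at $\diam$. You instead prove the exact identity that the distance in $\kn{k}$ is the coordinatewise maximum of distances in $\kn{1}$ --- correctly isolating the self-loops as what lets the $k$ coordinate walks be padded to a common length --- and from it read off the full distance distribution $g_k(h)=\rho_h^{\,k}+O(\nn{k}^{-1})$, of which the paper's estimate is essentially the case $h=\diam-1$. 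Your route buys two things: a quantitative statement for every $h$, and an honest treatment of the endpoint, where you observe that under the paper's linearly interpolated definition the limit is $\diam-(1-q)$ rather than $\diam$ (the statement as literally written holds for the integer effective diameter, under which $\ediam=\diam$ exactly as soon as $g_k(\diam-1)<q$). On that last point your write-up is more careful than the paper's own proof, which silently conflates the two conventions.
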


\begin{proof}
  To prove this, it is sufficient to show that for two randomly
  selected nodes of $\kn{k}$, the probability that their distance is $\diam$
  converges to $1$ as $k$ goes to infinity.

  We establish this as follows. Each node in $\kn{k}$ can be represented
  as an ordered sequence of $k$ nodes from $\kn{1}$, and we can view the
  random selection of a node in $\kn{k}$ as a sequence of $k$ independent
  random node selections from $\kn{1}$. Suppose that $v = (v_1, \ldots,
  v_k)$ and $w = (w_1, \ldots, w_k)$ are two such randomly selected
  nodes from $\kn{k}$. Now, if $x$ and $y$ are two nodes in $\kn{1}$ at
  distance $\diam$ (such a pair $(x,y)$ exists since $\kn{1}$ has diameter
  $\diam$), then with probability \new{$1 - (1 - \frac{1}{\nzero^2})^k$}, there is some
  index $j$ for which $\{v_j,w_j\} = \{x,y\}$. If there is such an
  index, then the distance between $v$ and $w$ is $\diam$. As the
  expression \new{$1 - (1 - \frac{1}{\nzero^2})^k$} converges to $1$ as $k$ increases,
  it follows that the $q$-effective diameter is converging to $\diam$.
\end{proof}

\subsection{Stochastic Kronecker graphs}
\label{sec:KronSkrg}

\begin{figure}[t]
  \begin{center}
  \begin{tabular}{ccc}
    \raisebox{10mm}{\includegraphics[width=0.20\textwidth]{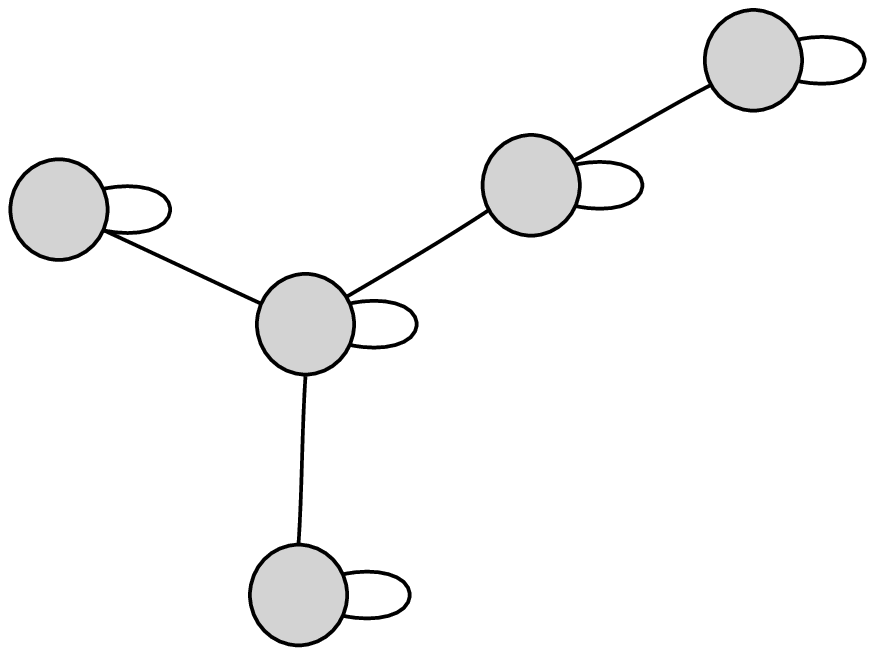}} &
    \includegraphics[width=0.37\textwidth]{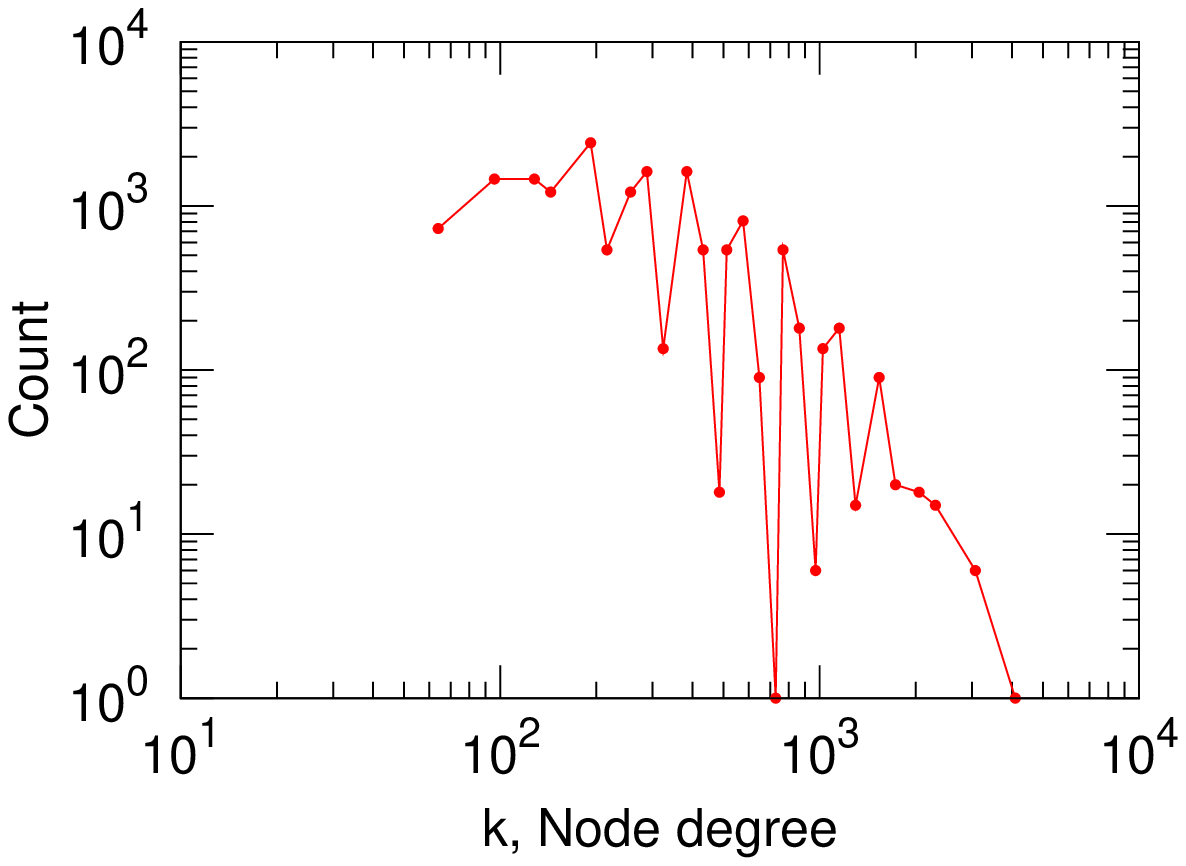} &
    \includegraphics[width=0.37\textwidth]{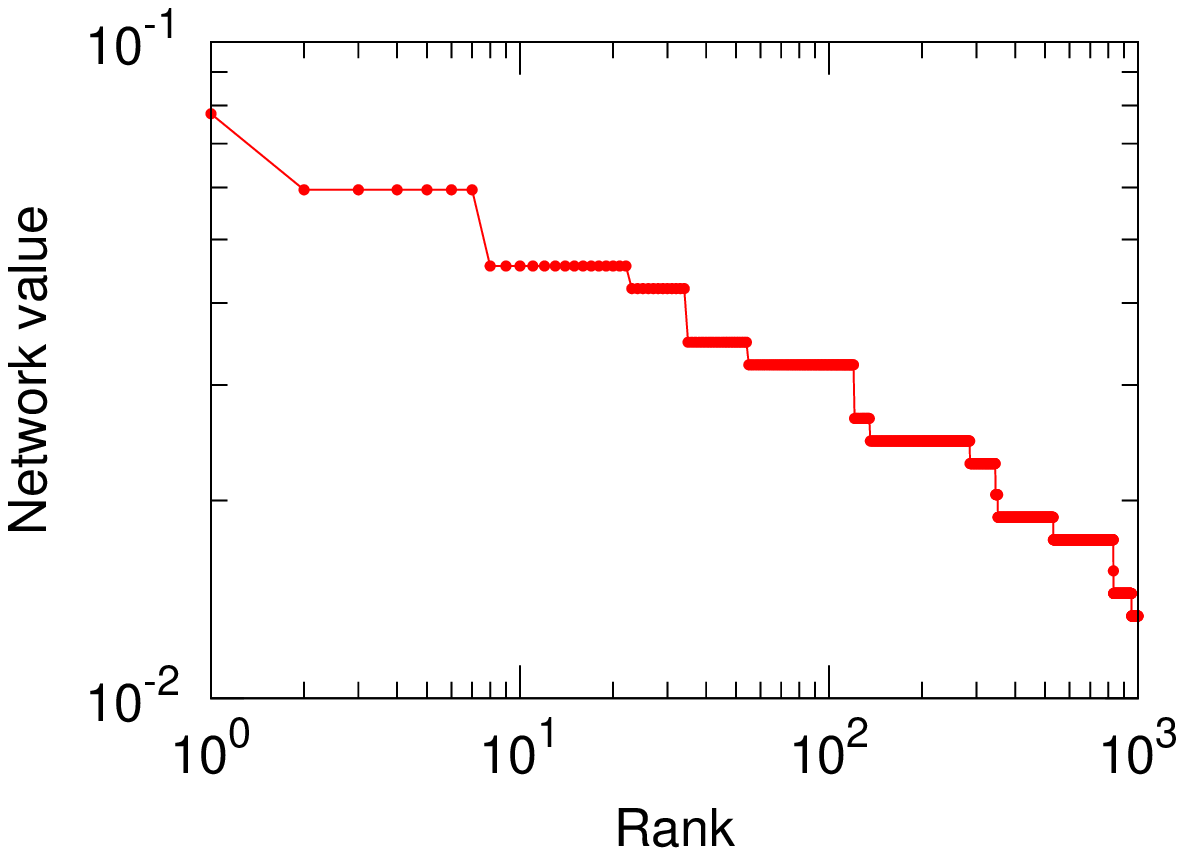}\\
    (a) Kronecker  & (b) Degree distribution of $\kn{6}$ & (c) Network value of $\kn{6}$ \\
    initiator $\kn{1}$  &  ($6^{th}$ Kronecker power of $\kn{1}$) & ($6^{th}$ Kronecker power of
$\kn{1}$)\\
  \end{tabular}
  \end{center}
  \caption{The ``staircase'' effect. Kronecker initiator and the degree
  distribution and network value plot for the $6^{th}$ Kronecker power of the
  initiator. Notice the non-smoothness of the curves.}
  \label{fig:Kron5burst}
\end{figure}

While the Kronecker power construction discussed so far yields graphs with
a range of desired properties, its discrete nature produces ``staircase
effects'' in the degrees and spectral quantities, simply because
individual values have large multiplicities. For example, degree
distribution and distribution of eigenvalues of graph adjacency matrix and
the distribution of the principal eigenvector components (\emph{i.e.}, the
``network'' value) are all impacted by this. These quantities are
multinomially distributed which leads to individual values with large
multiplicities. Figure~\ref{fig:Kron5burst} illustrates the staircase
effect.

Here we propose a stochastic version of Kronecker graphs that eliminates
this effect. There are many possible ways how one could introduce
stochasticity into Kronecker graph model. Before introducing the proposed
model, we introduce two simple ways of introducing randomness to Kronecker
graphs and describe why they do not work.

Probably the simplest (but wrong) idea is to generate a large
deterministic Kronecker graph $\kn{k}$, and then uniformly at random flip
some edges, \emph{i.e.}, uniformly at random select entries of the graph
adjacency matrix and flip them ($1 \rightarrow 0, 0 \rightarrow 1$).
However, this will not work, as it will essentially superimpose an
Erd\H{o}s-R\'{e}nyi random graph, which would, for example, corrupt the
degree distribution -- real networks usually have heavy tailed degree
distributions, while random graphs have Binomial degree distributions. A
second idea could be to allow a weighted initiator matrix, \emph{i.e.},
values of entries of $\kn{1}$ are not restricted to values $\{0, 1\}$ but
rather can be any non-negative real number. Using such $\kn{1}$ one would
generate $\kn{k}$ and then threshold the $\kn{k}$ matrix to obtain a
binary adjacency matrix $K$, \emph{i.e.}, for a chosen value of $\epsilon$
set $K[i,j] = 1$ if $\kn{k}[i,j] > \epsilon$ else $K[i,j] = 0$. This also
would not work as the mechanism would selectively remove edges and thus
the low degree nodes which would have low weight edges would get isolated
first.

\new{Now we define {\em Stochastic Kronecker graph model} which overcomes the
above issues. A more natural way to introduce stochasticity to Kronecker
graphs is to relax the assumption that entries of the initiator matrix
take only binary values. Instead we allow entries of the initiator to
take values on the interval $[0,1]$. This means now each entry of the
initiator matrix encodes the probability of that particular edge
appearing. We then Kronecker-power such initiator matrix to obtain a large
stochastic adjacency matrix, where again each entry of the large matrix
gives the probability of that particular edge appearing in a big graph.
Such a stochastic adjacency matrix defines a probability
distribution over all graphs. To obtain a graph we simply sample an
instance from this distribution by sampling individual edges, where each
edge appears independently with probability given by the entry of the
large stochastic adjacency matrix. More formally, we define:}

%\medskip
\begin{definition}[Stochastic Kronecker graph]
Let $\pn{1}$ be a $\nzero \times \nzero$ {\em probability matrix}: the
value $\thij{ij} \in \pn{1}$ denotes the probability that edge $(i,j)$ is
present, $\thij{ij} \in [0,1]$.

Then $k^{th}$ Kronecker power $\pn{1}^{[k]}=\pn{k}$, where each entry
$p_{uv} \in \pn{k}$ encodes the probability of an edge $(u,v)$.

To obtain a graph, an {\em instance} (or {\em realization}), $K =
R(\pn{k})$ we include edge $(u,v)$ in $K$ with probability $p_{uv}$,
$p_{uv} \in \pn{k}$.
\end{definition}

First, note that sum of the entries of $\pn{1}$, $\sum_{ij} \thij{ij}$,
can be greater than 1. Second, notice that in principle it takes
$O(\nzero^{2k})$ time to generate an instance $K$ of a Stochastic
Kronecker graph from the probability matrix $\pn{k}$. This means the time
to get a realization $K$ is quadratic in the size of $\pn{k}$ as one has
to flip a coin for each possible edge in the graph. Later we show how to
generate Stochastic Kronecker graphs much faster, in the time {\em linear}
in the expected number of edges in $\pn{k}$.

\subsubsection{Probability of an edge}
\label{sec:KronEdgeProb}

For the size of graphs we aim to model and generate here taking
$\pn{1}$ (or $\kzero$) and then explicitly performing the Kronecker
product of the initiator matrix is infeasible. The reason for this is that
$\pn{1}$ is usually dense, so $\pn{k}$ is also dense and one can not explicitly store
it in memory to directly iterate the Kronecker product.
However, due to the structure of Kronecker multiplication
one can easily compute the probability of an edge in $\pn{k}$.

The  probability $\pij{uv}$ of an edge $(u,v)$ occurring in $k$-th
Kronecker power $\pmat=\pn{k}$ can be calculated in $O(k)$ time as
follows:
\begin{equation}
  \pij{uv}=\prod_{i=0}^{k-1} \pn{1}\biggl[
    \Bigl\lfloor\frac{u-1}{\nzero^i}\Bigr\rfloor({\textrm{mod}}\nzero)+1,
    \Bigl\lfloor\frac{v-1}{\nzero^i}\Bigr\rfloor({\textrm{mod}}\nzero)+1\biggr]
  \label{eq:KronPij}
\end{equation}

The equation imitates recursive descent into the matrix $\pmat$, where at
every level $i$ the appropriate entry of $\pn{1}$ is chosen. Since $\pmat$
has $\nzero^k$ rows and columns it takes $O(k \log \nzero)$ to evaluate
the equation. Refer to Figure~\ref{fig:KronAttrView} for the illustration
of the recursive structure of $\pmat$.

\subsection{Additional properties of Kronecker graphs}

Stochastic Kronecker graphs with initiator matrix of size $\nzero=2$ were
studied by Mahdian and Xu \cite{mahdian07kronecker}. The authors showed a
phase transition for the emergence of the giant component and another
phase transition for connectivity, and proved that such graphs have
constant diameters beyond the connectivity threshold, but are not
searchable using a decentralized algorithm~\cite{kleinberg99navigation}.

General overview of Kronecker product is given in~\cite{imrich00product} and properties of Kronecker graphs related to graph minors, planarity, cut vertex and cut edge have been explored in~\cite{bottreau98kronecker}. Moreover, recently~\cite{tsourakakis08triangles} gave a closed form expression for the number of triangles in a Kronecker graph that depends on the eigenvalues of the initiator graph $\kzero$.

\subsection{Two interpretations of Kronecker graphs}
\label{sec:interpret}

Next, we present two natural interpretations of the generative process
behind the Kronecker graphs that go beyond the purely mathematical
construction of Kronecker graphs as introduced so far.

We already mentioned the first interpretation when we first defined
Kronecker graphs. One intuition is that networks are hierarchically organized
into communities (clusters). Communities then
grow recursively, creating miniature copies of themselves.
Figure~\ref{fig:KronSpyplots} depicts the process of the recursive
community expansion. In fact, several researchers have argued that real
networks are hierarchically
organized~\cite{ravasz02metabolic,ravasz03hierar} and algorithms to
extract the network hierarchical structure have also been
developed~\cite{sales07hierarchical,clauset08hierarchical}. Moreover,
especially web
graphs~\cite{dill02web,dorogovtsev02fractal,crovella97selfsimilarity} and
biological networks~\cite{ravasz03hierar} were found to be self-similar
and ``fractal''.

\begin{figure}[t]
  \begin{center}
  \begin{tabular}{ccccc}
    \raisebox{6mm}{\includegraphics[width=0.14\textwidth]{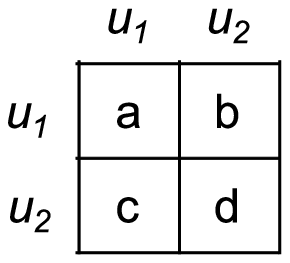}}& &
    \includegraphics[width=0.25\textwidth]{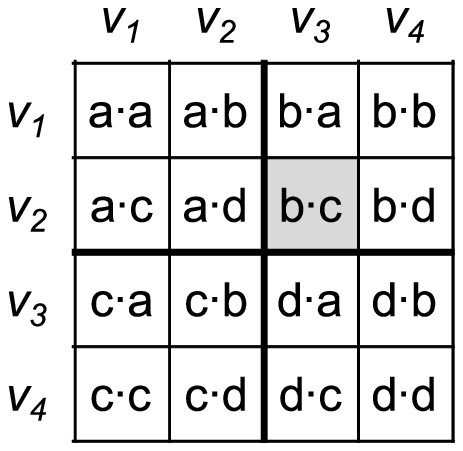}& \hspace*{1cm} &
    \includegraphics[width=0.25\textwidth]{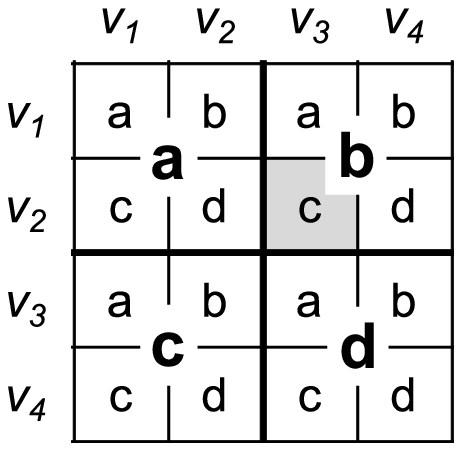}\\
    (a) $2\times2$ Stochastic  & & (b) Probability matrix & & (c) Alternative view \\
    Kronecker initiator $\pn{1}$ & & $\pn{2} = \pn{1}\otimes \pn{1}$ & & of $\pn{2} =
\pn{1}\otimes \pn{1}$ \\
  \end{tabular}
  \end{center}
  \caption{Stochastic Kronecker initiator $\pn{1}$ and the corresponding
  $2^{nd}$ Kronecker power $\pn{2}$. Notice the recursive nature of the
  Kronecker product, with edge probabilities in $\pn{2}$ simply being
  products of entries of $\pn{1}$.}
  \label{fig:KronAttrView}
\end{figure}

The second intuition comes from viewing every node of $\pn{k}$ as being
described with an ordered sequence of $k$ nodes from $\pn{1}$. (This is
similar to the Observation~\ref{obs:KronEdges} and the proof of
Theorem~\ref{thm:KronEffDiam}.)

Let's label nodes of the initiator matrix $\pn{1}$, $u_1, \ldots,
u_{\nzero}$, and nodes of $\pn{k}$ as $v_1, \ldots, v_{\nzero^k}$. Then
every node $v_i$ of $\pn{k}$ is described with a sequence $(v_i(1),
\ldots, v_i(k))$ of node labels of $\pn{1}$, where $v_i(l) \in \{u_1,
\ldots, u_k\}$. Similarly, consider also a second node $v_j$ with the
label sequence $(v_j(1), \ldots, v_j(k))$.
Then the probability $p_e$ of an edge $(v_i, v_j)$ in $\pn{k}$ is exactly:
$$
p_e(v_i, v_j) = \pn{k}[v_i, v_j] = \prod_{l=1}^k \pn{1}[v_i(l), v_j(l)]
$$
(Note this is exactly the Equation~\ref{eq:KronPij}.)

Now one can look at the description sequence of node $v_i$ as a $k$
dimensional vector of attribute values $(v_i(1), \ldots, v_i(k))$. Then
$p_e(v_i, v_j)$ is exactly the coordinate-wise product of appropriate
entries of $\pn{1}$, where the node description sequence selects which
entries of $\pn{1}$ to multiply. Thus, the $\pn{1}$ matrix can be thought of as the
attribute similarity matrix, \emph{i.e.}, it encodes the probability of
linking given that two nodes agree/disagree on the attribute value. Then
the probability of an edge is simply a product of individual attribute
similarities over the $k$ $\nzero$-valued attributes that describe each of
the two nodes.

This gives us a very natural interpretation of Stochastic Kronecker
graphs: Each node is described by a sequence of categorical attribute
values or features. And then the probability of two nodes linking depends
on the product of individual attribute similarities. This way Kronecker
graphs can effectively model homophily (nodes with similar attribute
values are more likely to link) by $\pn{1}$ having high value entries on
the diagonal. Or heterophily (nodes that differ are more likely to link)
by $\pn{1}$ having high entries off the diagonal.

Figure~\ref{fig:KronAttrView} shows an example. Let's label nodes of
$\pn{1}$ $u_1, u_2$ as in Figure~\ref{fig:KronAttrView}(a). Then every
node of $\pn{k}$ is described with an ordered sequence of $k$ binary
attributes. For example, Figure~\ref{fig:KronAttrView}(b) shows an
instance for $k=2$ where node $v_2$ of $\pn{2}$ is described by $(u_1,
u_2)$, and similarly $v_3$ by $(u_2, u_1)$. Then as shown in
Figure~\ref{fig:KronAttrView}(b), the probability of edge $p_e(v_2,v_3) =
b \cdot c$, which is exactly $\pn{1}[u_2, u_1] \cdot \pn{1}[u_1, u_2] = b
\cdot c$ --- the product of entries of $\pn{1}$, where the corresponding
elements of the description of nodes $v_2$ and $v_3$ act as selectors of
which entries of $\pn{1}$ to multiply.

Figure~\ref{fig:KronAttrView}(c) further illustrates the recursive nature
of Kronecker graphs. One can see Kronecker product as recursive descent
into the big adjacency matrix where at each stage one of the entries or
blocks is chosen. For example, to get to entry $(v_2, v_3)$ one first
needs to dive into quadrant $b$ following by the quadrant $c$. This
intuition will help us in section~\ref{sec:KronFastKron} to devise a fast
algorithm for generating Kronecker graphs.

However, there are also two notes to make here. First, using a single
initiator $\pn{1}$ we are implicitly assuming that there is one single and
universal attribute similarity matrix that holds across all $k$
$\nzero$-ary attributes. One can easily relax this assumption by taking a
different initiator matrix for each attribute (initiator matrices can even
be of different sizes as attributes are of different arity), and then
Kronecker multiplying them to obtain a large network. Here each initiator
matrix plays the role of attribute similarity matrix for that particular
attribute.

For simplicity and convenience we will work with a single initiator matrix
but all our methods can be trivially extended to handle multiple initiator
matrices. Moreover, as we will see later in
section~\ref{sec:KronExperiments} even a single $2 \times 2$ initiator
matrix seems to be enough to capture large scale statistical properties of
real-world networks.

The second assumption is harder to relax. When describing every node $v_i$
with a sequence of attribute values we are implicitly assuming that the values
of all attributes are uniformly distributed (have same proportions), and
that every node has a unique combination of attribute values. So, all
possible combinations of attribute values are taken. For example, node
$v_1$ in a large matrix $\pn{k}$ has attribute sequence $(u_1, u_1, \ldots,
u_1)$, and $v_{\nzero}$ has $(u_1, u_1, \ldots, u_1, u_{\nzero})$, while the
``last'' node $v_{\nzero^k}$ is has attribute values $(u_{\nzero},
u_{\nzero}, \ldots, u_{\nzero})$. One can think of this as counting in
$\nzero$-ary number system, where node attribute descriptions range from
$0$ (\emph{i.e.}, ``leftmost'' node with attribute description $(u_1, u_1,
\ldots, u_1)$) to $\nzero^k$ (\emph{i.e.}, ``rightmost'' node attribute
description $(u_{\nzero}, u_{\nzero}, \ldots, u_{\nzero})$).

A simple way to relax the above assumption is to take a larger initiator
matrix with a smaller number of parameters than the number of entries.
This means that multiple entries of $\pn{1}$ will share the same value
(parameter). For example, if attribute $u_1$ takes one value 66\% of the
times, and the other value 33\% of the times, then one can model this by
taking a $3 \times 3$ initiator matrix with only four parameters. Adopting
the naming convention of Figure~\ref{fig:KronAttrView} this means that
parameter $a$ now occupies a $2\times2$ block, which then also makes $b$
and $c$ occupy $2\times1$ and $1\times2$ blocks, and $d$ a single cell.
This way one gets a four parameter model with uneven feature value
distribution.

We note that the view of Kronecker graphs where every node is described
with a set of features and the initiator matrix encodes the probability of
linking given the attribute values of two nodes somewhat resembles the
Random dot product graph model~\cite{young07dotprod,nickel08dotprod}. The
important difference here is that we multiply individual linking
probabilities, while in Random dot product graphs one takes the sum of
individual probabilities which seems somewhat less natural.

\subsection{Fast generation of Stochastic Kronecker graphs}
\label{sec:KronFastKron}

The intuition for fast generation of Stochastic Kronecker graphs comes
from the recursive nature of the Kronecker product and is closely related
to the R-MAT graph generator~\cite{chakrabarti04rmat}. Generating a
Stochastic Kronecker graph $K$ on $N$ nodes naively takes $O(N^2)$ time.
Here we present a linear time $O(E)$ algorithm, where $E$ is the
(expected) number of edges in $K$.

Figure~\ref{fig:KronAttrView}(c) shows the recursive nature of the
Kronecker product. To ``arrive'' to a particular edge $(v_i, v_j)$ of
$\pn{k}$ one has to make a sequence of $k$ (in our case $k=2$) decisions
among the entries of $\pn{1}$, multiply the chosen entries of $\pn{1}$,
and then placing the edge $(v_i, v_j)$ with the obtained probability.

Instead of flipping $O(N^2) = O(\nzero^{2k})$ biased coins to determine
the edges, we can place $E$ edges by directly simulating the recursion of
the Kronecker product. Basically we recursively choose sub-regions of
matrix $K$ with probability proportional to $\thij{ij}$, $\thij{ij} \in
\pn{1}$ until in $k$ steps we descend to a single cell of the big adjacency matrix $K$ and place an edge. For example, for $(v_2, v_3)$ in
Figure~\ref{fig:KronAttrView}(c) we first have to choose $b$ following by
$c$.

The probability of each individual edge of $\pn{k}$ follows a Bernoulli
distribution, as the edge occurrences are independent. By the Central
Limit Theorem~\cite{petrov95limits} the number of edges in $\pn{k}$ tends to a normal
distribution with mean $(\sum_{i,j=1}^{N_1} \thij{ij})^k = \ezero^k$,
where $\thij{ij} \in \pn{1}$. So, given a stochastic initiator matrix
$\pn{1}$ we first sample the expected number of edges $E$ in $\pn{k}$.
Then we place $E$ edges in a graph $\kgraph$, by applying the recursive
descent for $k$ steps where at each step we choose entry $(i,j)$ with
probability $\thij{ij} / \ezero$ where $\ezero
= \sum_{ij} \thij{ij}$. Since we add $E = \ezero^k$ edges, the probability
that edge $(v_i,v_j)$ appears in $\kgraph$ is exactly $\pn{k}[v_i,v_j]$.
This basically means that in Stochastic Kronecker graphs the initiator
matrix encodes both the total number of edges in a graph and their
structure. $\sum \thij{ij}$ encodes the number of edges in the graph,
while the proportions (ratios) of values $\thij{ij}$ define how many edges
each part of graph adjacency matrix will contain.

In practice it can happen that more than one edge lands in the same
$(v_i,v_j)$ entry of big adjacency matrix $K$.
\new{If an edge lands in a already occupied cell
we insert it again. Even though values of $\pn{1}$ are usually
skewed, adjacency matrices of real networks are so sparse that this
is not really a problem in practice. Empirically we note that around 1\%
of edges collide.}

\subsection{Observations and connections}

Next, we describe several observations about the properties of Kronecker
graphs and make connections to other network models.

\begin{itemize}

\item {\em Bipartite graphs:} Kronecker graphs can naturally model
    bipartite graphs. Instead of starting with a square $\nzero \times
    \nzero$ initiator matrix, one can choose arbitrary $\nzero \times
    M_1$ initiator matrix, where rows define ``left'', and columns the
    ``right'' side of the bipartite graph. Kronecker multiplication
    will then generate bipartite graphs with partition sizes
    $\nzero^k$ and $M_1^k$.

\item {\em Graph distributions:} $\pn{k}$ defines a distribution over
    all graphs, as it encodes the probability of all possible
    $\nzero^{2k}$ edges appearing in a graph by using an exponentially
    smaller number of parameters (just $\nzero^2$). As we will later
    see, even a very small number of parameters, \emph{e.g.}, 4
    ($2\times2$ initiator matrix) or 9 ($3 \times 3$ initiator), is
    enough to accurately model the structure of large networks.

\item \new{{\em Extension of Erd\H{o}s-R\'{e}nyi random graph
    model:} Stochastic Kronecker graphs represent an extension
    of Erd\H{o}s-R\'{e}nyi~\cite{erdos60random} random graphs. If one
    takes $\pn{1} = [\thij{ij}]$, where every $\thij{ij} = p$ then we
    obtain exactly the Erd\H{o}s-R\'{e}nyi model of random graphs
    $G_{n,p}$, where every edge appears independently with probability
    $p$.}

\item {\em Relation to the R-MAT model:} The recursive nature of
    Stochastic Kronecker graphs makes them related to the R-MAT
    generator~\cite{chakrabarti04rmat}. The difference between the two
    models is that in R-MAT one needs to separately specify the number
    of edges, while in Stochastic Kronecker graphs initiator matrix
    $\pn{1}$ also encodes the number of edges in the graph.
    Section~\ref{sec:KronFastKron} built on this similarity to devise
    a fast algorithm for generating Stochastic Kronecker graphs.

\item {\em Densification:} Similarly as with deterministic Kronecker
    graphs the number of nodes in a Stochastic Kronecker graph grows
    as $\nzero^k$, and the expected number of edges grows as
    $(\sum_{ij} \thij{ij})^k$. This means one would want to choose
    values $\thij{ij}$ of the initiator matrix $\pn{1}$ so that
    $\sum_{ij} \thij{ij} > \nzero$ in order for the resulting network
    to densify.
\end{itemize}

\section{Simulations of Kronecker graphs}
\label{sec:KronSimulation}
\new{Next we perform a set of simulation experiments to demonstrate the ability
of Kronecker graphs to match the patterns of real-world networks.}
We will tackle the problem of estimating the Kronecker graph model from
real data, \emph{i.e.}, finding the most likely initiator $\pn{1}$, in the
next section. Instead here we present simulation experiments using
Kronecker graphs to explore the parameter space, and to compare properties
of Kronecker graphs to those found in large real networks.

\subsection{Comparison to real graphs}

We observe two kinds of graph patterns --- ``static'' and ``temporal.'' As
mentioned earlier, common static patterns include degree distribution,
scree plot (eigenvalues of graph adjacency matrix vs. rank) and
distribution of components of the principal eigenvector of graph adjacency
matrix. Temporal patterns include the diameter over time, and the
densification power law. For the diameter computation, we use the
effective diameter as defined in Section~\ref{sec:KronRelated}.

For the purpose of this section consider the following setting. Given a
real graph $\ggraph$ we want to find Kronecker initiator that produces
qualitatively similar graph. In principle one could try choosing each of
the $\nzero^2$ parameters for the matrix $\pn{1}$ separately. However, we
reduce the number of parameters from $\nzero^2$ to just two: $\alpha$ and
$\beta$. Let $\kn{1}$ be the initiator matrix (binary, deterministic). Then we
create the corresponding stochastic initiator matrix $\pn{1}$ by replacing
each ``1'' and ``0'' of $\kn{1}$ with $\alpha$ and $\beta$ respectively
($\beta \leq \alpha$). The resulting probability matrices maintain ---
with some random noise --- the self-similar structure of the Kronecker
graphs in the previous section (which, for clarity, we call {\em
deterministic Kronecker graphs}). We defer the discussion of how to
automatically estimate $\pn{1}$ from data $G$ to the next section.

The datasets we use here are:
\begin{itemize}
  \item {\dataset{Cit-hep-th}:} This is a citation graph for
      High-Energy Physics Theory research papers from pre-print
      archive ArXiv, with a total of $N=$29,555 papers and $E=$352,807
      citations~\cite{gehrke03kddcup}. We follow its evolution from
      January~1993 to April~2003, with one data-point per month.
  \item {\dataset{As-RouteViews}:} We also analyze a static dataset
      consisting of a single snapshot of connectivity among Internet
      Autonomous Systems~\cite{oregon97as} from January 2000, with
      $N=$6,474 and $E=$26,467.
\end{itemize}

Results are shown in Figure~\ref{fig:KronArXiv} for the
\dataset{Cit-hep-th} graph which evolves over time. We show the plots of
two static and two temporal patterns. We see that the deterministic
Kronecker model already to some degree captures the qualitative structure of the degree
and eigenvalue distributions, as well as the temporal patterns represented
by the densification power law and the stabilizing diameter. However, the
deterministic nature of this model results in ``staircase'' behavior, as
shown in scree plot for the deterministic Kronecker graph of
Figure~\ref{fig:KronArXiv} (column (b), second row). We see that the
Stochastic Kronecker graphs smooth out these distributions, further
matching the qualitative structure of the real data, and they also match the
shrinking-before-stabilization trend of the diameters of real graphs.

\begin{figure}[t]
\begin{center}
  \begin{tabular}{c@{\hspace{1em}}cc||cc}
  \raisebox{1.5em}{\includegraphics[width=0.03\textwidth]{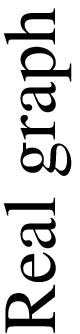}} &
  \includegraphics[width=0.2\textwidth]{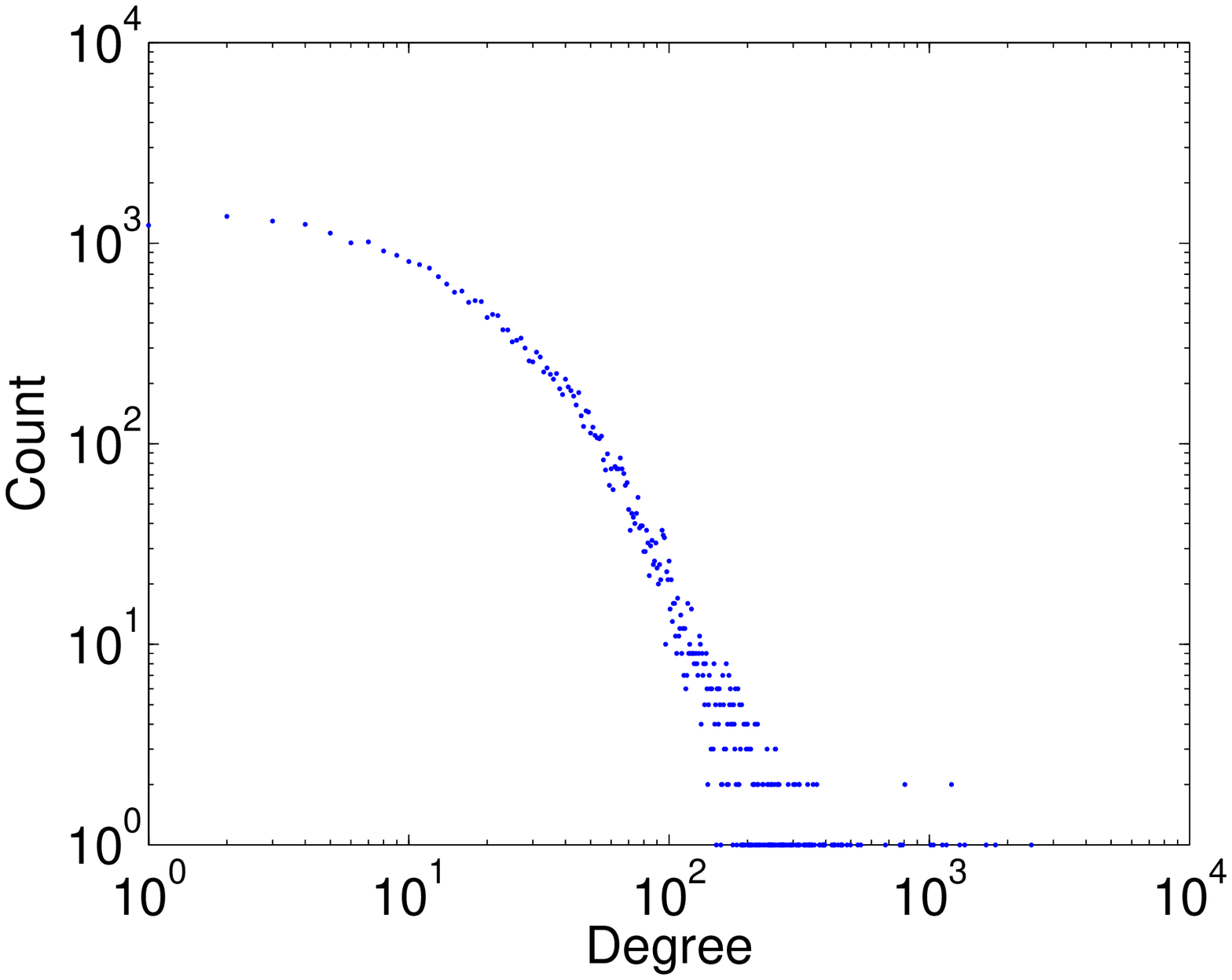} &
  \includegraphics[width=0.2\textwidth]{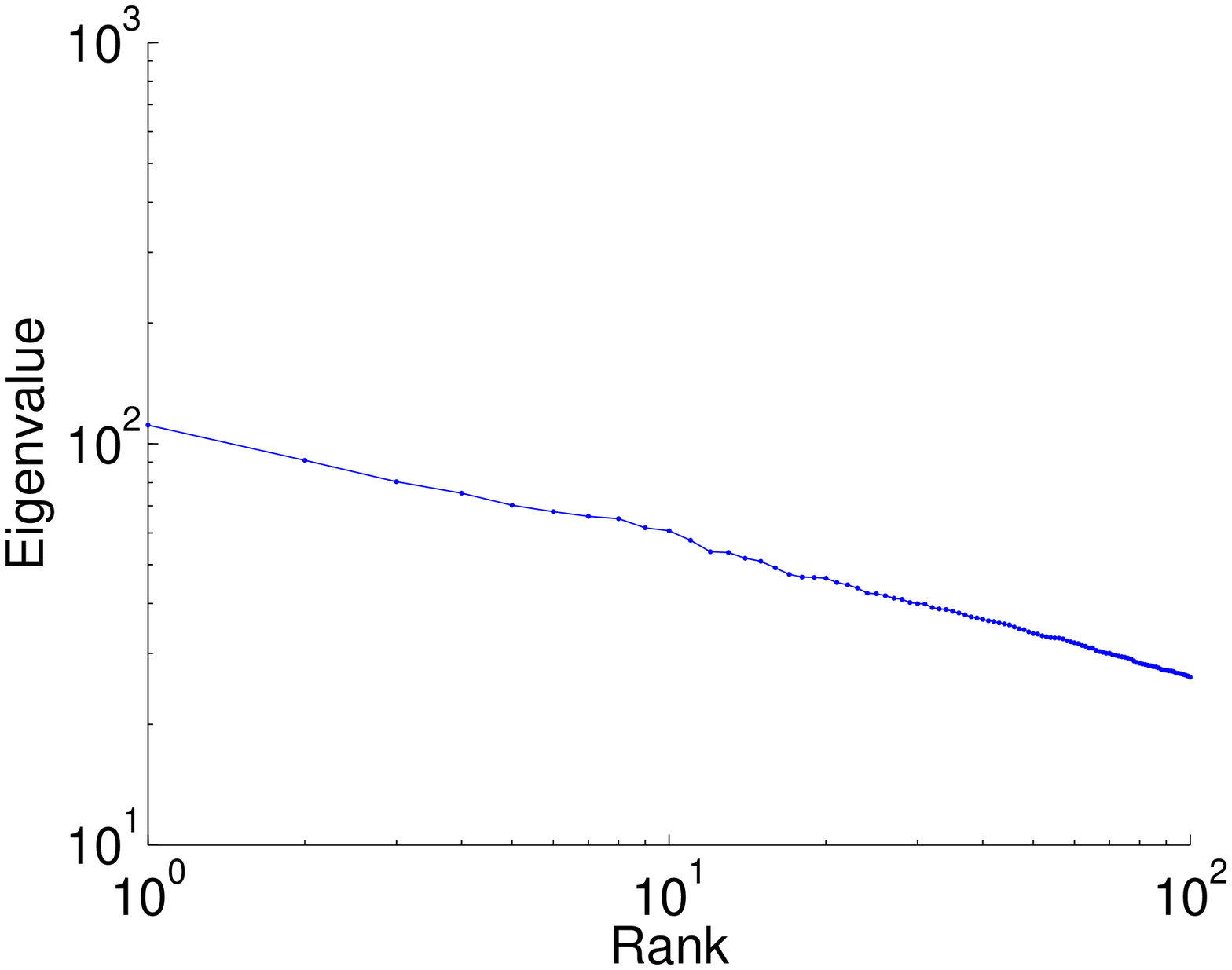} &
  \includegraphics[width=0.2\textwidth]{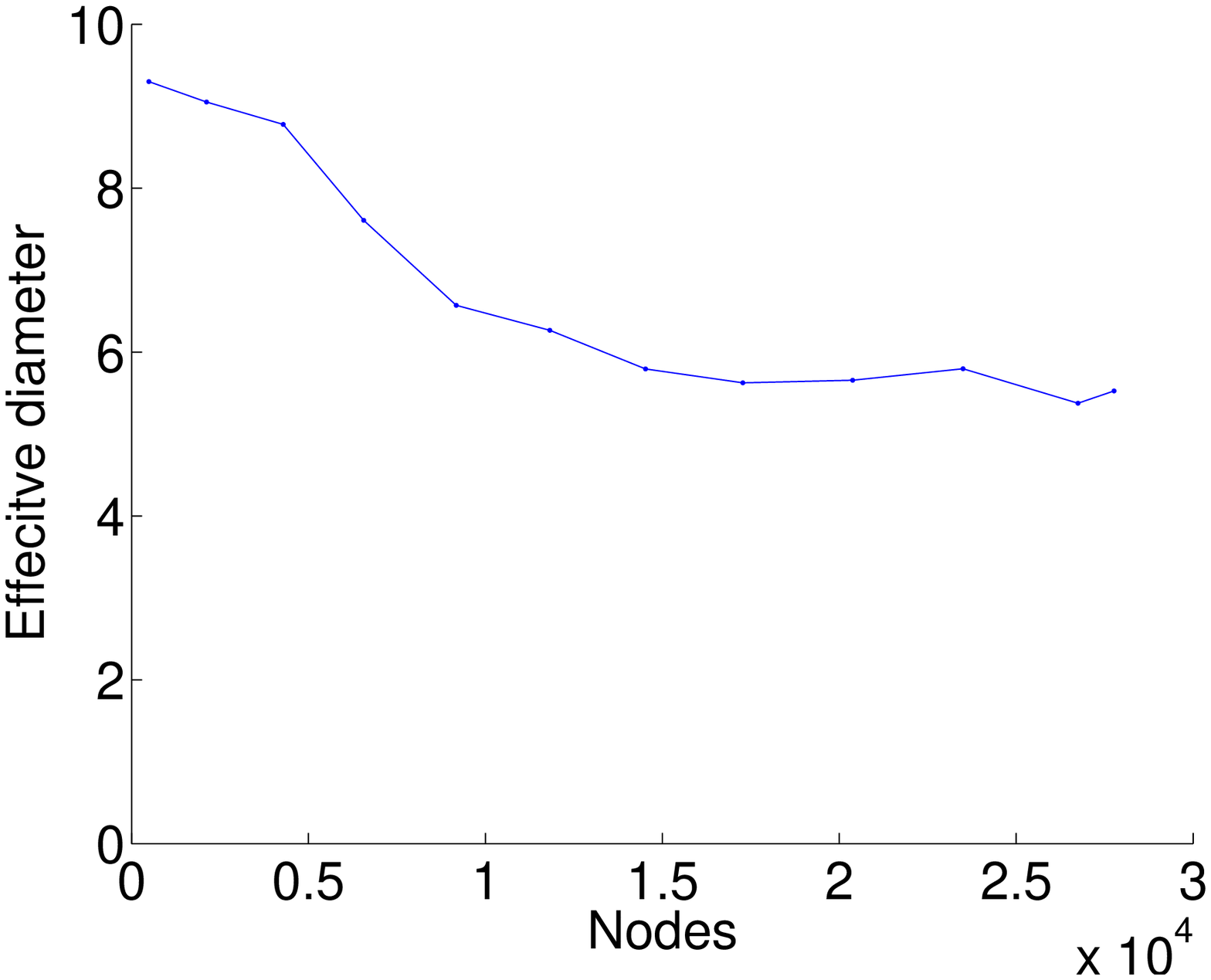} &
  \includegraphics[width=0.2\textwidth]{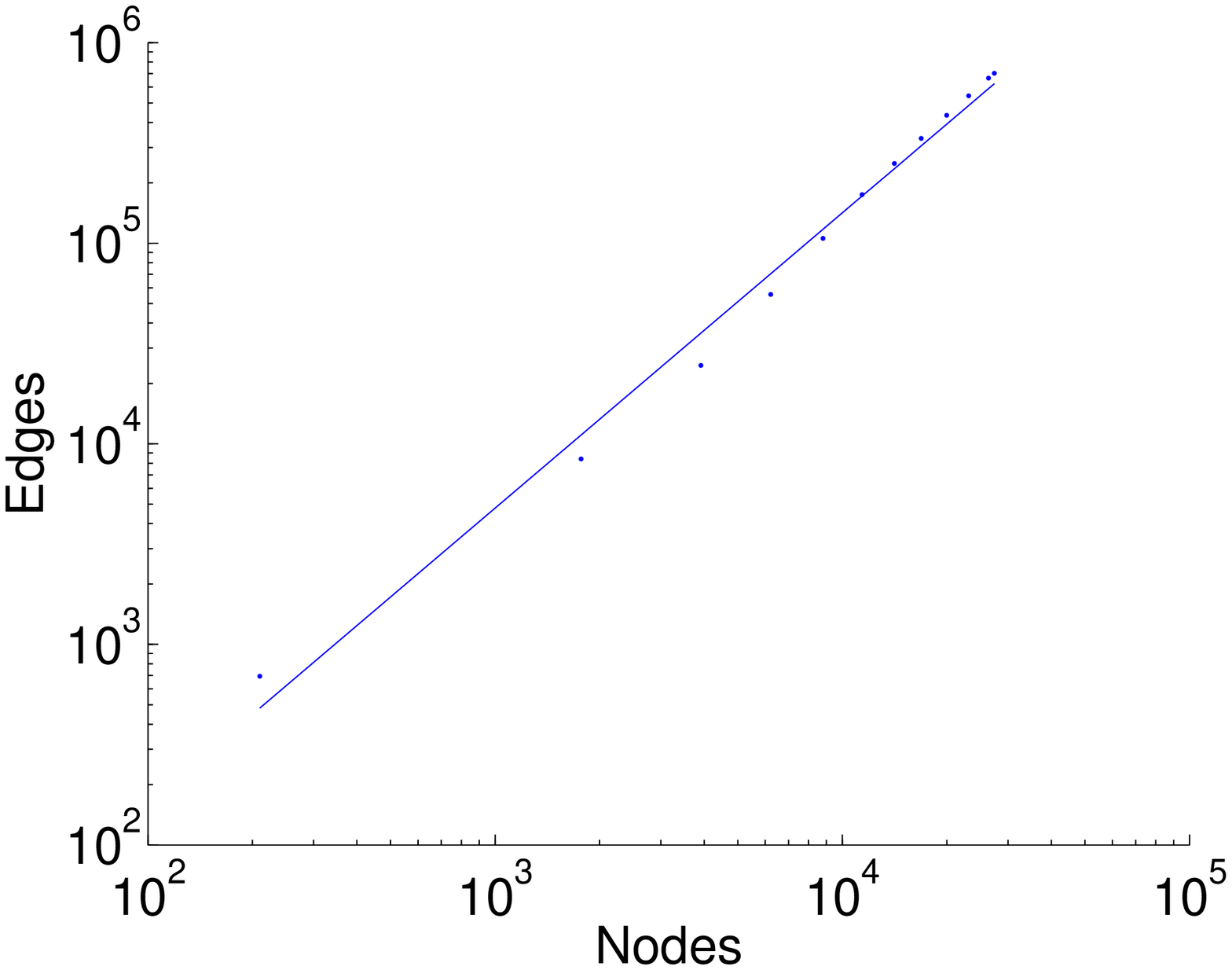} \\
  \raisebox{1.5em}{\includegraphics[width=0.05\textwidth]{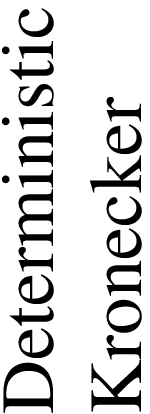}}&
  \includegraphics[width=0.2\textwidth]{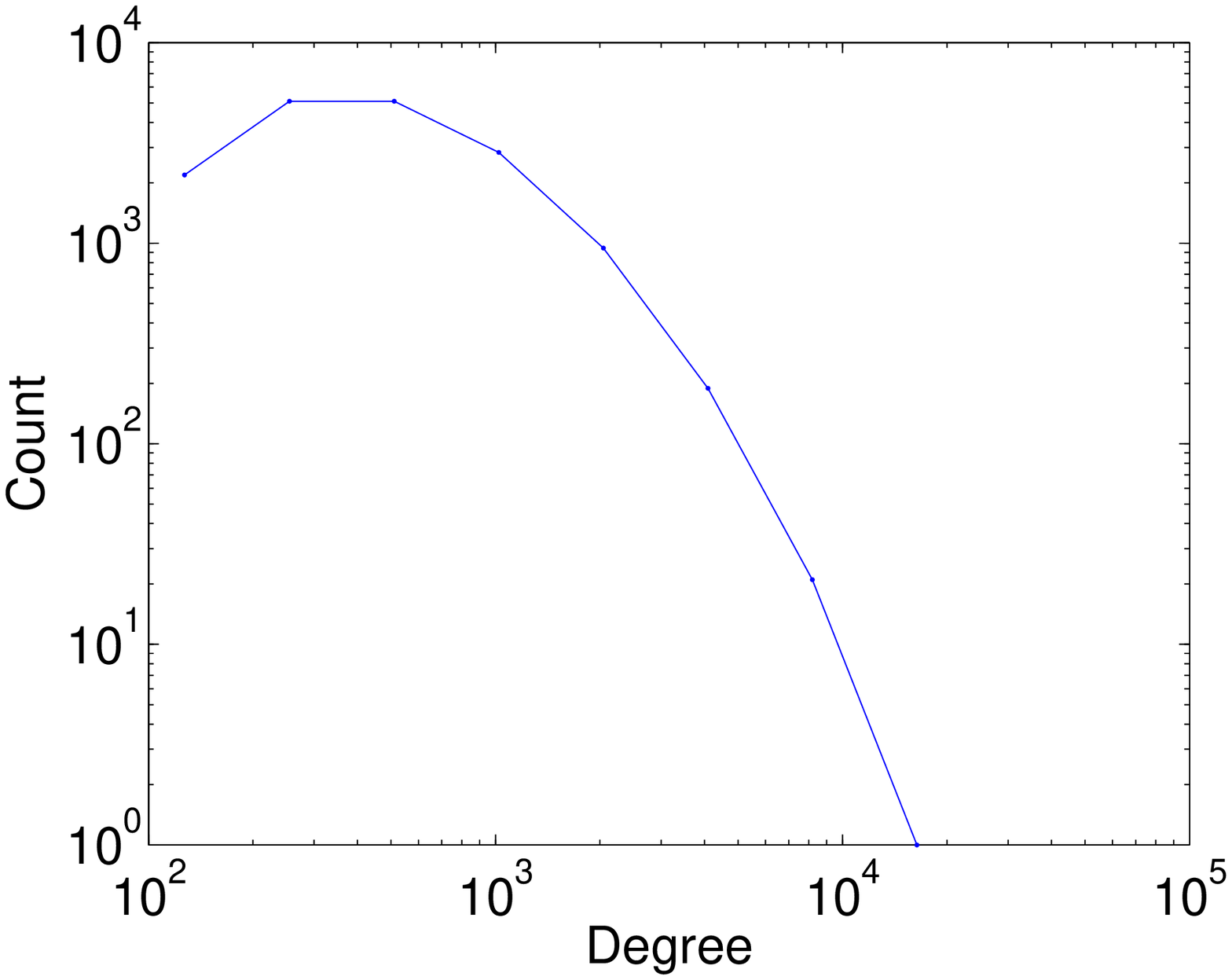} &
  \includegraphics[width=0.2\textwidth]{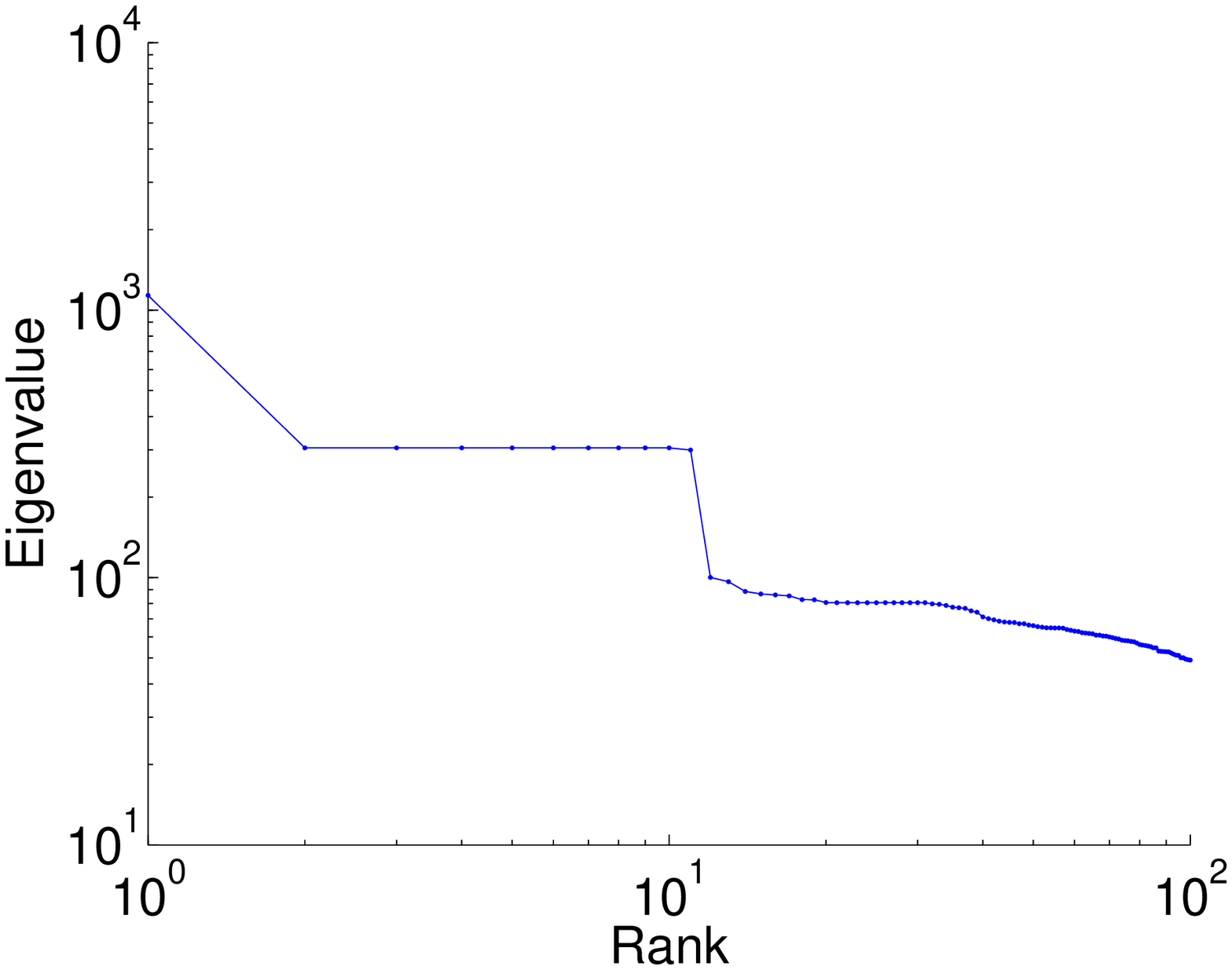} &
  \includegraphics[width=0.2\textwidth]{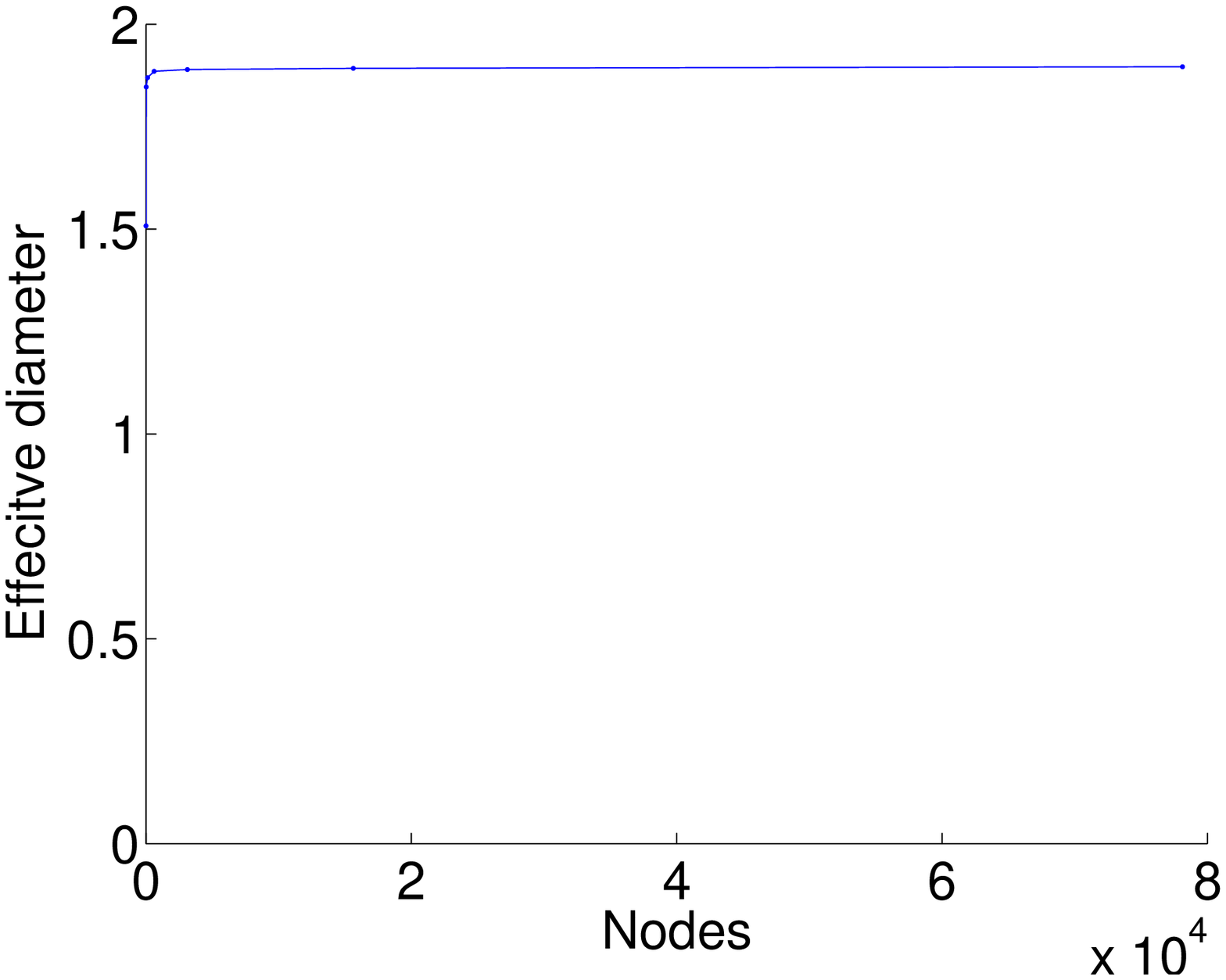} &
  \includegraphics[width=0.2\textwidth]{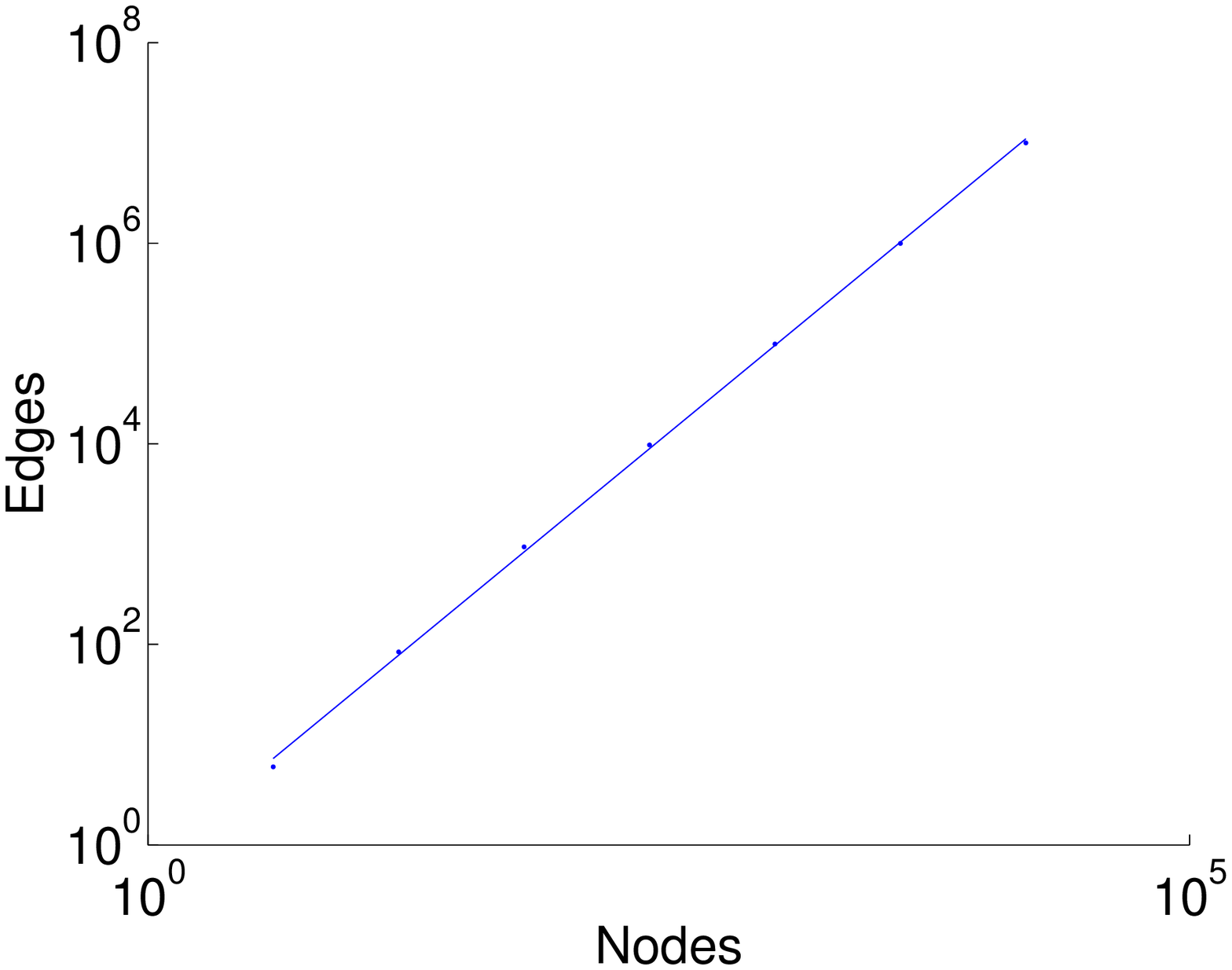} \\
  \raisebox{1.5em}{\includegraphics[width=0.05\textwidth]{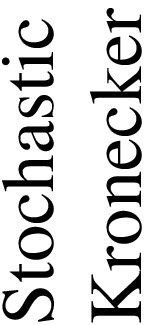}} &
  \includegraphics[width=0.2\textwidth]{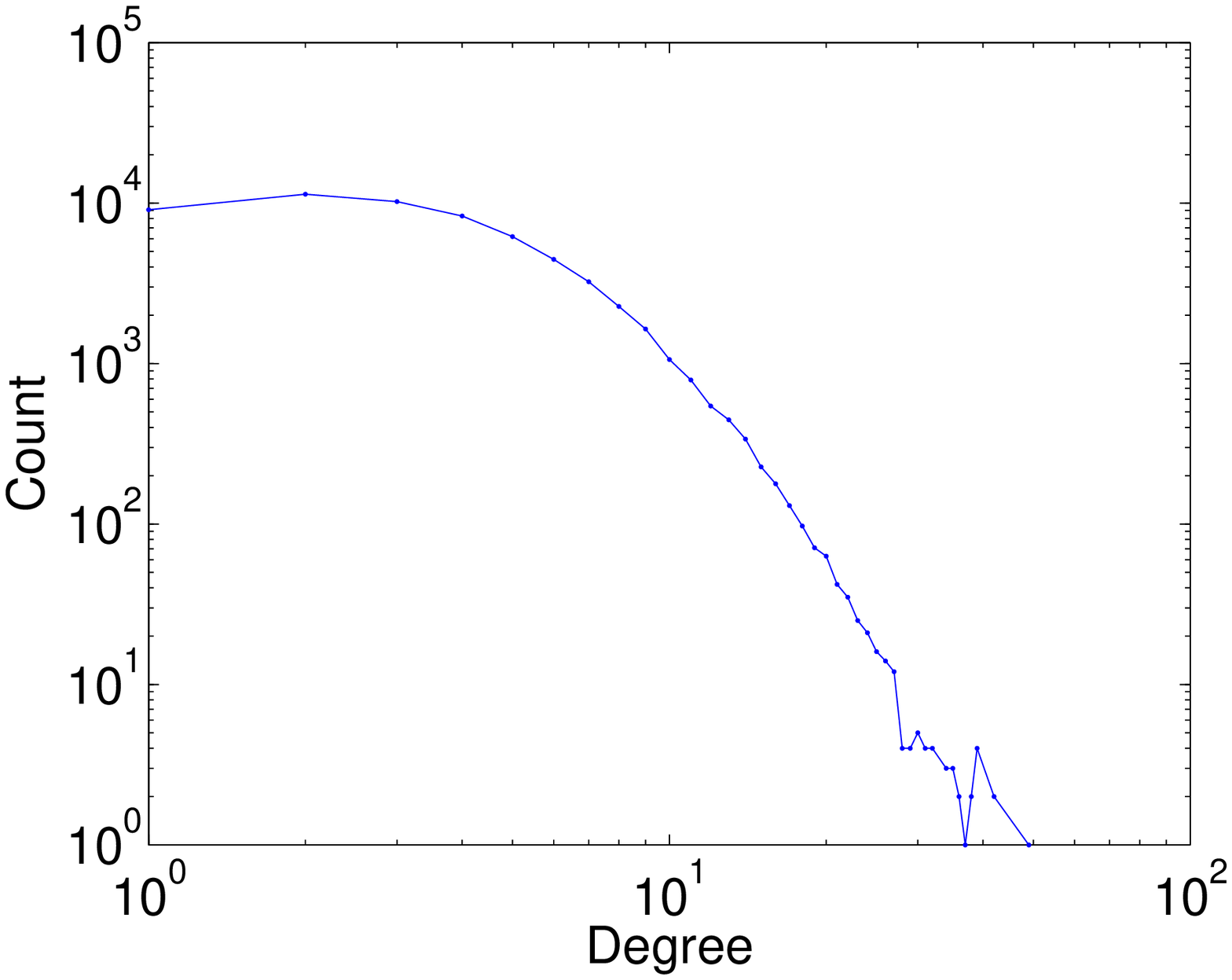} &
  \includegraphics[width=0.2\textwidth]{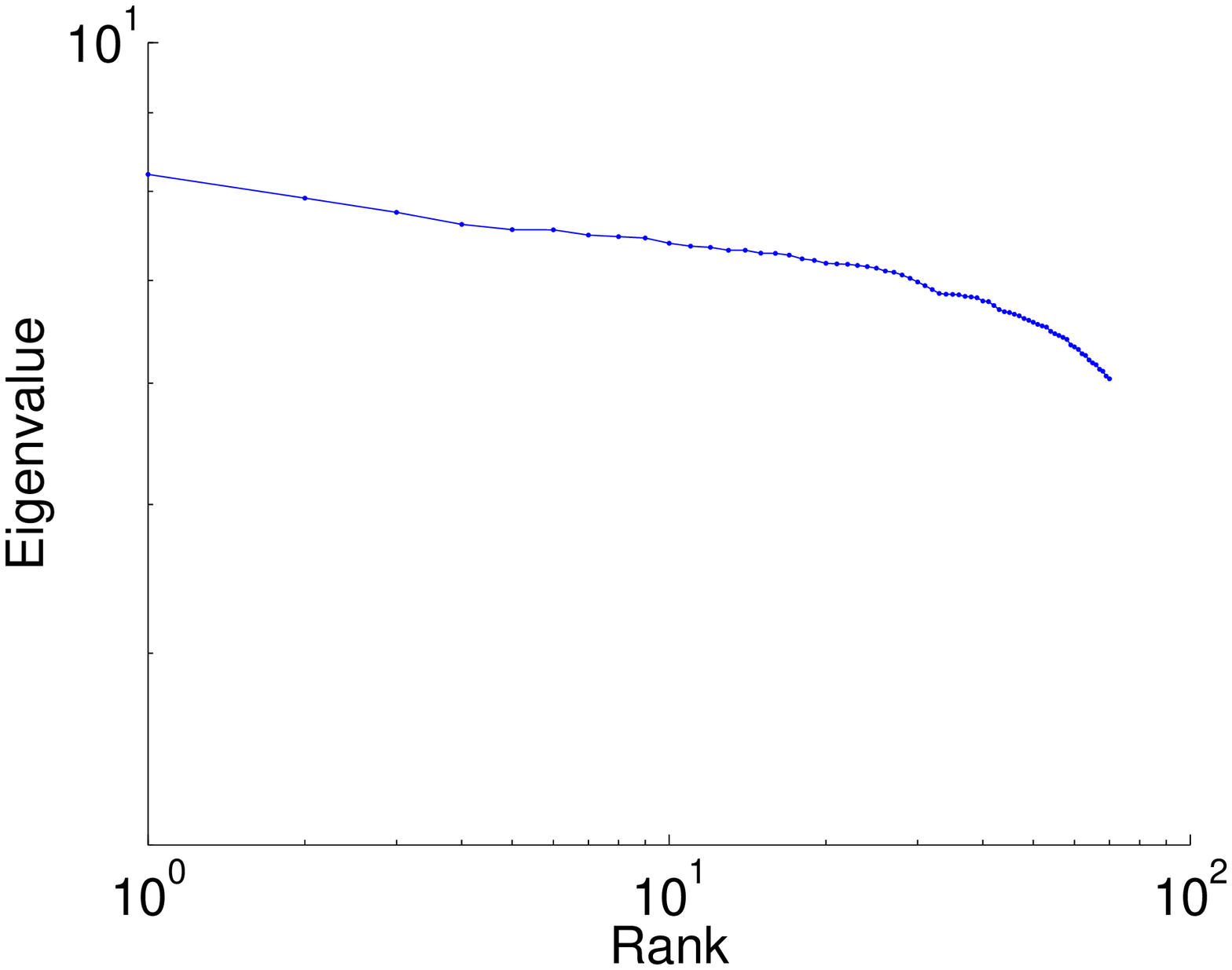} &
  \includegraphics[width=0.2\textwidth]{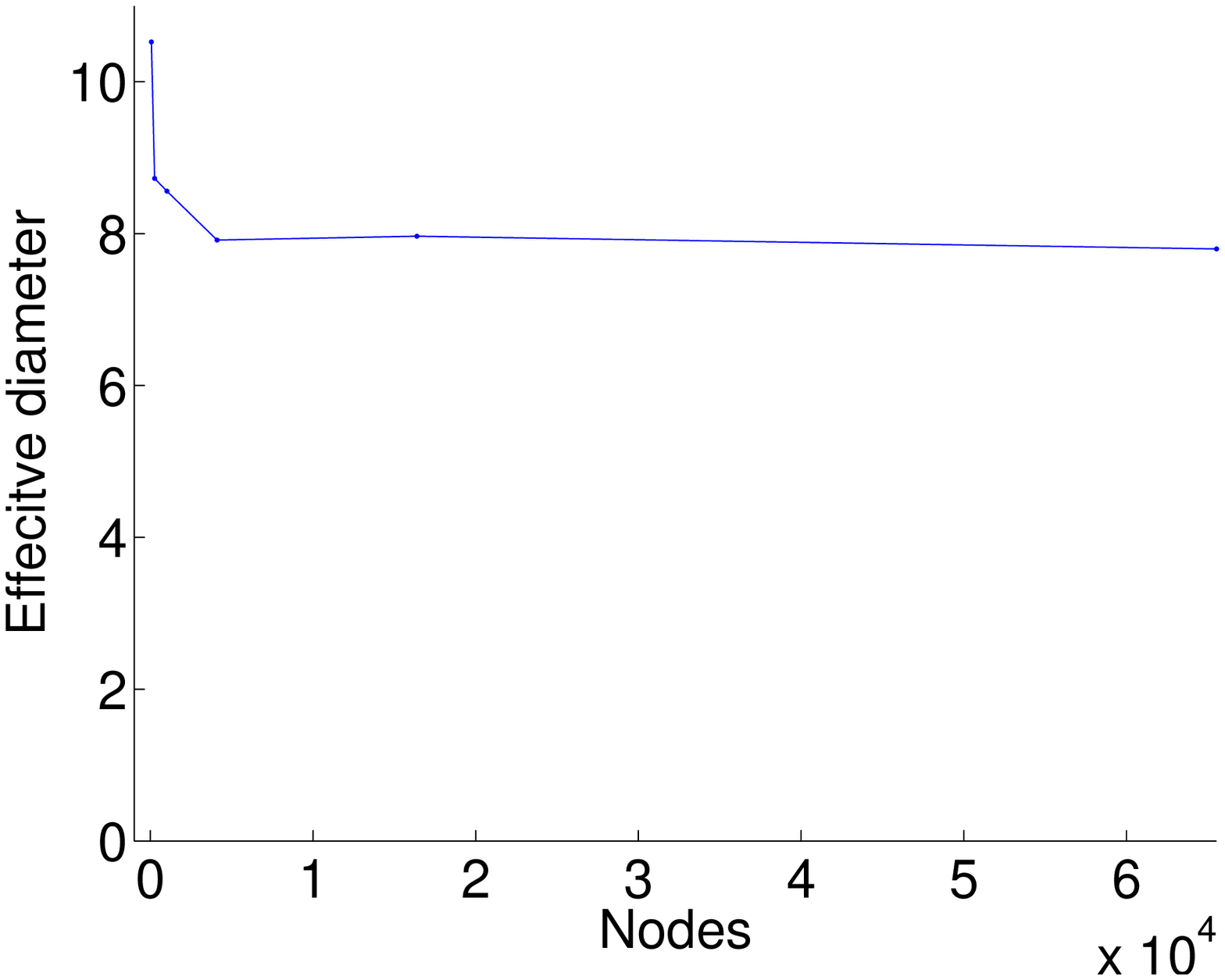} &
  \includegraphics[width=0.2\textwidth]{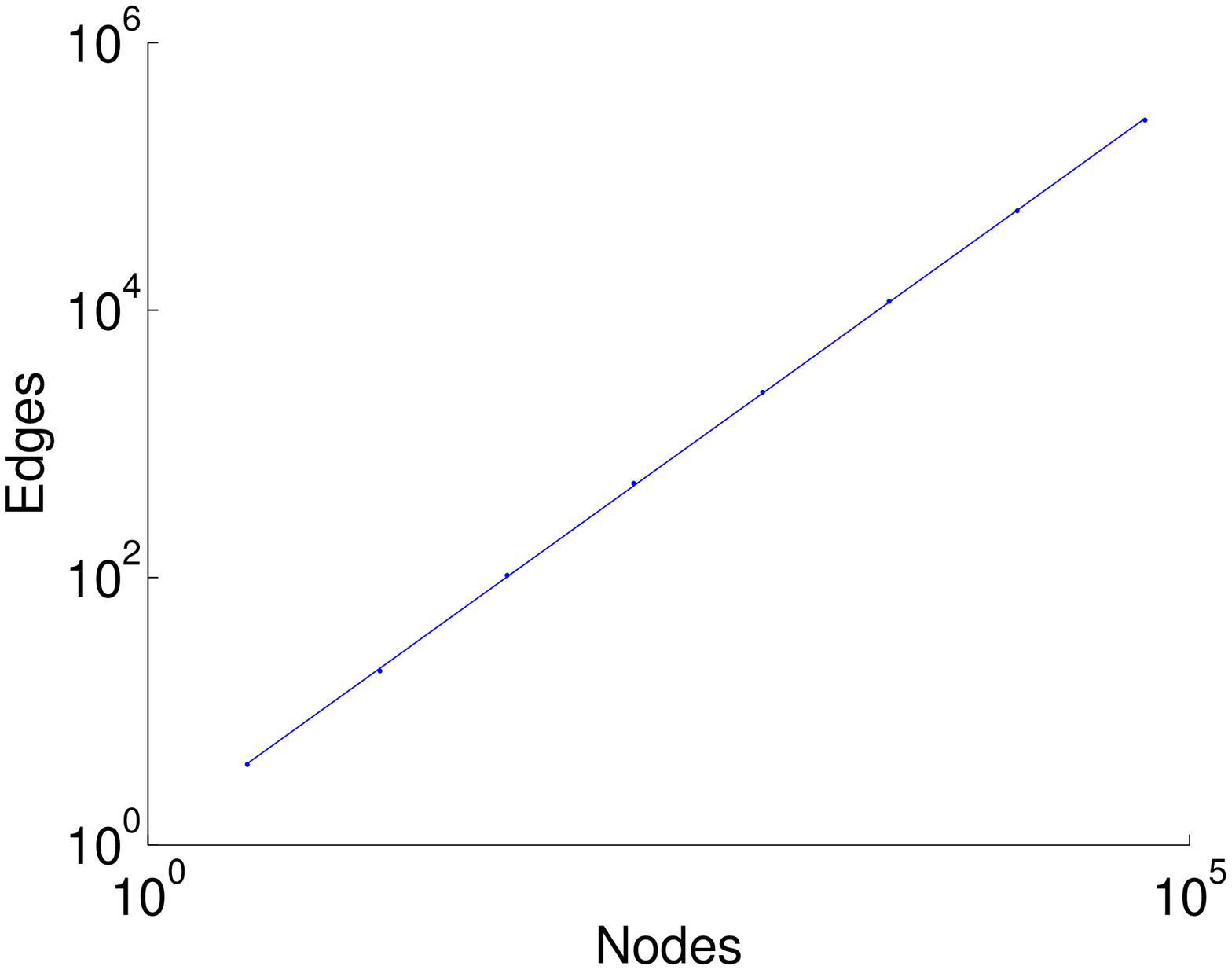} \\
  & (a) Degree & (b) Scree plot & (c) Diameter & (d) DPL\\
  & distribution & & over time &
  \end{tabular}
  \caption{{\em Citation network (\dataset{Cit-hep-th}):} Patterns from the
  real graph (top row), the deterministic Kronecker graph with $\kn{1}$
  being a star graph on 4 nodes (center + 3 satellites) (middle row), and
  the Stochastic Kronecker graph ($\alpha=0.41,~\beta=0.11$ -- bottom
  row). {\em Static} patterns: (a) is the PDF of degrees in the graph
  (log-log scale), and (b) the distribution of eigenvalues (log-log
  scale). {\em Temporal} patterns: (c) gives the effective diameter over
  time (linear-linear scale), and (d) is the number of edges versus number
  of nodes over time (log-log scale). Notice that the \SKRG\
  qualitatively matches all the patterns very well.}
\label{fig:KronArXiv}
\end{center}
\end{figure}

Similarly, Figure~\ref{fig:KronAS} shows plots for the static patterns in
the {\em Autonomous systems} (\dataset{As-RouteViews}) graph.  Recall that
we analyze a single, static network snapshot in this case.  In addition to
the degree distribution and scree plot, we also show two typical
plots~\cite{chakrabarti04rmat}: the distribution of {\em network values}
(principal eigenvector components, sorted, versus rank) and the {\em
hop-plot} (the number of reachable pairs $g(h)$ within $h$ hops or less,
as a function of the number of hops $h$). Notice that, again, the
Stochastic Kronecker graph matches well the properties of the real graph.

\begin{figure}[t]
\begin{center}
  \begin{tabular}{ccccc}
   \raisebox{1.5em}{\includegraphics[width=0.03\textwidth]{FIG/label-real}} &
    \includegraphics[width=0.2\textwidth]{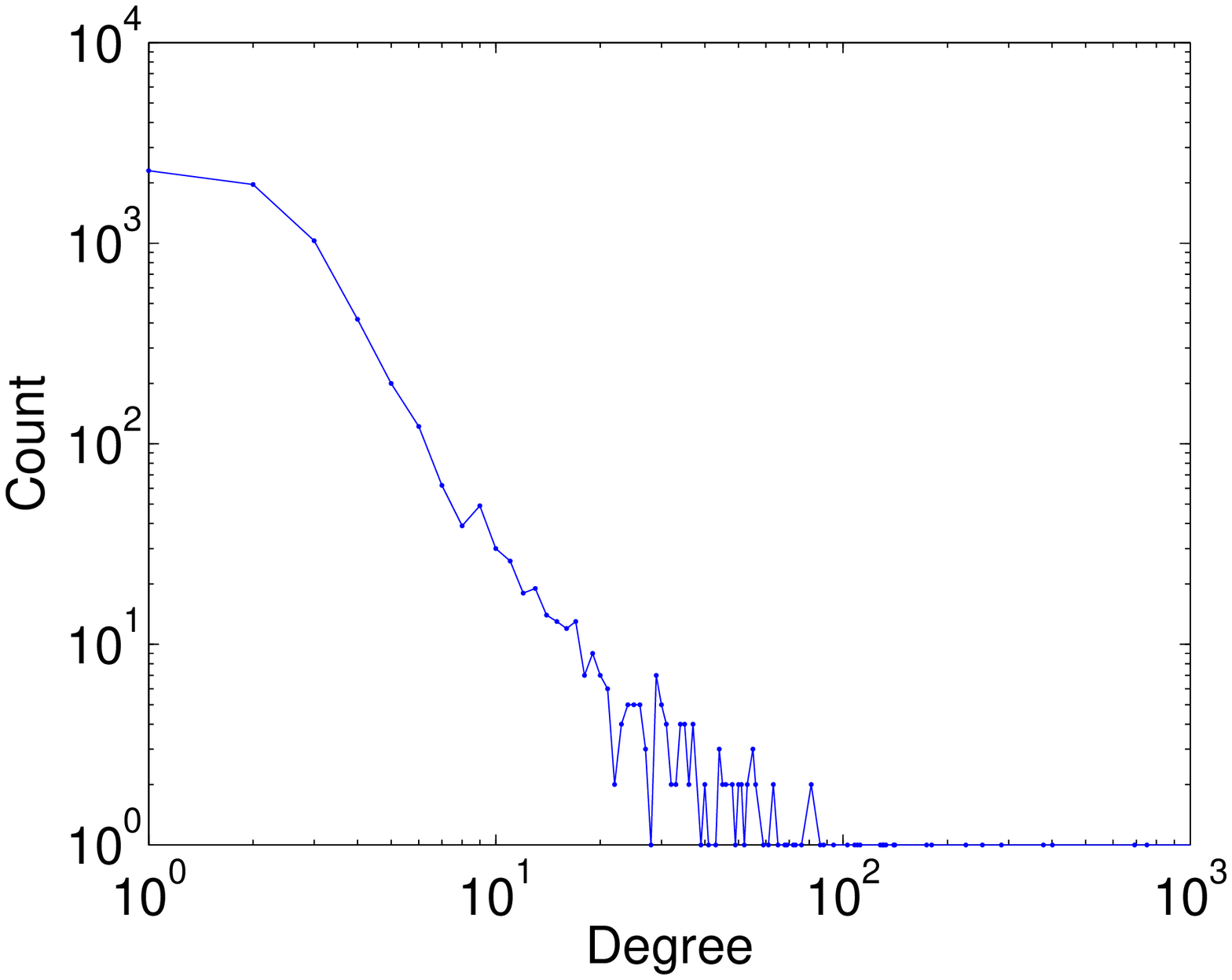} &
    \includegraphics[width=0.2\textwidth]{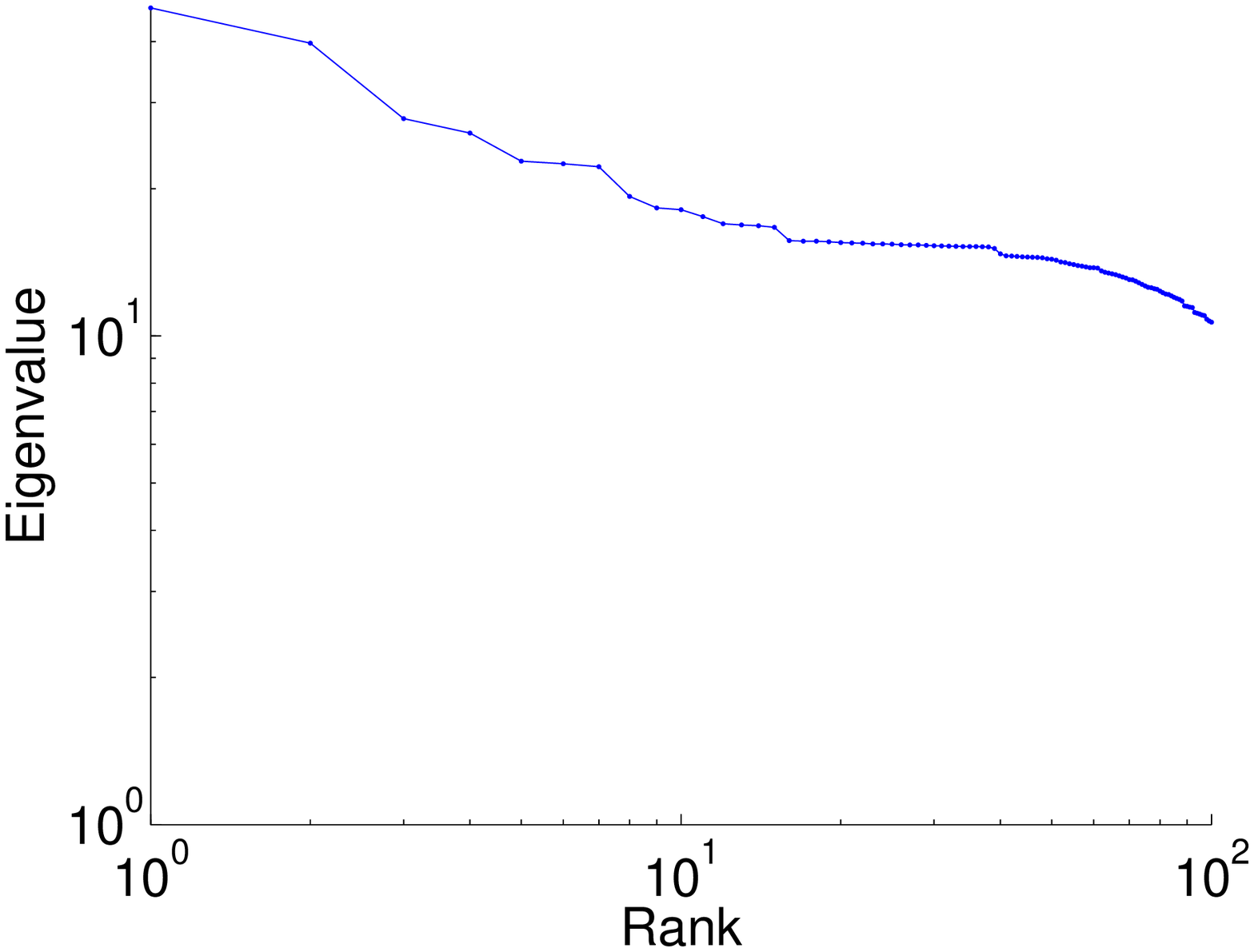} &
    \includegraphics[width=0.2\textwidth]{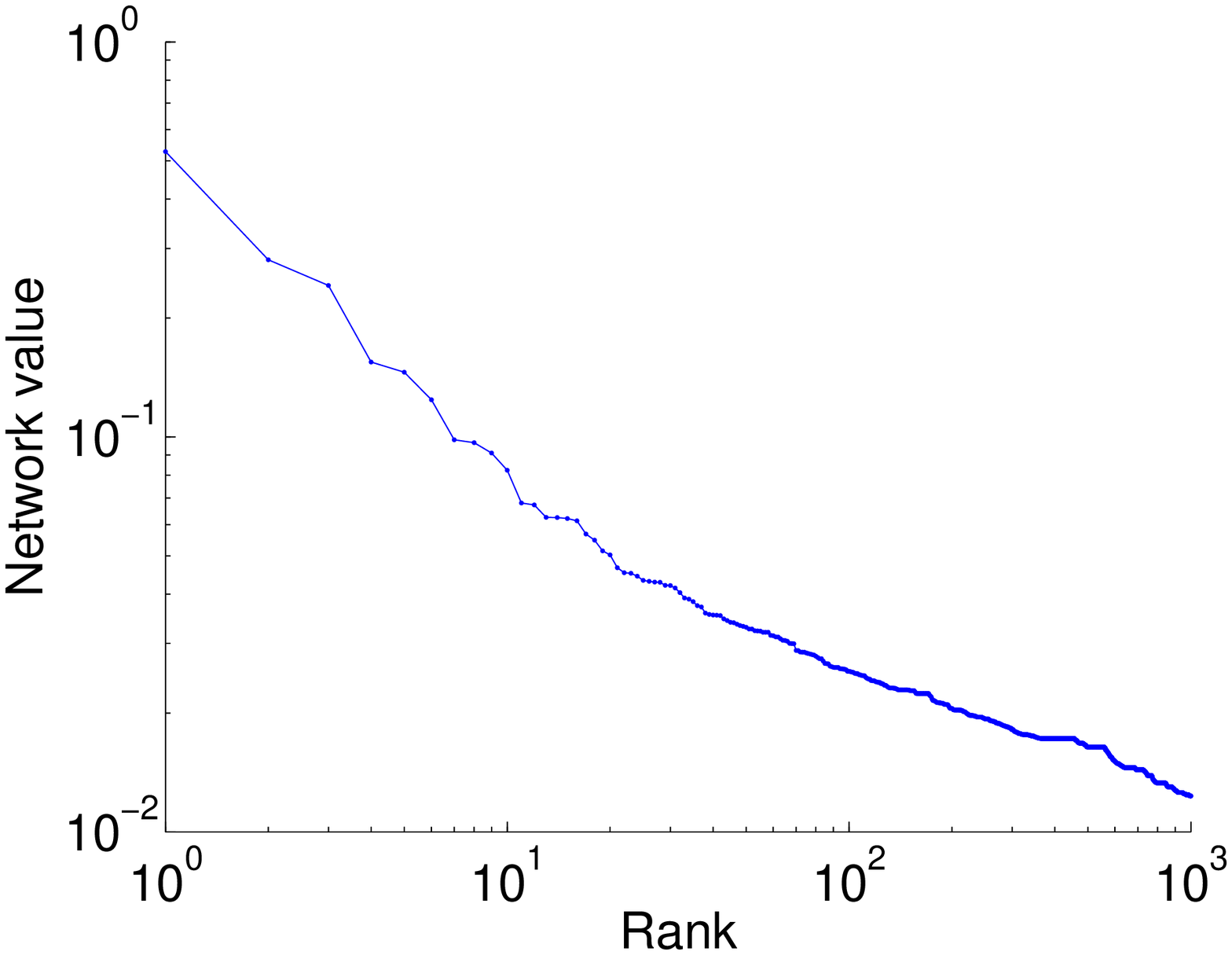} &
    \includegraphics[width=0.2\textwidth]{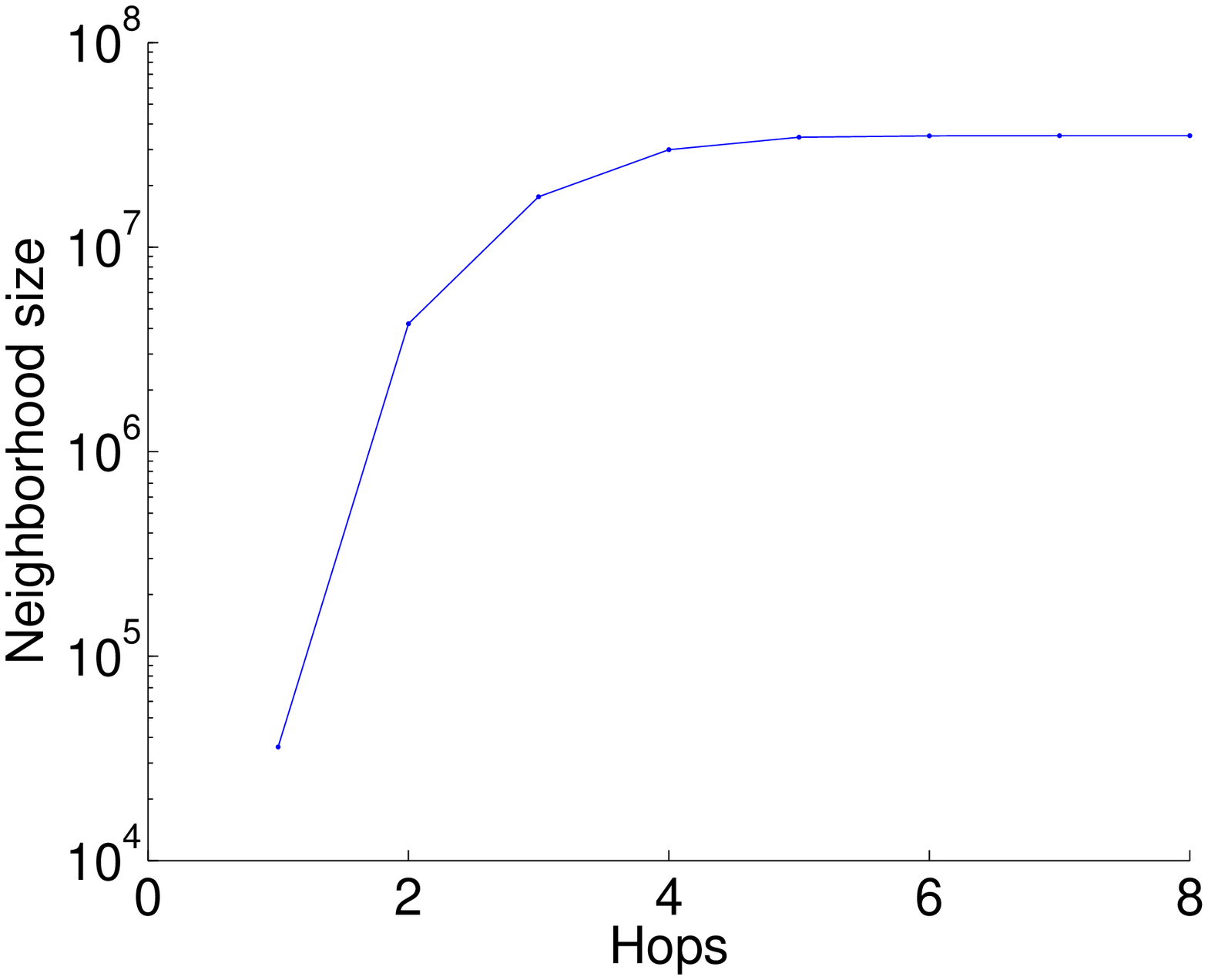} \\
    \raisebox{1.5em}{\includegraphics[width=0.05\textwidth]{FIG/label-sth}} &
    \includegraphics[width=0.2\textwidth]{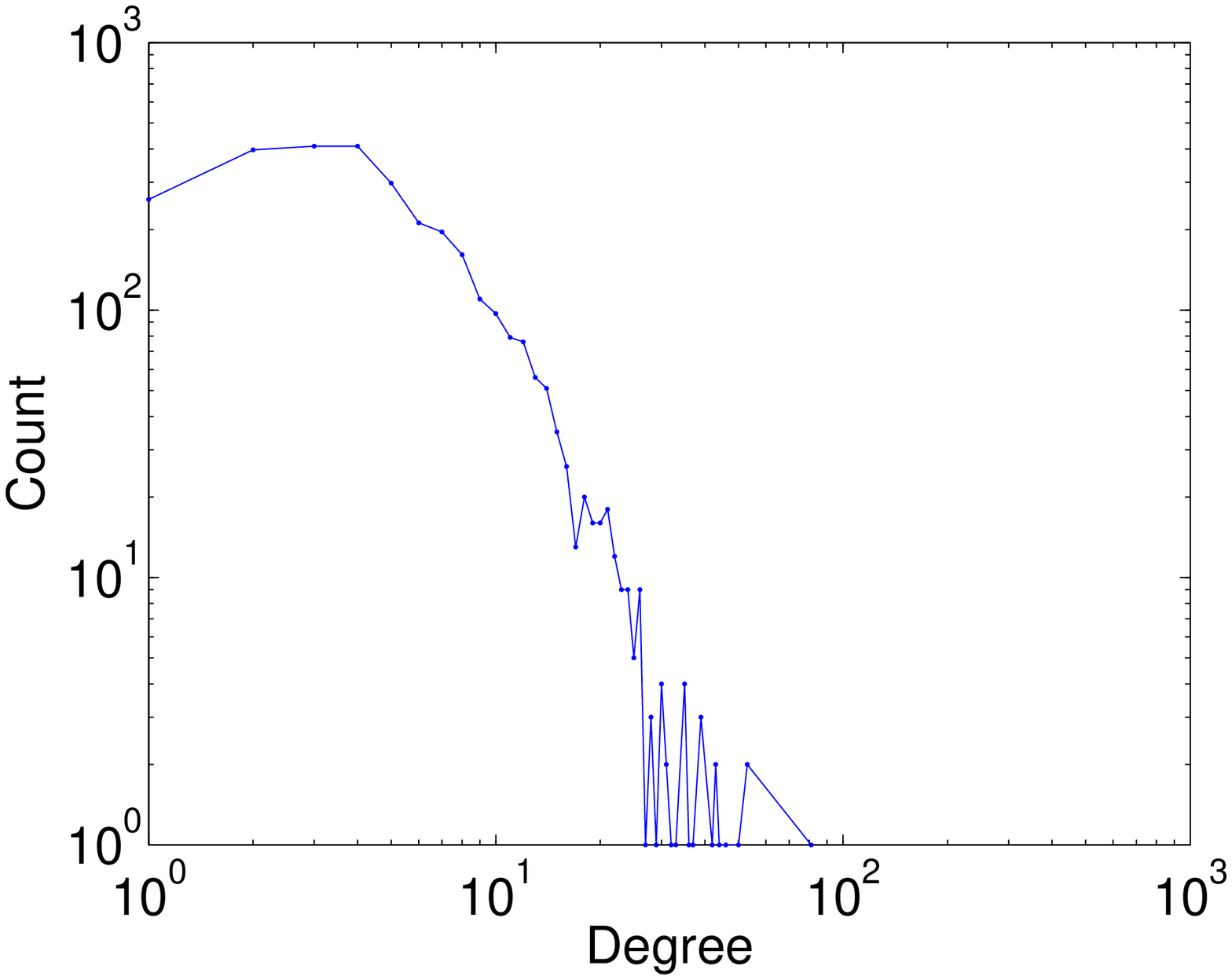} &
    \includegraphics[width=0.2\textwidth]{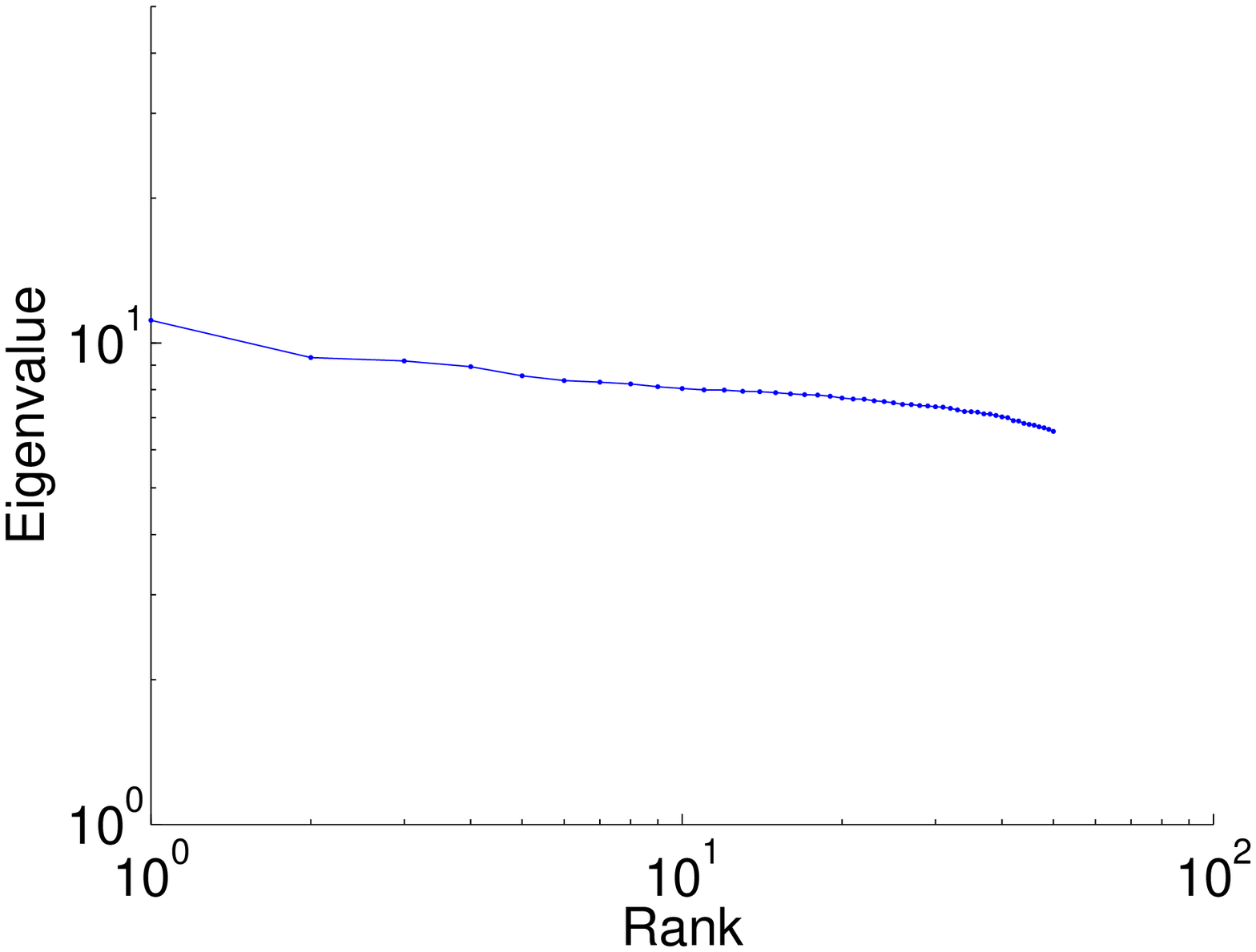} &
    \includegraphics[width=0.2\textwidth]{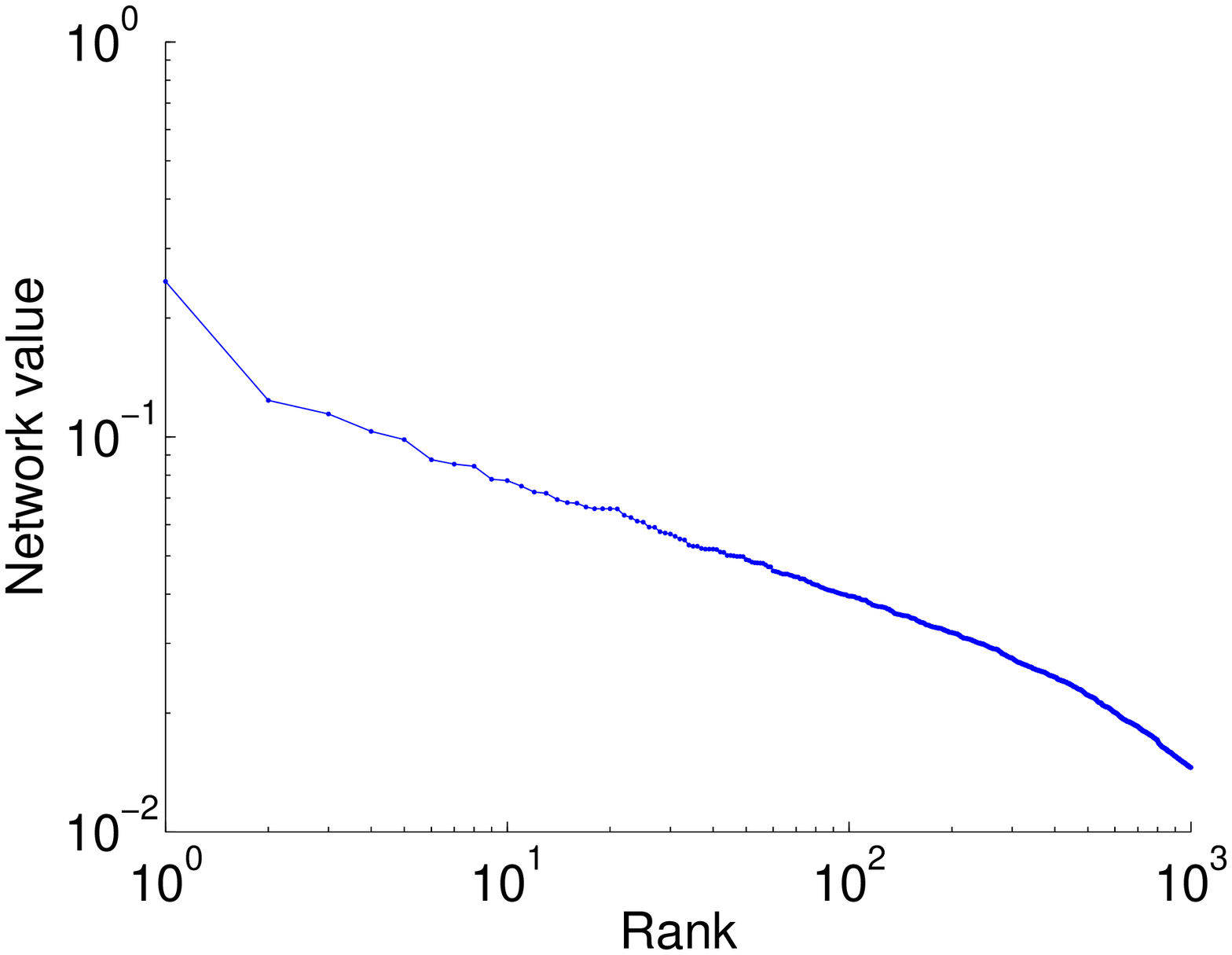} &
    \includegraphics[width=0.2\textwidth]{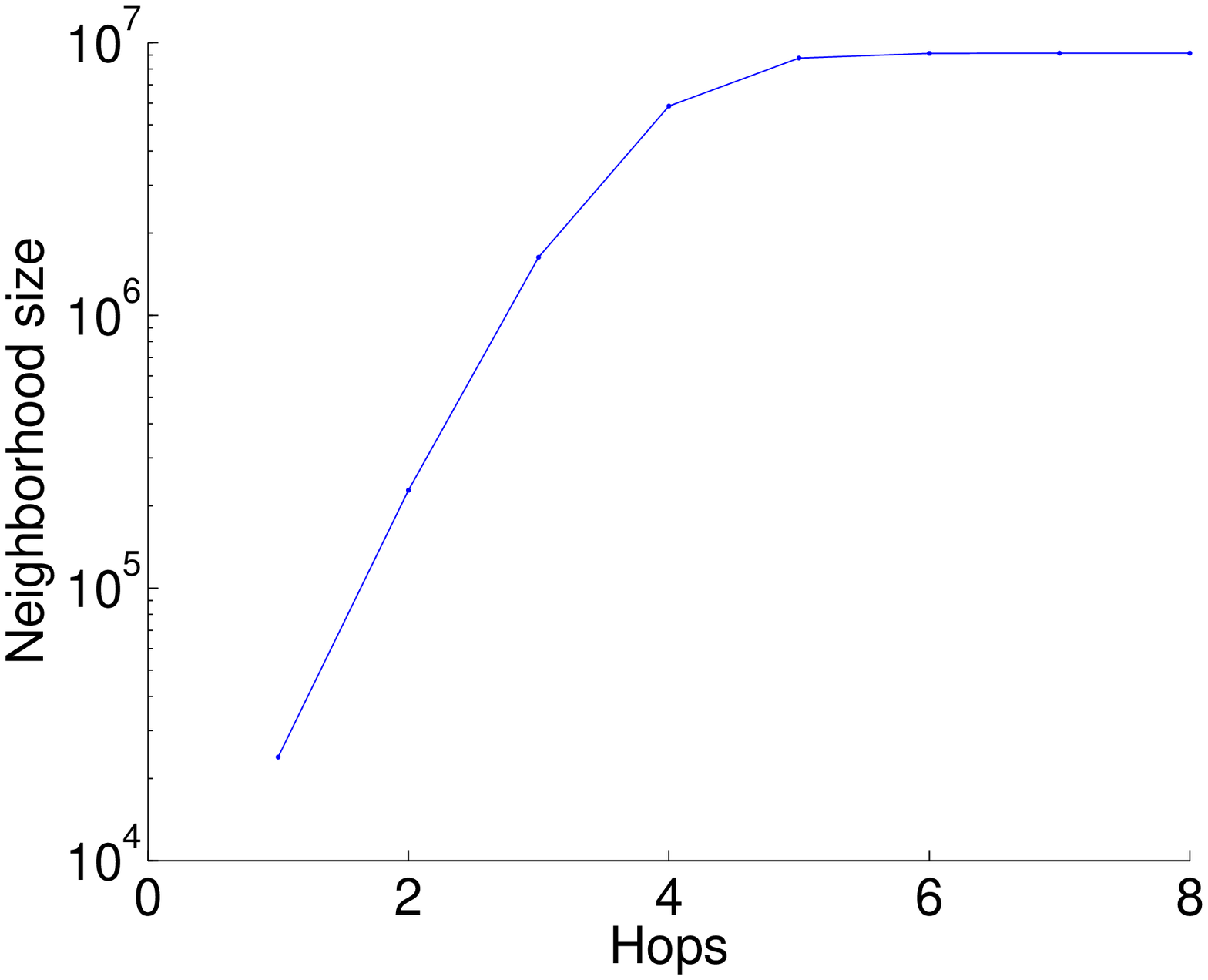} \\
    & (a) Degree & (b) Scree plot & (c) ``Network value'' & (d) ``Hop-plot''
    \\
    & distribution & & distribution &
  \end{tabular}
  \caption{{\em Autonomous systems (\dataset{As-RouteViews}):} Real (top)
  versus Kronecker (bottom). Columns (a) and (b) show the degree
  distribution and the scree plot, as before. Columns (c) and (d) show two
  more static patterns (see text). Notice that, again, the Stochastic Kronecker
  graph matches well the properties of the real graph.}
\label{fig:KronAS}
\end{center}
\end{figure}

\subsection{Parameter space of Kronecker graphs}

Last we present simulation experiments that investigate the parameter
space of Stochastic Kronecker graphs.

\begin{figure}[t]
\begin{center}
  \begin{tabular}{ccc}
    \includegraphics[width=0.3\textwidth]{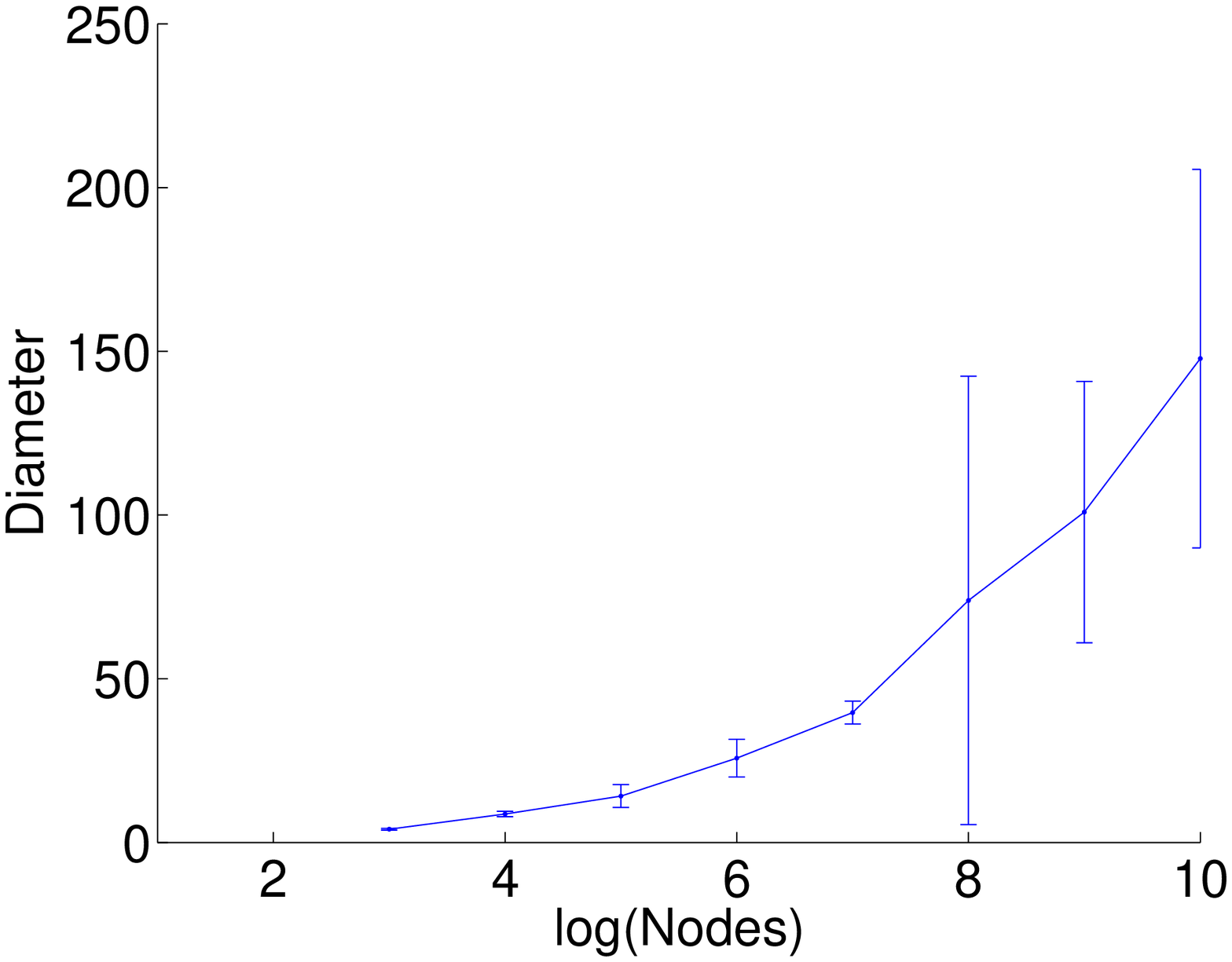} &
    \includegraphics[width=0.3\textwidth]{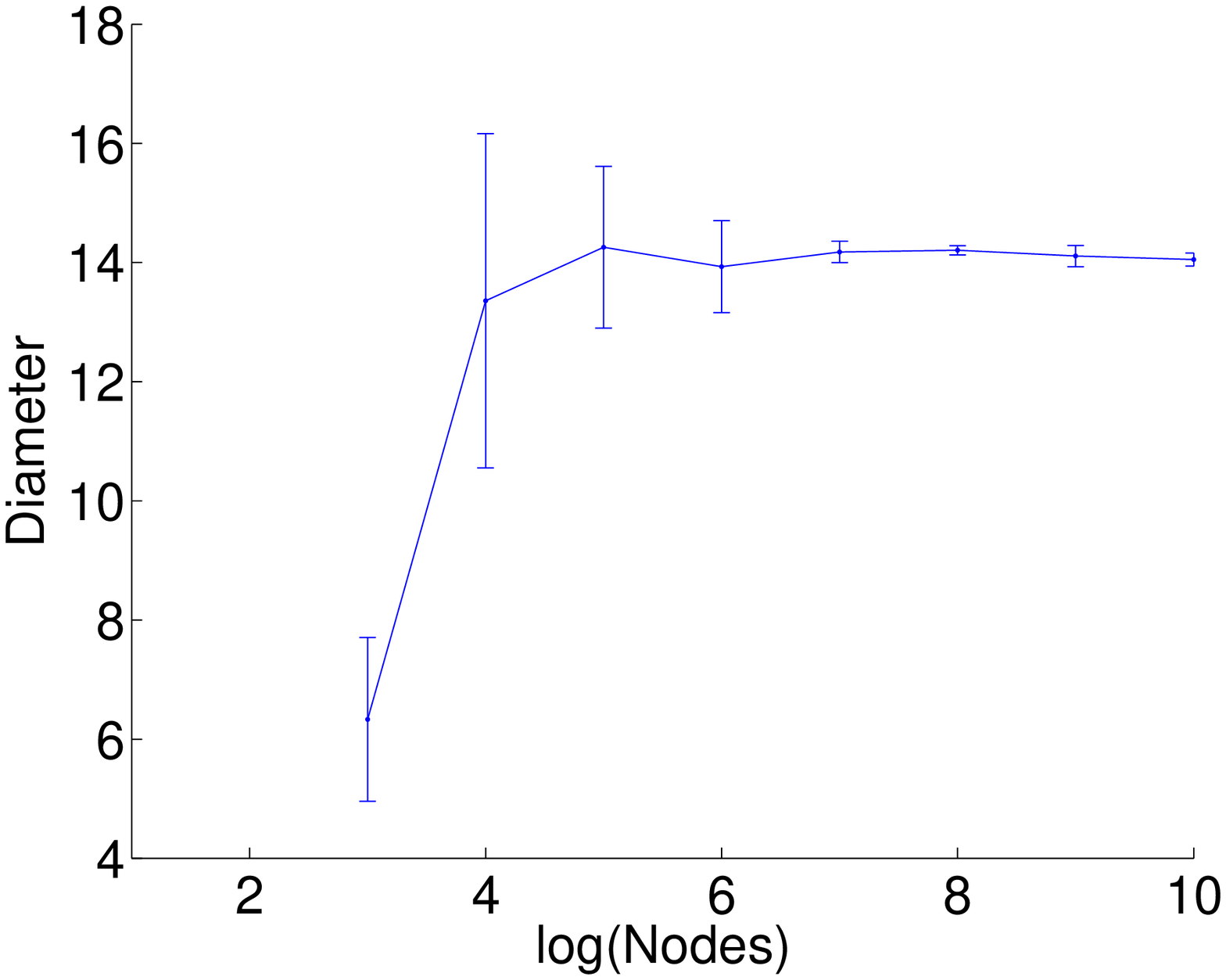} &
    \includegraphics[width=0.3\textwidth]{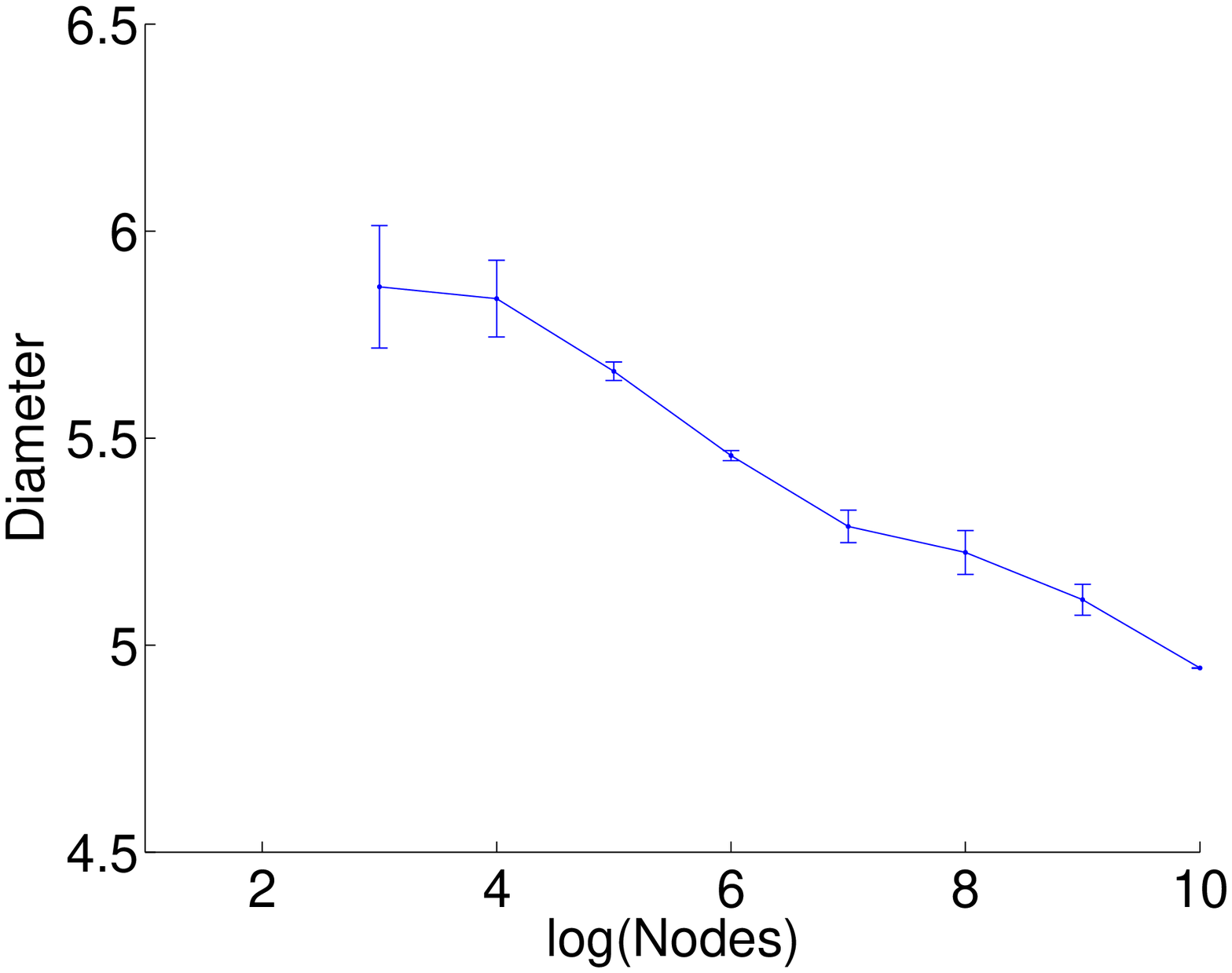} \\
    (a) Increasing diameter & (b) Constant diameter & (c) Decreasing diameter \\
    $\alpha = 0.38, \beta=0$ & $\alpha = 0.43, \beta=0$ & $\alpha = 0.54, \beta=0$ \\
  \end{tabular}
  \caption{\new{Effective diameter over time for a 4-node chain initiator graph. After
  each consecutive Kronecker power we measure the effective diameter. We
  use different settings of $\alpha$ parameter. $\alpha = 0.38, 0.43,
  0.54$ and $\beta=0$, respectively.}}
 \label{fig:Kron4star_diam}
  %\vspace{-3mm}
\end{center}
\end{figure}

First, in Figure~\ref{fig:Kron4star_diam} we show the ability of Kronecker
Graphs to generate networks with increasing, constant and
decreasing/stabilizing effective diameter. We start with a 4-node chain
initiator graph (shown in top row of Figure~\ref{fig:KronSpy4chain}), setting each ``1'' of $\kn{1}$ to $\alpha$ and each ``0''
to $\beta=0$ to obtain $\pn{1}$ that we then use to generate a growing
sequence of graphs. We plot the effective diameter of each $R(\pn{k})$ as
we generate a sequence of growing graphs $R(\pn{2}), R(\pn{3}), \ldots,
R(\pn{10})$. $R(\pn{10})$ has exactly 1,048,576 nodes. Notice Stochastic
Kronecker graphs is a very flexible model. When the generated graph is
very sparse (low value of $\alpha$) we obtain graphs with slowly
increasing effective diameter (Figure~\ref{fig:Kron4star_diam}(a)). For
intermediate values of $\alpha$ we get graphs with constant diameter
(Figure~\ref{fig:Kron4star_diam}(b)) and that in our case also slowly
densify with densification exponent $a=1.05$. Last, we see an example
of a graph with shrinking/stabilizing effective diameter. Here we set the
$\alpha=0.54$ which results in a densification exponent of $a=1.2$. Note that
these observations are not contradicting Theorem~\ref{thm:KronDia}.
Actually, these simulations here agree well with the analysis
of~\cite{mahdian07kronecker}.

\begin{figure}[t]
\begin{center}
  \begin{tabular}{ccc}
    \includegraphics[width=0.32\textwidth]{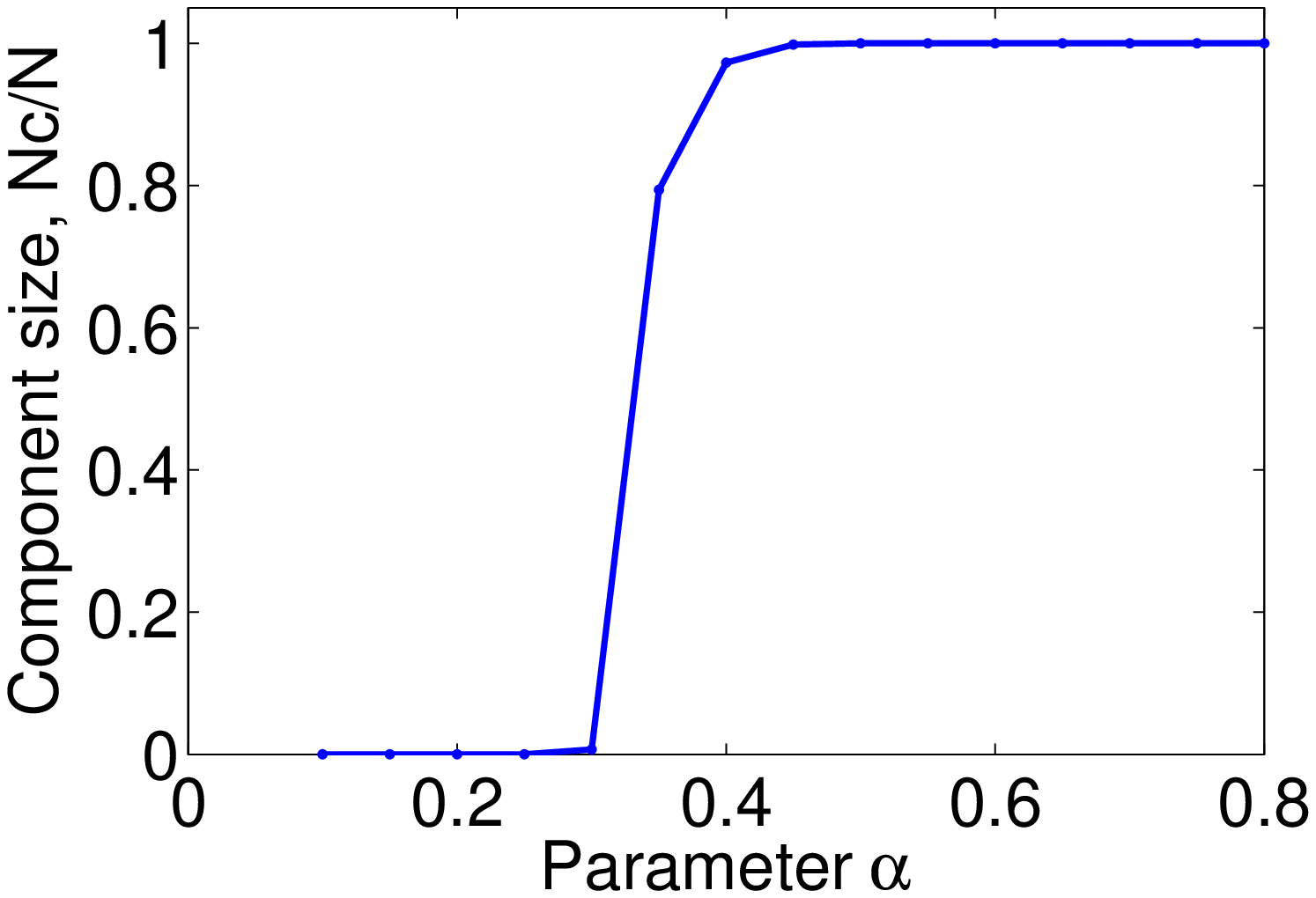} &
    \includegraphics[width=0.32\textwidth]{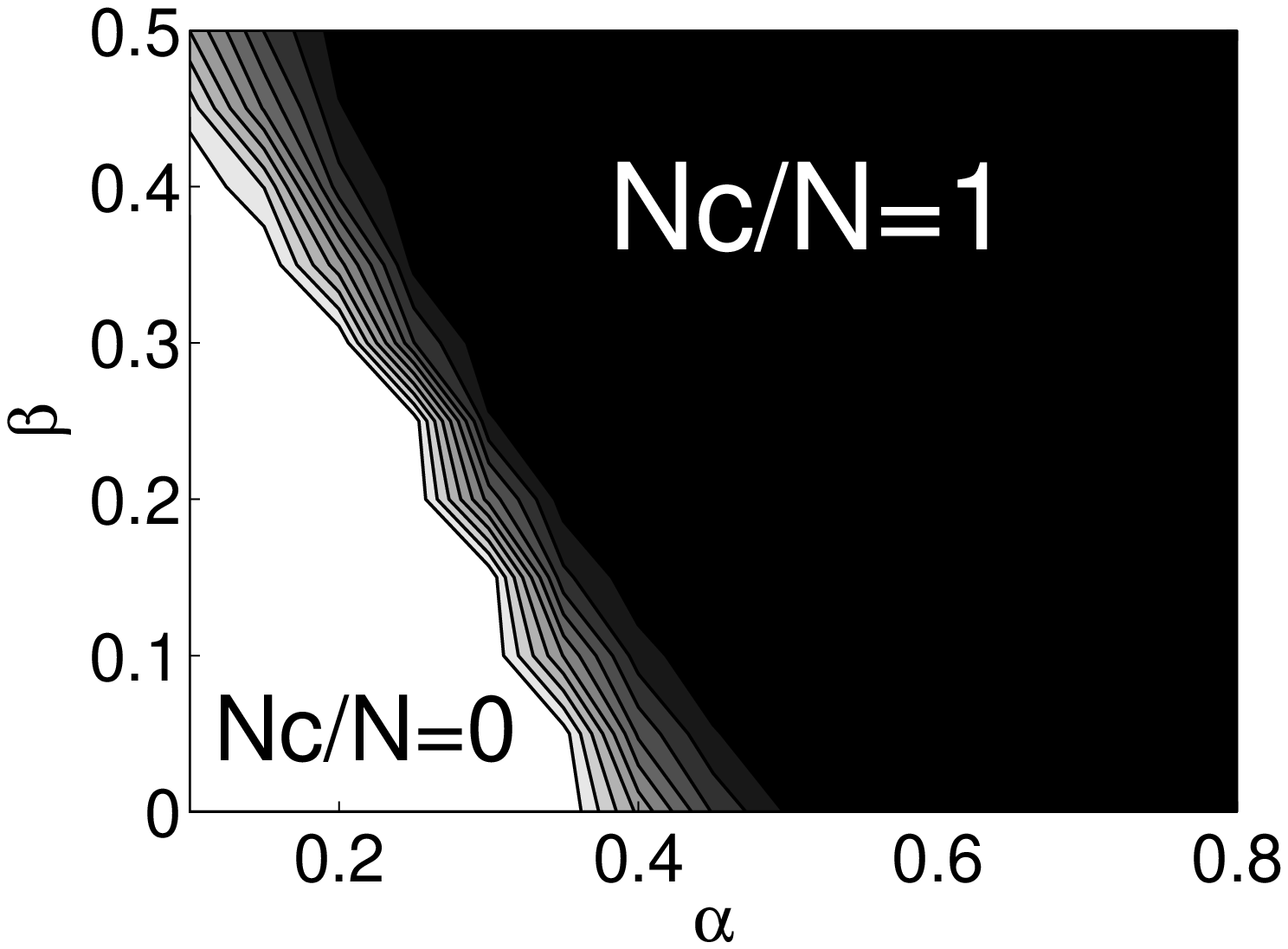} &
    \includegraphics[width=0.32\textwidth]{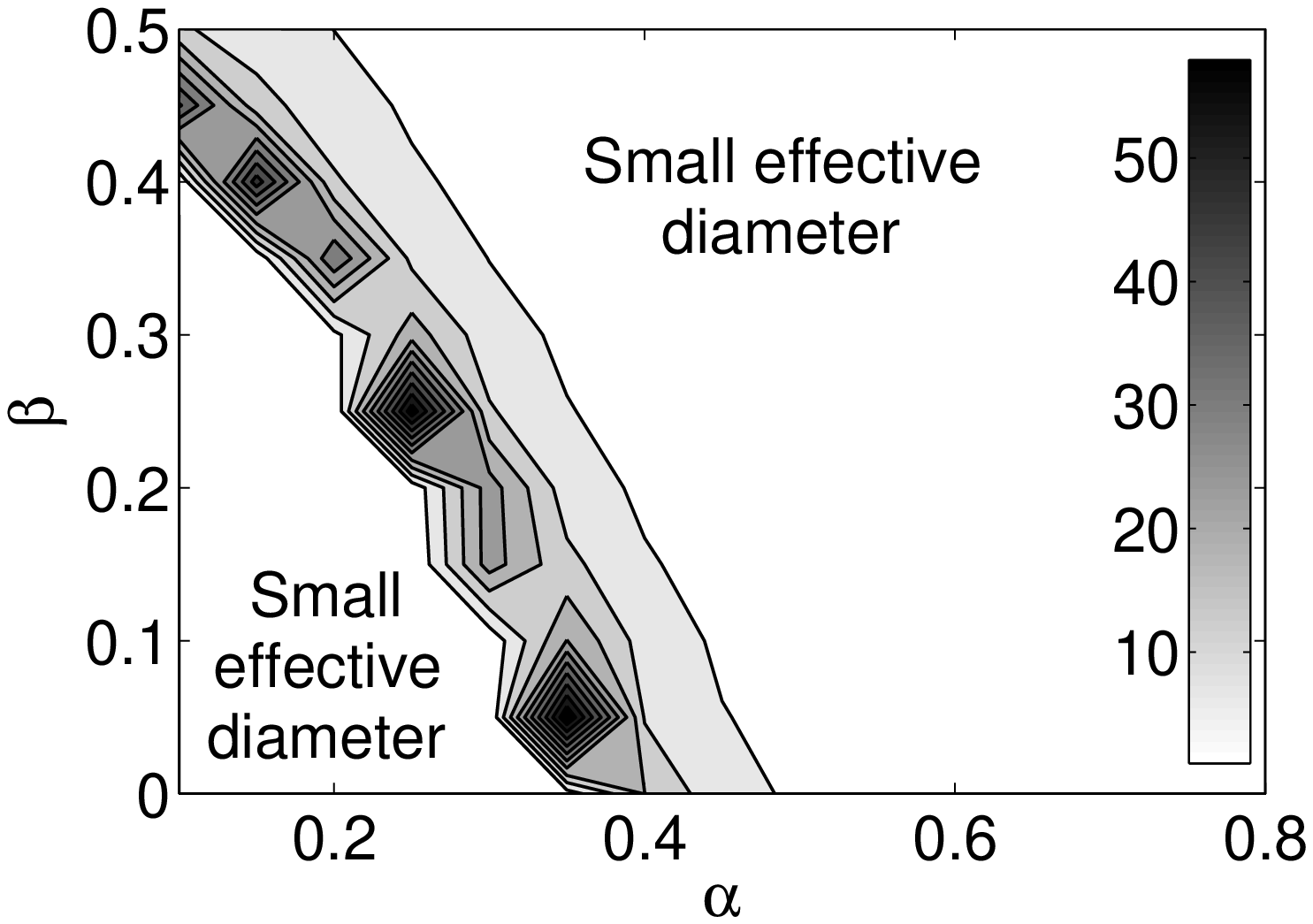} \\
    (a) Largest component size & (b) Largest component size & (c) Effective diameter \\
  \end{tabular}
  \caption{Fraction of nodes in the largest weakly connected component
  ($N_c/N$) and the effective diameter for 4-star initiator graph. (a) We
  fix $\beta=0.15$ and vary $\alpha$. (b) We vary both $\alpha$ and
  $\beta$. (c) Effective diameter of the network, if network is
  disconnected or very dense path lengths are short, the diameter is large when
  the network is barely connected.}
  \label{fig:Kron4star_phase}
  \vspace{2mm}
\end{center}
\end{figure}

Next, we examine the parameter space of a Stochastic Kronecker graphs where
we choose a star on 4 nodes as a initiator graph and parameterized
with $\alpha$ and $\beta$ as before. The initiator graph and the
structure of the corresponding (deterministic) Kronecker graph adjacency
matrix is shown in top row of Figure~\ref{fig:KronSpy4chain}.

Figure~\ref{fig:Kron4star_phase}(a) shows the sharp transition in the
fraction of the number of nodes that belong to the largest weakly
connected component as we fix $\beta=0.15$ and slowly increase $\alpha$.
Such phase transitions on the size of the largest connected component also
occur in Erd\H{o}s-R\'{e}nyi random graphs.
Figure~\ref{fig:Kron4star_phase}(b) further explores this by plotting the
fraction of nodes in the largest connected component ($N_c/N$) over the
full parameter space. Notice a sharp transition between disconnected
(white area) and connected graphs (dark).

Last, Figure~\ref{fig:Kron4star_phase}(c) shows the effective diameter
over the parameter space $(\alpha, \beta)$ for the 4-node star initiator
graph. Notice that when parameter values are small, the effective diameter
is small, since the graph is disconnected and not many pairs of nodes can
be reached. The shape of the transition between low-high diameter closely
follows the shape of the emergence of the connected component. Similarly,
when parameter values are large, the graph is very dense, and the diameter
is small. There is a narrow band in parameter space where we get graphs
with interesting diameters.

\section{Kronecker graph model estimation}
\label{sec:KronKronFit}
\new{In previous sections we investigated various properties of networks
generated by the (Stochastic) Kronecker graphs model. Many of these properties
were also observed in real networks. Moreover, we also gave closed form
expressions (parametric forms) for values of these statistical network
properties, which allow us to
calculate a property (\emph{e.g.}, diameter, eigenvalue spectrum) of a
network directly from just the initiator matrix. So in principle, one could invert
these equations and directly get from a property (\emph{e.g.}, shape of
degree distribution) to the values of initiator matrix.}

\new{However, in previous sections we did not say anything about how various
network properties of a Kronecker graph correlate and interdepend. For
example, it could be the case that two network properties are mutually exclusive.
For instance, perhaps only could only match the network diameter but not the degree
distribution or vice versa. However, as we show later this is not the
case.}

Now we turn our attention to automatically estimating the Kronecker
initiator graph. The setting is that we are given a real network $G$ and
would like to find a Stochastic Kronecker initiator $\pn{1}$ that produces
a synthetic Kronecker graph $K$ that is ``similar'' to $G$. One way to
measure similarity is to compare statistical network properties, like
diameter and degree distribution, of graphs $G$ and $K$.

Comparing statistical properties already suggests a very direct approach
to this problem: One could first identify the set of network properties (statistics) to match,
then define a quality of fit metric and somehow optimize over it. For example, one
could use the KL divergence~\cite{kullback51divergence}, or the sum of
squared differences between the degree distribution of the real network
$G$ and its synthetic counterpart $K$. Moreover, as we are interested in
matching several such statistics between the networks one would have to
meaningfully combine these individual error metrics into a global error
metric. So, one would have to specify what kind of properties he or she
cares about and then combine them accordingly. This would be a hard task
as the patterns of interest have very different magnitudes and scales.
Moreover, as new network patterns are discovered, the error functions
would have to be changed and models re-estimated. And even then it is not
clear how to define the optimization procedure to maximize the quality
of fit and how to perform optimization over the parameter space.

Our approach here is different. Instead of committing to a set of network
properties ahead of time, we try to directly match the adjacency
matrices of the real network $G$ and its synthetic counterpart $K$. The
idea is that if the adjacency matrices are similar then the global
statistical properties (statistics computed over $K$ and $G$) will also
match. Moreover, by directly working with the graph itself (and not
summary statistics), we do not commit to any particular set of network
statistics (network properties/patterns) and as new statistical properties
of networks are discovered our models and estimated parameters will
still hold.

\subsection{Preliminaries}

Stochastic graph models induce probability distributions over graphs. A
generative model assigns a probability $P(\ggraph)$ to every graph
$\ggraph$. $P(\ggraph)$ is the {\em likelihood} that a given model (with a
given set of parameters) generates the graph $\ggraph$. We concentrate on the
\SKRG\ model, and consider fitting it to a real graph $\ggraph$, our data.
We use the maximum likelihood approach, \emph{i.e.}, we aim to find
parameter values, the initiator $\pn{1}$, that maximize $P(\ggraph)$
under the Stochastic Kronecker graph model.

This presents several challenges:

\begin{itemize}
\item {\bf Model selection:} a graph is a single structure, and not a
    set of items drawn independently and identically-distributed (i.i.d.)
    from some distribution. So one cannot
    split it into independent training and test sets. The fitted
    parameters will thus be best to generate a {\em particular}
    instance of a graph. Also, overfitting could be an issue since a
    more complex model generally fits better.

\item {\bf Node correspondence:} The second challenge is the node
    correspondence or node labeling problem. The graph $\ggraph$ has a set
    of $\nnodes$ nodes, and each node has a unique label (index, ID).
    Labels do not carry any particular meaning, they just uniquely
    denote or identify the nodes. One can think of this as the graph
    is first generated and then the labels (node IDs) are randomly
    assigned. This means that two isomorphic graphs that have
    different node labels should have the same likelihood. A permutation
    $\perm$ is sufficient to describe the node correspondences as it
    maps labels (IDs) to nodes of the graph. To compute the likelihood
    $P(\ggraph)$ one has to consider all node correspondences
    $P(\ggraph) = \sum_\perm P(\ggraph|\perm)P(\perm)$, where the sum
    is over all $\nnodes!$ permutations $\perm$ of $\nnodes$ nodes.
    Calculating this {\em super-exponential} sum explicitly is
    infeasible for any graph with more than a handful of nodes.
    Intuitively, one can think of this summation as some kind of graph
    isomorphism test where we are searching for best correspondence
    (mapping) between nodes of $G$ and $\pmat$.

\item {\bf Likelihood estimation:} Even if we assume one can efficiently
solve the node correspondence problem, calculating $P(\ggraph|\perm)$
    naively takes $O(\nnodes^2)$ as one has to evaluate the
    probability of each of the $\nnodes^2$ possible edges in the graph
    adjacency matrix. Again, for graphs of size we want to model here,
    approaches with quadratic complexity are infeasible.
\end{itemize}

To develop our solution we use sampling to avoid the super-exponential sum
over the node correspondences. By exploiting the structure of the
Kronecker matrix multiplication we develop an algorithm to evaluate
$P(\ggraph|\perm)$ in {\em linear} time $O(\nedges)$. Since real graphs
are {\em sparse}, \emph{i.e.}, the number of edges is roughly of the same
order as the number of nodes, this makes fitting of Kronecker graphs to
large networks feasible.

\subsection{Problem formulation}

Suppose we are given a graph $\ggraph$ on $\nnodes = \nzero^k$ nodes (for
some positive integer $k$), and an $\nzero \times \nzero$ \SKRG\ initiator
matrix $\pn{1}$. Here $\pn{1}$ is a parameter matrix, a set of parameters
that we aim to estimate. For now also assume $\nzero$, the size of the
initiator matrix, is given. Later we will show how to automatically select
it. Next, using $\pn{1}$ we create a \SKRG\ probability matrix $\pn{k}$,
where every entry $\pij{uv}$ of $\pn{k}$ contains a probability that node
$u$ links to node $v$. We then evaluate the probability that $\ggraph$ is
a realization of $\pn{k}$. The task is to find such $\pn{1}$ that has the
highest probability of realizing (generating) $\ggraph$.

Formally, we are solving:

\begin{equation}
  \arg \max_{\pn{1}} P(\ggraph | \pn{1})
  \label{eq:KronMaxProbG}
\end{equation}

To keep the notation simpler we use standard symbol $\pzero$ to denote the
parameter matrix $\pn{1}$ that we are trying to estimate. We denote
entries of $\pzero = \pn{1} = [\thij{ij}]$, and similarly we denote $\pmat
= \pn{k} = [\pij{ij}]$. Note that here we slightly simplified the
notation: we use $\pzero$ to refer to $\pn{1}$, and $\thij{ij}$ are
elements of $\pzero$. Similarly, $\pij{ij}$ are elements of $\pmat$
($\equiv \pn{k}$). Moreover, we denote $\kgraph = R(\pmat)$, \emph{i.e.},
$K$ is a realization of the Stochastic Kronecker graph sampled from
probabilistic adjacency matrix $\pmat$.

As noted before, the node IDs are assigned arbitrarily and they carry no
significant information, which means that we have to consider all the
mappings of nodes from $\ggraph$ to rows and columns of stochastic
adjacency matrix $\pmat$. A priori all labelings are equally likely. A
permutation $\perm$ of the set $\{1, \dots, \nnodes\}$ defines this
mapping of nodes from $\ggraph$ to stochastic adjacency matrix $\pmat$. To
evaluate the likelihood of $\ggraph$ one needs to consider all possible
mappings of $\nnodes$ nodes of $\ggraph$ to rows (columns) of $\pmat$. For
convenience we work with {\em log-likelihood} $l(\pzero)$, and solve
$\hat{\pzero} = \arg \max_{\pzero} l(\pzero)$, where $l(\pzero)$ is
defined as:

\begin{eqnarray}
  l(\pzero) &=& \log P(\ggraph | \pzero)
  = \log \sum_{\perm} P(\ggraph | \pzero, \perm) P(\perm | \pzero) \nonumber \\
  &=& \log \sum_{\perm} P(\ggraph | \pzero, \perm) P(\perm)
  \label{eq:KronSumPerm}
\end{eqnarray}

The likelihood that a given initiator matrix $\pzero$ and permutation
$\perm$ gave rise to the real graph $\ggraph$, $P(\ggraph | \pzero,
\perm)$, is calculated naturally as follows. First, by using $\pzero$ we
create the Stochastic Kronecker graph adjacency matrix $\pmat = \pn{k} =
\pzero^{[k]}$. Permutation $\perm$ defines the mapping of nodes of
$\ggraph$ to the rows and columns of stochastic adjacency matrix $\pmat$.
(See Figure~\ref{fig:KronFitPGraph} for the illustration.)

We then model edges as independent Bernoulli random variables
parameterized by the parameter matrix $\pzero$. So, each entry $\pij{uv}$
of $\pmat$ gives exactly the probability of edge $(u,v)$ appearing.

We then define the likelihood:
\begin{equation}
  P(\ggraph | \pmat, \perm) =
  \prod_{(u,v) \in \ggraph}\pmat[\perm_u,\perm_v]
  \prod_{(u,v) \notin \ggraph}(1-\pmat[\perm_u,\perm_v]),
  \label{eq:KronProbGPS}
\end{equation}

where we denote $\perm_i$ as the $i^{th}$ element of the permutation
$\perm$, and $\pmat[i, j]$ is the element at row $i$, and column $j$ of
matrix $\pmat = \pzero^{[k]}$.

The likelihood is defined very naturally. We traverse the entries of
adjacency matrix $G$ and then based on whether a particular edge appeared
in $G$ or not we take the probability of edge occurring (or not) as given
by $\pmat$, and multiply these probabilities. As one has to touch all the
entries of the stochastic adjacency matrix $\pmat$ evaluating
Equation~\ref{eq:KronProbGPS} takes $O(\nnodes^2)$ time.

\begin{figure}[t]
  \begin{center}
    \includegraphics[width=0.9\textwidth]{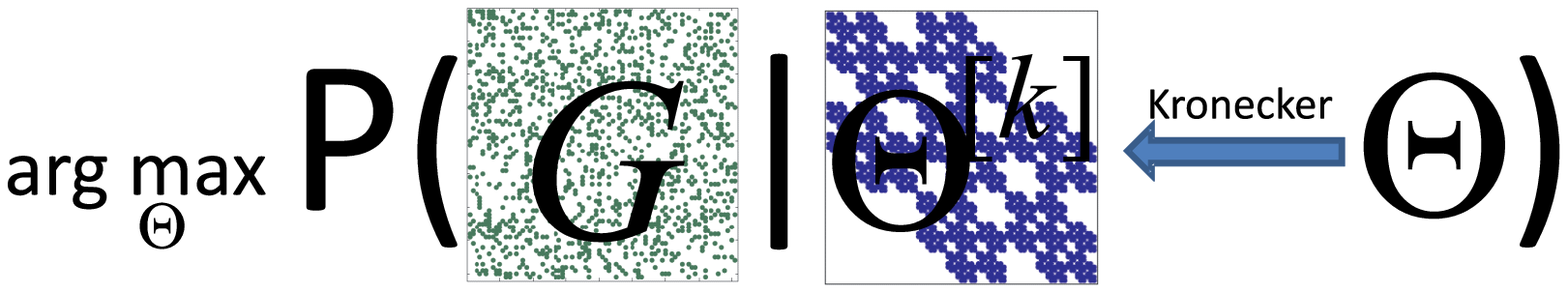}
  \end{center}
  \caption{Kronecker parameter estimation as an optimization problem. We
  search over the initiator matrices $\pzero$ ($\equiv \pn{1}$). Using
  Kronecker multiplication we create probabilistic adjacency matrix
  $\pzero^{[k]}$ that is of same size as real network $G$. Now, we
  evaluate the likelihood by simultaneously traversing and multiplying
  entries of $G$ and
  $\pzero^{[k]}$ (see Eq.~\ref{eq:KronProbGPS}). As shown by the figure
  permutation $\perm$ plays an important role, as permuting rows and
  columns of $G$ could make it look more similar to $\pzero^{[k]}$ and
  thus increase the likelihood.}
  \label{fig:KronFitPGraph}
  %\vspace{-8mm}
\end{figure}

We further illustrate the process of estimating Stochastic Kronecker
initiator matrix $\pzero$ in Figure~\ref{fig:KronFitPGraph}. We search
over initiator matrices $\pzero$ to find the one that maximizes the
likelihood $P(\ggraph | \pzero)$. To estimate $P(\ggraph | \pzero)$ we are
given a concrete $\pzero$ and now we use Kronecker multiplication to
create probabilistic adjacency matrix $\pzero^{[k]}$ that is of same size
as real network $G$. Now, we evaluate the likelihood by traversing the
corresponding entries of $G$ and $\pzero^{[k]}$.
Equation~\ref{eq:KronProbGPS} basically traverses the adjacency matrix of
$\ggraph$, and maps every entry $(u,v)$ of $G$ to a corresponding entry
$(\perm_u,\perm_v)$ of $\pmat$. Then in case that edge $(u,v)$ exists in
$G$ (\emph{i.e.}, $\ggraph[u,v] = 1$) the likelihood that particular edge
existing is $\pmat[\perm_u,\perm_v]$, and similarly, in case the edge
$(u,v)$ does not exist the likelihood is simply $1 -
\pmat[\perm_u,\perm_v]$. This also demonstrates the importance of
permutation $\perm$, as permuting rows and columns of $G$ could make the
adjacency matrix look more ``similar'' to $\pzero^{[k]}$, and would
increase the likelihood.

%\bigskip

So far we showed how to assess the quality (likelihood) of a particular
$\pzero$. So, naively one could perform some kind of grid
search to find best $\pzero$. However, this is very inefficient. A better
way of doing it is to compute the gradient of the log-likelihood
$\frac{\partial}{\partial \pzero} l(\pzero)$, and then use the
gradient to update the current estimate of $\pzero$ and move towards a
solution of higher likelihood. Algorithm~\ref{alg:KronGradDescent} gives
an outline of the optimization procedure.

However, there are several difficulties with this algorithm. First, we are
assuming gradient descent type optimization will find a good solution, \emph{i.e.}, the
problem does not have (too many) local minima. Second, we are summing over
exponentially many permutations in equation~\ref{eq:KronSumPerm}. Third,
the evaluation of equation~\ref{eq:KronProbGPS} as it is written now takes
$O(\nnodes^2)$ time and needs to be evaluated $\nnodes!$ times. So, given a concrete $\pzero$ just naively calculating the likelihood takes $O(\nnodes! \nnodes^2)$ time, and then one also has to optimize over $\pzero$.

%\medskip
\begin{observation}
  The complexity of calculating the likelihood $P(\ggraph|\pzero)$ of the
  graph $\ggraph$ naively is $O(\nnodes! \nnodes^2)$, where $\nnodes$ is
  the number of nodes in $\ggraph$.
  \label{obs:KronFitComplex}
\end{observation}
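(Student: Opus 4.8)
The plan is to prove this bound by directly composing the two cost estimates already established in the preceding paragraphs: the number of terms in the permutation sum of Equation~\ref{eq:KronSumPerm}, and the cost of evaluating a single summand via Equation~\ref{eq:KronProbGPS}.

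First I would recall that $l(\pzero) = \log \sum_{\perm} P(\ggraph \mid \pzero, \perm) P(\perm)$, where $\perm$ ranges over all permutations of the $\nnodes$ node labels of $\ggraph$, of which there are exactly $\nnodes!$. Since a priori all labelings are equally likely, $P(\perm)$ is uniform and contributes only a constant factor to each term, so the dominant work is the evaluation of $P(\ggraph \mid \pzero, \perm)$ for each of the $\nnodes!$ permutations $\perm$.

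Second, for a fixed $\perm$, Equation~\ref{eq:KronProbGPS} writes $P(\ggraph \mid \pzero, \perm)$ as a product ranging over every ordered pair $(u,v)$ --- the present edges contributing $\pmat[\perm_u,\perm_v]$ and the absent ones $1 - \pmat[\perm_u,\perm_v]$ --- so there are exactly $\nnodes^2$ factors and the product is formed in $O(\nnodes^2)$ time. One subtlety to dispatch here is the cost of the individual entries $\pmat[\perm_u,\perm_v]$: by Equation~\ref{eq:KronPij} each can be computed in $O(k) = O(\log \nnodes)$ time, but since $\pmat = \pzero^{[k]}$ can be materialized once (a one-time cost, dominated by $\nnodes! \nnodes^2$) and then reused across all permutations, this does not affect the leading term. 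Multiplying the $\nnodes!$ permutations by the $O(\nnodes^2)$ per-permutation cost yields $O(\nnodes! \,\nnodes^2)$, as claimed.

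There is no substantive obstacle here; the statement is essentially a bookkeeping corollary of the two immediately preceding paragraphs in the text. The only points requiring minor care are (i) confirming that the product in Equation~\ref{eq:KronProbGPS} genuinely touches all $\nnodes^2$ entries of the adjacency matrix and not merely the $\nedges$ edges, so that the $O(\nnodes^2)$ per-permutation figure is tight, and (ii) observing that the marginalization $P(\ggraph \mid \pzero) = \sum_{\perm} P(\ggraph \mid \pzero, \perm) P(\perm)$ forces enumeration of all $\nnodes!$ labelings because no labeling can be ruled out in advance. This is precisely the intractability that motivates the sampling-based approximation developed in the remainder of the section.
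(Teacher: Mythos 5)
Your proposal is correct and follows exactly the argument the paper itself gives just before the observation: the marginalization in Equation~\ref{eq:KronSumPerm} forces evaluation of $P(\ggraph \mid \pzero, \perm)$ for all $\nnodes!$ permutations, and each such evaluation via Equation~\ref{eq:KronProbGPS} touches all $\nnodes^2$ entries of the adjacency matrix, yielding $O(\nnodes!\,\nnodes^2)$. The extra remarks about reusing a materialized $\pmat$ and the $O(k)$ cost of individual entries are harmless additional bookkeeping beyond what the paper states.
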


Next, we show that all this can be done in {\em linear time}.

\subsection{Summing over the node labelings}

To maximize equation~\ref{eq:KronMaxProbG} using
algorithm~\ref{alg:KronGradDescent} we need to obtain the gradient of the
log-likelihood $\D{\pzero}{} l(\pzero)$. We can write:
\begin{eqnarray}
  \D{\pzero}{} l(\pzero)& =&
  \frac{\sum_\perm \frac{\partial}{\partial \pzero} P(G|\perm, \pzero) P(\perm)}
    {\sum_{\perm'} P(G | \perm', \pzero) P(\perm')} \nonumber\\
  &=&
  \frac{\sum_\perm \dfrac{\partial \log P(G|\perm,
  \pzero)}{\partial \pzero}
    P(\ggraph|\perm,\pzero) P(\perm)} {P(G | \pzero)} \nonumber\\
  &=& \sum_\perm \frac{\partial \log
  P(\ggraph|\perm,\pzero)}{\partial \pzero}
    P(\perm | \ggraph,\pzero)
    \label{eq:KronSampleGrad}
\end{eqnarray}

\begin{algorithm}[t]
  \dontprintsemicolon
  \SetKwInOut{Input}{input}
  \SetKwInOut{Output}{output}
  \Input{size of parameter matrix $\nzero$, graph $\ggraph$ on $\nnodes=\nzero^k$ nodes, and
learning rate $\lambda$}
  \Output{MLE parameters $\hat\pzero$ ($\nzero\times\nzero$ probability matrix)}
  \BlankLine
  initialize $\hat\pzero_1$\;
  \While{not converged}{
    evaluate gradient: $\frac{\partial}{\partial \hat\pzero_t}
    l(\hat\pzero_t)$\;
    update parameter estimates: $\hat\pzero_{t+1} = \hat\pzero_t +
      \lambda \frac{\partial}{\partial \hat\pzero_t}
      l(\hat\pzero_t)$\;
  }
  \Return $\hat\pzero = \hat\pzero_{t}$
  \BlankLine
  \caption{\KronFit\ algorithm.}
  \label{alg:KronGradDescent}
\end{algorithm}

Note we are still summing over all $N!$ permutations $\perm$, so
calculating Eq.~\ref{eq:KronSampleGrad} is computationally intractable for
graphs with more than a handful of nodes. However, the equation has a nice
form which allows for use of simulation techniques to avoid the summation
over super-exponentially many node correspondences. Thus, we simulate
draws from the permutation distribution $P(\perm | \ggraph,\pzero)$, and
then evaluate the quantities at the sampled permutations to obtain the
expected values of log-likelihood and gradient.
Algorithm~\ref{alg:KronCalcGrad} gives the details.

Note that we can also permute the rows and columns of the parameter matrix
$\pzero$ to obtain equivalent estimates. Therefore $\pzero$ is not
strictly identifiable exactly because of these permutations. Since the
space of permutations on $N$ nodes is very large (grows as $N!$) the
{\new permutation sampling algorithm} will explore only a small fraction
of the space of all
permutations and may converge to one of the global maxima (but may not
explore all $\nzero!$ of them) of the parameter space. As we empirically
show later our results are not sensitive to this and multiple restarts
result in equivalent (but often permuted) parameter estimates.

\begin{algorithm}[t]
  \dontprintsemicolon
  \SetKwFunction{SamplePermutation}{SamplePermutation}
  \SetKwData{grad}{grad}
  \SetKwInOut{Input}{input}
  \SetKwInOut{Output}{output}
  \Input{Parameter matrix $\pzero$, and graph $\ggraph$}
  \Output{Log-likelihood $l(\pzero)$, and gradient $\frac{\partial}{\partial \pzero} l(\pzero)$}
  \BlankLine
  \For{t := 1 \KwTo T}{
    $\perm_t$ := \SamplePermutation($\ggraph, \pzero$)\;
    $ l_t = \log P(\ggraph|\perm^{(t)},\pzero)$\;
    ${\textrm grad}_t$ := $\frac{\partial}{\partial\pzero} \log P(\ggraph|\perm^{(t)},\pzero)$\;
  }
  \Return $l(\pzero)=\frac{1}{T} \sum_t l_t$, and
    $\frac{\partial}{\partial\pzero} l(\pzero)=\frac{1}{T} \sum_t {\textrm grad}_t$
  \BlankLine
  \caption{Calculating log-likelihood and gradient}
  \label{alg:KronCalcGrad}
\end{algorithm}

\subsubsection{Sampling permutations}
\label{sec:KronSamplePerm}

Next, we describe the Metropolis algorithm to simulate draws from the
permutation distribution $P(\perm | \ggraph, \pzero)$, which is given by
$$
P(\perm | \ggraph, \pzero) = \frac{P(\perm, \ggraph,\pzero)}{\sum_\tau
%P(\perm, \ggraph,\pzero)} = \frac{\sum_\perm P(\perm, \ggraph,\pzero)} {Z_\perm}
P(\tau, \ggraph,\pzero)} = \frac{P(\perm, \ggraph,\pzero)} {Z}
$$

\new{where $Z$ is the normalizing constant that is hard to compute since
it involves the sum over $\nnodes!$ elements.} However, if we compute the
likelihood ratio between permutations $\perm$ and $\perm'$
(Equation~\ref{eq:KronLikeRatio1}) the normalizing constants nicely cancel
out:
\begin{eqnarray}
  \frac{P(\perm'|\ggraph,\pzero)}{P(\perm|\ggraph,\pzero)} &=&
    \prod_{(u,v)\in \ggraph}
      \frac{\pmat[\perm_u,\perm_v]}{\pmat[\perm'_u,\perm'_v]}
    \prod_{(u,v) \notin \ggraph}
      \frac{(1-\pmat[\perm_u,\perm_v])}{(1-\pmat[\perm'_u,\perm'_v])}
  \label{eq:KronLikeRatio1}\\
  &=& \prod_{\substack{(u,v)\in \ggraph \\ (\perm_u,\perm_v) \neq
  (\perm'_u,\perm'_v)}}
    \frac{\pmat[\perm_u,\perm_v]}{\pmat[\perm'_u,\perm'_v]}
  \prod_{\substack{(u,v) \notin \ggraph \\ (\perm_u,\perm_v) \neq
  (\perm'_u,\perm'_v)}}
    \frac{(1-\pmat[\perm_u,\perm_v])}{(1-\pmat[\perm'_u,\perm'_v])}
  \label{eq:KronLikeRatio2}
\end{eqnarray}

This immediately suggests the use of a Metropolis sampling
algorithm~\cite{gamerman97mcmc} to simulate draws from the permutation
distribution since Metropolis is solely based on such ratios (where
normalizing constants cancel out). In particular, suppose that in the
Metropolis algorithm (Algorithm~\ref{alg:KronSamplePerm}) we consider a
move from permutation $\perm$ to a new permutation $\perm'$. Probability
of accepting the move to $\perm'$ is given by
Equation~\ref{eq:KronLikeRatio1} (if
$\frac{P(\perm'|\ggraph,\pzero)}{P(\perm|\ggraph,\pzero)} \le 1$) or 1
otherwise.

\begin{algorithm}[t]
    \dontprintsemicolon
    \SetKwFunction{SwapNodes}{Swap}
    \SetKwInOut{Input}{input}
    \SetKwInOut{Output}{output}
    \Input{Kronecker initiator matrix $\pzero$ and a graph
    $\ggraph$  on $\nnodes$ nodes}
    \Output{Permutation $\perm^{(i)} \sim P(\perm | \ggraph,\pzero)$}
    \BlankLine
    $\perm^{(0)} := (1, \dots, \nnodes)$\;
    $i = 1$\;
    \Repeat{$\perm^{(i)} \sim P(\perm | \ggraph, \pzero)$}{
      Draw $j$ and $k$ uniformly from $(1,\dots,\nnodes)$\;
      $\perm^{(i)}$ := {\tt SwapNodes}($\perm^{(i-1)}$, $j$, $k$)\;
      Draw $u$ from $U(0,1)$\;
      \If{$u > \frac{P(\perm^{(i)}|\ggraph,\pzero)}{P(\perm^{(i-1)}|\ggraph,\pzero)}$}
        { $\perm^{(i)} := \perm^{(i-1)}$}
      i = i + 1\; }
    \Return $\perm^{(i)}$\;
    \BlankLine
    {Where $U(0,1)$ is a uniform distribution on $[0,1]$, and
    $\perm'$ := {\tt SwapNodes($\perm, j, k$)} is the permutation
    $\perm'$ obtained from $\perm$ by swapping elements
    at positions $j$ and $k$.} \;
  \caption{{\tt SamplePermutation($\ggraph, \pzero$)}: Metropolis sampling of the
  node permutation.}
  \label{alg:KronSamplePerm}
\end{algorithm}

Now we have to devise a way to sample permutations $\perm$ from the
proposal distribution. One way to do this would be to simply generate a
random permutation $\perm'$ and then check the acceptance condition. This
would be very inefficient as we expect the distribution
$P(\perm|\ggraph,\pzero)$ to be heavily skewed, \emph{i.e.}, there will be
a relatively small number of good permutations (node mappings). Even more so as the
degree distributions in real networks are skewed there will be many bad
permutations with low likelihood, and few good ones that do a good job in
matching nodes of high degree.

To make the sampling process ``smoother'', \emph{i.e.}, sample
permutations that are not that different (and thus are not randomly
jumping across the permutation space) we design a Markov chain. The idea
is to stay in high likelihood part of the permutation space longer. We do this
by making samples dependent, \emph{i.e.}, given $\perm'$ we want to
generate next candidate permutation $\perm''$ to then evaluate the
likelihood ratio. When designing the Markov chain step one has to be
careful so that the proposal distribution satisfies the detailed balance
condition: $\pi(\perm')P(\perm' |\perm'') =
\pi(\perm'')P(\perm''|\perm')$, where $P(\perm' |\perm'')$ is the
transition probability of obtaining permutation $\perm'$ from $\perm''$
and, $\pi(\perm')$ is the stationary distribution.
%In our case we use a
%bit stronger condition of transition probabilities being symmetric which
%means that probability of a generating a candidate $\perm''$ from $\perm'$
%has to be same as transition in the opposite way, $P(\perm' |\perm'') =
%P(\perm''|\perm')$.

In Algorithm~\ref{alg:KronSamplePerm} we use a simple proposal where given
permutation $\perm'$ we generate $\perm''$ by swapping elements at two
uniformly at random chosen positions of $\perm'$. We refer to this
proposal as {\tt SwapNodes}. While this is simple and clearly satisfies
the detailed balance condition it is also inefficient in a way that most
of the times low degree nodes will get swapped (a direct consequence of
heavy tailed degree distributions). This has two consequences, (a) we will
slowly converge to good permutations (accurate mappings of high degree
nodes), and (b) once we reach a good permutation, very few permutations
will get accepted as most proposed permutations $\perm'$ will swap low
degree nodes (as they form the majority of nodes).

A possibly more efficient way would be to swap elements of $\perm$ biased
based on corresponding node degree, so that high degree nodes would get swapped
more often. However, doing this directly does not
satisfy the detailed balance condition. A way of sampling labels biased by
node degrees that at the same time satisfies the detailed balance
condition is the following: we pick an edge in $\ggraph$ uniformly at
random and swap the labels of the nodes at the edge endpoints. Notice this is biased towards
swapping labels of nodes with high degrees simply as they have more edges.
The detailed balance condition holds as edges are sampled uniformly at
random. We refer to this proposal as {\tt SwapEdgeEndpoints}.

However, the issue with this proposal is that if the graph $\ggraph$ is
disconnected, we will only be swapping labels of nodes that belong to the
same connected component. This means that some parts of the permutation
space will never get visited. To overcome this problem we execute {\tt
SwapNodes} with some probability $\omega$ and {\tt SwapEdgeEndpoints} with
probability $1-\omega$.

To summarize we consider the following two permutation proposal
distributions:
\begin{itemize}
 \item $\perm'' =$ {\tt SwapNodes}$(\perm')$: we obtain $\perm''$ by
     taking $\perm'$, uniformly at random selecting a pair of elements
     and swapping their positions.
 \item $\perm'' =$ {\tt SwapEdgeEndpoints}$(\perm')$: we obtain
     $\perm''$ from $\perm'$ by first sampling an edge $(j,k)$ from
     $\ggraph$ uniformly at random, then we take $\perm'$ and swap the
     labels at positions $j$ and $k$.
\end{itemize}

\subsubsection{Speeding up the likelihood ratio calculation}

We further speed up the algorithm by using the following observation. As
written the equation~\ref{eq:KronLikeRatio1} takes $O(N^2)$ to evaluate
since we have to consider $\nnodes^2$ possible edges. However, notice that
permutations $\perm$ and $\perm'$ differ only at two positions,
\emph{i.e.} elements at position $j$ and $k$ are swapped, \emph{i.e.},
$\perm$ and $\perm'$ map all nodes except the two to the same locations.
This means those elements of equation~\ref{eq:KronLikeRatio1} cancel out.
Thus to update the likelihood we only need to traverse two rows and
columns of matrix $\pmat$, namely rows and columns $j$ and $k$, since
everywhere else the mapping of the nodes to the adjacency matrix is the
same for both permutations. This gives equation~\ref{eq:KronLikeRatio2}
where the products now range only over the two rows/columns of $\pmat$
where $\perm$ and $\perm'$ differ.

Graphs we are working with here are too large to allow us to explicitly
create and store the stochastic adjacency matrix $\pmat$ by Kronecker
powering the initiator matrix $\pzero$. Every time probability
$\pmat[i,j]$ of edge $(i,j)$ is needed the equation~\ref{eq:KronPij} is
evaluated, which takes $O(k)$. So a single iteration of
Algorithm~\ref{alg:KronSamplePerm} takes $O(k\nnodes)$.

%\medskip
\begin{observation} Sampling a permutation $\perm$ from $P(\perm |
\ggraph,\pzero)$ takes $O(k\nnodes)$.
\end{observation}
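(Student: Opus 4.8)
The plan is to show that producing the next permutation in the Metropolis chain of Algorithm~\ref{alg:KronSamplePerm} — which is exactly what ``sampling a $\perm$ from $P(\perm\mid\ggraph,\pzero)$'' amounts to in our setting, since after warm-up each new sample is one iteration of the chain — costs $O(k\nnodes)$, and that all remaining bookkeeping in an iteration is dominated by this.

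First I would account for the work in one iteration of Algorithm~\ref{alg:KronSamplePerm}: drawing the two positions $j,k$ uniformly, forming $\perm^{(i)}:=\,${\tt SwapNodes}$(\perm^{(i-1)},j,k)$, drawing $u\sim U(0,1)$, comparing $u$ with the acceptance ratio, and possibly reverting. Each of these is $O(1)$ — a single transposition in the permutation array — except for evaluating the ratio $P(\perm^{(i)}\mid\ggraph,\pzero)/P(\perm^{(i-1)}\mid\ggraph,\pzero)$, so the whole iteration cost equals the cost of that ratio.

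Second, I would bound the ratio by invoking the cancellation already indicated in the passage from Equation~\ref{eq:KronLikeRatio1} to Equation~\ref{eq:KronLikeRatio2}. Writing $\perm=\perm^{(i-1)}$ and $\perm'=\perm^{(i)}$, the two permutations agree everywhere except at positions $j$ and $k$, so for a pair $(u,v)$ one has $(\perm_u,\perm_v)=(\perm'_u,\perm'_v)$ unless $u\in\{j,k\}$ or $v\in\{j,k\}$; the corresponding factors in both the edge product and the non-edge product of Equation~\ref{eq:KronLikeRatio1} then cancel. Hence only entries lying in rows $j,k$ or columns $j,k$ of $\pmat$ survive — at most $4\nnodes$ of them, i.e. $O(\nnodes)$ terms. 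For each surviving pair $(u,v)$ we must (i) test $(u,v)\in\ggraph$, which is $O(1)$ with a hashed adjacency structure, and (ii) obtain $\pmat[\perm_u,\perm_v]$ and $\pmat[\perm'_u,\perm'_v]$. Since $\pmat=\pn{k}$ is never materialized, each such probability is recomputed via Equation~\ref{eq:KronPij}, a product of $k$ entries of the constant-size matrix $\pn{1}$, costing $O(k)$. Multiplying, $O(\nnodes)$ terms at $O(k)$ each gives $O(k\nnodes)$, and adding the $O(1)$ overhead of the rest of the iteration leaves the bound unchanged.

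The one point I would be careful about — and the only real obstacle — is the cancellation bookkeeping: one must verify the argument for \emph{both} products in Equation~\ref{eq:KronLikeRatio1}, observe that the restriction $(\perm_u,\perm_v)\neq(\perm'_u,\perm'_v)$ is the same in each, and avoid double-counting the small overlap block (entries indexed by a swapped row \emph{and} a swapped column) when bounding the number of surviving terms by $O(\nnodes)$. I would also note explicitly that identifying ``one sample'' with ``one iteration'' is the standard convention once the chain has been initialized, so the stated per-sample cost is precisely the per-iteration cost $O(k\nnodes)$. Everything else is routine.
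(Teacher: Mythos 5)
Your proposal is correct and follows essentially the same route as the paper: identify one sample with one Metropolis iteration, use the cancellation from Equation~\ref{eq:KronLikeRatio1} to Equation~\ref{eq:KronLikeRatio2} so that only the $O(\nnodes)$ entries in rows and columns $j$ and $k$ of $\pmat$ survive, and charge $O(k)$ per entry for evaluating the edge probability via Equation~\ref{eq:KronPij}. Your extra care about double-counting the overlap block and verifying the cancellation in both products is a welcome tightening of the paper's sketch, but it does not change the argument.
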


This is gives us an improvement over the $O(\nnodes!)$ complexity of
summing over all the permutations. So far we have shown how to obtain a
permutation but we still need to evaluate the likelihood and find the
gradients that will guide us in finding good initiator matrix. The problem
here is that naively evaluating the network likelihood (gradient) as
written in equation~\ref{eq:KronSampleGrad} takes time $O(\nnodes^2)$.
This is exactly what we investigate next and how to calculate the
likelihood in {\em linear time}.

\subsection{Efficiently approximating likelihood and gradient}

We just showed how to efficiently sample node permutations. Now, given a
permutation we show how to efficiently evaluate the likelihood and it's
gradient. Similarly as evaluating the likelihood ratio, naively
calculating the log-likelihood $l(\pzero)$ or its gradient
$\frac{\partial}{\partial \pzero} l(\pzero)$ takes time quadratic in the
number of nodes. Next, we show how to compute this in linear time
$O(\nedges)$.

We begin with the observation that real graphs are sparse, which means
that the number of edges is not quadratic but rather almost linear in the
number of nodes, $\nedges \ll \nnodes^2$. This means that majority of
entries of graph adjacency matrix are zero, \emph{i.e.}, most of the edges
are not present. We exploit this fact. The idea is to first calculate the
likelihood (gradient) of an empty graph, \emph{i.e.}, a graph with zero
edges, and then correct for the edges that actually appear in $G$.

To naively calculate the likelihood for an empty graph one needs to
evaluate every cell of graph adjacency matrix. We consider Taylor
approximation to the likelihood, and exploit the structure of matrix
$\pmat$ to devise a constant time algorithm.

First, consider the second order Taylor approximation to log-likelihood of
an edge that succeeds with probability $x$ but does not appear in the
graph:
$$
\log(1 - x) \approx - x - \frac{1}{2}x^2
$$

Calculating $l_{e}(\pzero)$, the log-likelihood of an empty graph,
becomes:
\begin{eqnarray}
  % \frac{\partial}{\partial \pzero}
  %\hspace{-3em}&&
  l_{e}(\pzero) =
  %\frac{\partial}{\partial \pzero}
    \sum_{i=1}^\nnodes \sum_{j=1}^\nnodes \log(1-\pij{ij}) \approx %\nonumber \\
  %\hspace{-3em}&&\frac{\partial}{\partial \pzero} \bigg[
    - \bigg(\sum_{i=1}^{\nzero} \sum_{j=1}^{\nzero} \thij{ij}\bigg)^k
    - \frac{1}{2}\bigg(\sum_{i=1}^{\nzero} \sum_{j=1}^{\nzero} \thij{ij}^2\bigg)^k
    %\bigg]
  \label{eq:KronApproxGrad1}
\end{eqnarray}

Notice that while the first pair of sums ranges over $\nnodes$ elements,
the last pair only ranges over $\nzero$ elements ($\nzero = \log_k
\nnodes$). Equation~\ref{eq:KronApproxGrad1} holds due to the recursive
structure of matrix $\pmat$ generated by the Kronecker product. We
substitute the $\log(1-\pij{ij})$ with its Taylor approximation, which
gives a sum over elements of $\pmat$ and their squares. Next, we notice
the sum of elements of $\pmat$ forms a multinomial series, and thus
$\sum_{i,j} \pij{ij} = (\sum_{i,j} \thij{ij})^k$, where $\thij{ij}$
denotes an element of $\pzero$, and $\pij{ij}$ element of $\pzero^{[k]}$.

Calculating log-likelihood of $\ggraph$ now takes $O(\nedges)$: First, we
approximate the likelihood of an empty graph in constant time, and then
account for the edges that are actually present in $\ggraph$, \emph{i.e.},
we subtract ``no-edge'' likelihood and add the ``edge'' likelihoods:
\begin{eqnarray}
  %\hspace{-3em}&&\frac{\partial}{\partial \pzero}
  %\frac{\partial}{\partial \pzero}
  l(\pzero) =  l_{e}(\pzero) +
  \sum_{(u,v) \in \ggraph} %\nonumber \\
  %\hspace{-3em}&& \frac{\partial}{\partial \pzero}
  %\big(
  - \log(1-\pmat[\perm_u, \perm_v]) +  \log(\pmat[\perm_u,
  \perm_v]) %\big)
  \nonumber
  \label{eq:KronApproxGrad2}
\end{eqnarray}

\new{We note that by using the second order Taylor approximation to the log-likelihood of an empty graph, the error term of the approximation is $\frac{1}{3}(\sum_i \thij{ij}^3)^k$, which can diverge for large $k$. For typical values of initiator matrix $\pn{1}$ (that we present in Section~\ref{sec:bigtable}) we note that one needs about fourth or fifth order Taylor approximation for the error of the approximation actually go to zero as $k$ approaches infinity, {\em i.e.}, $\sum_{ij} \thij{ij}^{n+1}<1$, where $n$ is the order of Taylor approximation employed.}

\subsection{Calculating the gradient}

Calculation of the gradient of the log-likelihood follows exactly the same
pattern as described above. First by using the Taylor approximation we
calculate the gradient as if graph
$\ggraph$ would have no edges. Then we correct the gradient for the edges
that are present in $\ggraph$. As in previous section we speed up the
calculations of the gradient by exploiting the fact that two consecutive
permutations $\perm$ and $\perm'$ differ only at two positions, and thus
given the gradient from previous step one only needs to account for the
swap of the two rows and columns of the gradient matrix $\partial \pmat /
\partial \pzero$ to update to the gradients of individual parameters.

\subsection{Determining the size of initiator matrix}

The question we answer next is how to determine the right number of
parameters, \emph{i.e.}, what is the right size of matrix $\pzero$? This
is a classical question of model selection where there is a tradeoff
between the complexity of the model, and the quality of the fit. Bigger
model with more parameters usually fits better, however it is also more
likely to overfit the data.

For model selection to find the appropriate value of $\nzero$, the size of
matrix $\pzero$, and choose the right tradeoff between the complexity of
the model and the quality of the fit, we propose to use the Bayes
Information Criterion (BIC)~\cite{schwarz78bic}. Stochastic Kronecker
graph model the presence of edges with independent Bernoulli random
variables, where the canonical number of parameters is $\nzero^{2k}$,
which is a function of a lower-dimensional parameter $\pzero$. This is
then a {\em curved exponential family}~\cite{efron75curvature}, and BIC
naturally applies:
$$\textrm{BIC}(\nzero) = -l(\hat\pzero_{\nzero})+\frac{1}{2}\nzero^2\log(N^2)$$

\new{where $\hat\pzero_{\nzero}$ are the maximum likelihood parameters of the model with
 $\nzero\times\nzero$ parameter matrix, and $N$ is the number of nodes
in $G$.} Note that one could also additional term to the above formula to
account for multiple global maxima of the likelihood space but as $\nzero$
is small the additional term would make no real difference.

As an alternative to BIC one could also consider the Minimum Description Length
(MDL)~\cite{rissanen78mdl} principle where the model is scored by the
quality of the fit plus the size of the description that encodes the model
and the parameters.

\section{Experiments on real and synthetic data}
\label{sec:KronExperiments}
Next we described our experiments on a range of real and synthetic networks.
We divide the experiments into several subsections. First we examine the
convergence and mixing of the Markov chain of our permutation sampling
scheme. Then we consider estimating the parameters of synthetic
Kronecker graphs to see whether \KronFit is able to recover the parameters
used to generate the network. Last, we consider fitting \SKRG to large
real world networks.

\subsection{Permutation sampling}

In our experiments we considered both synthetic and real graphs. Unless
mentioned otherwise all synthetic Kronecker graphs were generated using
$\pn{1}^* = [0.8, 0.6; 0.5, 0.3]$, and $k=14$ which gives us a graph
$\ggraph$ on $\nnodes=$16,384 nodes and $\nedges=$115,741 edges. We chose
this particular $\pn{1}^*$ as it resembles the typical initiator for
real networks analyzed later in this section.

\subsubsection{Convergence of the log-likelihood and the gradient}

\begin{figure}[t]
  \begin{center}
  \begin{tabular}{cc}
    \includegraphics[width=0.45\textwidth]{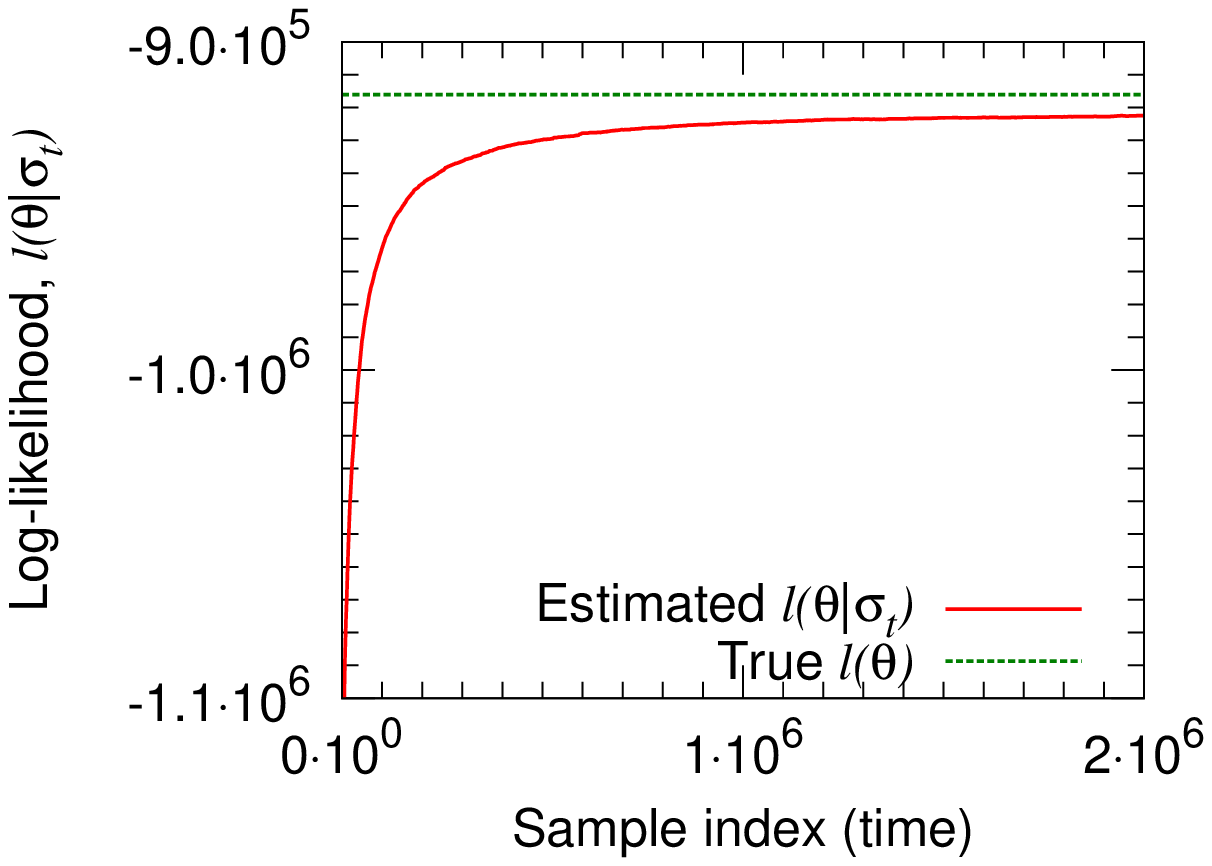} &
    \includegraphics[width=0.45\textwidth]{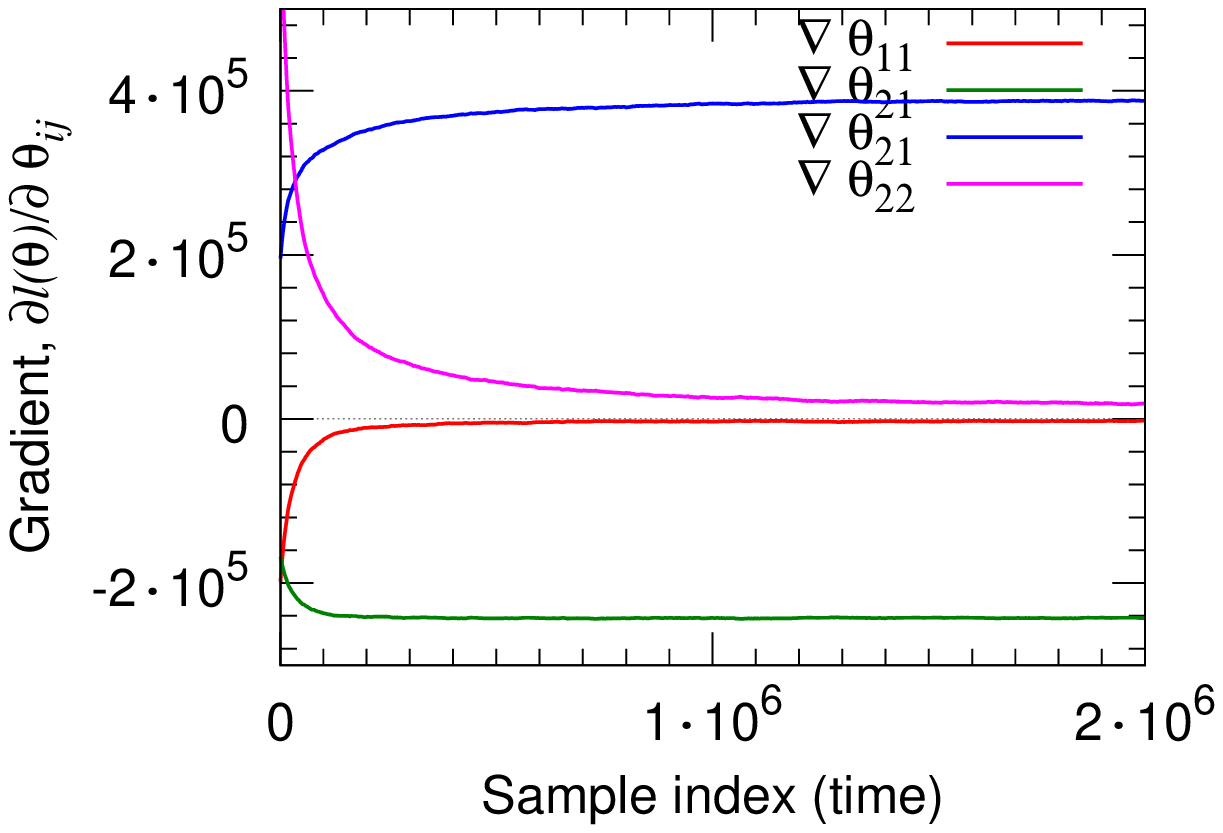} \\
    (a) Log-likelihood & (b) Gradient\\
    \includegraphics[width=0.45\textwidth]{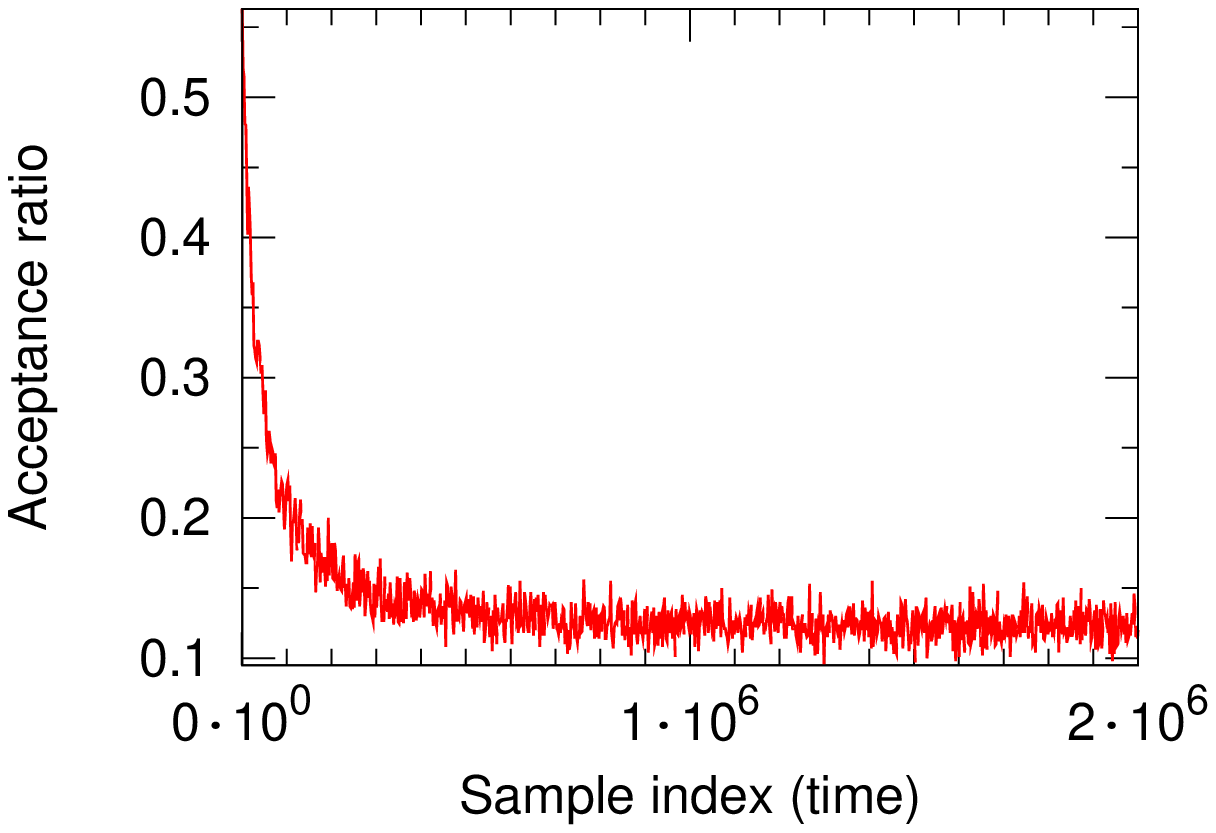} &
    \includegraphics[width=0.45\textwidth]{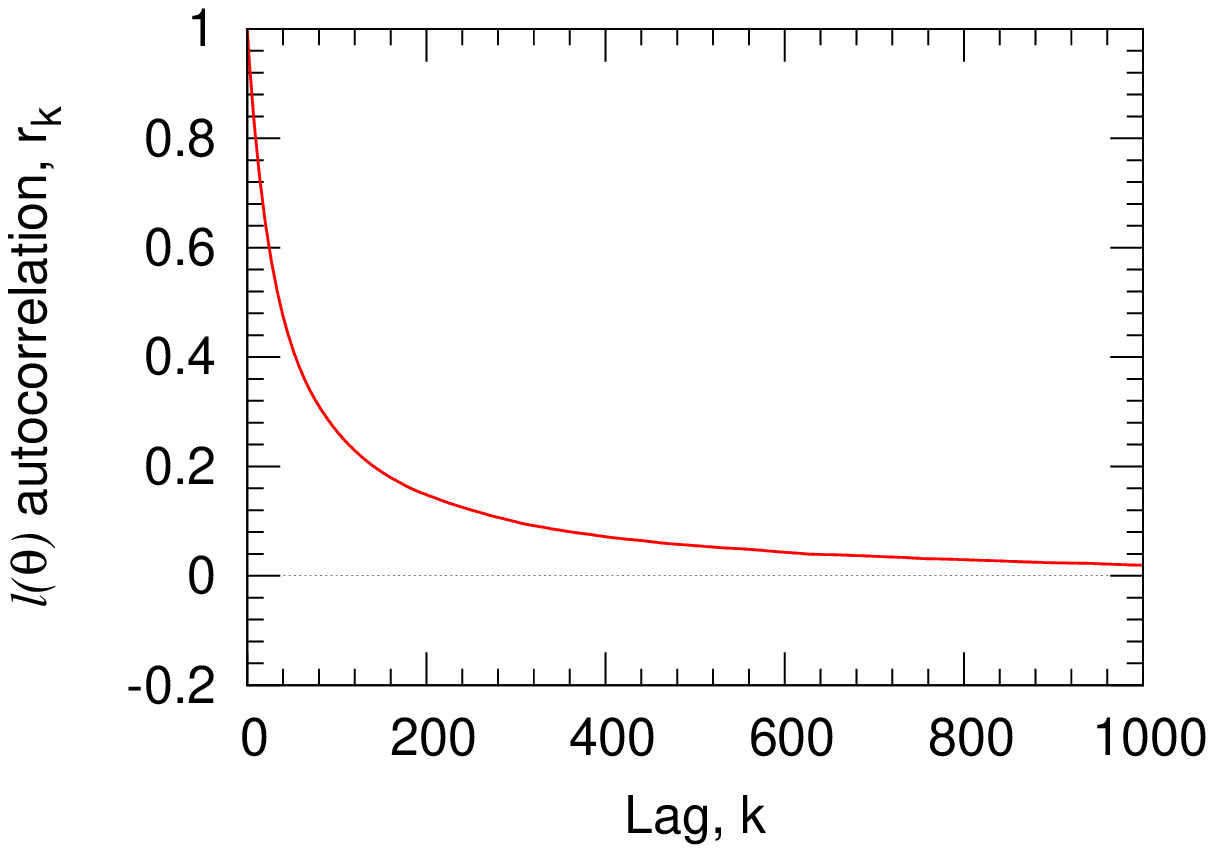} \\
    (c) Acceptance ratio & (d) Autocorrelation\\
  \end{tabular}
  \end{center}
  \caption{Convergence of the log-likelihood and components of the gradient towards their
  true values for Metropolis permutation sampling
  (Algorithm~\ref{alg:KronSamplePerm}) with the number of samples.}
  \label{fig:KronSamplePerm}
\end{figure}

\new{First, we examine the convergence of Metropolis permutation sampling, where
permutations are sampled sequentially. A new permutation is obtained
by modifying the previous one which creates a Markov chain. We want to assess the
convergence and mixing of the chain. We aim to determine how many
permutations one needs to draw to reliably estimate the likelihood and the
gradient, and also how long does it take until the samples converge to the
stationary distribution. For the experiment we generated a synthetic \SKRG
using $\pn{1}^*$ as defined above. Then, starting with a random
permutation we run Algorithm~\ref{alg:KronSamplePerm}, and measure how the
likelihood and the gradients converge to their true values.}

In this particular case, we first generated a \SKRG $\ggraph$ as described
above, but then calculated the likelihood and the parameter gradients for
$\pzero' = [0.8, 0.75; 0.45, 0.3]$. We average the likelihoods and
gradients over buckets of 1,000 consecutive samples, and plot how the
log-likelihood calculated over the sampled permutations approaches the
true log-likelihood (that we can compute since $\ggraph$ is a \SKRG).

First, we present experiments that aim to answer how many samples
(\emph{i.e.}, permutations) does one need to draw to obtain a reliable
estimate of the gradient (see Equation~\ref{eq:KronSampleGrad}).
Figure~\ref{fig:KronSamplePerm}(a) shows how the estimated log-likelihood
approaches the true likelihood. Notice that estimated values quickly
converge to their true values, \emph{i.e.}, Metropolis sampling quickly
moves towards ``good'' permutations. Similarly,
Figure~\ref{fig:KronSamplePerm}(b) plots the convergence of the gradients.
Notice that $\thij{11}$ and $\thij{22}$ of $\pzero'$ and $\pn{1}^*$ match,
so gradients of these two parameters should converge to zero and indeed
they do. On the other hand, $\thij{12}$ and $\thij{21}$ differ between
$\pzero'$ and $\pn{1}^*$. Notice, the gradient for one is positive as the
parameter $\thij{12}$ of $\pzero'$ should be decreased, and similarly for
$\thij{21}$ the gradient is negative as the parameter value should be
increased to match the $\pzero'$. In summary, this  shows that
log-likelihood and gradients rather quickly converge to their true values.

In Figures~\ref{fig:KronSamplePerm}(c) and (d) we also investigate
the properties of the Markov Chain Monte Carlo sampling procedure, and
assess convergence and mixing criteria. First, we plot the fraction of
accepted proposals. It stabilizes at around 15\%, which is quite close to
the rule-of-a-thumb of 25\%. Second, Figure~\ref{fig:KronSamplePerm}(d)
plots the autocorrelation of the log-likelihood as a function of the lag.
Autocorrelation $r_k$ of a signal $X$ is a function of the lag $k$ where
$r_k$ is defined as the correlation of signal $X$ at time $t$ with $X$ at
$t+k$, {\em i.e.}, correlation of the signal with itself at lag $k$. High
autocorrelations within chains indicate slow mixing and, usually, slow
convergence. On the other hand fast decay of autocorrelation implies better
the mixing and thus one needs less samples to accurately estimate the
gradient or the likelihood. Notice the rather fast autocorrelation decay.

All in all, these experiments show that one needs to sample on the order of
tens of thousands of permutations for the estimates to converge. We also
verified that the variance of the estimates is sufficiently small. In our
experiments we start with a random permutation and use long burn-in time.
Then when performing optimization we use the permutation from the previous
step to initialize the permutation at the current step of the gradient descent.
Intuitively, small changes in parameter space $\pzero$ also mean small
changes in $P(\perm|G,\pzero)$ .

\subsubsection{Different proposal distributions}

\begin{figure}[t]
  \begin{center}
    \includegraphics[width=0.7\textwidth]{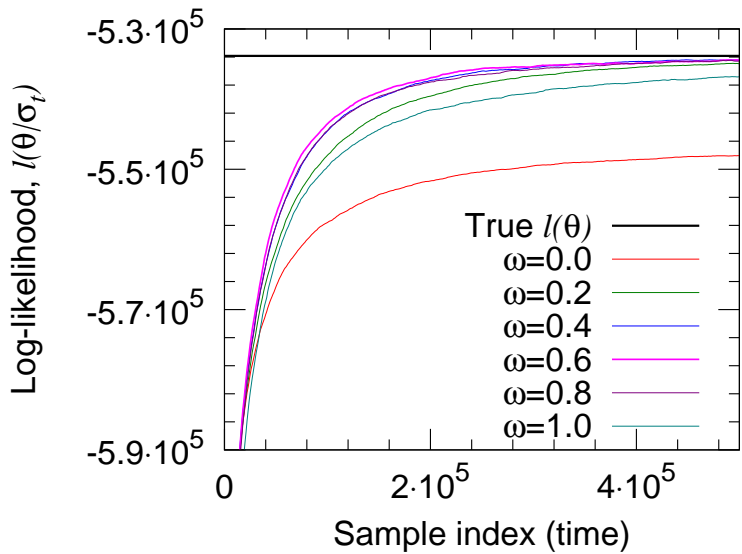}
  \end{center}
  \caption{Convergence of the log-likelihood and gradients for
  Metropolis permutation sampling (Algorithm~\ref{alg:KronSamplePerm}) for
  different choices of $\omega$ that interpolates between the
  {\tt SwapNodes} ($\omega=1$) and {\tt SwapEdgeEndpoints} ($\omega=0$)
  permutation proposal distributions. Notice fastest convergence of
  log-likelihood for $\omega = 0.6$.}
  \label{fig:KronSamplePermLL2}
\end{figure}

In section~\ref{sec:KronSamplePerm} we defined two permutation sampling
strategies: {\tt SwapNodes} where we pick two nodes uniformly
at random and swap their labels (node ids), and {\tt SwapEdgeEndpoints}
where we pick a random edge in a graph and then swap the labels of the
edge endpoints. We also discussed that one can interpolate between the two
strategies by executing {\tt SwapNodes} with probability $\omega$, and
{\tt SwapEdgeEndpoints} with probability $1-\omega$.

So, given a \SKRG $\ggraph$ on $\nnodes=$16,384 and $\nedges=$115,741
generated from $\pn{1}^*=[0.8, 0.7; 0.5, 0.3]$ we evaluate the likelihood
of $\pzero' = [0.8, 0.75; 0.45, 0.3]$. As we sample permutations we
observe how the estimated likelihood converges to the true likelihood.
Moreover we also vary parameter $\omega$ which interpolates between the two
permutation proposal distributions. The quicker the convergence towards the
true log-likelihood the better the proposal distribution.

Figure~\ref{fig:KronSamplePermLL2} plots the convergence of the
log-likelihood with the number of sampled permutations. We plot the
average over non-overlapping buckets of 1,000 consecutive permutations.
Faster convergence implies better permutation proposal distribution. When we
use only {\tt SwapNodes} ($\omega=1$) or {\tt SwapEdgeEndpoints}
($\omega=0$) convergence is rather slow.  We obtain best convergence for
$\omega$ around $0.6$.

\begin{figure}[t]
  \begin{center}
  \begin{tabular}{cc}
    \includegraphics[width=0.48\textwidth]{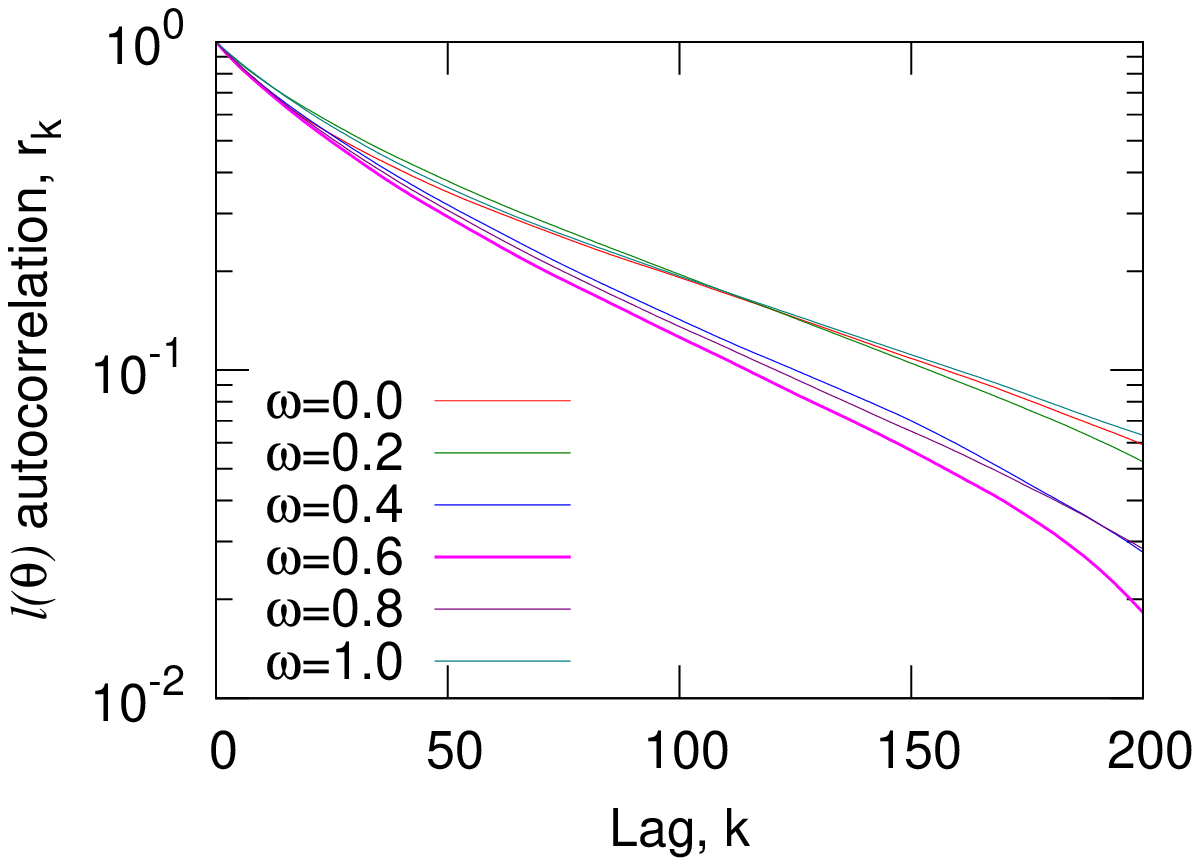} &
    \includegraphics[width=0.48\textwidth]{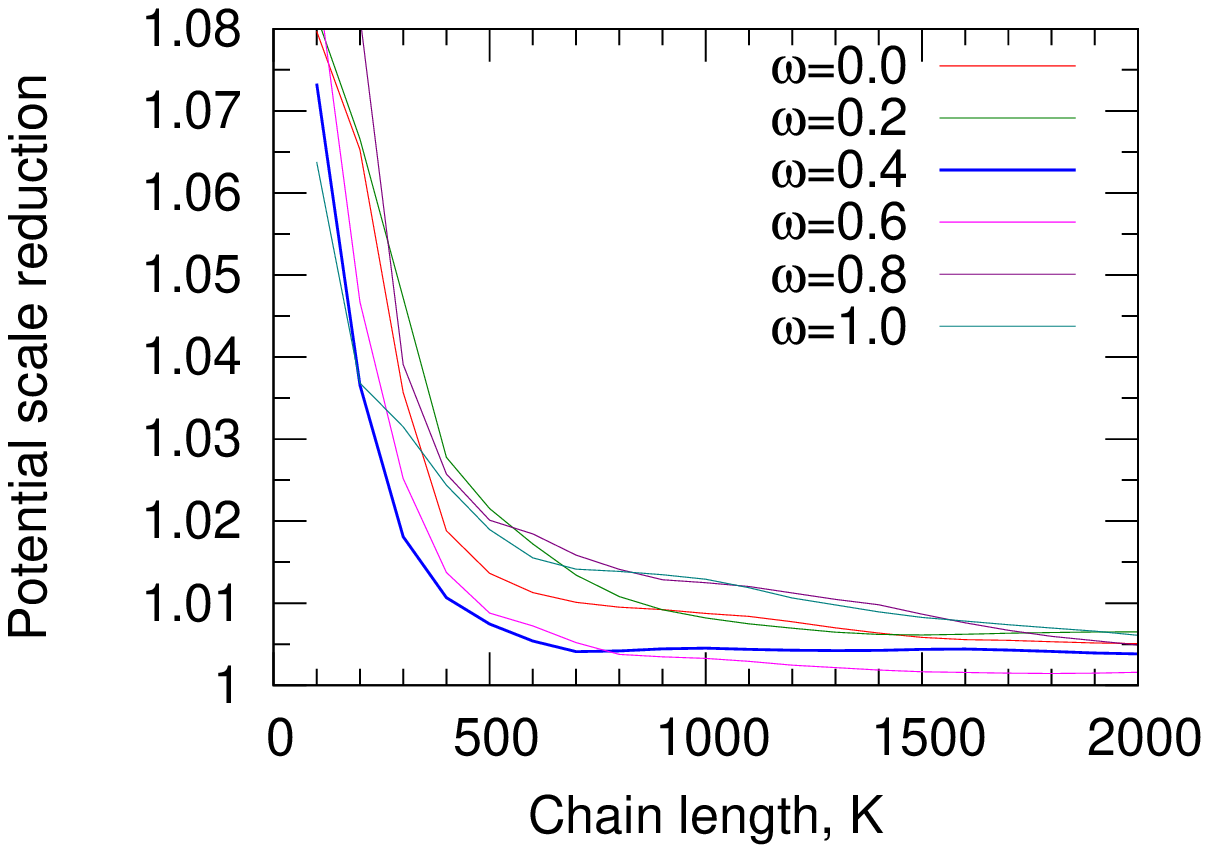}  \\
    (a) Autocorrelation & (b) Potential scale reduction\\
  \end{tabular}
  \end{center}
  \caption{(a) Autocorrelation plot of the log-likelihood for the
  different choices of parameter $\omega$. Notice we get best mixing with
  $\omega \approx 0.6$. (b) The potential scale reduction that compares the
  variance inside- and across- independent Markov chains for different
  values of parameter $\omega$.}
  \label{fig:KronSamplePermAutoCor}
  \label{fig:KronSamplePermGelman}
  %\vspace{8mm}
\end{figure}

Similarly, Figure~\ref{fig:KronSamplePermAutoCor}(a) plots the
autocorrelation as a function of the lag $k$ for different choices of
$\omega$. Faster autocorrelation decay means better mixing of the Markov
chain. Again, notice that we get best mixing for $\omega \approx 0.6$. (Notice
logarithmic y-axis.)

Last, we diagnose how long the sampling procedure must be run before the
generated samples can be considered to be drawn (approximately) from the
stationary distribution. We call this the burn-in time of the chain. There
are various procedures for assessing convergence. Here we adopt the
approach of Gelman {\em et al.}~\cite{gelman03bayesian}, that is based on
running multiple Markov chains each from a different starting point, and
then comparing the variance within the chain and between the chains. The
sooner the within- and between-chain variances become equal the shorter the
burn-in time, {\em i.e.}, the sooner the samples are drawn from the
stationary distribution.

Let $l$ be the parameter that is being simulated with $J$ different
chains, and then let $l^{(k)}_{j}$ denote the $k^{th}$ sample of the
$j^{th}$ chain, where $j=1, \ldots, J$ and $k=1, \ldots, K$. More
specifically, in our case we run separate permutation sampling chains. So,
we first sample permutation $\sigma^{(k)}_{j}$ and then calculate the
corresponding log-likelihood $l^{(k)}_{j}$.

First, we compute between and within chain variances $\hat\sigma^{2}_B$
and $\hat\sigma^{2}_W$, where between-chain variance is obtained by
$$
  \hat\sigma^{2}_B = \frac{K}{J-1} \sum^{J}_{j=1}
    (\bar{l}_{\cdot j} - \bar{l}_{\cdot \cdot})^2
$$
where $
  \bar{l}_{\cdot j} = \frac{1}{K} \sum^{K}_{k=1} l^{(k)}_{j}
$ and $
  \bar{l}_{\cdot \cdot} = \frac{1}{J} \sum^{J}_{j=1} \bar{l}_{\cdot j}
$

Similarly the within-chain variance is defined by
$$
  \hat\sigma^{2}_W = \frac{1}{J(K-1)} \sum^{J}_{j=1} \sum^{K}_{k=1}
    (l_{j}^{(k)} - \bar{l}_{\cdot j})^2
$$

Then, the marginal posterior variance of $\hat{l}$ is calculated using
$$
  \hat\sigma^2 = \frac{K-1}{K} \hat\sigma^{2}_W + \frac{1}{K} \hat\sigma^{2}_B
$$

And, finally, we estimate the {\em potential scale
reduction}~\cite{gelman03bayesian} of $l$ by
$$
\sqrt{\hat{R}} = \sqrt{\frac{\hat\sigma^2}{\hat\sigma_W^2}}
$$

Note that as the length of the chain $K \rightarrow \infty$,
$\sqrt{\hat{R}}$ converges to 1 from above. The recommendation for
convergence assessment from~\cite{gelman03bayesian} is that the potential
scale reduction is below $1.2$.

Figure~\ref{fig:KronSamplePermGelman}(b) gives the Gelman-Rubin-Brooks
plot, where we plot the potential scale reduction $\sqrt{\hat{R}}$ over
the increasing chain length $K$ for different choices of parameter
$\omega$. Notice that the potential scale reduction quickly decays
towards 1. Similarly as in Figure~\ref{fig:KronSamplePermAutoCor} the
extreme values of $\omega$ give slow decay, while we obtain the fastest
potential scale reduction when $\omega \approx 0.6$.

\subsubsection{Properties of the permutation space}

\begin{figure}[t]
  \begin{center}
  \begin{tabular}{ccc}
    \includegraphics[width=0.31\textwidth]{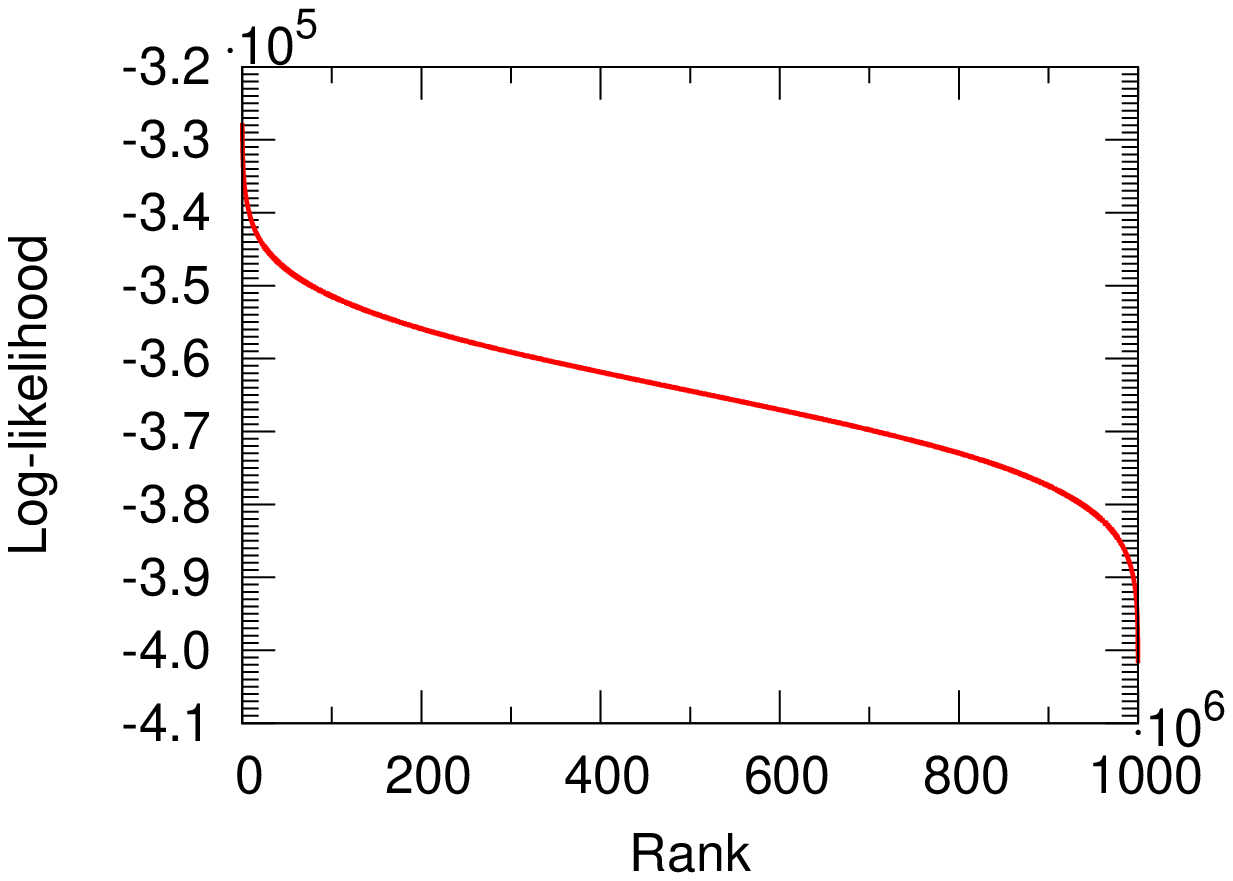} &
    \includegraphics[width=0.31\textwidth]{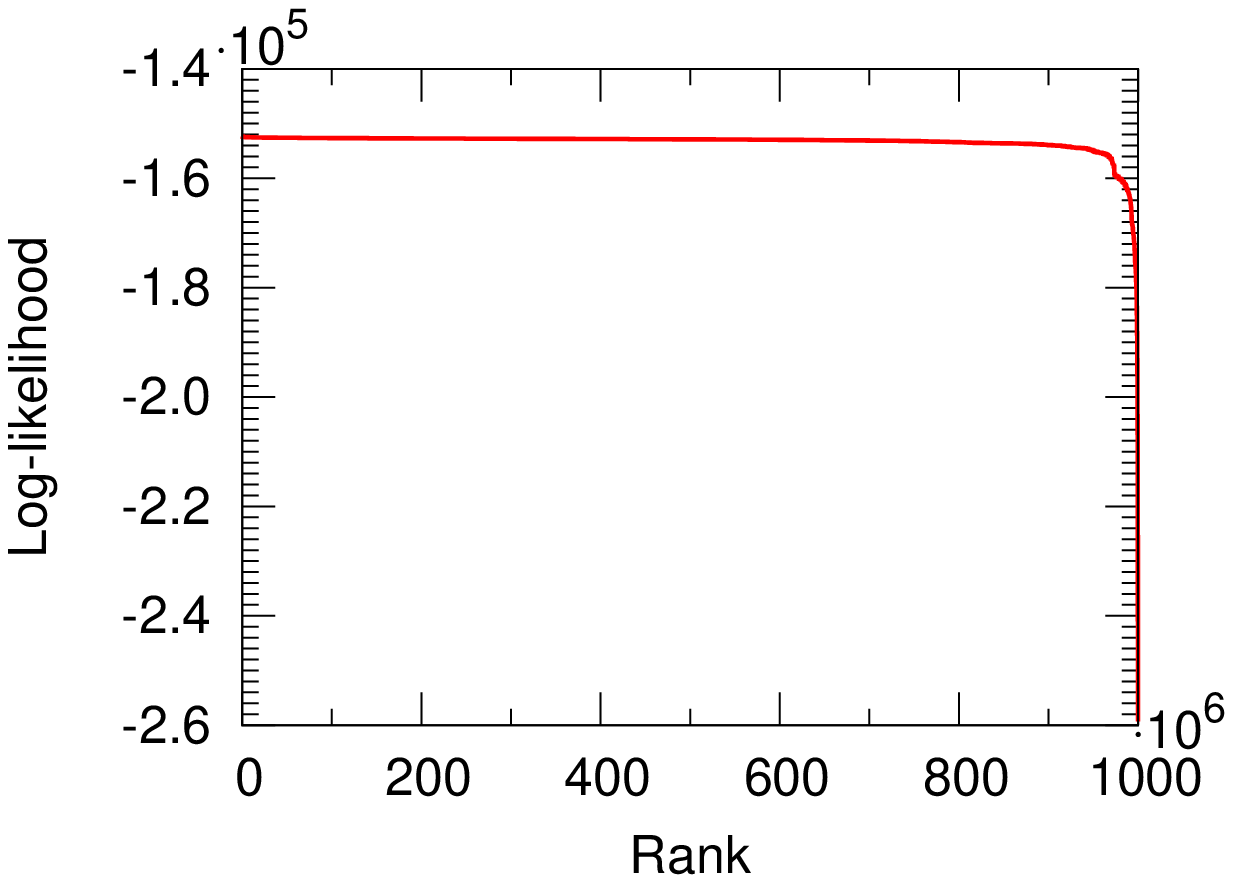} &
    \includegraphics[width=0.31\textwidth]{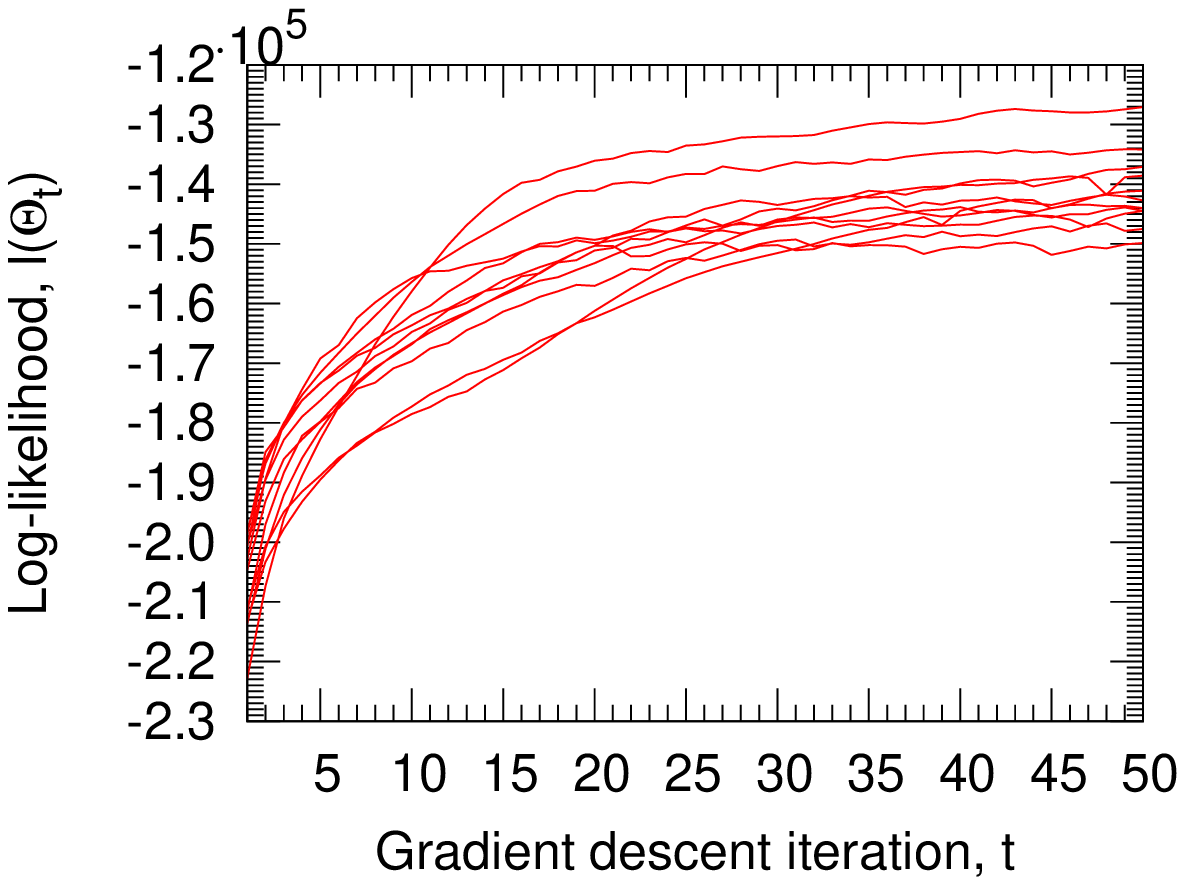} \\
    (a) $l(\pzero | \perm_i)$ where & (b) $l(\pzero | \perm_i)$ where & (c) $l(\pzero_t)$ for 10
random\\
    $\perm_i \sim P(\perm)$ & $\perm_i \sim P(\perm | \pzero, G)$ & gradient descent runs \\
  \end{tabular}
  \end{center}
  \caption{(a) Distribution of log-likelihood of permutations sampled
  uniformly at random, and (b) when sampled from $P(\perm | \pzero, G)$. Notice
  the space of good permutations is rather small but our sampling quickly finds
  permutations of high likelihood. (c) Convergence of log-likelihood for 10
  runs of gradient descent, each from a different random starting point.}
  \label{fig:KronFitPermSpace}
  %\vspace{-3mm}
\end{figure}

Next we explore the properties of the permutation space. We would like to
quantify what fraction of permutations are ``good'' (have high
likelihood), and how quickly are they discovered. For the experiment we
took a real network $G$ (\dataset{As-RouteViews} network) and the MLE
parameters $\hat\pzero$ for it that we estimated before hand
($l(\hat\pzero) \approx -$150,000). The network $G$ has 6,474 nodes
which means the space of all permutations has $\approx 10^{22,000}$
elements.

First, we sampled 1 billion ($10^9$) permutations $\perm_i$ uniformly at
random, \emph{i.e.}, $P(\perm_i) =$ 1/(6,474!) and for each evaluated its
log-likelihood $l(\sigma_i | \pzero) = \log P(\pzero | G, \sigma_i)$. We
ordered the permutations in deceasing log-likelihood and plotted
$l(\sigma_i | \pzero)$ vs. rank. Figure~\ref{fig:KronFitPermSpace}(a) gives
the plot. Notice that very few random permutations are very bad
(\emph{i.e.}, they give low likelihood), similarly few permutations are
very good, while most of them are somewhere in between. Notice that best
``random'' permutation has log-likelihood of $\approx -$320,000, which is
far below true likelihood $l(\hat\pzero) \approx -$150,000. This suggests
that only a very small fraction of all permutations gives good node
labelings.

On the other hand, we also repeated the same experiment but now using
permutations sampled from the permutation distribution $\perm_i \sim P(\perm |
\pzero, G)$ via our Metropolis sampling scheme.
Figure~\ref{fig:KronFitPermSpace}(b) gives the plot. Notice the radical
difference. Now the $l(\sigma| \pzero_i)$ very quickly converges to the
true likelihood of $\approx -150,000$. This suggests that while the number
of ``good'' permutations (accurate node mappings) is rather small, our
sampling procedure quickly converges to the ``good'' part of the
permutation space where node mappings are accurate, and spends the most time there.

\subsection{Properties of the optimization space}

In maximizing the likelihood we use a stochastic approximation to the
gradient. This adds variance to the gradient and makes efficient
optimization techniques, like conjugate gradient, highly unstable. Thus we
use gradient descent, which is slower but easier to control. First, we
make the following observation:

\medskip
\begin{observation} Given a real graph $\ggraph$ then finding the
maximum likelihood Stochastic Kronecker initiator matrix $\hat\pzero$
$$
  \hat\pzero = \arg \max_{\pzero} P(\ggraph | \pzero)
$$
is a non-convex optimization problem.
\end{observation}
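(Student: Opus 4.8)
The feasible set for the parameters is the cube $[0,1]^{\nzero\times\nzero}$, which is convex, so the problem $\max_{\pzero} l(\pzero)$ is a convex optimization problem if and only if the log-likelihood $l(\pzero)=\log P(\ggraph\mid\pzero)$ is concave on this cube. The plan is therefore to exhibit a single line segment inside the cube along which $l$ is strictly convex; this simultaneously shows that $l$ is not concave, that the problem is non-convex, and, as a bonus, that $l$ can admit several local maxima.

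The cleanest route is to pass to the smallest nontrivial instance. Take $\nzero=2$ and a fixed small target graph $\ggraph$ (for concreteness a single directed edge with $k=1$, or a suitable $4$-node graph with $k=2$), and write $\pzero=\bigl[\begin{smallmatrix}a&b\\ c&d\end{smallmatrix}\bigr]$. For such a small $\ggraph$ the sum $\sum_{\perm}P(\ggraph\mid\perm,\pzero)P(\perm)$ of Eq.~\ref{eq:KronSumPerm} ranges over only a handful of permutations, so $l(\pzero)$ has an elementary closed form: a sum of terms $\log p_{uv}$ and $\log(1-p_{uv})$, each $p_{uv}$ a product of entries of $\pzero$, wrapped in a log-of-a-sum coming from the permutation average. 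One then restricts $l$ to the segment obtained by fixing the diagonal entries $a,d$ at interior values and interpolating the off-diagonal pair from $(b,c)=(1,0)$ to $(b,c)=(0,1)$ — the ``swap the two off-diagonal parameters'' direction that the permutation symmetry of $l$ singles out. Along this segment the diagonal contributions are constant while the off-diagonal contribution reduces to an expression of the form $\log(b+c-2bc)$, and a one-line second-derivative computation shows it is \emph{strictly convex} on the interior of the segment. Hence $l$ is not concave and the optimization is non-convex.

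A more conceptual alternative uses the permutation invariance of the likelihood: $l(\Pi\pzero\Pi^{\top})=l(\pzero)$ for every $\nzero\times\nzero$ permutation matrix $\Pi$, since conjugating the initiator merely relabels the nodes of $\pzero^{[k]}$, and the likelihood already marginalizes over all node labelings. If $l$ were concave, then for any maximizer $\hat\pzero$ the centroid $\bar\pzero=\tfrac{1}{\nzero!}\sum_{\Pi}\Pi\hat\pzero\Pi^{\top}$ of its orbit would also be a maximizer; but $\bar\pzero$ has all diagonal entries equal and all off-diagonal entries equal, i.e.\ it is the highly constrained ``Erd\H{o}s--R\'enyi-like'' initiator. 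It would then suffice to produce one target graph $\ggraph$ — for instance a realization of a Stochastic Kronecker graph with a genuinely non-symmetric $\pn{1}^{*}$ — for which no such symmetrized initiator attains the global maximum, which again reduces to a small explicit computation.

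The step I expect to be the main obstacle is controlling the permutation sum $\sum_{\perm}$: for a general $\ggraph$ it has no closed form, so either route has to be carried out on an instance small enough (or symmetric enough) that the sum collapses to something one can differentiate by hand, and one must check that the chosen segment stays strictly inside $(0,1)^{\nzero\times\nzero}$ so that every logarithm stays finite. With the $\nzero=2$ example above both issues are immediate, so that is the version I would write up.
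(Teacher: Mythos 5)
Your proposal is correct, and your second (``conceptual'') route is essentially the paper's own proof: the paper argues only that conjugating $\pzero$ by a permutation matrix leaves the likelihood unchanged, so there are several equivalent global maxima, ``and then between them the log-likelihood drops,'' hence non-convexity. You go beyond the paper in two useful ways. First, you correctly identify the gap in that argument: permutation invariance plus concavity would merely force the orbit centroid (constant diagonal, constant off-diagonal) to be another maximizer, so one still owes an instance where that symmetrized initiator is suboptimal -- the paper simply asserts the drop. Second, your first route closes exactly this kind of gap by direct computation, and the computation does go through: for $\nzero=2$, $k=1$ and a single directed edge, summing Eq.~\ref{eq:KronProbGPS} over the two permutations gives $P(\ggraph\mid\pzero)=\tfrac12(1-a)(1-d)\,(b+c-2bc)$, and along the segment $(b,c)=(1-t,t)$ with $a,d$ fixed the log-likelihood is $\log(1-2t+2t^{2})$ plus a constant, with second derivative $8t(1-t)/(1-2t+2t^{2})^{2}>0$ on $(0,1)$; this exhibits strict convexity along a line and hence non-concavity. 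The only caveat is scope: the explicit small instance shows the problem class is non-convex, whereas the statement as written suggests non-convexity for every $\ggraph$; for that, your symmetrization argument is the right tool, though it still requires a per-graph check that the averaged initiator is not optimal (it can fail for degenerate $\ggraph$ whose MLE is already permutation-symmetric). In short, your write-up is a strictly more careful version of the paper's proof, with the paper's argument appearing as your second route and your first route supplying the verification the paper omits.
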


\begin{proof}
By definition permutations of the Kronecker graphs initiator matrix
$\pzero$ all have the same log-likelihood. This means that we have
several global minima that correspond to permutations of parameter matrix
$\pzero$, and then between them the log-likelihood drops. This means
that the optimization problem is non-convex.
\end{proof}

\new{The above observation does not seem promising for estimating
$\hat\pzero$ using gradient descent as it is prone to finding local minima. To
test for this behavir we run the following experiment: we generated 100 synthetic
Kronecker graphs on 16,384 ($2^{14}$) nodes and 1.4 million edges on the
average, each with a randomly chosen $2\times2$ parameter matrix $\pzero^*$.
For each of the 100 graphs we run a single trial of gradient descent starting from a
random parameter matrix $\pzero'$, and try to recover
$\pzero^*$. In 98\% of the cases the gradient descent converged to the
true parameters. Many times the algorithm converged to a different global
minima, \emph{i.e.}, $\hat\pzero$ is a permuted version of original
parameter matrix $\pzero^*$. Moreover, the median number of gradient
descent iterations was only 52.}

This suggests surprisingly nice structure of our optimization space: it
seems to behave like a convex optimization problem with many equivalent
global minima. Moreover, this experiment is also a good sanity check as it
shows that given a Kronecker graph we can recover and identify the
parameters that were used to generate it.

Moreover, Figure~\ref{fig:KronFitPermSpace}(c) plots the log-likelihood
$l(\pzero_t)$ of the current parameter estimate $\pzero_t$ over the
iterations $t$ of the stochastic gradient descent. We plot the
log-likelihood for 10 different runs of gradient descent, each time
starting from a different random set of parameters $\pzero_0$. Notice that
in all runs gradient descent always converges towards the optimum, and
none of the runs gets stuck in some local maxima.

\subsection{Convergence of the graph properties}

\begin{figure}[t]
  \begin{center}
  \begin{tabular}{cc}
    \includegraphics[width=0.45\textwidth]{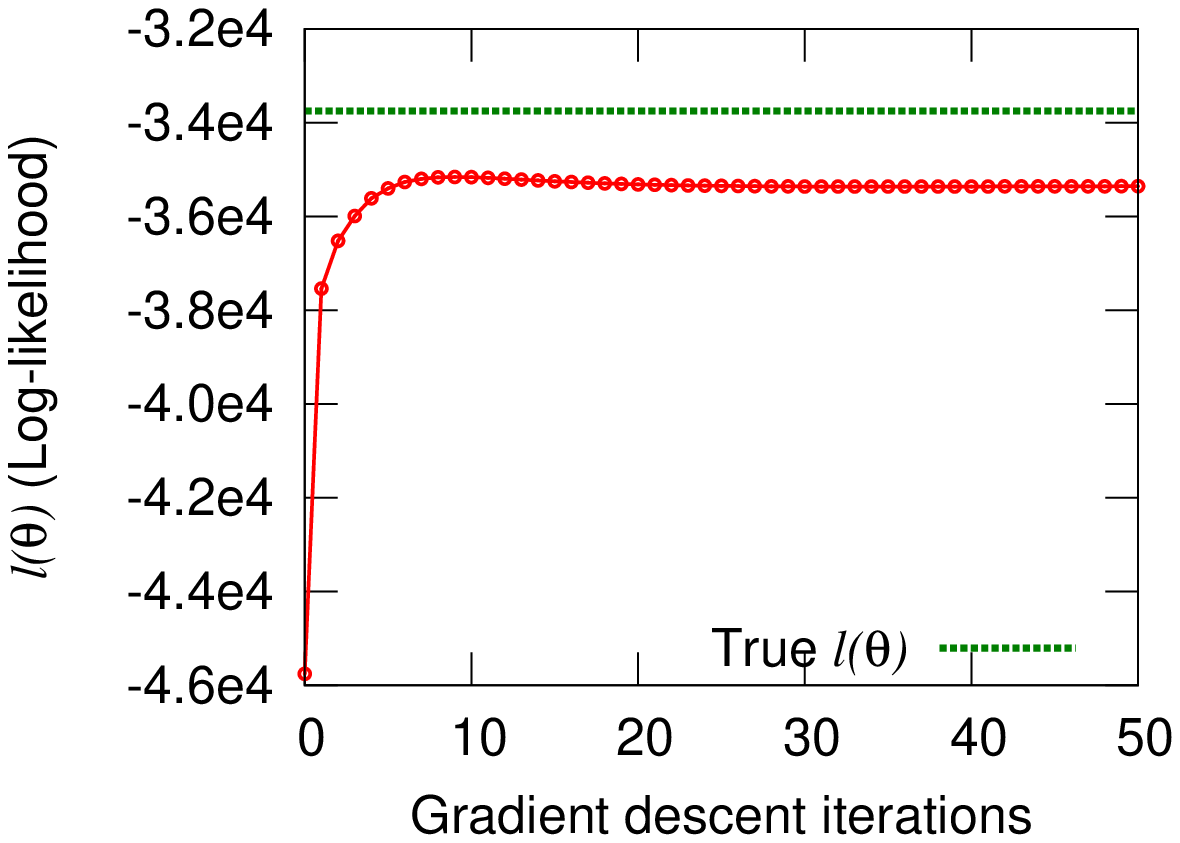} &
    \includegraphics[width=0.45\textwidth]{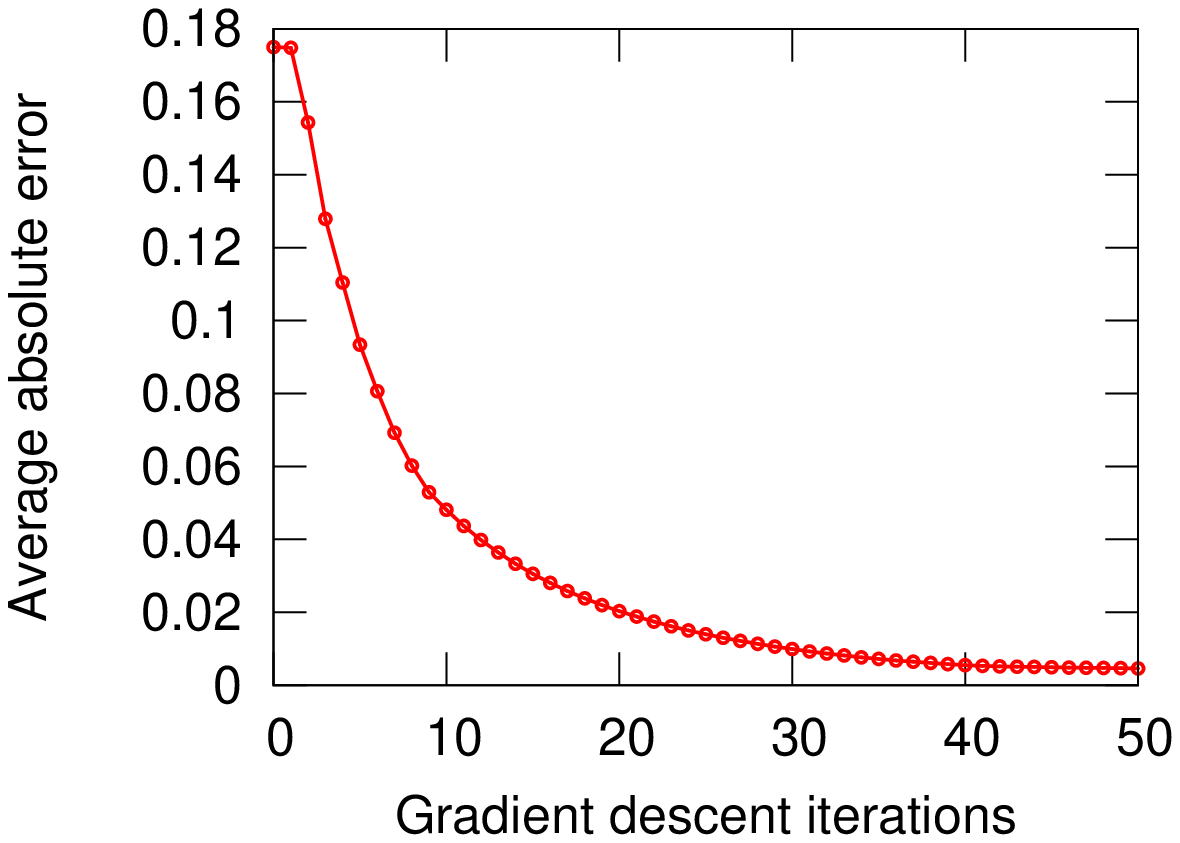} \\
    (a) Log-likelihood & (b) Average error \\
    \includegraphics[width=0.45\textwidth]{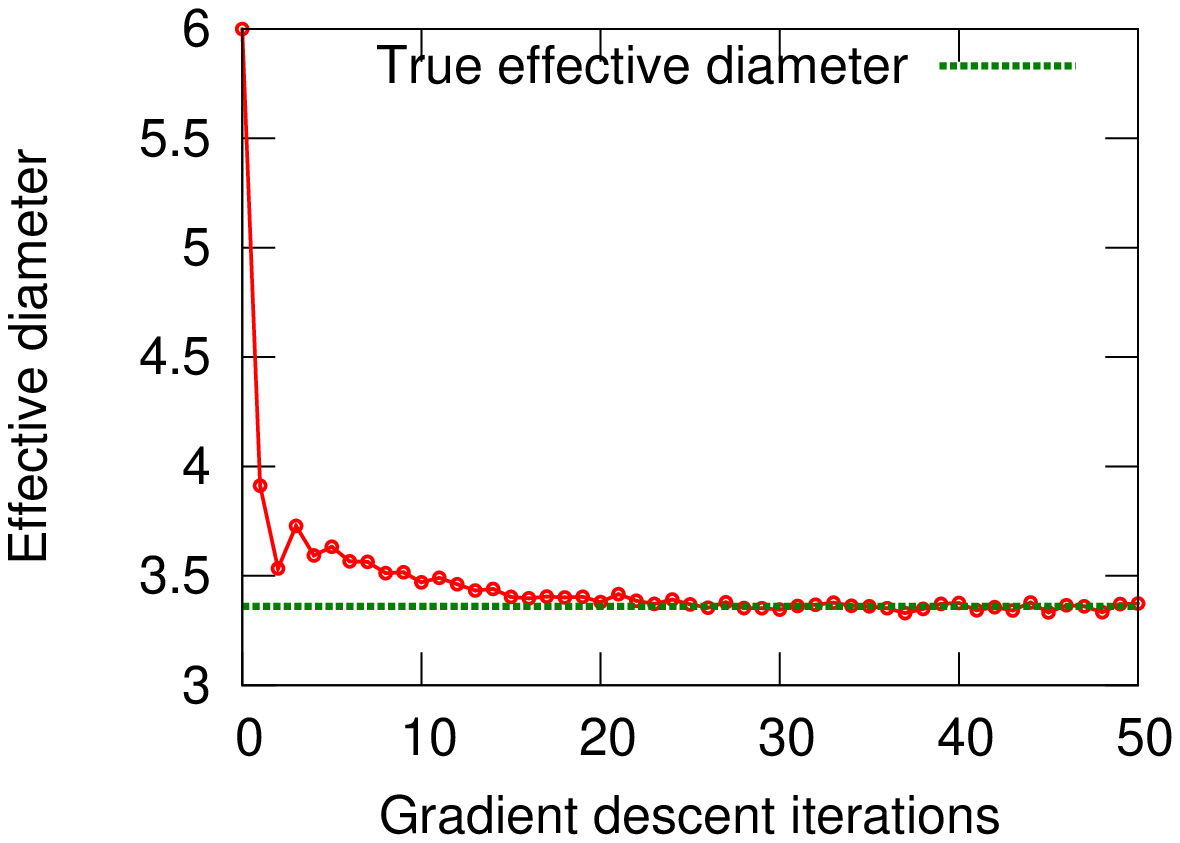} &
    \includegraphics[width=0.45\textwidth]{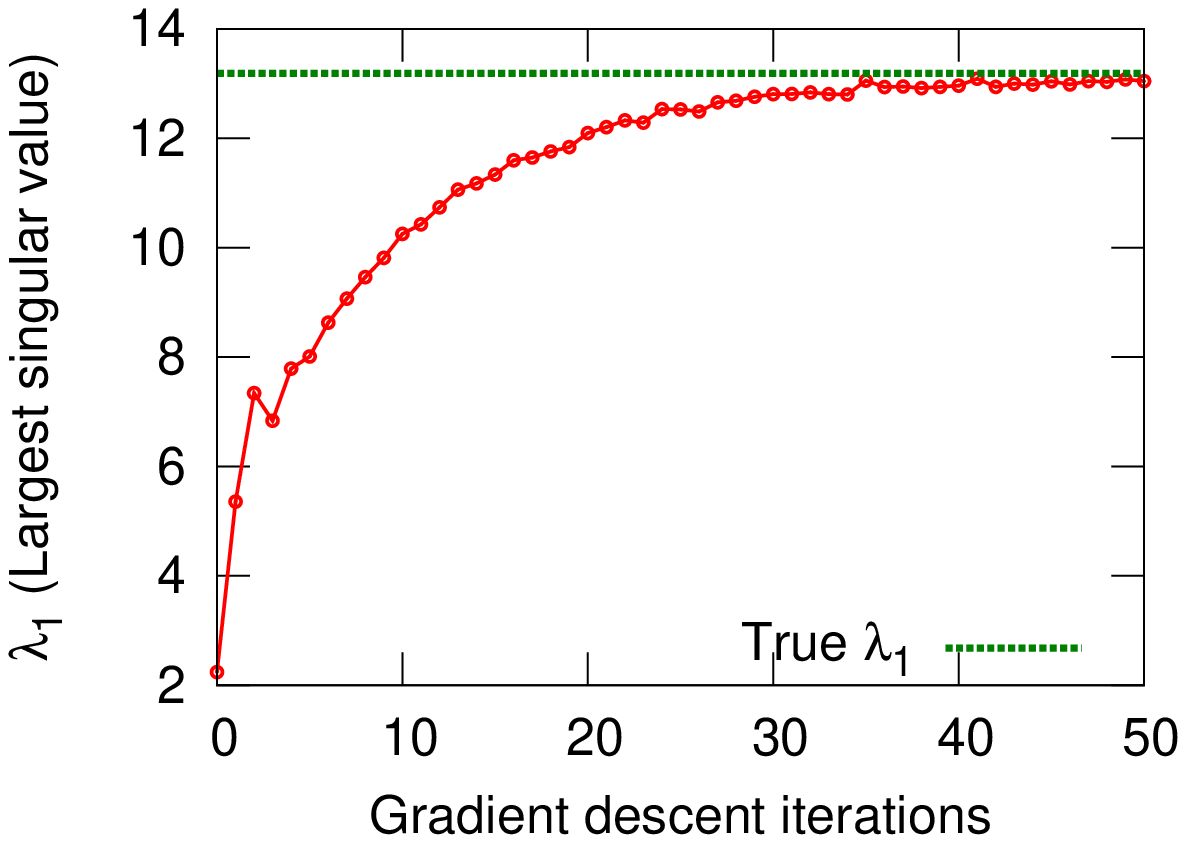} \\
    (c) Effective diameter & (d) Largest singular-value\\
  \end{tabular}
  \end{center}
  \caption{Convergence of graph properties with the number of
  iterations of gradient descent using the synthetic dataset. We start
  with a random choice of parameters and with steps of gradient
  descent the Kronecker graph better and better matches network
  properties of the target graph.}
  \label{fig:KronFitGradDescent}
  %\vspace{-3mm}
\end{figure}

We approached the problem of estimating Stochastic Kronecker initiator
matrix $\pzero$ by defining the likelihood over the individual entries of
the graph adjacency matrix. However, what we would really like is to be
given a real graph $\ggraph$ and then generate a synthetic graph $\kgraph$
that has similar properties as the real $\ggraph$. By properties we mean
network statistics that can be computed from the graph, \emph{e.g.},
diameter, degree distribution, clustering coefficient, etc. A priori it is
not clear that our approach which tries to match individual entries of
graph adjacency matrix will also be able to reproduce these global network
statistics. However, as show next this is not the case.

To get some understanding of the convergence of the gradient descent in
terms of the network properties we performed the following experiment.
After every step $t$ of stochastic gradient descent, we compare the true
graph $\ggraph$ with the synthetic Kronecker graph $\kgraph_t$ generated
using the current parameter estimates $\hat\pzero_t$.
Figure~\ref{fig:KronFitGradDescent}(a) gives the convergence of
log-likelihood, and (b) gives absolute error in parameter values ($\sum
|\hat\theta_{ij} - \theta_{ij}^*|$, where $\hat\theta_{ij} \in
\hat\pzero_t$, and $\theta_{ij}^* \in \pzero^*$). Similarly,
Figure~\ref{fig:KronFitGradDescent}(c) plots the effective diameter, and
(d) gives the largest singular value of graph adjacency matrix $K$ as it
converges to largest singular value of $G$.

Note how with progressing iterations of gradient descent properties of
graph $\kgraph_t$ quickly converge to those of $\ggraph$ even though we
are not directly optimizing the similarity in network properties:
log-likelihood increases, absolute error of parameters decreases, diameter
and largest singular value of $\kgraph_t$ both converge to $\ggraph$. This
is a nice result as it shows that through maximizing the likelihood the
resulting graphs become more and more similar also in their structural
properties (even though we are not directly optimizing over them).

\subsection{Fitting to real-world networks}

\begin{figure}[t]
  \begin{center}
  \begin{tabular}{cc}
    \includegraphics[width=0.45\textwidth]{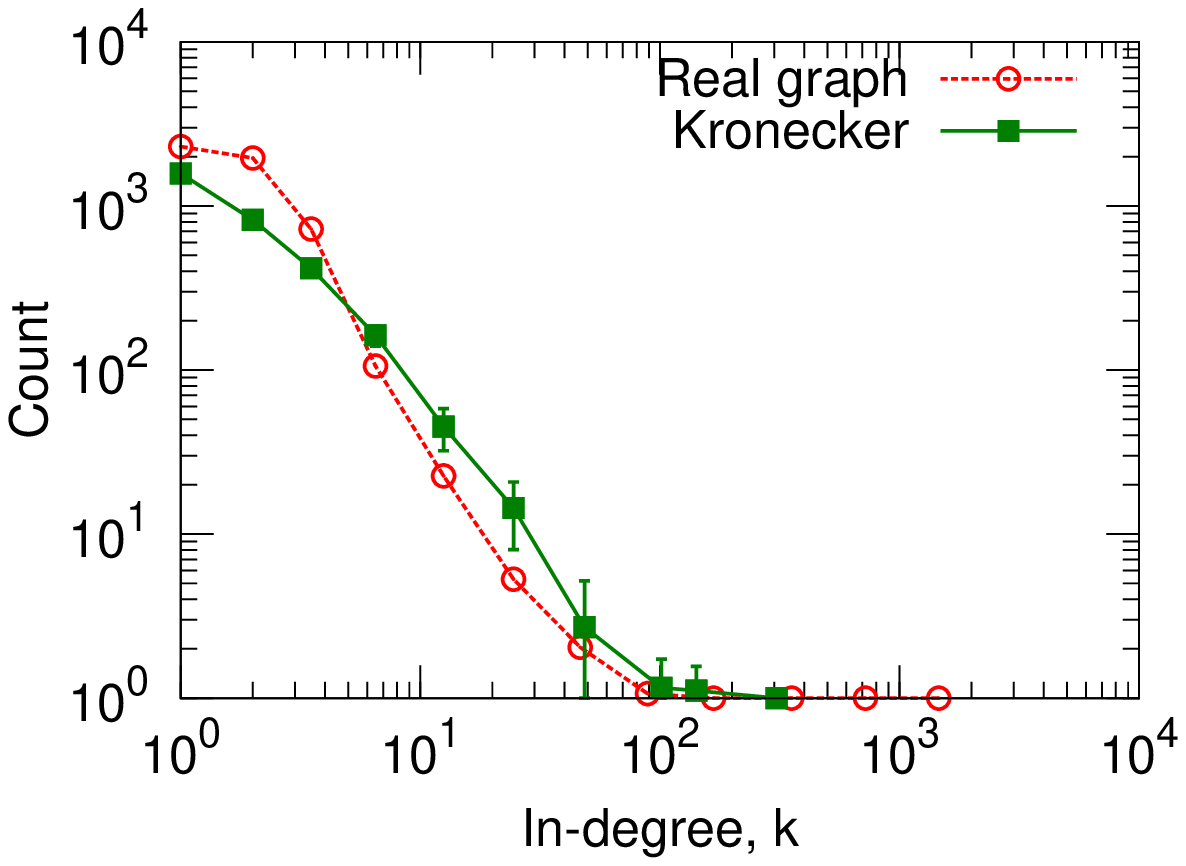} &
    \includegraphics[width=0.45\textwidth]{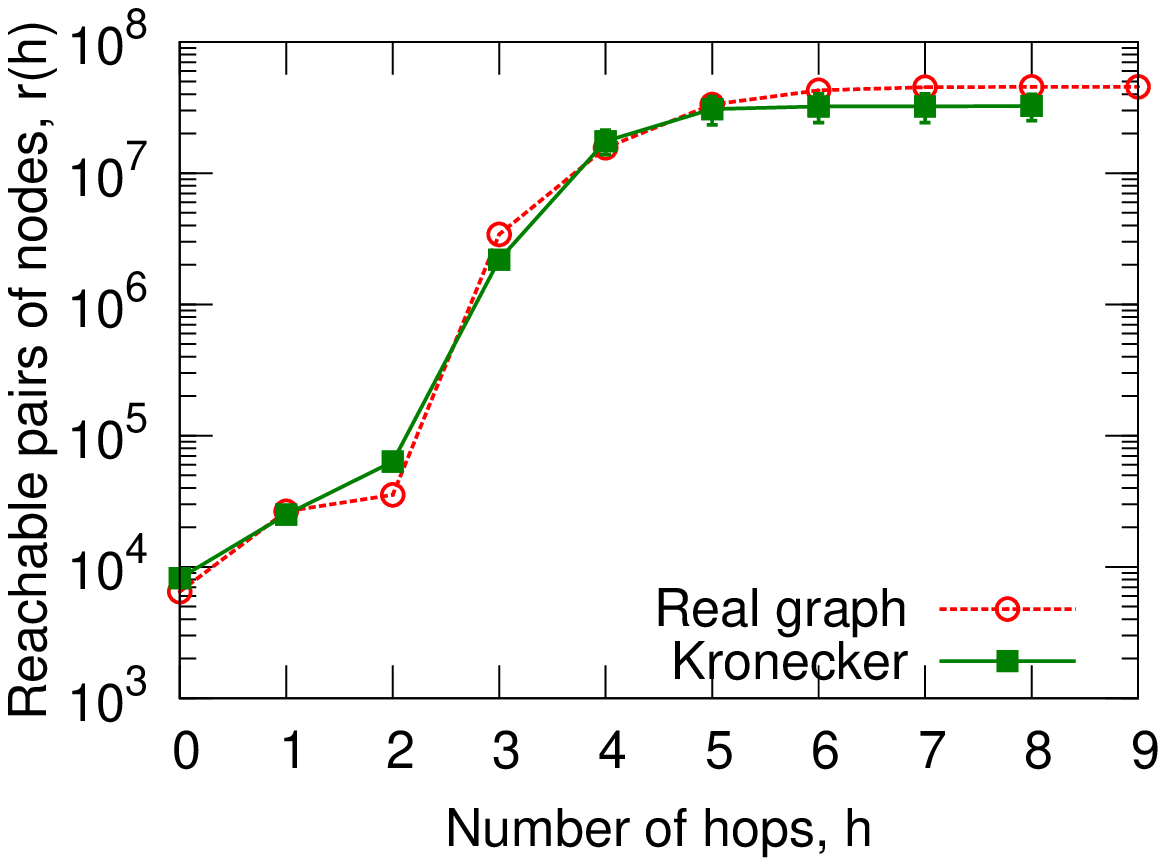} \\
    (a) Degree distribution & (b) Hop plot \\
    \includegraphics[width=0.45\textwidth]{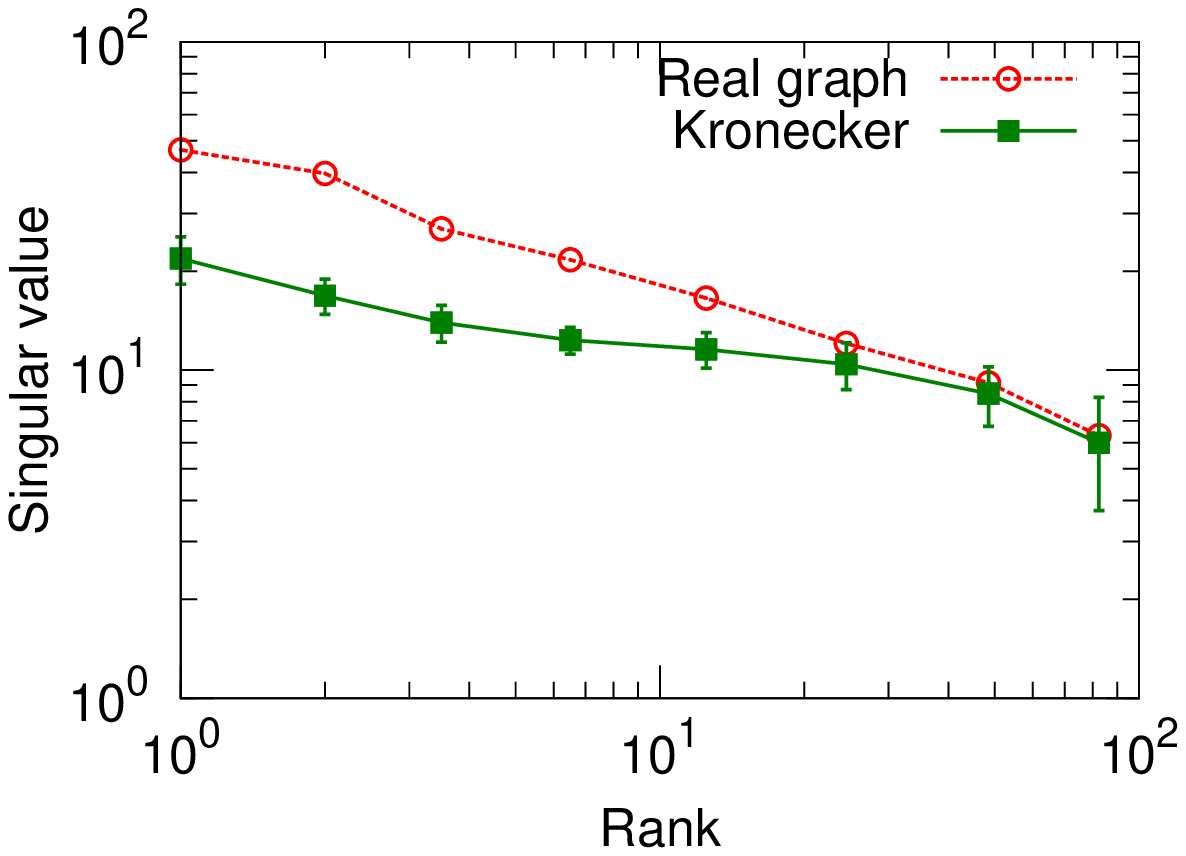} &
    \includegraphics[width=0.45\textwidth]{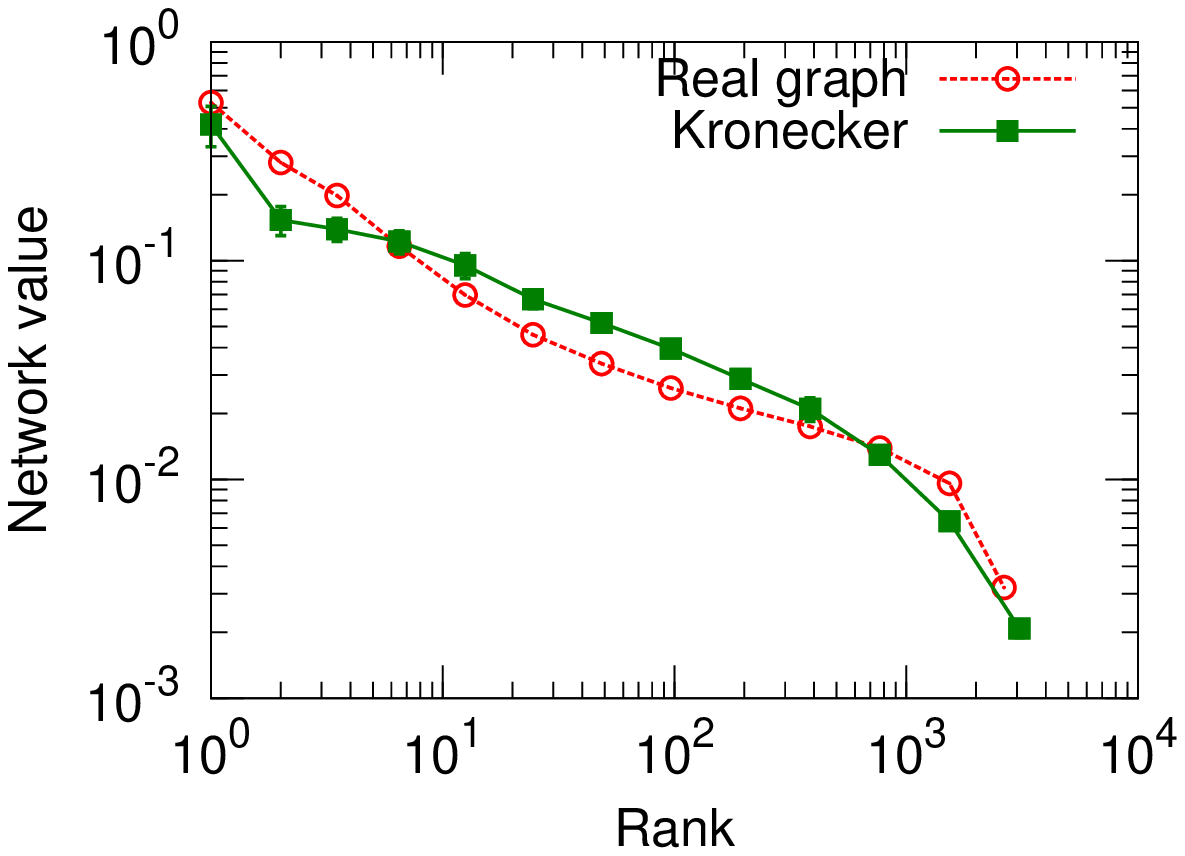} \\
    (c) Scree plot & (d) ``Network'' value \\
  \end{tabular}
  \end{center}
  \caption{{\em Autonomous Systems (\dataset{As-RouteViews}):} Overlayed
  patterns of real graph and the fitted Kronecker graph. Notice that the
  fitted Kronecker graph matches patterns of the real graph while using
  only four parameters ($2\times2$ initiator matrix).}
  \label{fig:KronFitAS}
\end{figure}

Next, we present experiments of fitting Kronecker graph model to
real-world networks. Given a real network $G$ we aim to discover the most
likely parameters $\hat\pzero$ that ideally would generate a synthetic
graph $\kgraph$ having similar properties as real $G$. This assumes that
Kronecker graphs are a good model of the network structure, and that
\KronFit\ is able to find good parameters. In previous section we showed
that \KronFit\ can efficiently recover the parameters. Now we examine how
well can Kronecker graph model the structure of real networks.

We consider several different networks, like a graph of connectivity among
Internet Autonomous systems (\dataset{As-RouteViews}) with $\nnodes=$6,474
and $\nedges=$26,467 a who-trusts-whom type social network from
Epinions~\cite{richardson03trust} (\dataset{Epinions}) with
$\nnodes=$75,879 and $\nedges=$508,960 and many others. The largest
network we consider for fitting is \dataset{Flickr} photo-sharing online
social network with 584,207 nodes and 3,555,115 edges.

For the purpose of this section we take a real network $G$, find
parameters $\hat\pzero$ using \KronFit, generate a synthetic graph
$\kgraph$ using $\hat\pzero$, and then compare $G$ and $\kgraph$ by
comparing their properties that we introduced in
section~\ref{sec:KronRelated}. In all experiments we started from a random
point (random initiator matrix) and run gradient descent for 100 steps. At
each step we estimate the likelihood and the gradient based on 510,000
sampled permutations where we discard first 10,000 samples to allow the
chain to burn-in.

\subsubsection{Fitting to Autonomous Systems network}

First, we focus on the Autonomous Systems network obtained from the
University of Oregon Route Views project~\cite{oregon97as}. Given the AS
network $G$ we run \KronFit\ to obtain parameter estimates $\hat\pzero$.
Using the $\hat\pzero$ we then generate a synthetic Kronecker graph $K$,
and compare the properties of $G$ and $K$.

Figure~\ref{fig:KronFitAS} shows properties of \dataset{As-RouteViews},
and compares them with the properties of a synthetic Kronecker graph
generated using the fitted parameters $\hat\pzero$ of size $2 \times 2$.
Notice that properties of both graphs match really well. The estimated
parameters are $\hat\pzero=[0.987, 0.571; 0.571, 0.049]$.

Figure~\ref{fig:KronFitAS}(a) compares the degree distributions of the
\dataset{As-RouteViews} network and its synthetic Kronecker estimate. In
this and all other plots we use the exponential binning which is a
standard procedure to de-noise the data when plotting on log--log scales.
Notice a very close match in degree distribution between the real graph
and its synthetic counterpart.

Figure~\ref{fig:KronFitAS}(b) plots the cumulative number of pairs of
nodes $g(h)$ that can be reached in $\le h$ hops. The hop plot gives a
sense about the distribution of the shortest path lengths in the network
and about the network diameter. Last, Figures~\ref{fig:KronFitAS}(c) and
(d) plot the spectral properties of the graph adjacency matrix.
Figure~\ref{fig:KronFitAS}(c) plots largest singular values vs. rank, and
(d) plots the components of left singular vector (the network value) vs.
the rank. Again notice the good agreement with the real graph while using only
four parameters.

Moreover, on all plots the error bars of two standard deviations show the
variance of the graph properties for different realizations
$R(\hat\pzero^{[k]})$. To obtain the error bars we took the same
$\hat\pzero$, and generated 50 realizations of a Kronecker graph. As for
the most of the plots the error bars are so small to be practically
invisible, this shows that the variance of network properties when
generating a Stochastic Kronecker graph is indeed very small.

Also notice that the \dataset{As-RouteViews} is an undirected graph, and
that the fitted parameter matrix $\hat\pzero$ is in fact symmetric. This
means that without a priori biasing the fitting towards undirected graphs,
the recovered parameters obey this aspect of the network. Fitting
\dataset{As-RouteViews} graph from a random set of parameters, performing
gradient descent for 100 iterations and at each iteration sampling half a
million permutations, took less than 10 minutes on a standard desktop PC.
This is a significant speedup over~\cite{bezakova06mle}, where by using a
similar permutation sampling approach for calculating the likelihood of a
preferential attachment model on similar \dataset{As-RouteViews} graph
took about two days on a cluster of 50 machines.

\subsubsection{Choice of the initiator matrix size $\nzero$}

\begin{table}[t]
  \begin{center}
    \begin{tabular}{c||c|c|c|c|c}
    $\nzero$ & $l(\hat\pzero)$ & $\nzero^k$ & $\ezero^k$ & $|\{\textrm{deg}(u)>0\}|$ & BIC
score\\ \hline \hline
    2 & $-$152,499 & 8,192  & 25,023 & 5,675 & 152,506 \\ \hline
    3 & $-$127,066 & 6,561 & 28,790 & 5,683 & 127,083  \\ \hline
    4 & $-$153,260 & 16,384 & 24,925 & 8,222 & 153,290 \\ \hline
    5 & $-$149,949 & 15,625 & 29,111 & 9,822 & 149,996 \\ \hline
    6 & $-$128,241 & 7,776 & 26,557 & 6,623 & 128,309  \\ \hline \hline
    \multicolumn{3}{l|}{\dataset{As-RouteViews}} & 26,467 & 6,474 & \\
    \end{tabular}
    \caption{Log-likelihood at MLE for different choices of the size of
    the initiator matrix $\nzero$ for the \dataset{As-RouteViews} graph.
    Notice the log-likelihood $l(\hat\theta)$ generally increases with
    the model complexity $\nzero$. Also notice the effect of zero-padding,
    \emph{i.e.}, for $\nzero=4$ and $\nzero=5$ the constraint of the
    number of nodes being an integer power of $\nzero$ decreases the
    log-likelihood. However, the column $|\{\textrm{deg}(u)>0\}|$ gives
    the number of non-isolated nodes in the network which is much less
    than $\nzero^k$ and is in fact very close to the true number of nodes in
    the \dataset{As-RouteViews}. Using the BIC scores we see that
    $\nzero=3$ or $\nzero=6$ are best choices for the size of the initiator
    matrix.}
    \label{tab:KronFitN0}
    %\vspace{-4mm}
  \end{center}
\end{table}

As mentioned earlier for finding the optimal number of parameters, {\em
i.e.}, selecting the size of initiator matrix, BIC criterion naturally
applies to the case of Kronecker graphs. Figure~\ref{fig:KronTimeBic}(b)
shows BIC scores for the following experiment: We generated Kronecker
graph with $\nnodes=$2,187 and $\nedges=$8,736 using $\nzero = 3$ (9
parameters) and $k=7$. For $1 \le \nzero \le 9$ we find the MLE parameters
using gradient descent, and calculate the BIC scores. The model with the
lowest score is chosen. As figure~\ref{fig:KronTimeBic}(b) shows we
recovered the true model, \emph{i.e.}, BIC score is the lowest for the
model with the true number of parameters, $\nzero = 3$.

Intuitively we expect a more complex model with more parameters to fit the
data better. Thus we expect larger $\nzero$ to generally give better
likelihood. On the other hand the fit will also depend on the size of the
graph $\ggraph$. Kronecker graphs can only generate graphs on $\nzero^k$
nodes, while real graphs do not necessarily have $\nzero^k$ nodes (for
some, preferably small, integers $\nzero$ and $k$). To solve this problem
we choose $k$ so that $\nzero^{k-1} < \nnodes(G) \le \nzero^k$, and then
augment $G$ by adding $\nzero^k - \nnodes$ isolated nodes. Or
equivalently, we pad the adjacency matrix of $G$ with zeros until it is of
the appropriate size, $\nzero^k \times \nzero^k$. While this solves the
problem of requiring the integer power of the number of nodes it also
makes the fitting problem harder as when $\nnodes \ll \nzero^k$ we are
basically fitting $G$ plus a large number of isolated nodes.

Table~\ref{tab:KronFitN0} shows the results of fitting Kronecker graphs to
\dataset{As-RouteViews} while varying the size of the initiator matrix
$\nzero$. First, notice that in general larger $\nzero$ results in higher
log-likelihood $l(\hat\pzero)$ at MLE. Similarly, notice (column
$\nzero^k$) that while \dataset{As-RouteViews} has 6,474 nodes,
Kronecker estimates have up to 16,384 nodes (16,384$=4^7$, which is the
first integer power of 4 greater than 6,474). However, we also show the
number of non-zero degree (non-isolated) nodes in the Kronecker graph
(column $|\{\textrm{deg}(u)>0\}|$). Notice that the number of non-isolated
nodes well corresponds to the number of nodes in \dataset{As-RouteViews}
network. This shows that \KronFit is actually fitting the graph well and it
successfully fits the structure of the graph plus a number of isolated
nodes. Last, column $\ezero^k$ gives the number of edges in the
corresponding Kronecker graph which is close to the true number of edges
of the \dataset{As-RouteViews} graph.

Last, comparing the log-likelihood at the MLE and the BIC score in
Table~\ref{tab:KronFitN0} we notice that the log-likelihood heavily
dominates the BIC score. This means that the size of the initiator matrix
(number of parameters) is so small that overfitting is not a concern.
Thus we can just choose the initiator matrix that
maximizes the likelihood. A simple calculation shows that one would need
to take initiator matrices with thousands of entries before the model
complexity part of BIC score would start to play a significant role.

\begin{figure}[t]
  \begin{center}
  \begin{tabular}{cc}
    \includegraphics[width=0.45\textwidth]{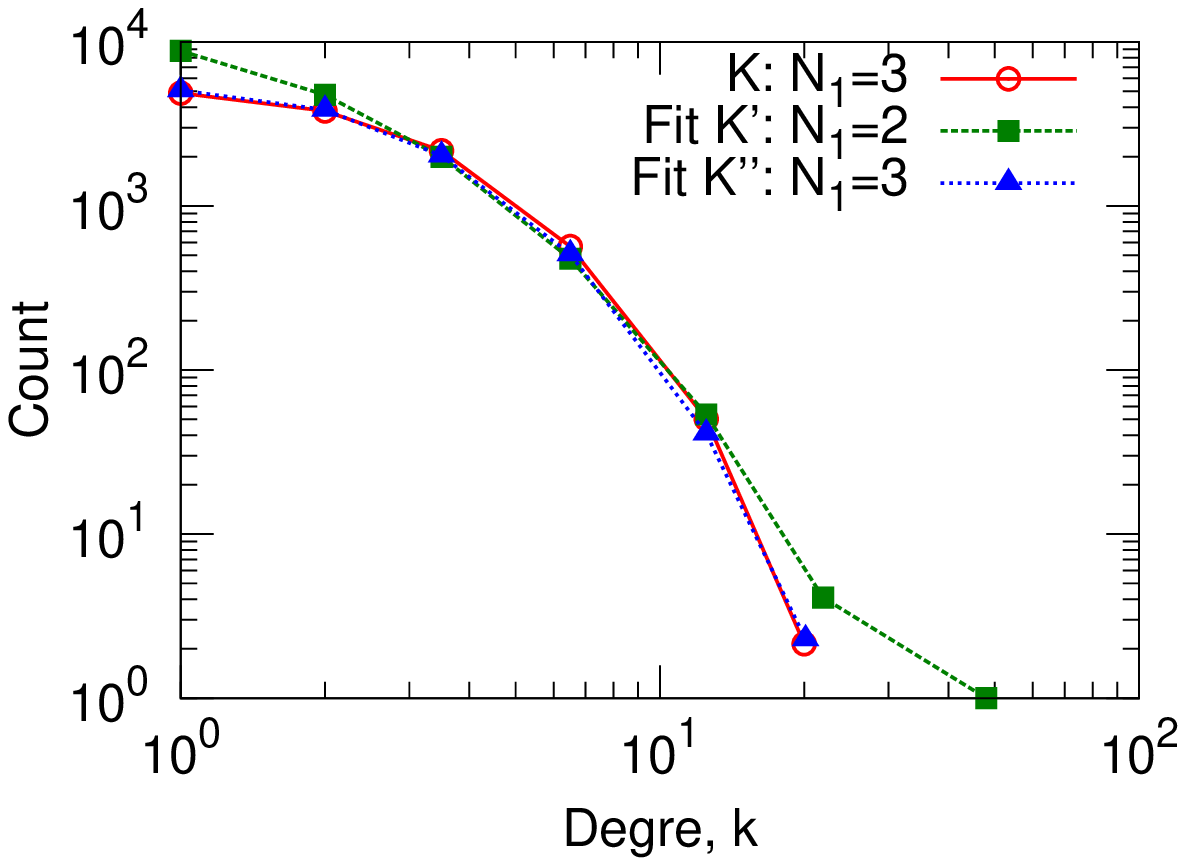} &
    \includegraphics[width=0.45\textwidth]{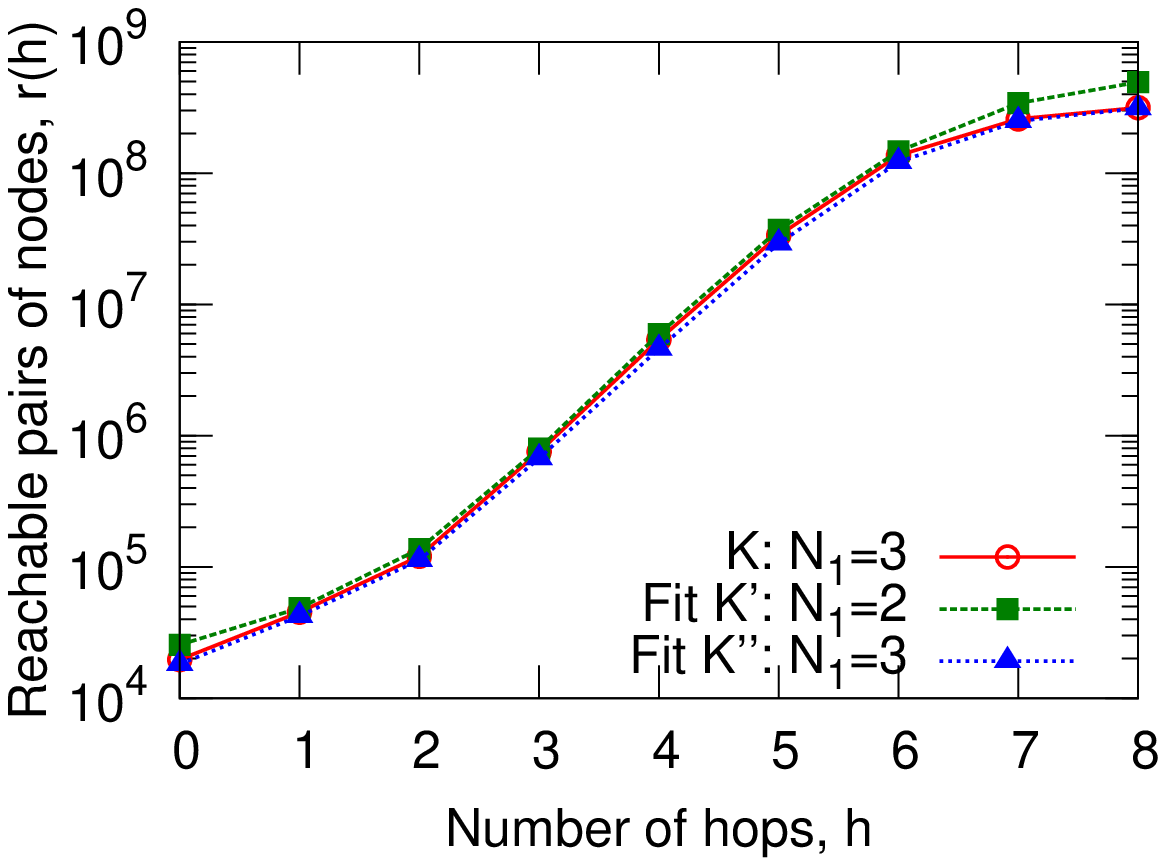} \\
    (a) Degree distribution & (b) Hop plot \\
    \includegraphics[width=0.45\textwidth]{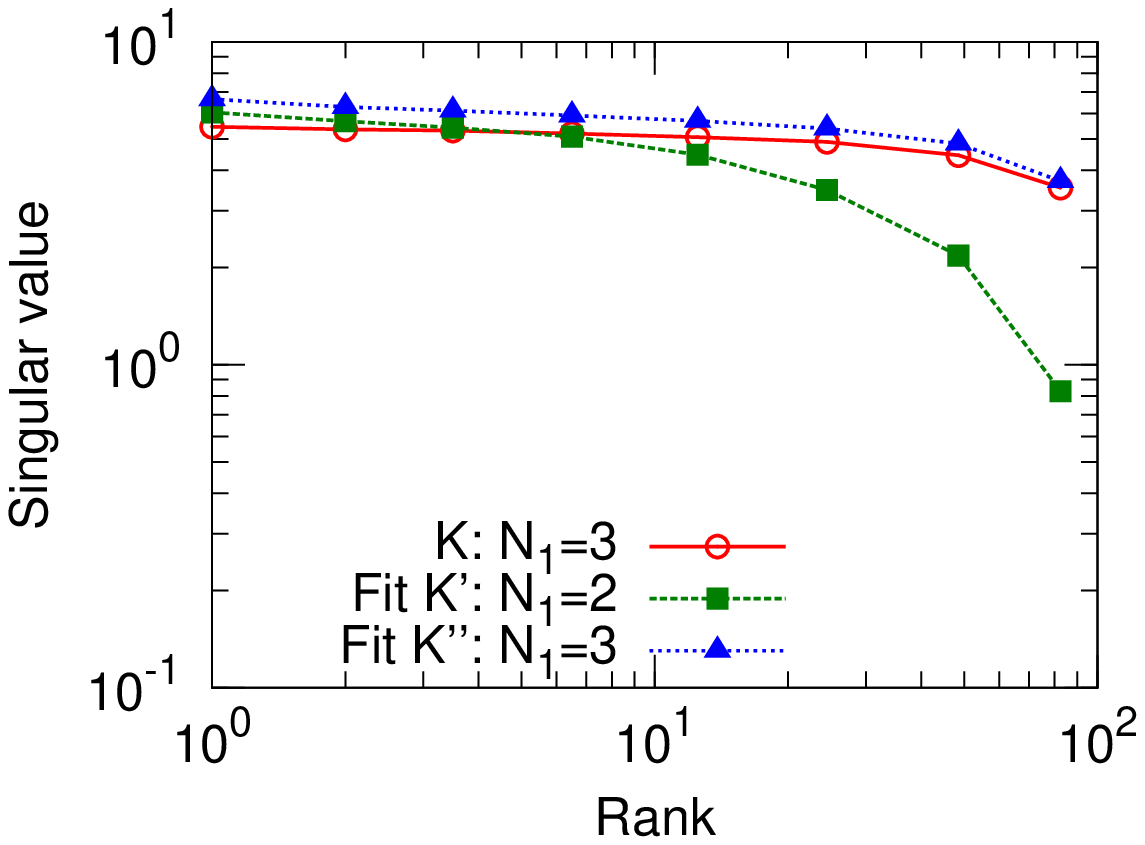} &
    \includegraphics[width=0.45\textwidth]{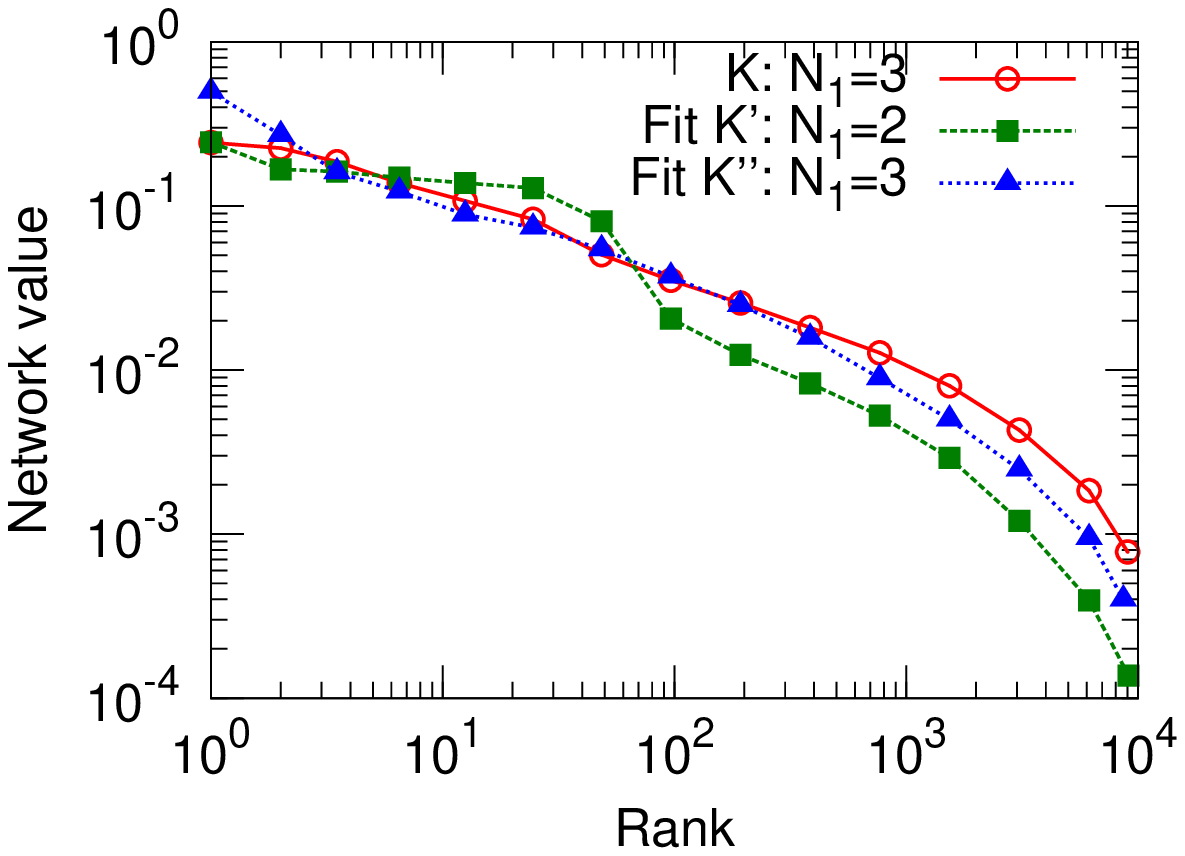} \\
    (c) Scree plot & (d) ``Network'' value \\
  \end{tabular}
  \end{center}
  \caption{{\em 3 by 3 \SKRG:} Given a \SKRG $G$ generated from $\nzero=3$
  (red curve), we fit a Kronecker graph $K'$ with $\nzero'=2$ (green)
  and $K''$ with $\nzero''=3$ (blue). Not surprisingly $K''$ fits the
  properties of $K$ perfectly as the model is the of same complexity.
  On the other hand $K'$ has only 4 parameters (instead of 9 as in $K$ and
  $K''$) and still fits well.}
  \label{fig:KronFitKron3by3}
  %\vspace{-3mm}
\end{figure}

We further examine the sensitivity of the choice of the initiator size by
the following experiment. We generate a \SKRG $K$ on 9 parameters
($\nzero=3$), and then fit a Kronecker graph $K'$ with a smaller number of
parameters (4 instead of 9, $\nzero'=2$). And also a Kronecker graph $K''$
of the same complexity as $K$ ($\nzero''=3$).

Figure~\ref{fig:KronFitKron3by3} plots the properties of all three graphs.
Not surprisingly $K''$ (blue) fits the properties of $K$ (red) perfectly
as the initiator is of the same size. On the other hand $K'$ (green) is a
simpler model with only 4 parameters (instead of 9 as in $K$ and $K''$)
and still generally fits well: hop plot and degree distribution match
well, while spectral properties of graph adjacency matrix, especially
scree plot, are not matched that well. This shows that nothing drastic
happens and that even a bit too simple model still fits the data well. In
general we observe empirically that by increasing the size of the initiator
matrix one does not gain radically better fits for degree distribution and
hop plot. On the other hand there is usually an improvement in the scree
plot and the plot of network values when one increases the initiator size.

\subsubsection{Network parameters over time}

\begin{table}[t]
  \begin{center}
    \begin{tabular}{c||c|c|c|c}
    Snapshot at time & $\nnodes$ & $\nedges$ & $l(\hat\pzero)$ & Estimates at MLE, $\hat\pzero$
\\ \hline \hline
    $T_1$ & 2,048 & 8,794 & $-$40,535  & $[0.981, 0.633; 0.633, 0.048]$ \\ \hline
    $T_2$ & 4,088 & 15,711 & $-$82,675 & $[0.934, 0.623; 0.622, 0.044]$ \\ \hline
    $T_3$ & 6,474 & 26,467 & $-$152,499 & $[0.987, 0.571; 0.571, 0.049]$ \\
    \end{tabular}
    \caption{Parameter estimates of the three temporal snapshots of the
    \dataset{As-RouteViews} network. Notice that estimates stay remarkably
    stable over time.}
    \label{tab:KronFitAsOverTm}
    \vspace{-5mm}
  \end{center}
\end{table}

\begin{figure}[t]
  \begin{center}
  \begin{tabular}{cc}
    \includegraphics[width=0.45\textwidth]{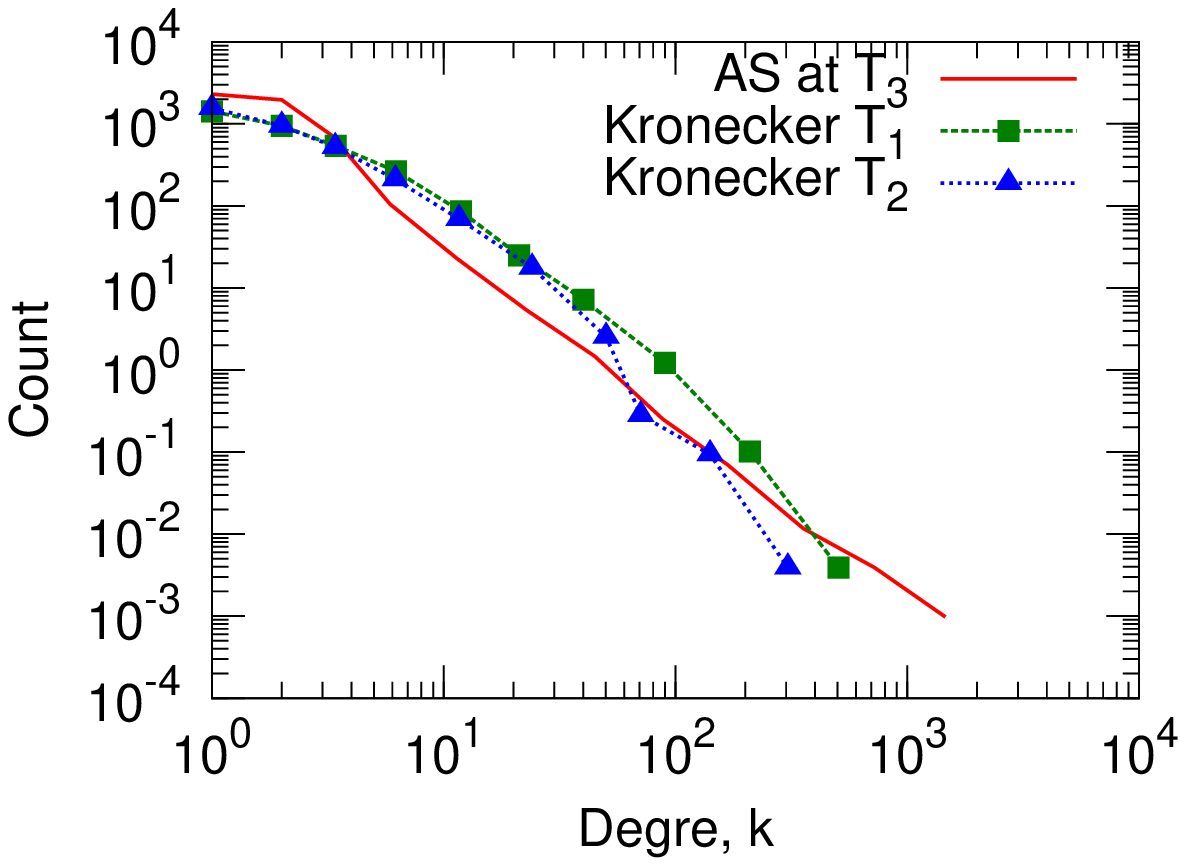} &
    \includegraphics[width=0.45\textwidth]{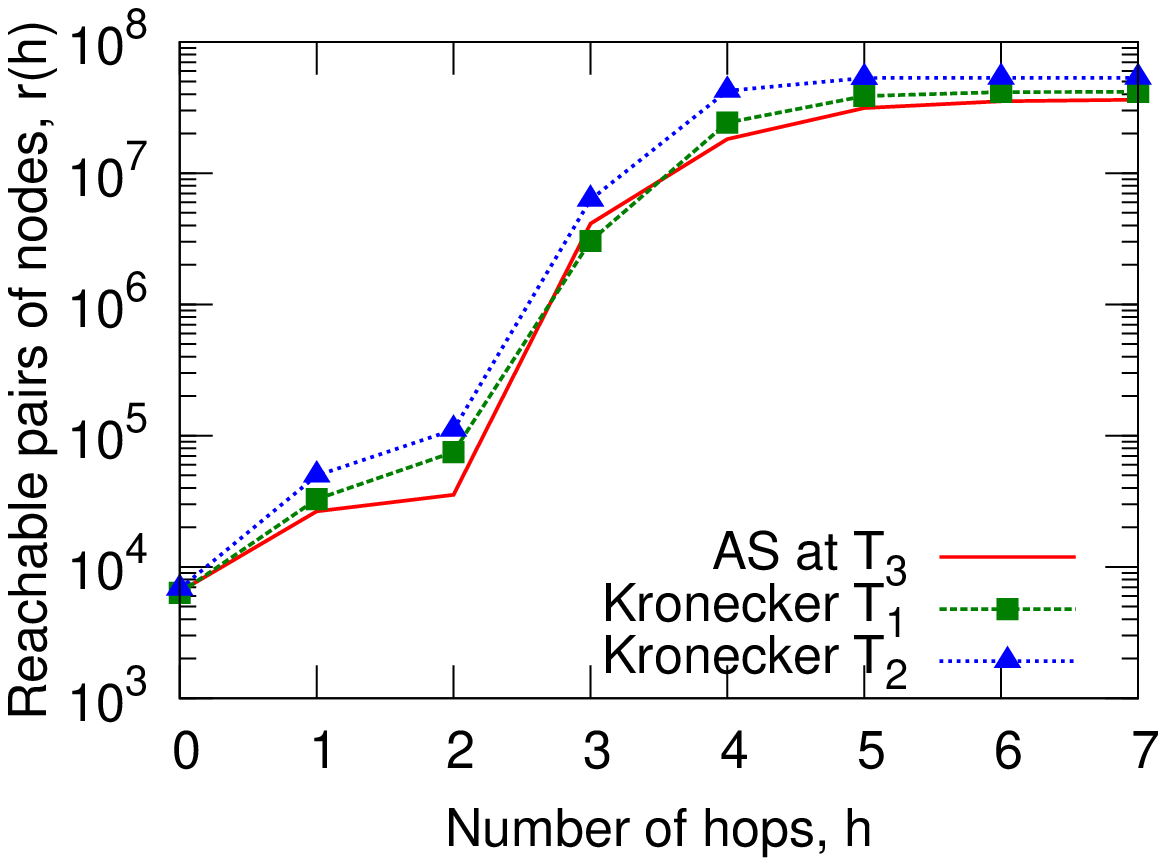} \\
    (a) Degree & (b) Hop plot \\
    \includegraphics[width=0.45\textwidth]{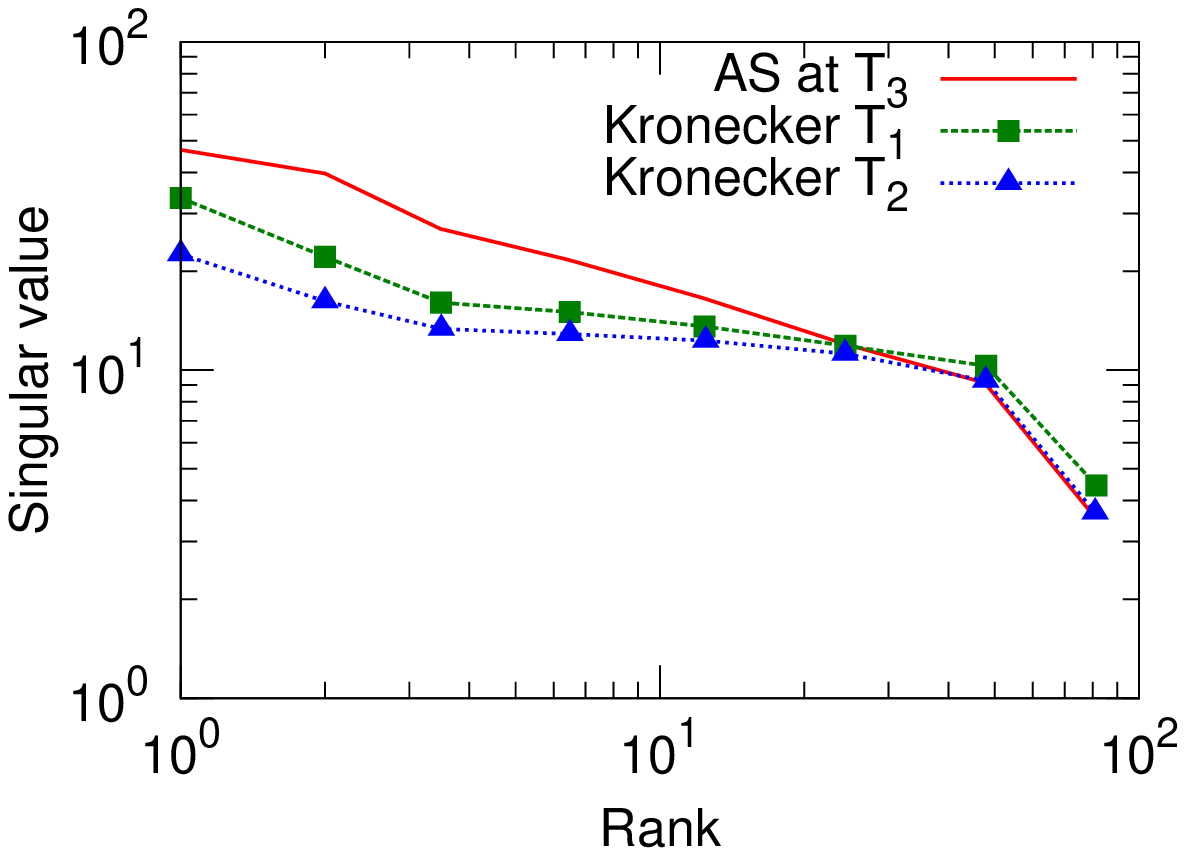} &
    \includegraphics[width=0.45\textwidth]{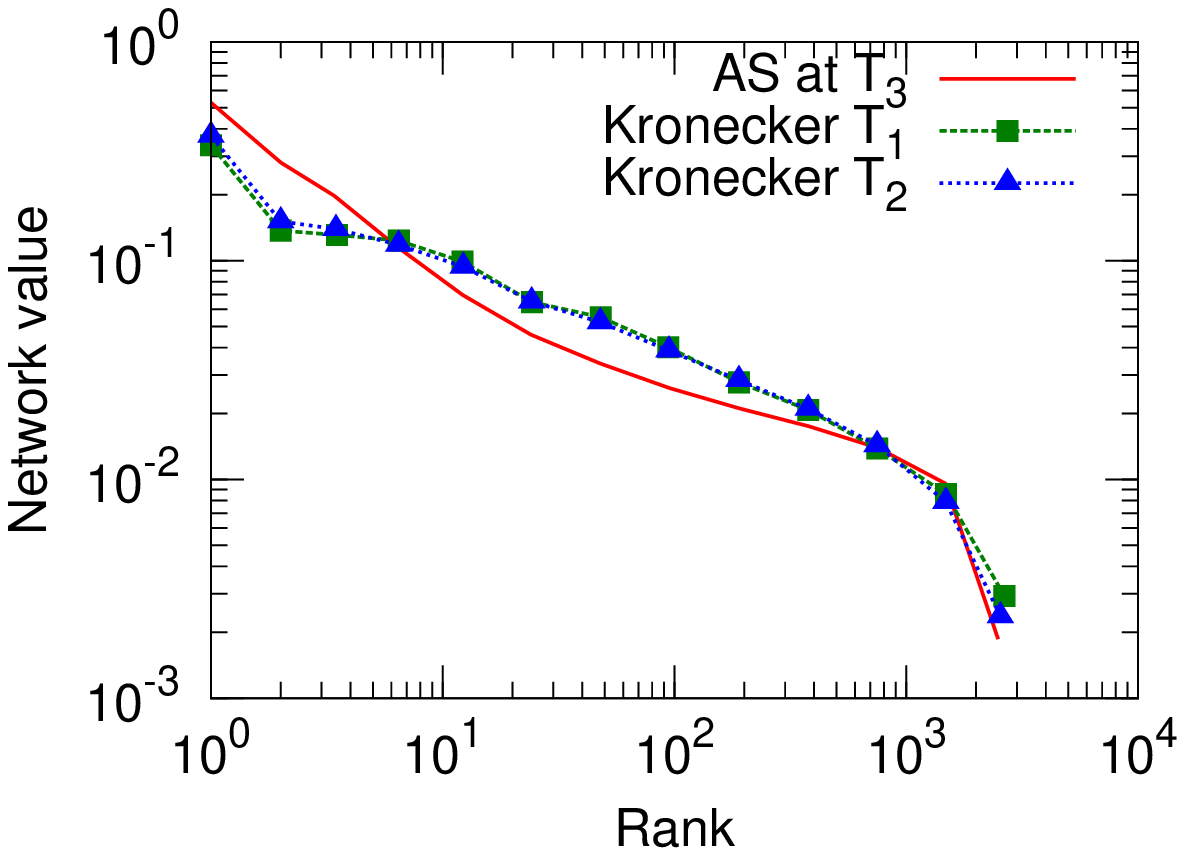} \\
    (c) Scree plot & (d) ``Network'' value \\
  \end{tabular}
  \end{center}
  \caption{{\em Autonomous systems network over time}
  (\dataset{As-RouteViews}): Overlayed patterns of real
  \dataset{As-RouteViews} network at time $T_3$ and the Kronecker graphs
  with parameters estimated from \dataset{As-RouteViews} at time $T_1$ and
  $T_2$. Notice good fits which means that parameters estimated on
  historic snapshots can be used to estimate the graph in the future.}
  \label{fig:KronFitAsOverTm}
  \vspace{-4mm}
\end{figure}

Next we briefly examine the evolution of the Kronecker initiator for a
temporally evolving graph. The idea is that given parameter estimates of a
real-graph $G_t$ at time $t$, we can forecast the future structure of the
graph $G_{t+x}$ at time $t+x$, \emph{i.e.}, using parameters obtained from
$G_t$ we can generate a larger synthetic graph $K$ that will be similar to
$G_{t+x}$.

As we have the information about the evolution of the
\dataset{As-RouteViews} network, we estimated parameters for three
snapshots of the network when it had about $2^k$ nodes.
Table~\ref{tab:KronFitAsOverTm} gives the results of the fitting for the
three temporal snapshots of the \dataset{As-RouteViews} network. Notice
the parameter estimates $\hat\pzero$ remain remarkably stable over time.
This means that Kronecker graphs can be used to estimate the structure of
the networks in the future, \emph{i.e.}, parameters estimated from the
historic data can extrapolate the graph structure in the future.

Figure~\ref{fig:KronFitAsOverTm} further explores this. It overlays the
graph properties of the real \dataset{As-RouteViews} network at time $T_3$
and the synthetic graphs for which we used the parameters obtained on
historic snapshots of \dataset{As-RouteViews} at times $T_1$ and $T_2$.
The agreements are good which demonstrates that Kronecker graphs can
forecast the structure of the network in the future.

Moreover, this experiments also shows that parameter estimates do not
suffer much from the zero padding of graph adjacency matrix (\emph{i.e.},
adding isolated nodes to make $G$ have $\nzero^k$ nodes). Snapshots of
\dataset{As-RouteViews} at $T_1$ and $T_2$ have close to $2^k$ nodes,
while we had to add 26\% (1,718) isolated nodes to the network at $T_3$ to
make the number of nodes be $2^k$. Regardless of this we see the parameter
estimates $\hat\pzero$ remain basically constant over time, which seems to
be independent of the number of isolated nodes added. This means that the
estimated parameters are not biased too much from zero padding the
adjacency matrix of $G$.

\subsection{Fitting to other large real-world networks}
\label{sec:bigtable}

\begin{table}[t]
  \begin{center}
    {\small
    \begin{tabular}{l||r|r|c|r|r}
      Network & $\nnodes$ & $\nedges$ & Estimated MLE parameters $\hat\pzero$  & $l(\hat\pzero)$ & Time \\
      \hline \hline
      %% small
      \dataset{As-RouteViews} & 6,474 & 26,467 & $[0.987, 0.571; 0.571, 0.049]$& $-$152,499& 8m15s \\ \hline
      \dataset{AtP-gr-qc} & 19,177 & 26,169 & $[0.902, 0.253; 0.221, 0.582]$& $-$242,493& 7m40s \\ \hline
      \dataset{Bio-Proteins} & 4,626 & 29,602 & $[0.847, 0.641; 0.641, 0.072]$& $-$185,130& 43m41s   \\ \hline
      \dataset{Email-Inside} & 986 & 32,128 & $[0.999, 0.772; 0.772, 0.257]$& $-$107,283& 1h07m \\ \hline
      \dataset{CA-gr-qc} & 5,242 & 28,980 & $[0.999, 0.245; 0.245, 0.691]$ & $-$160,902& 14m02s \\ \hline
      %% medium
      \dataset{As-Newman}  &  22,963  &  96,872  &  $[0.954, 0.594; 0.594, 0.019]$ &  $-$593,747&  28m48s \\ \hline
      \dataset{Blog-nat05-6m}  &  31,600  &  271,377  &  $[0.999, 0.569; 0.502, 0.221]$ & $-$1,994,943 &  47m20s \\ \hline
      \dataset{Blog-nat06all}  &  32,443  &  318,815  &  $[0.999, 0.578; 0.517, 0.221]$ & $-$2,289,009 &  52m31s \\ \hline
      \dataset{CA-hep-ph}  &  12,008  &  237,010  &  $[0.999, 0.437; 0.437, 0.484]$ & $-$1,272,629 &  1h22m \\ \hline
      \dataset{CA-hep-th}  &  9,877  &  51,971  &  $[0.999, 0.271; 0.271, 0.587]$ &  $-$343,614&  21m17s \\ \hline
      \dataset{Cit-hep-ph}  &  30,567  &  348,721  &  $[0.994, 0.439; 0.355, 0.526]$ & $-$2,607,159 &  51m26s \\ \hline
      \dataset{Cit-hep-th}  &  27,770  &  352,807  &  $[0.990, 0.440; 0.347, 0.538]$ & $-$2,507,167 &  15m23s \\ \hline
      \dataset{Epinions}  &  75,879  &  508,837  &  $[0.999, 0.532; 0.480, 0.129]$ & $-$3,817,121 &  45m39s \\ \hline
      \dataset{Gnutella-25}  &  22,687  &  54,705  &  $[0.746, 0.496; 0.654, 0.183]$ & $-$530,199 &  16m22s \\ \hline
      \dataset{Gnutella-30}  &  36,682  &  88,328  &  $[0.753, 0.489; 0.632, 0.178]$ & $-$919,235 &  14m20s \\ \hline
      %% large
      \dataset{Delicious}  &  205,282  &  436,735  &  $[0.999, 0.327; 0.348, 0.391]$ & $-$4,579,001 &  27m51s \\ \hline
      \dataset{Answers}  &  598,314 & 1,834,200 & $[0.994, 0.384; 0.414, 0.249]$ & $-$20,508,982 & 2h35m  \\ \hline
      \dataset{CA-DBLP} & 425,957 & 2,696,489 & $[0.999, 0.307; 0.307, 0.574]$ &  $-$26,813,878 &  3h01m\\ \hline
      \dataset{Flickr} & 584,207 & 3,555,115 & $[0.999, 0.474; 0.485, 0.144]$ &  $-$32,043,787 &  4h26m \\ \hline
      \dataset{Web-Notredame} & 325,729 & 1,497,134 &  $[0.999, 0.414; 0.453, 0.229]$ & $-$14,588,217 &  2h59m
      \end{tabular} }
    \caption{Results of parameter estimation for 20 different networks.
    Table~\ref{tab:data_StatsDesc} gives the description and
    basic properties of the above network datasets.
    Networks are available for download at {\tt http://snap.stanford.edu}.
    }
    \label{tab:KronFitEst}
  \end{center}
  %\vspace{-4mm}
\end{table}

\begin{figure}[t]
  \begin{center}
  \begin{tabular}{ccc}
    \includegraphics[width=0.31\textwidth]{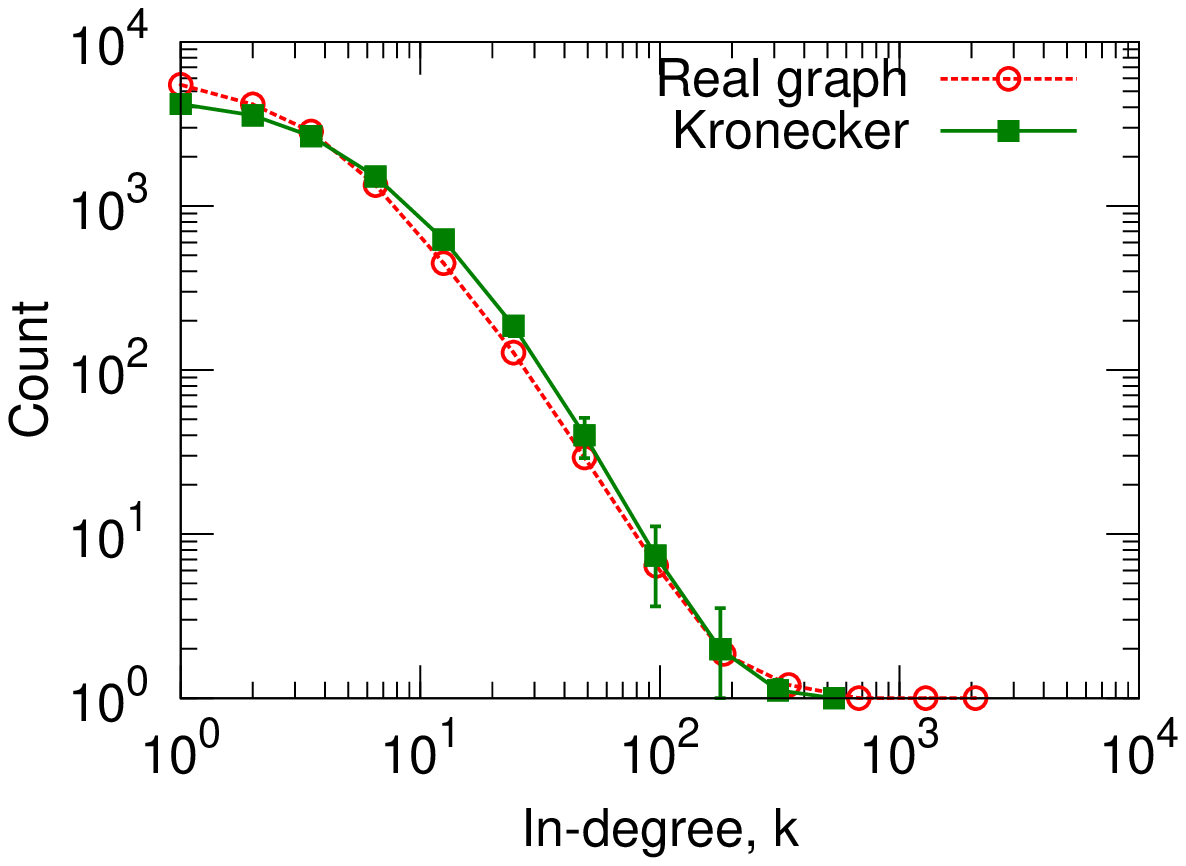} &
    \includegraphics[width=0.31\textwidth]{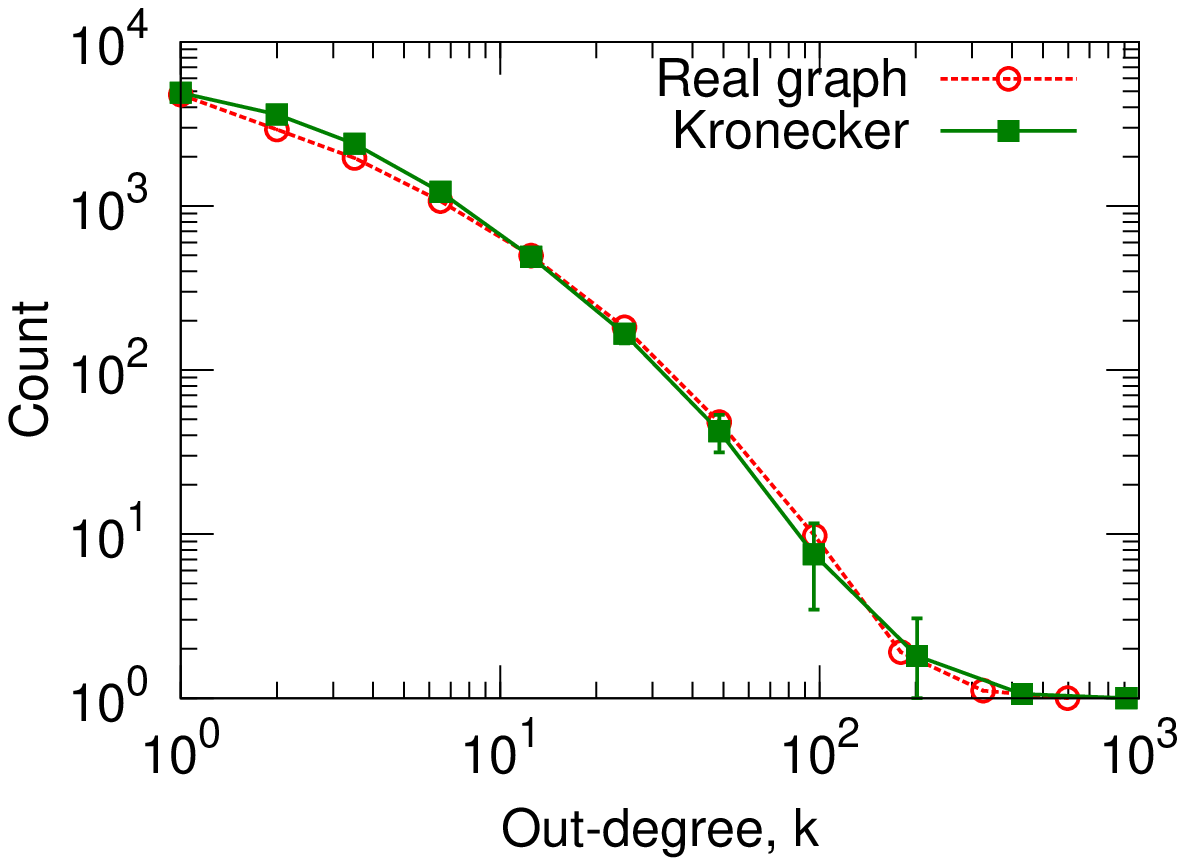} &
    \includegraphics[width=0.31\textwidth]{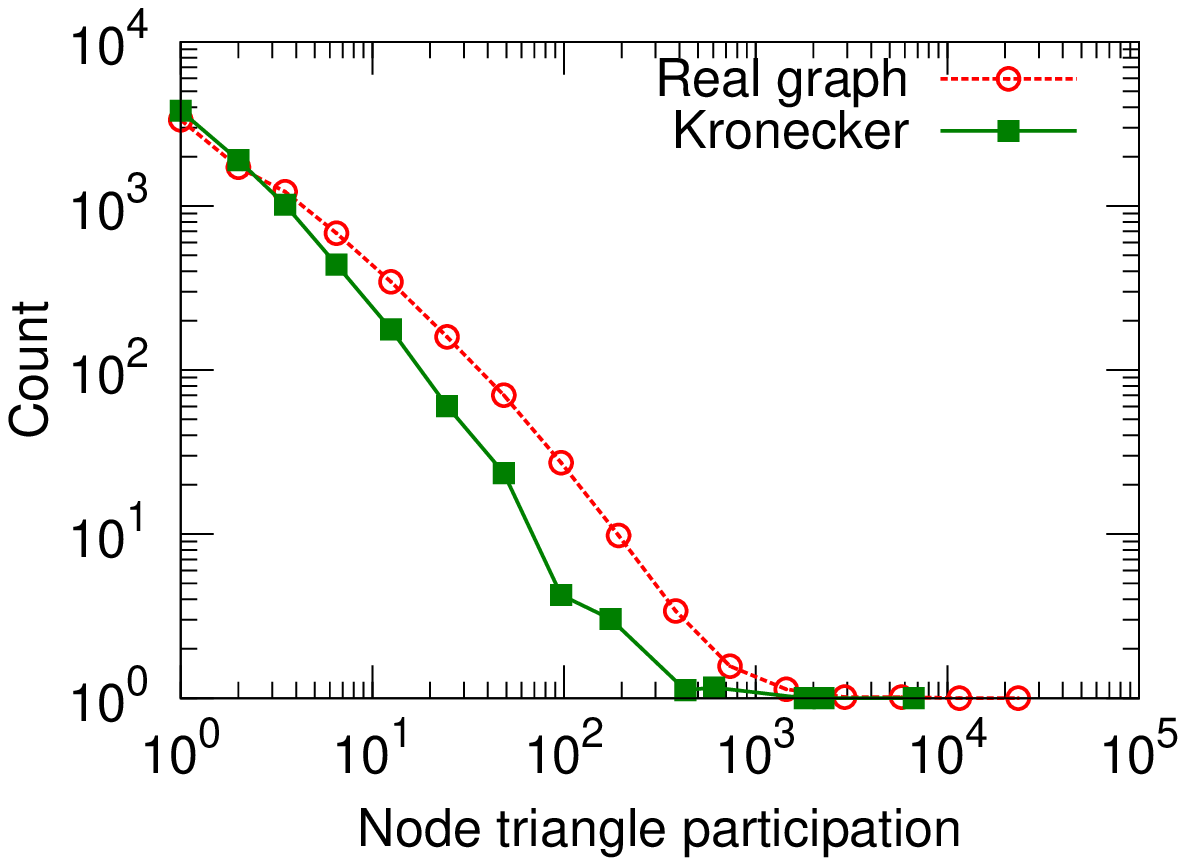} \\
    (a) In-Degree & (b) Out-degree & (c) Triangle participation \\
    \includegraphics[width=0.31\textwidth]{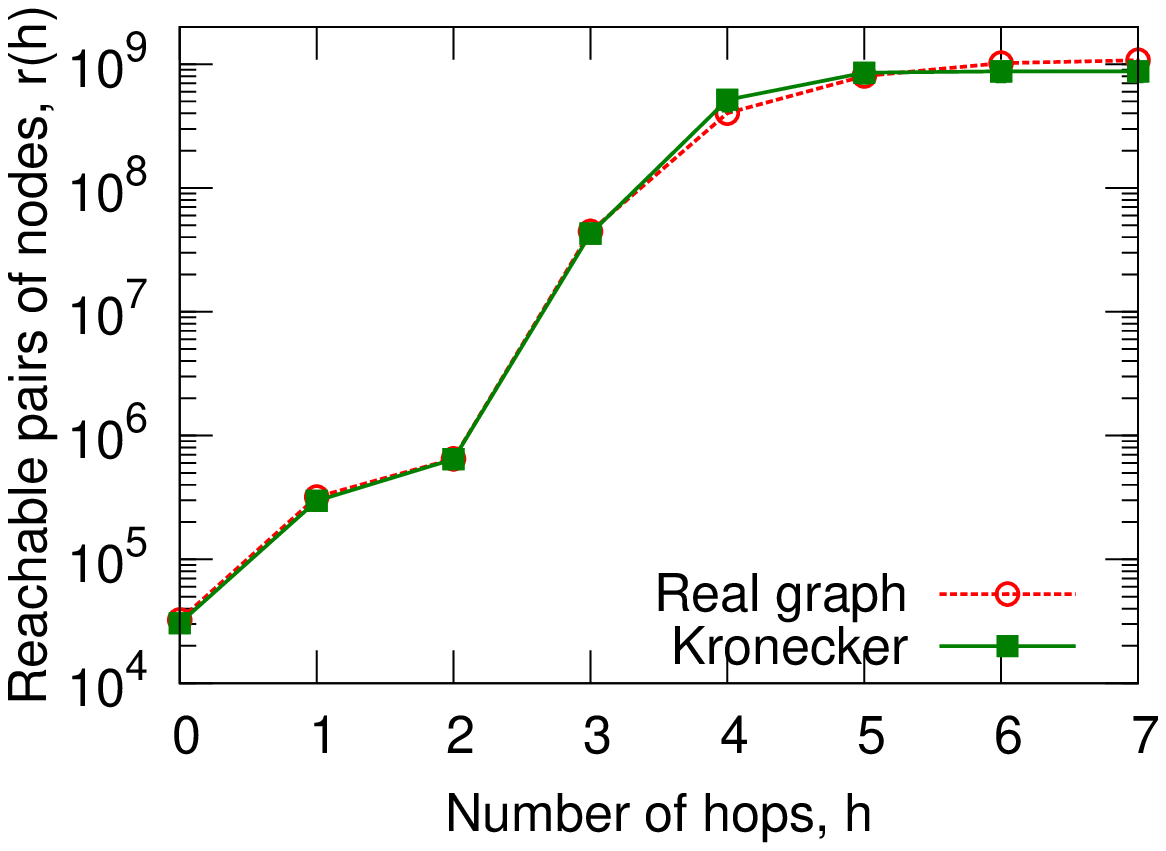} &
    \includegraphics[width=0.31\textwidth]{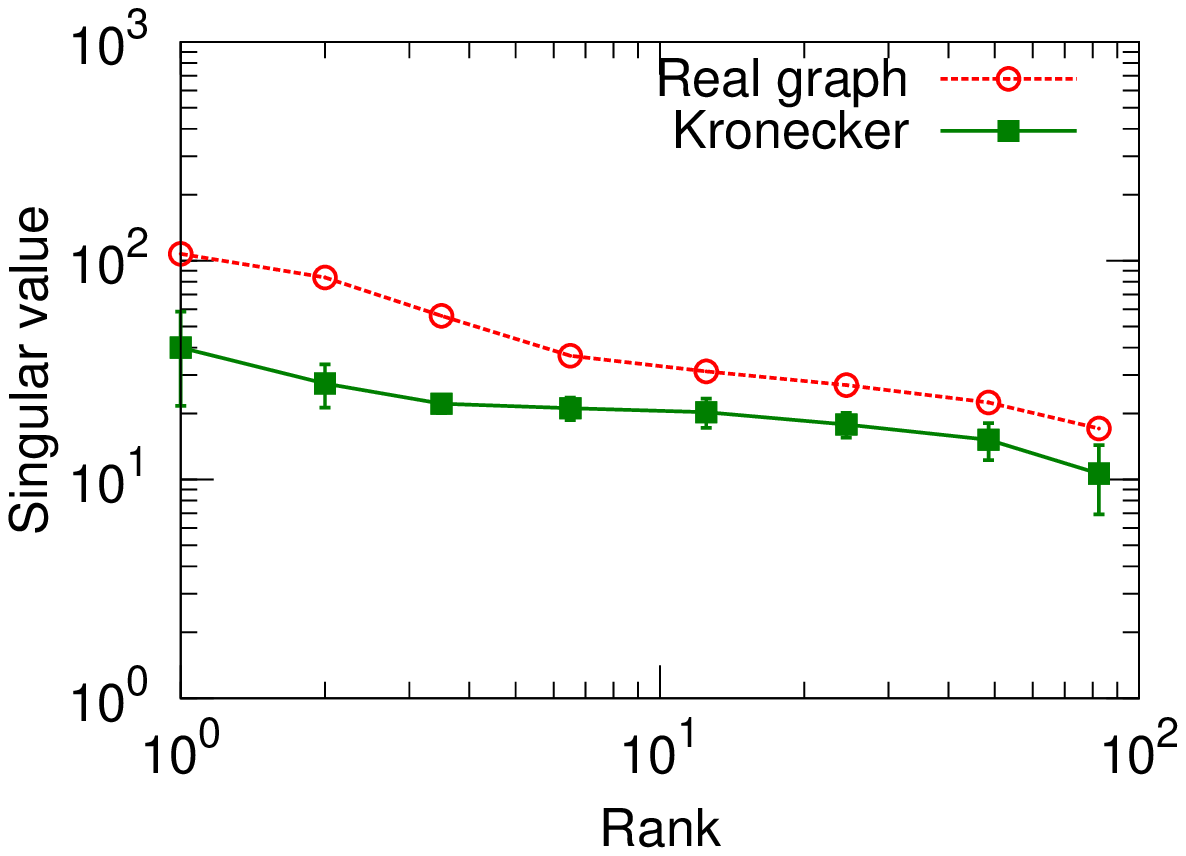} &
    \includegraphics[width=0.31\textwidth]{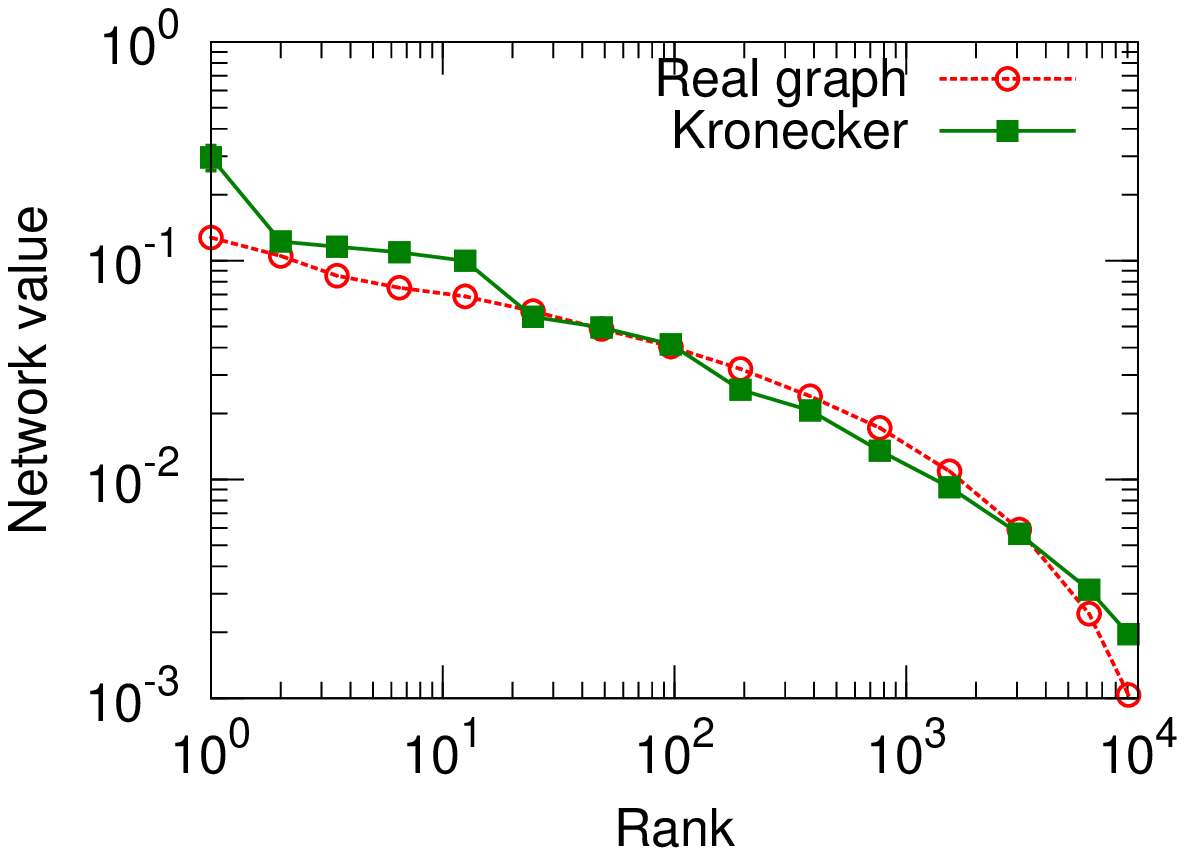} \\
    (d) Hop plot & (e) Scree plot & (f) ``Network'' value \\
  \end{tabular}
  \end{center} \caption{{\em Blog network} (\dataset{Blog-nat06all}):
  Overlayed patterns of real network and the estimated Kronecker graph
  using 4 parameters ($2\times2$ initiator matrix). Notice
  that the Kronecker graph matches all properties of the real network.}
  \label{fig:KronFitBlog}
\end{figure}

\begin{figure}[t]
  \begin{center}
  \begin{tabular}{ccc}
    \includegraphics[width=0.31\textwidth]{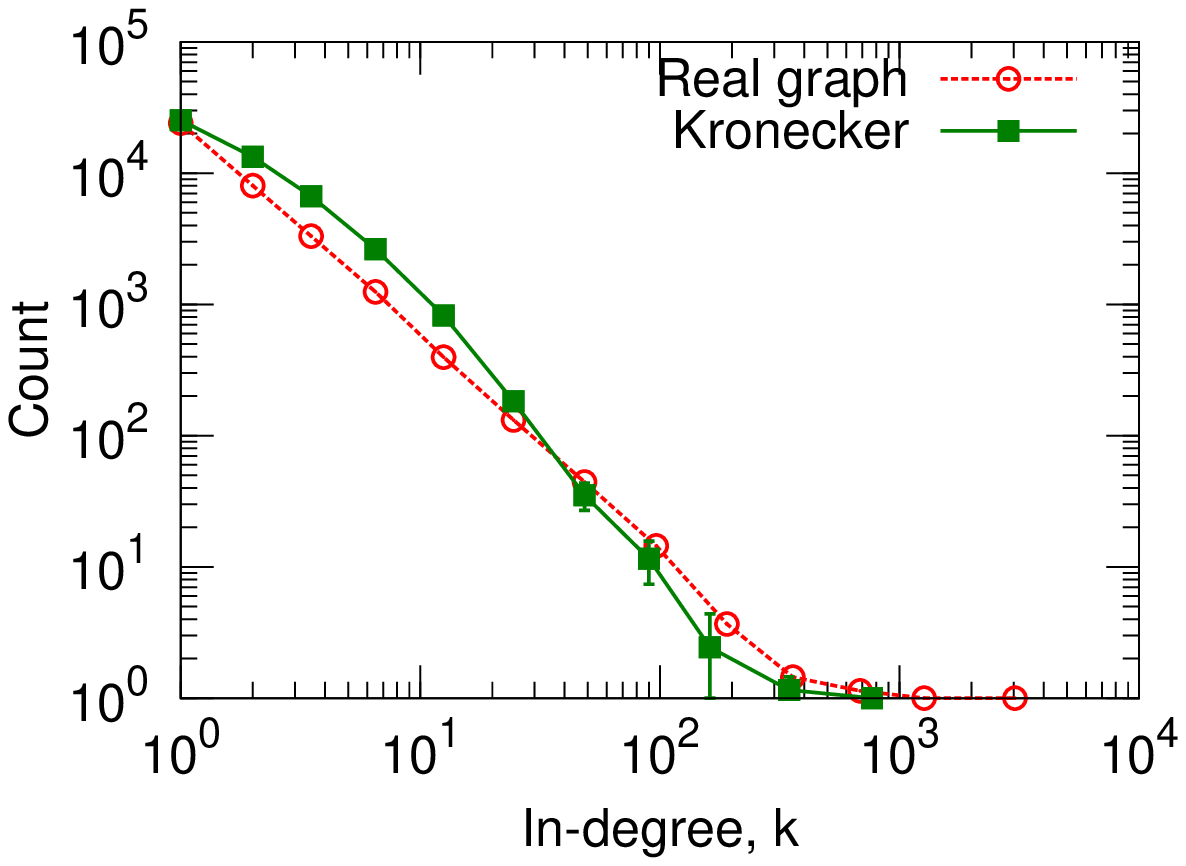} &
    \includegraphics[width=0.31\textwidth]{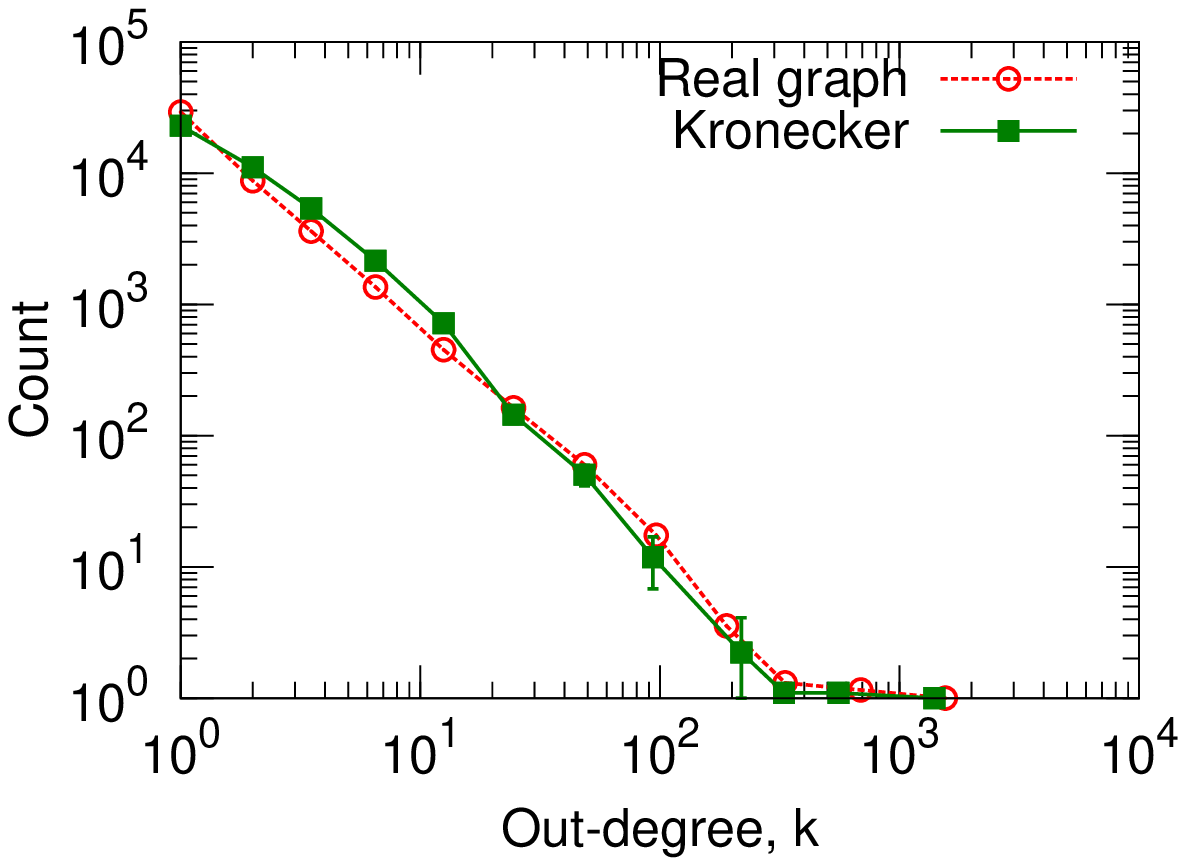} &
    \includegraphics[width=0.31\textwidth]{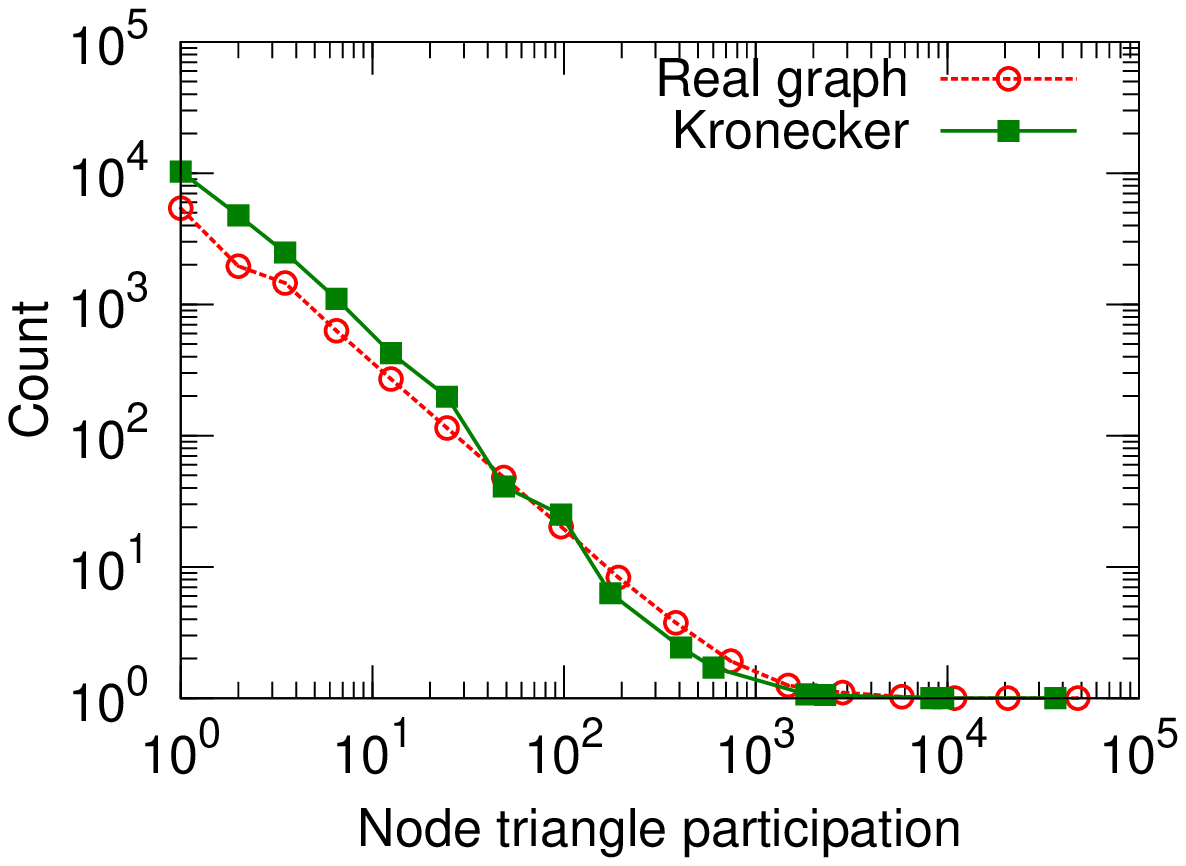} \\
    (a) In-Degree & (b) Out-degree & (c) Triangle participation \\
    \includegraphics[width=0.31\textwidth]{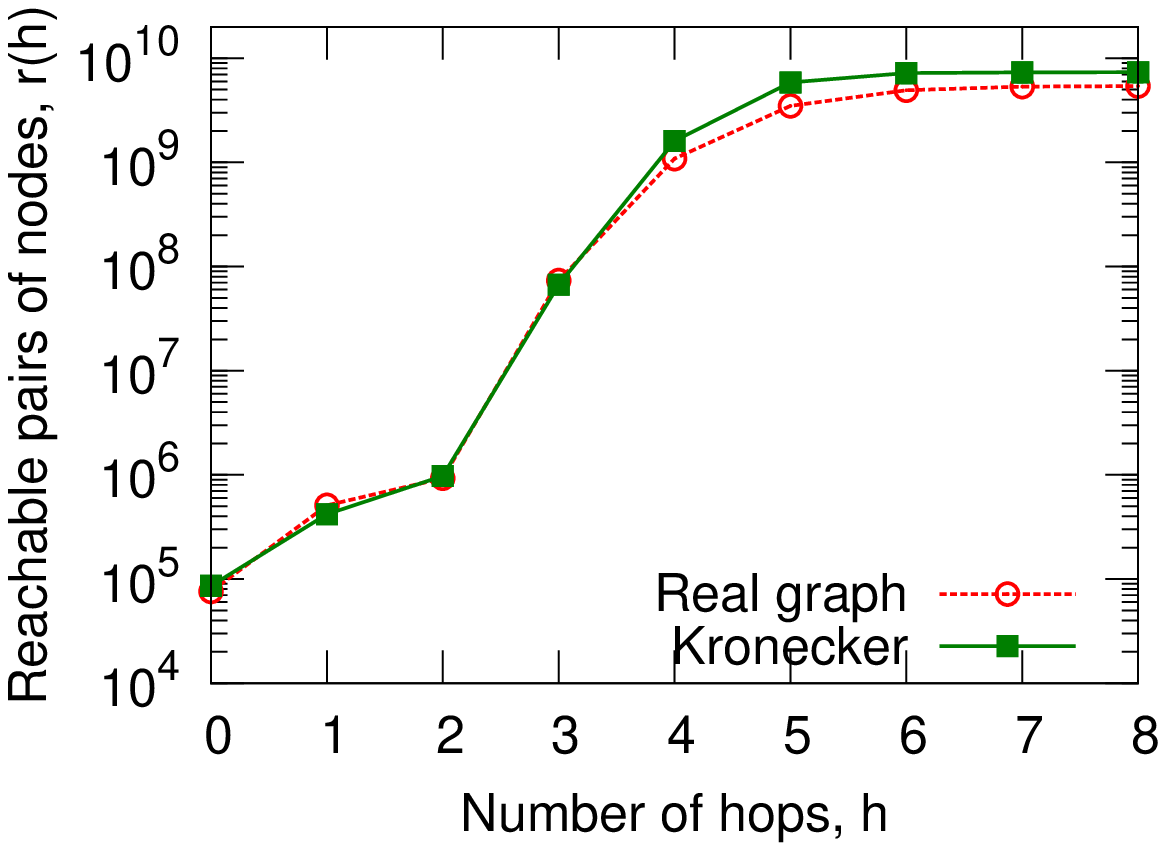} &
    \includegraphics[width=0.31\textwidth]{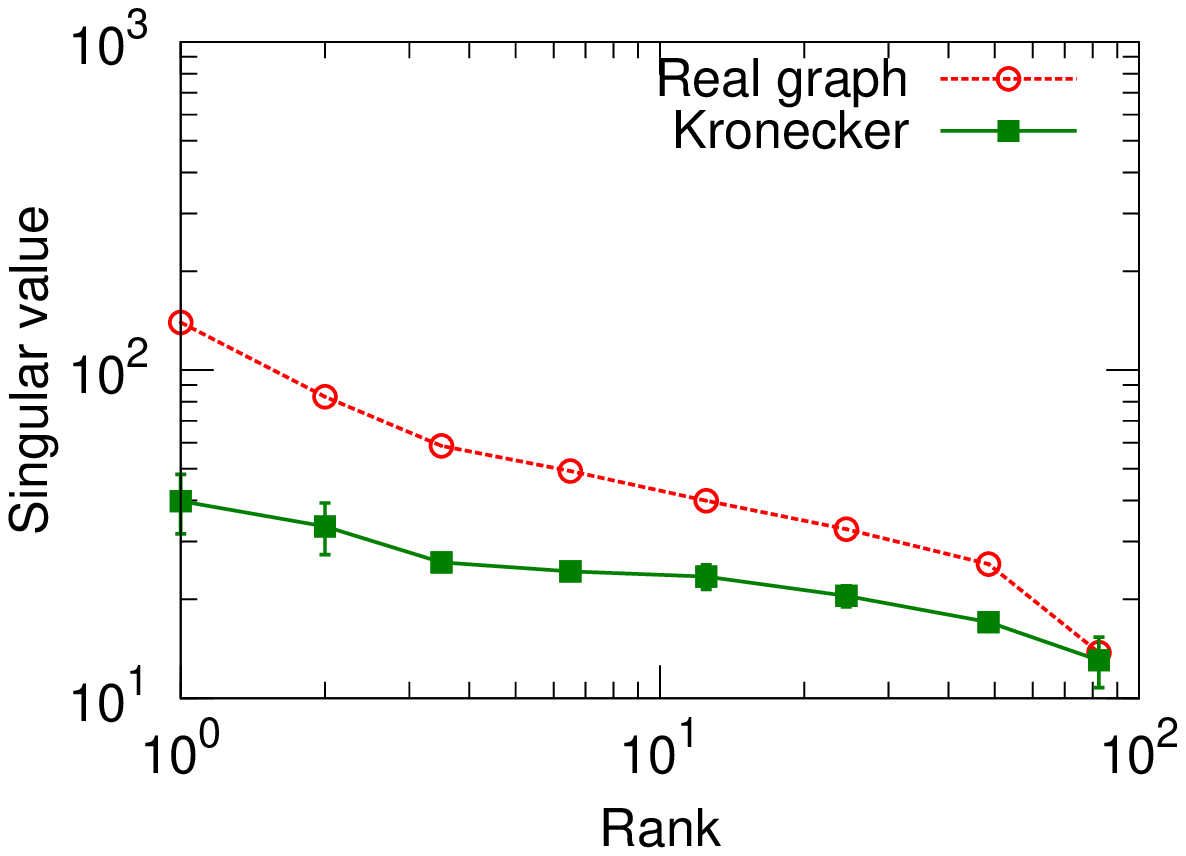} &
    \includegraphics[width=0.31\textwidth]{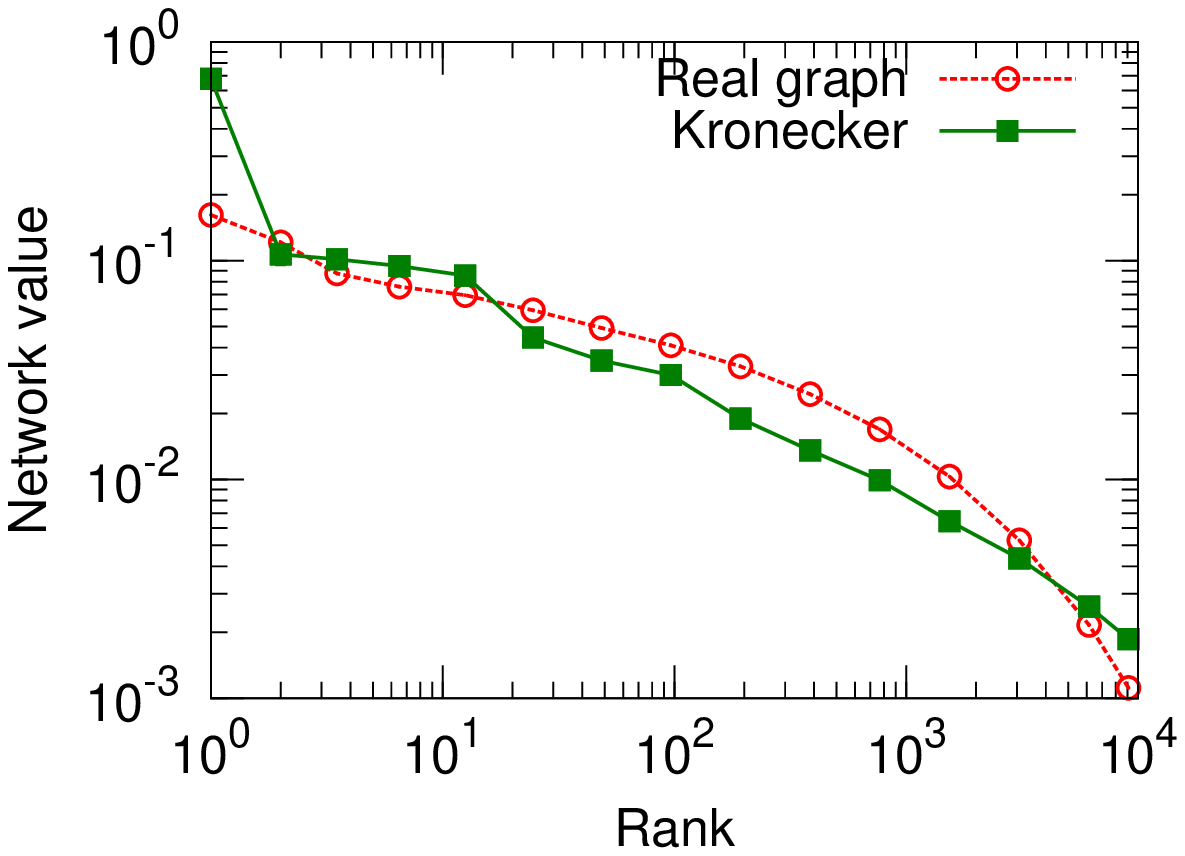} \\
    (d) Hop plot & (e) Scree plot & (f) ``Network'' value \\
  \end{tabular}
  \end{center}
  \caption{{\em \dataset{Epinions} who-trusts-whom social network:}
  Overlayed patterns of real network and the fitted Kronecker graph
  using only 4 parameters ($2\times2$ initiator matrix). Again,
  the synthetic Kronecker graph matches all the properties of the
  real network.}
  \label{fig:KronFitEpinions}
\end{figure}

Last, we present results of fitting \SKRG to 20 large real-world networks:
large online social networks, like \dataset{Epinions}, \dataset{Flickr}
and \dataset{Delicious}, web and blog graphs (\dataset{Web-Notredame},
\dataset{Blog-nat05-6m}, \dataset{Blog-nat06all}), internet and
peer-to-peer networks (\dataset{As-Newman}, \dataset{Gnutella-25},
\dataset{Gnutella-30}), collaboration networks of co-authorships from DBLP
(\dataset{CA-DBLP}) and various areas of physics (\dataset{CA-hep-th},
\dataset{CA-hep-ph}, \dataset{CA-gr-qc}), physics citation networks
(\dataset{Cit-hep-ph}, \dataset{Cit-hep-th}), an email network
(\dataset{Email-Inside}), a protein interaction network
\dataset{Bio-Proteins}, and a bipartite affiliation network
(authors-to-papers, \dataset{AtP-gr-qc}). Refer to
table~\ref{tab:data_StatsDesc} in the appendix for the description and
basic properties of these networks.
They are available for download at \url{http://snap.stanford.edu}.

For each dataset we started gradient descent from a random point (random
initiator matrix) and ran it for 100 steps. At each step we estimate the
likelihood and the gradient based on 510,000 sampled permutations where we
discard first 10,000 samples to allow the chain to burn-in.

Table~\ref{tab:KronFitEst} gives the estimated parameters, the
corresponding log-likelihoods and the wall clock times. All experiments
were carried out on standard desktop computer. Notice that the estimated
initiator matrices $\hat\Theta$ seem to have almost universal structure
with a large value in the top left entry, a very small value at the bottom
right corner and intermediate values in the other two corners. We further
discuss the implications of such structure of Kronecker initiator matrix
on the global network structure in next section.

Last, Figures~\ref{fig:KronFitBlog} and~\ref{fig:KronFitEpinions} show
overlays of various network properties of real and the estimated synthetic
networks. In addition to the network properties we plotted in
Figure~\ref{fig:KronFitKron3by3}, we also separately plot in- and
out-degree distributions (as both networks are directed) and plot the node
triangle participation in panel (c), where we plot the number of triangles
a node participates in versus the number of such nodes. (Again the error
bars show the variance of network properties over different realizations
$R(\hat\Theta^{[k]})$ of a Stochastic Kronecker graph.)

Notice that for both networks and in all cases the properties of the real
network and the synthetic Kronecker coincide really well. Using \SKRG with
just 4 parameters we match the scree plot, degree distributions, triangle
participation, hop plot and network values.

Given the previous experiments from the Autonomous systems graph we only present the
results for the simplest model with initiator size $\nzero=2$. Empirically
we also observe that $\nzero=2$ gives surprisingly good fits and the
estimation procedure is the most robust and converges the fastest. Using
larger initiator matrices $\nzero > 2$ generally helps improve the
likelihood but not dramatically. In terms of matching the network
properties we also gent a slight improvement by making the model more
complex. Figure~\ref{fig:KronLLImprovement} gives the percent improvement
in log-likelihood as we make the model more complex. We use the
log-likelihood of a $2\times2$ model as a baseline and estimate the
log-likelihood at the MLE for larger initiator matrices. Again, models with
more parameters tend to fit better. However, sometimes due to zero-padding
of a graph adjacency matrix they actually have lower log-likelihood (as for
example seen in Table~\ref{tab:KronFitN0}).

\begin{figure}[h]
  \begin{center}
    \includegraphics[width=0.8\textwidth]{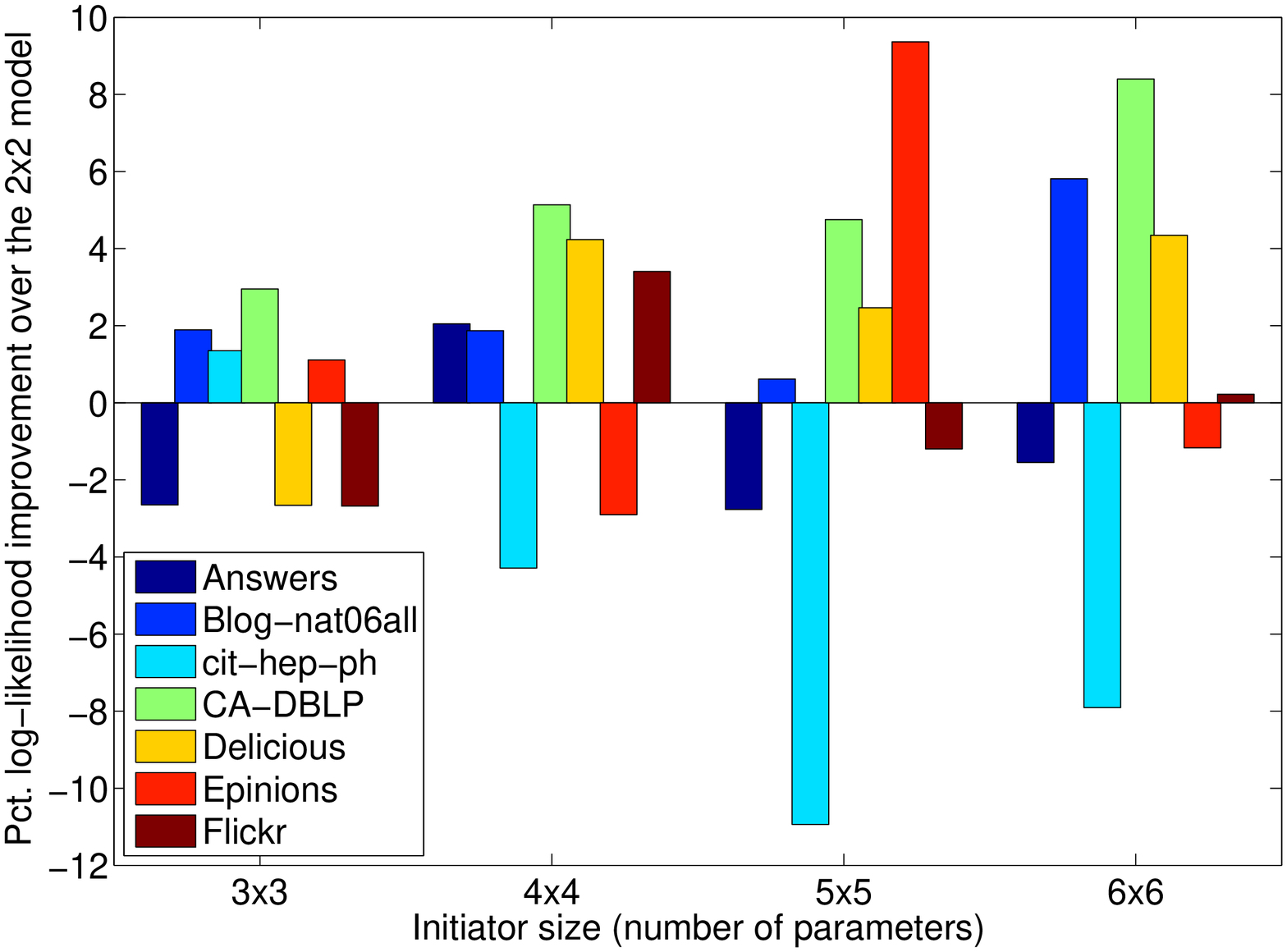}
  \end{center}
  \caption{Percent improvement in log-likelihood over the $2\times 2$ model
  as we increase the model complexity (size of initiator matrix). In general
  larger initiator matrices that have more degrees of freedom help improving
  the fit of the model.}
  \label{fig:KronLLImprovement}
\end{figure}

\subsection{Scalability}

\begin{figure}[t]
  \begin{center}
  \begin{tabular}{cc}
    \includegraphics[width=0.45\textwidth]{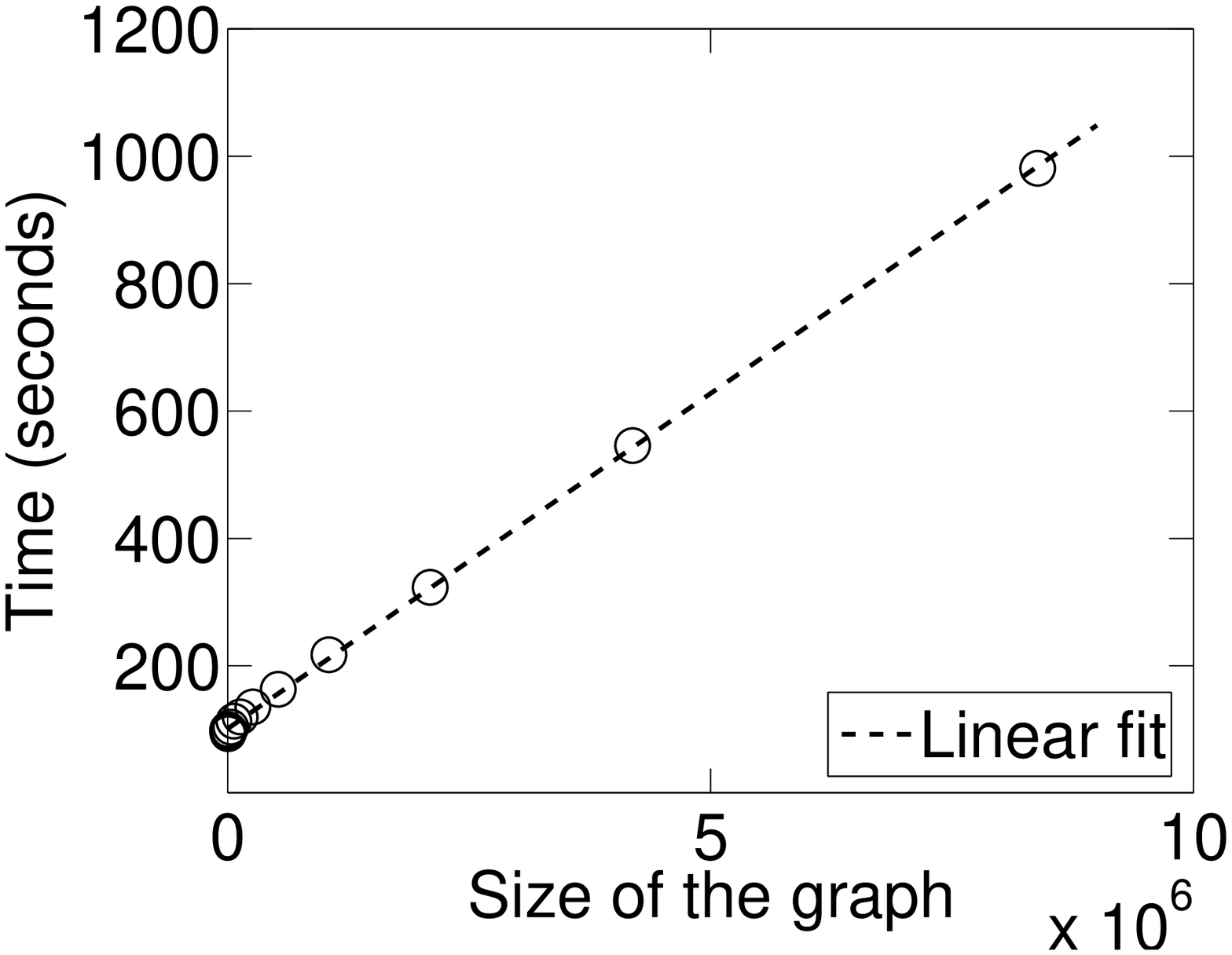}&
    \includegraphics[width=0.45\textwidth]{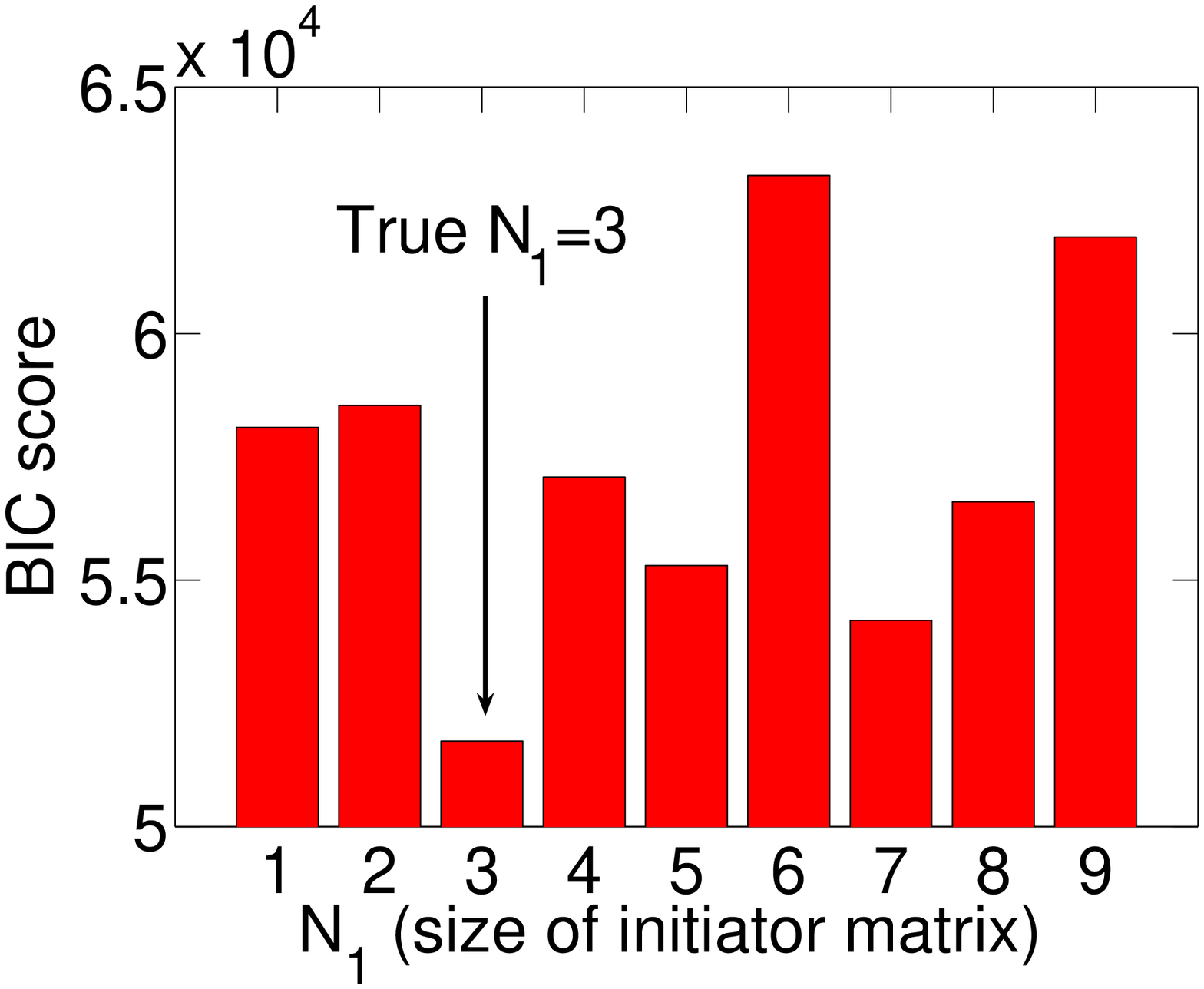} \\
    (a) Scalability & (b) Model selection \\
  \end{tabular}
  \end{center}
  \caption{(a) Processor time to sample 1 million gradients as the graph
  grows. Notice the algorithm scales linearly with the graph size.
  (b) BIC score for model selection.}
  \label{fig:KronTimeBic}
\end{figure}

Last we also empirically evaluate the scalability of the \KronFit. The
experiment confirms that \KronFit\ runtime scales linearly with the number
of edges $\nedges$ in a graph $G$. More precisely, we performed the
following experiment.

We generated a sequence of increasingly larger synthetic graphs on
$\nnodes$ nodes and $8\nnodes$ edges, and measured the time of one
iteration of gradient descent, \emph{i.e.}, sample 1 million permutations
and evaluate the gradients. We started with a graph on 1,000 nodes, and
finished with a graph on 8 million nodes, and 64 million edges.
Figure~\ref{fig:KronTimeBic}(a) shows \KronFit\ scales {\em linearly} with
the size of the network. We plot wall-clock time vs. size of the graph.
The dashed line gives a linear fit to the data points.

\section{Discussion}
\label{sec:KronDiscussion}
Here we discuss several of the desirable properties of the proposed
Kronecker graphs.

{\bf Generality:} Stochastic Kronecker graphs include several other
generators as special cases: For $\thij{ij}=c$, we obtain the classical
Erd\H{o}s-R\'{e}nyi random graph model. For $\thij{i,j} \in \{0,1\}$, we
obtain a deterministic Kronecker graph. Setting the $\kzero$ matrix to a
$2 \times 2$ matrix, we obtain the R-MAT
generator~\cite{chakrabarti04rmat}. In contrast to Kronecker graphs, the
RMAT cannot extrapolate into the future, since it needs to know the number
of edges to insert. Thus, it is incapable of obeying the densification
power law.

{\bf Phase transition phenomena:} The Erd\H{o}s-R\'{e}nyi graphs exhibit
phase transitions~\cite{erdos60random}. Several researchers argue that
real systems are ``at the edge of chaos'' or phase transition~\cite{bak96how,sole00signs}.
Stochastic Kronecker graphs also exhibit phase
transitions~\cite{mahdian07kronecker} for the emergence of the giant
component and another phase transition for connectivity.

{\bf Implications to the structure of the large-real networks:}
Empirically we found that $2\times2$ initiator ($\nzero=2$) fits well the
properties of real-world networks. Moreover, given a $2 \times 2$
initiator matrix, one can look at it as a recursive expansion of two
groups into sub-groups. We introduced this recursive view of Kronecker
graphs back in section~\ref{sec:KronProposed}. So, one can then interpret
the diagonal values of $\Theta$ as the proportion of edges inside each of
the groups, and the off-diagonal values give the fraction of edges
connecting the groups. Figure~\ref{fig:Kron2Cmty} illustrates the setting
for two groups.

For example, as shown in Figure~\ref{fig:Kron2Cmty}, large $a, d$ and
small $b, c$ would imply that the network is composed of hierarchically
nested communities, where there are many edges inside each community and
few edges crossing them~\cite{jure09coreper}.
One could think of this structure as some kind of
organizational or university hierarchy, where one expects the most
friendships between people within same lab, a bit less between people in
the same department, less across different departments, and the least
friendships to be formed across people from different schools of the
university.

\begin{figure}[t]
  \begin{center}
  \begin{tabular}{ccc}
    \raisebox{5mm}{\includegraphics[width=0.15\textwidth]{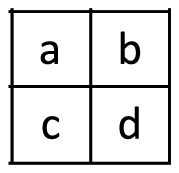}} &
    \includegraphics[width=0.22\textwidth]{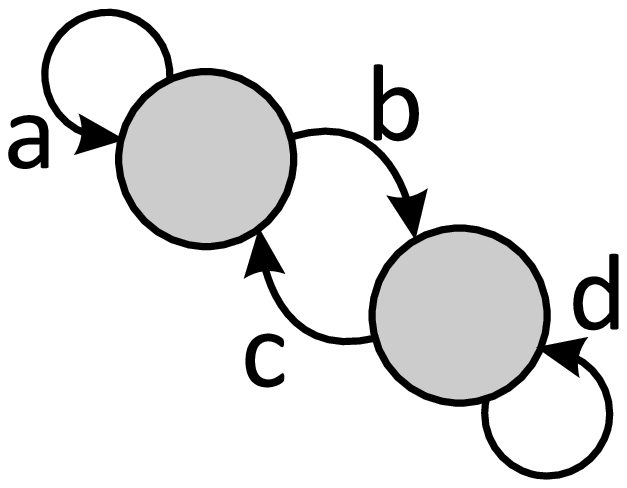} &
    \includegraphics[width=0.35\textwidth]{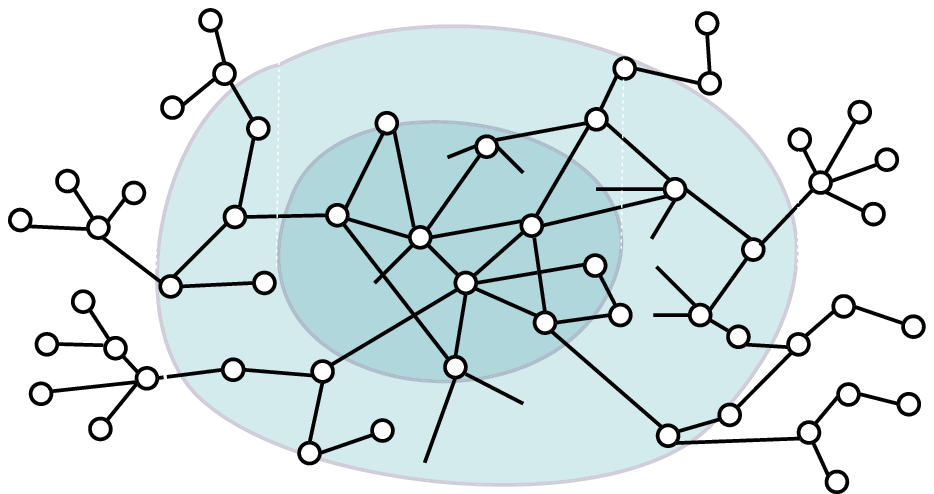} \\
    (a) $2 \times 2$ initiator matrix & (b) Two recursive communities & (c) Core-periphery\\
  \end{tabular}
  \end{center}
  \caption{$2 \times 2$ Kronecker initiator matrix (a) can be thought of
  as two communities where there are $a$ and $d$ edges inside each of the
  communities and $b$ and $c$ edges crossing the two communities as illustrated
  in (b). Each community can then be recursively divided using the same pattern.
  (c) The onion like core-periphery structure where the network gets denser and
  denser as we move towards the center of the network.}
  \label{fig:Kron2Cmty}
\end{figure}

However, parameter estimates for a wide range of networks presented in
Table~\ref{tab:KronFitEst} suggests a very different picture of the
network structure. Notice that for most networks $a \gg b > c \gg d$.
Moreover, $a \approx 1$, $b \approx c \approx 0.6$ and $d \approx 0.2$. We
empirically observed that the same structure of initiator matrix
$\hat\pzero$ also holds when fitting $3 \times 3$ or $4 \times 4$ models.
Always the top left element is the largest and then the values on the
diagonal decay faster than off the diagonal~\cite{jure09coreper}.

This suggests a network structure which is also known as {\em
core-periphery}~\cite{borgatti00core,holme05core}, the {\em
jellyfish}~\cite{tauro01topology,siganos06jellyfish}, or the {\em
octopus}~\cite{chung06octopus} structure of the network as illustrated in
Figure \ref{fig:Kron2Cmty}(c).

All of the above basically say that the network is composed of a densely
linked network core and the periphery. In our case this would imply the
following structure of the initiator matrix. The core is modeled by parameter
$a$ and the periphery by $d$. Most edges are inside the core (large
$a$), and very few between the nodes of periphery (small $d$). Then
there are many more edges between the core and the periphery than inside
the periphery ($b,c > d$)~\cite{jure09coreper}. This is exactly what we see as well.
%Many edges are
%inside the core (large $a$), there are very few edges among the periphery
%nodes (small $d$), while there are relatively many edges connecting the
%core with the periphery ($b,c$ are relatively large).
And in spirit of
Kronecker graphs the structure repeats recursively --- the core again has the
dense core and the periphery, and so on. And similarly the periphery
itself has the core and the periphery.

This suggest an ``onion'' like {\em nested core-periphery}~\cite{jure08ncp,jure08ncp2} network structure as illustrated in Figure
\ref{fig:Kron2Cmty}(c), where the network is composed of denser and denser
layers as one moves towards the center of the network. We also observe
similar structure of the Kronecker initiator when fitting $3 \times 3$ or
$4\times4$ initiator matrix. The diagonal elements have large but
decreasing values with off diagonal elements following same decreasing
pattern.

One of the implications of this is that networks do not break nicely into
hierarchically organized sets of communities that lend themselves
to graph partitioning and community detection algorithms. On contrary, this
suggests that large networks can be decomposed into a densely linked core
with many small periphery pieces hanging off the core. This is in
accordance with our recent results~\cite{jure08ncp,jure08ncp2}, that make
similar observation (but based on a completely different methodology based on
graph partitioning) about the clustering and community structure of large real-world networks.
%Refer to \cite{jure08ncp} and \cite{jure08ncp2} for more details on large scale
%community structure of networks.

\section{Conclusion}
\label{sec:KronConclusion}
In conclusion, the main contribution of this work is a family of models of
network structure that uses a non-traditional matrix operation, the {\em
Kronecker product}. The resulting graphs (a) have all the static
properties (heavy-tailed degree distribution, small diameter, etc.), (b)
all the temporal properties (densification, shrinking diameter) that are
found in real networks. And in addition, (c) we can formally prove all of
these properties.

Several of the proofs are extremely simple, thanks to the rich theory of
Kronecker multiplication. We also provide proofs about the diameter and
effective diameter, and we show that Stochastic Kronecker graphs can mimic
real graphs well.

Moreover, we also presented \KronFit, a fast, scalable algorithm to
estimate Stochastic Kronecker initiator, which can be then used to create
a synthetic graph that mimics the properties of a given real network.

In contrast to earlier work, our work has the following novelties: (a) it
is among the few that estimates the parameters of  the chosen generator in
a principled way, (b) it is among the few that has a concrete measure of
goodness of the fit (namely, the likelihood), (c) it avoids the quadratic
complexity of computing the likelihood by exploiting the properties of the
Kronecker graphs, and (d) it avoids the factorial explosion of the node
correspondence problem, by using the Metropolis sampling.

The resulting algorithm matches well all the known properties of real
graphs, as we show with the Epinions graph and the AS graph, it scales
linearly on the number of edges, and it is orders of magnitudes faster
than earlier graph-fitting attempts: 20 minutes on a commodity PC, versus
2 days on a cluster of 50 workstations~\cite{bezakova06mle}.

The benefits of fitting a Kronecker graph model into a real graph are
several:

\begin{itemize}
  \item {\em Extrapolation}: Once we have the Kronecker generator
      $\pzero$ for a given real matrix $\ggraph$ (such that $\ggraph$
      is mimicked by $\pzero^{[k]}$), a larger version of $\ggraph$
      can be generated by $\pzero^{[k+1]}$.
  \item {\em Null-model}: When analyzing a real network $G$ one often
      needs to asses the significance of the observation.
      $\pzero^{[k]}$ that mimics $G$ can be used as an accurate model
      of $G$.
  \item {\em Network structure}: Estimated parameters give insight into
      the global network and community structure of the network.
  \item {\em Forecasting}: As we demonstrated one can obtain $\pzero$
      from a graph $G_t$ at time $t$ such that $\ggraph$ is mimicked
      by $\pzero^{[k]}$. Then $\pzero$ can be used to model the
      structure of $G_{t+x}$ in the future.
  \item {\em Sampling}: Similarly, if we want a realistic sample
      of the real graph, we could use a smaller exponent in the
      Kronecker exponentiation, like $\pzero^{[k-1]}$.
  \item {\em Anonymization}: Since $\pzero^{[k]}$ mimics $\ggraph$, we
      can publish $\pzero^{[k]}$, without revealing information about
      the nodes of the real graph $\ggraph$.
\end{itemize}

\new{Future work could include extensions of Kronecker graphs to
evolving networks. We envision formulating a dynamic Bayesian network with
first order Markov dependencies, where parameter matrix at time $t$ depends
on the graph $G_t$ at current time $t$ and the parameter matrix at time $t-1$.
Given a series of network snapshots one would then aim to estimate
initiator matrices at individual time steps and the parameters of the model
governing the evolution of the initiator matrix. We expect that based on
the evolution of initiator matrix one would gain greater insight in the
evolution of large networks.}

\new{Second direction for future work is to explore connections between Kronecker graphs and Random Dot Product graphs~\cite{young07dotprod,nickel08dotprod}. This also nicely connects with the ``attribute view'' of Kronecker graphs as described in Section~\ref{sec:interpret}. It would be interesting to design methods to estimate the individual node attribute values as well as the attribute-attribute similarity matrix (i.e., the initiator matrix). As for some networks node attributes are already given one could then try to infer ``hidden'' or missing node attribute values and this way gain insight into individual nodes as well as individual edge formations. Moreover, this would be interesting as one could further evaluate how realistic is the ``attribute view'' of Kronecker graphs.}

\new{Last, we also mention possible extensions of Kronecker graphs for modeling weighted and labeled networks. Currently Stochastic Kronecker graphs use a Bernoulli edge generation model, {\em i.e.}, an entry of big matrix $\pmat$ encodes the parameter of a Bernoulli coin. In similar spirit one could consider entries of $\pmat$ to encode parameters of different edge generative processes. For example, to generate networks with weights on edges an entry of $\pmat$ could encode the parameter of an exponential distribution, or in case of labeled networks one could use several initiator matrices in parallel and this way encode parameters of a multinomial distribution over different node
attribute values.}

\vskip 0.2in
%\bibliography{jureRefs,jureRefs2}
%\bibliographystyle{abbrv}

\appendix
\section{Table of networks}
\label{sec:appendix}
Table~\ref{tab:data_StatsDesc} lists all the network datasets that were
used in this paper. We also computed some of the structural network
properties. Most of the networks are available for download from
\url{http://snap.stanford.edu}.

\begin{sidewaystable}
\begin{center}
{\footnotesize
\begin{tabular}{l|r|r|r|r|r|r|r|l}
Network & $N$ & $E$ & $N_c$ & $N_c/N$ & $\bar{C}$ & $D$ & $\bar{D}$  & Description \\
\hline \hline
\multicolumn{9}{l}{Social networks} \\
\hline \hline
\dataset{Answers}   & 598,314 & 1,834,200 & 488,484 & 0.82 & 0.11 & 22 & 5.72 & Yahoo! Answers social network~\cite{jure08ncp}  \\
\dataset{Delicious}   & 205,282 & 436,735 & 147,567 & 0.72 & 0.3 & 24 & 6.28 & \url{del.icio.us} social network~\cite{jure08ncp}  \\
\dataset{Email-Inside}  & 986 & 32,128 & 986 & 1.00 & 0.45 & 7 & 2.6 &   European research organization email network~\cite{jure07evolution}  \\
\dataset{Epinions}   & 75,879 & 508,837 & 75,877 & 1.00 & 0.26 & 15 & 4.27 & Who-trusts-whom graph of epinions.com~\cite{richardson03trust}  \\
\dataset{Flickr}  & 584,207 & 3,555,115 & 404,733 & 0.69 & 0.4 & 18 & 5.42 & Flickr photo sharing social network~\cite{kumar06evolution}  \\
\hline \hline
\multicolumn{9}{l}{Information (citation) networks} \\
\hline \hline
\dataset{Blog-nat05-6m}   & 31,600 & 271,377 & 29,150 & 0.92 & 0.24 & 10 & 3.4 &   Blog-to-blog citation network (6 months of data)~\cite{jure07cascades}  \\
\dataset{Blog-nat06all}   & 32,443 & 318,815 & 32,384 & 1.00 & 0.2 & 18 & 3.94 &   Blog-to-blog citation network (1 year of data)~\cite{jure07cascades}  \\
\dataset{Cit-hep-ph}   & 30,567 & 348,721 & 34,401 & 1.13 & 0.3 & 14 & 4.33 &   Citation network of ArXiv {\tt hep-th} papers~\cite{gehrke03kddcup}  \\
\dataset{Cit-hep-th}   & 27,770 & 352,807 & 27,400 & 0.99 & 0.33 & 15 & 4.2 &   Citations network of ArXiv {\tt hep-ph} papers~\cite{gehrke03kddcup}  \\
\hline \hline
\multicolumn{9}{l}{Collaboration networks} \\
\hline \hline
\dataset{CA-DBLP}  & 425,957 & 2,696,489 & 317,080 & 0.74 & 0.73 & 23 & 6.75 &   DBLP co-authorship network~\cite{backstrom06groups}  \\
\dataset{CA-gr-qc}  & 5,242 & 28,980 & 4,158 & 0.79 & 0.66 & 17 & 6.1 &   Co-authorship network in {\tt gr-qc} category  of ArXiv~\cite{jure05dpl}  \\
\dataset{CA-hep-ph}   & 12,008 & 237,010 & 11,204 & 0.93 & 0.69 & 13 & 4.71 &   Co-authorship network in {\tt hep-ph} category of ArXiv~\cite{jure05dpl}  \\
\dataset{CA-hep-th}   & 9,877 & 51,971 & 8,638 & 0.87 & 0.58 & 18 & 5.96 &   Co-authorship network in {\tt hep-th} category of ArXiv~\cite{jure05dpl}  \\
\hline \hline
\multicolumn{9}{l}{Web graphs} \\
\hline \hline
\dataset{Web-Notredame}  & 325,729 & 1,497,134 & 325,729 & 1.00 & 0.47 & 46 & 7.22 &   Web graph of University of Notre Dame~\cite{barabasi99diameter}  \\
\hline \hline
\multicolumn{9}{l}{Internet networks} \\
\hline \hline
\dataset{As-Newman}   & 22,963 & 96,872 & 22,963 & 1.00 & 0.35 & 11 & 3.83 &   AS graph from Newman~\cite{newman07netdata}  \\
\dataset{As-RouteViews}  & 6,474 & 26,467 & 6,474 & 1.00 & 0.4 & 9 & 3.72 &   AS from Oregon Route View~\cite{jure05dpl}  \\
\dataset{Gnutella-25}   & 22,687 & 54,705 & 22,663 & 1.00 & 0.01 & 11 & 5.57 &   Gnutella P2P network on 3/25 2000~\cite{ripeanu02gnutella}  \\
\dataset{Gnutella-30}   & 36,682 & 88,328 & 36,646 & 1.00 & 0.01 & 11 & 5.75 &   Gnutella P2P network on 3/30 2000~\cite{ripeanu02gnutella}  \\
\hline \hline
\multicolumn{9}{l}{Bi-partite networks} \\
\hline \hline
\dataset{AtP-gr-qc}  & 19,177 & 26,169 & 14,832 & 0.77 & 0 & 35 & 11.08 &   Affiliation network of {\tt gr-qc} category in ArXiv~\cite{jure07cascades}  \\
\hline \hline
\multicolumn{9}{l}{Biological networks} \\
\hline \hline
\dataset{Bio-Proteins}  & 4,626 & 29,602 & 4,626 & 1.00 & 0.12 & 12 & 4.24 &   Yeast protein interaction network~\cite{coliza05protein}  \\
\end{tabular}
}
\end{center}
\caption{ Network datasets we analyzed. Statistics of networks we
consider: number of nodes $N$; number of edges $E$, number of nodes in
largest connected component $N_c$, fraction of nodes in largest connected
component $N_c/N$, average clustering coefficient $\bar{C}$; diameter $D$,
and average path length $\bar{D}$. Networks are available for download at {\tt http://snap.stanford.edu}.
} \label{tab:data_StatsDesc}
\end{sidewaystable}

%
%\newpage
%\tableofcontents

\end{document}